\newtheorem{assumption}{Assumption}
\newcommand{\R}{\mathbb{R}}
\newcommand{\mD}{\mathcal{D}}
\newcommand{\mI}{\mathcal{I}}
\newcommand{\mP}{\mathcal{P}}
\newcommand{\mZ}{\mathcal{Z}}
\newcommand{\qigt}{Q_i(\gamma, \Theta)}
\newcommand{\qigtok}{Q_i(\gamma, \Theta_{1:K})}
\newcommand{\InnerLoop}{\textup{InnerLoop}}
\DeclareMathOperator*{\E}{\mathbb{E}}
\DeclareMathOperator*{\supp}{supp}
\DeclareMathOperator*{\argmin}{argmin}
\newcommand{\localupdate}{\textsc{LocalUpdate}\xspace}
\newcommand{\localupdatedecay}{\textsc{LocalUpdateDecay}\xspace}
\newcommand{\yogi}{\textsc{Yogi}\xspace}
\newcommand{\adam}{\textsc{Adam}\xspace}
\DeclarePairedDelimiterX{\inp}[2]{\langle}{\rangle}{#1, #2}
\DeclarePairedDelimiterX{\abs}[1]{\lvert}{\rvert}{#1}
\DeclarePairedDelimiterX{\norm}[1]{\lVert}{\rVert}{#1}
\DeclarePairedDelimiterX{\cbr}[1]{\{}{\}}{#1} 
\DeclarePairedDelimiterX{\rbr}[1]{(}{)}{#1} 
\DeclarePairedDelimiterX{\sbr}[1]{[}{]}{#1} 
\begin{document}

\title{On the Outsized Importance of Learning Rates \\ in Local Update Methods}

\author{\name Zachary Charles \email zachcharles@google.com \\
       \name Jakub Kone\v{c}n\'{y} \email konkey@google.com \\
       \addr Google Research}

\editor{}
\maketitle

\begin{abstract}
We study a family of algorithms, which we refer to as \emph{local update} methods, that generalize many federated learning and meta-learning algorithms.
We prove that for quadratic objectives, local update methods perform stochastic gradient descent on a \textit{surrogate} loss function which we exactly characterize. We show that the choice of client learning rate controls the condition number of that surrogate loss, as well as the distance between the minimizers of the surrogate and true loss functions.
We use this theory to derive novel convergence rates for federated averaging that showcase this trade-off between the condition number of the surrogate loss and its alignment with the true loss function.
We validate our results empirically, showing that in communication-limited settings, proper learning rate tuning is often sufficient to reach near-optimal behavior. We also present a practical method for automatic learning rate decay in local update methods that helps reduce the need for learning rate tuning, and highlight its empirical performance on a variety of tasks and datasets.
\end{abstract}

\begin{keywords}
  Local Update Methods, Local SGD, Federated Averaging, Federated Learning, Meta Learning, MAML
\end{keywords}

\section{Introduction}

Historically, machine learning was analyzed from a ``centralized'' perspective, in which a model is trained on a single central source of data. In recent years, there has been a shift away from centralized machine learning, due in part to the increase of user data and the increasing awareness of the risks to privacy that can accompany centralized data collection.

Federated learning (FL) \citep{kairouz2019advances} is a distributed framework for learning models without directly sharing user data. In this framework, heterogeneous clients all use their own data to perform local training. In the popular FedAvg algorithm \citep{mcmahan17fedavg}, the client models are then averaged at a central server, broadcast to a (possibly different) sample of clients, and the process is repeated. The core tenet is that instead of having clients share data, we instead share the results of \emph{local updates} the clients perform on their own datasets using an optimization algorithm.

While there has been growing interest in FL in research communities (see \citep{kairouz2019advances} and \citep{li2019federated} for surveys of many recent works and open problems), this general paradigm of performing local updates on heterogeneous datasets has a storied history in machine learning. In particular, much of the work on meta-learning has focused on trying to learn models that perform well (or can quickly learn how to perform well) on a large number of heterogeneous tasks. This similarity with FL is even more clear in work on model-agnostic meta-learning (MAML) \citep{finn2017model}, in which local client gradient updates are used to learn a global model. Connections between these two areas were noted by \citet{jiang2019improving} and have since been explored in many other works \citep{khodak2019adaptive, fallah2020personalized}.

While there is a wide variety of theoretical and empirical analyses of the aforementioned methods, it is generally difficult to understand their behavior in \emph{heterogeneous settings}. There is enough evidence that these methods are useful in practice in complex scenarios \citep{hard2018federated, yang2018applied, hard2020training}, yet on a theoretical level, many works derive results comparable to, or worse than, that of mini-batch SGD in heterogeneous or even homogeneous settings; See \citep{kairouz2019advances} for a discussion of homogeneity and heterogeneity, and see \citep{woodworth2020local} for a detailed discussion of comparisons to mini-batch SGD. Unfortunately, these results shed little light onto how methods such as FedAvg improve (or degrade) convergence.

In this work, we analyze a generalized local update paradigm that encompasses many FL and MAML methods, as well as other popular optimization methods such as mini-batch SGD. In order to better understand the structure of these methods in heterogeneous settings without an abundance of assumptions, we focus on the special case of quadratic loss functions. We are generally concerned with understanding the following questions that bridge both theory and practice.
\begin{itemize}
    \item How do local update methods improve or hinder convergence?
    \item Why, despite a relative paucity of theoretical evidence, do these methods often perform better in practice than theoretically established methods such as mini-batch SGD?
    \item What obstacles are there to the performance of local update methods, and how do we mitigate these issues?
\end{itemize}

As a partial answer to these questions, we highlight the main findings of our work.

\begin{enumerate}
    \item We show that in the quadratic case, local update methods are equivalent to the stochastic gradient method on a surrogate loss function which we exactly characterize. Thus, we can view local update methods that use multiple heterogeneous datasets as instead performing SGD on a single ``central'' loss function.
    \item We show that methods such as FedAvg and many incarnations of MAML implicitly regularize the condition number of this surrogate loss function, allowing for improved convergence of the surrogate loss. On the other hand, we show that this condition number reduction comes at the cost of increasing the discrepancy between minimizers of the surrogate and the true loss function. Notably, this trade-off is controlled by fundamental algorithmic choices, especially the choice of learning rate.
    \item We give explicit convergence rates for FedAvg that exhibit the trade-off between the condition number and the discrepancy between the surrogate and true loss functions above. Our results are similar in scope to work by \citet{woodworth2020local} (showing that local SGD can outperform mini-batch SGD), but work under heterogeneous data settings.
    \item We use our theoretical insights to design practical improvements to federated learning methods. First, we show that \emph{decoupling client and server learning rates} has significant implications for improving convergence to better models. We show that despite the non-optimality of critical points of FedAvg, combining this learning rate decoupling with proper tuning can result in near-optimal performance in settings with limited communication. Finally, we detail a simple, practical method for automatic learning rate decay in federated learning that helps reduce the burden of learning rate tuning. We show empirically that this method improves the convergence of FedAvg, without requiring manually crafted learning rate schedules, across a suite of realistic and challenging non-convex tasks.
    \end{enumerate}
    
\subsection{Related work}

\paragraph{Federated learning}
Federated learning is a distributed machine learning paradigm in which training is done locally on clients, without any centralized data aggregation.  Federated learning has enabled privacy-aware learning in a variety of applications~\citep{hard2018federated, chen2019federated, brisimi2018federated, samarakoon2018federated, hard2020training}, and has seen a large volume of work on the intersection of federated learning with topics including differential privacy~\citep{mcmahan2017learning, augenstein2020generative}, fairness~\citep{mohri2019agnostic, li2019fair}, robustness~\citep{ghosh2019robust, bagdasaryan2018backdoor, sun2019can}, and communication-efficiency~\citep{konevcny2016federated, sattler2019robust, basu2019qsparse,  reisizadeh2019fedpaq}. For a more detailed discussion of federated learning, we defer to surveys by \citet{kairouz2019advances} and \citet{li2019federated}.

\paragraph{Meta-learning}
In meta-learning (aka \emph{learning to learn}), the objective is to use a collection of tasks to learn how to learn a new task efficiently \citep{vanschoren2019meta}. A particularly influential recent approach is model-agnostic meta-learning (MAML) proposed by \citet{finn2017model}. The core idea has inspired a number of extensions \citep{antoniou2018how, nichol2018first, rusu2018metalearning, grant2018recasting, rajeswaran2019meta, raghu2019rapid}, which broadly use a two-level optimization structure to perform meta-learning. Convergence properties of some of these optimization algorithms were recently studied by \citet{fallah2019convergence}, who also highlight differences in convergence of MAML and first-order approximations to MAML.

\paragraph{Federated optimization}
One of the most common approaches to optimization in the setting of federated learning is the FedAvg method \citep{mcmahan17fedavg}. While designed for heterogeneous sources of data, the study of FedAvg has roots in that of Local SGD~\citep{zinkevich2010parallelized, stich2018local, wang2018cooperative, stich2019error, yu2019parallel, khaled2020tighter}, a communication-efficient optimization method for homogeneous clients. As interest in federated learning has grown, so too has the number of proposed federated optimization methods. These can often be seen as variants of FedAvg, that incorporate techniques such as momentum~\citep{hsu2019measuring}, adaptive optimization~\citep{reddi2020adaptive, xie2019local}, proximal updates~\citep{li2018federated, pathak2020fedsplit} and control variates~\citep{karimireddy2019scaffold}. We again defer to \citet{kairouz2019advances} and \citet{li2019federated} for more detailed references.
    
\paragraph{Convergence (and non-convergence) of FedAvg}
While we defer to \citet[Section 3.2]{kairouz2019advances} for a complete discussion of federated optimization, we discuss a few important connections. First, while there has been huge progress in theoretical understandings of FedAvg, existing works generally have not been able to show that these methods consistently improve upon mini-batch SGD~\citep{woodworth2020local}. Even theoretically and empirically successful techniques such as SCAFFOLD \citep{karimireddy2019scaffold} have only been shown to converge faster than mini-batch SGD on quadratic objectives.
    
This failure of convergence was noted by \citet{li2019convergence}, who showed that without learning rate decay, FedAvg is not guaranteed to converge. Later, \citet{karimireddy2019scaffold} and \citet{woodworth2020local} showed that there are settings where FedAvg converges provably slower than mini-batch SGD. Similarly, \citet{malinovsky2020local} and \citet{pathak2020fedsplit} showed that in heterogeneous settings, FedAvg can converge to sub-optimal points, even in non-stochastic, strongly convex settings. \citet{pathak2020fedsplit} further give a proximal version of federated gradient descent that converges to the empirical risk minimizer in convex settings. 
    
\paragraph{Comparisons to our work}
Our work is most closely related to that of \citet{malinovsky2020local}, \citet{pathak2020fedsplit}. We also evince the non-convergence of FedAvg. However, we extend the analysis to stochastic settings, and to a more general class of algorithms that encompasses many meta-learning algorithms. As such, our work is also closely related to that of \citet{fallah2019convergence}, who demonstrated differences (and non-convergence issues) of various MAML algorithms. Our work takes this a step further, where we give a unified view of both MAML and federated learning methods, and give a broader characterization of the sub-optimal convergence of these methods in the case of quadratic losses. Our work is also novel in its focus on the interplay between convergence, suboptimality, and algorithmic choices, especially learning rates.

\paragraph{Notation}

For a vector $v \in \R^d$, we let $\norm{v}$ denote its $\ell_2$ norm. For a matrix $A \in \R^{n\times m}$, we let $\norm{A}$ denote its operator norm with respect to the $\ell_2$ vector norm. For a symmetric positive semi-definite matrix $A$, we will let $A^{1/2}$ denote its matrix square root. For any real symmetric matrix $A$ (therefore with real eigenvalues), we will let $\lambda_{\max}(A)$ and $\lambda_{\min}(A)$ denote its largest and smallest eigenvalues, respectively.

\section{Preliminaries}

Suppose we wish to learn a model $x \in \R^d$. Let $\mI$ denote some collection of \emph{clients}, and let $\mathcal{P}$ be a distribution on $\mI$. For each $i \in \mI$, we assume that there is an associated \emph{data distribution} $\mD_i$ on some example space $\mZ$. For any $z \in \mZ$, we assume there is a unique corresponding symmetric matrix $A_z \in \R^{d \times d}$ and vector $c_z \in \R^d$, and define a quadratic loss function
\begin{equation}
\label{eq:quadratic_loss}
    f(x; z) := \frac{1}{2}\|A_z^{1/2}(x-c_z)\|^2.
\end{equation}
We let $\nabla f(x; z)$ denote the gradient of the function $x \mapsto f(x; z)$ with respect to $x$. For $i \in \mI$, we define the client loss function $f_i$ and the overall loss function $f$ as follows:
\begin{equation}\label{eq:objective}
f_i(x) := \E_{z \sim \mD_i} [f(x ; z)],~~~f(x) := \E_{i \sim \mP}[f_i(x)].
\end{equation}
One common objective in our setup is to minimize $f(x)$, though this is often not the direct goal of MAML methods. Note that the joint distribution over $(\mI, \mZ)$ implicitly defines a (marginal) distribution over $\mZ$, recovering standard risk minimization frameworks. This framework also encompasses distributed risk minimization in which $\mP$ is a uniform distribution over a finite set of nodes $i \in \mI$ and $\mD_i$ is the uniform distribution over the (finite) dataset stored at node $i$. However, we take a more general approach and do not assume $\mI$ or $\mZ$ to be finite throughout.
We also focus on the \emph{heterogeneous} setting, where the client distributions $\mD_i$ are not all identical, as opposed to the \emph{homogeneous} setting, where all $\mD_i$ are identical.

\paragraph{Modelling assumptions and relevance}
As FL has matured, it has become more evident that there are two varieties, with distinct system-imposed constraints, recently termed by \citet{kairouz2019advances} as \emph{cross-device federated learning} and \emph{cross-silo federated learning}.\footnote{A different categorization, \emph{vertical} and \emph{horizontal}, was proposed by \citet{yang2019federated}, which is based on modelling constraints, rather than on system constraints. The setup in this work applies primarily to horizontal FL, though we expect that much of our framework carries over to the vertical setting.} The primary distinction between these two frameworks that is relevant to our work is that in cross-silo FL, there are relatively few participating clients. Moreover, these clients are typically reliable and almost always available. By contrast, in cross-device FL there are potentially very large numbers of clients, only a small fraction of which are available at any given point in time. Furthermore, the clients cannot be addressed directly or re-identified if participating multiple times. For a more detailed summary, see~\citep[Table~1]{kairouz2019advances}.

In cross-device FL, a client $i$ sampled from $\mI$ corresponds to a single device, and $\mD_i$ corresponds to the data available on that device.
In many practical cross-device FL systems (see~\citet{bonawitz2019towards, hard2018federated}), the server does not control the selection of clients from the global population $\mI$. Instead, participation is initiated by the clients, based on pre-defined eligibility criteria, such as whether the device is charging and on unmetered wifi. Thus, the client distribution $\mP$ can be considered as fixed, with only minor possibilities for it to be shaped by the server (e.g. whether to enforce sampling without replacement).

On the other hand, in many examples of cross-silo FL, participating clients correspond to various medical or financial organizations, or different geographical regions of the same organization \citep{nvidia_clara, yang2019federated}. The participating clients are typically fixed in advance, and often all of them participate in every communication round. Thus, while cross-silo FL may be accurately described by a finite-sum optimization problem, this framework is less useful for cross-device FL.

To see this, consider the task of next word prediction on mobile devices. The FL training described by \citet{hard2018federated} runs for $3000$ communication rounds, with up to $500$ clients participating in each round. That is at most $1.5$ million distinct clients, a small fraction of the total number of possible clients\footnote{As of May 26, 2020, the Google Play Store reports ``1,000,000,000+ installs'' for the GBoard application.}. This also implies that it is nearly impossible to compute exact values of the loss $f(x)$. Instead, evaluation of a model's quality is done using the same mechanism as the training -- by using a subset of the clients eligible at a given time -- which has significant implications for algorithm design (as we discuss in Sections \ref{sec:exp_valid} and \ref{sec:lr_decay}). These issues are exacerbated by heterogeneity; Under extreme heterogeneity, finite-sum modelling approaches may lead to theory that does not accurately represent practical FL systems. Thus, our modelling assumptions are designed to encompass both cross-silo and cross-device setting.

Our setup is also relevant to that of model-agnostic meta-learning (MAML), first proposed by \citet{finn2017model}. In MAML, the main objective is to find a gradient-based mechanism, which given a task $i$ sampled from $\mP$, adapts to have good performance on the distribution $\mD_i$.
Unlike cross-device FL, where we generally cannot quantify $\mP$ directly because of data restrictions, the distribution $\mP$ is the primary object of interest in MAML. However, much like cross-device FL this distribution is generally not known a priori, but instead is problem-dependent.

\subsection{\localupdate algorithms}\label{sec:local_update_algs}

In the following, we will consider a broad class of algorithms that attempt to minimize $f(x)$ (such as in FL methods) or attempt to learn a model that personalizes well with respect to $\mP$ (such as in meta learning algorithms). We refer to these as \localupdate algorithms. In such methods, at each round, a central coordinator (which we will refer to as a \emph{server}) works with $M$ \emph{clients} (or in the language of MAML, tasks) sampled from $\mP$ and broadcasts its global model to the clients. Each client $i$ optimizes its loss function $f_i$ (initializing at the broadcast model) by iteratively applying mini-batch SGD with batch size $B$ and \emph{client} (inner) learning rate $\gamma$. The mini-batch gradients are computed by taking samples from the client's local dataset $\mD_i$. The client then sends a linear combination (parameterized by $\Theta = (\theta_1, \theta_2, \dots)$ where $\theta_i \in \R_{\geq 0}$) of its gradients to the server. We will only consider $\Theta$ with finite support. For such $\Theta$, we define
\[
K(\Theta) = \max\{i~|~\theta_i > 0\}.
\]
Throughout our work, we will omit the trailing zeros in any $\Theta$ with finite support. The server averages the available updates, and, treating this average as a stochastic gradient of the loss function $f(x)$, performs a gradient step with a \emph{server} (outer) learning rate $\eta$. Algorithms \ref{alg:outerloop} and \ref{alg:innerloop} give pseudo-code for \localupdate.

\newcommand{\T}{\rule{0pt}{2.2ex}}
\newcommand{\SUB}[1]{\ENSURE \hspace{-0.15in} \textbf{#1}}
\newcommand{\algfont}[1]{\texttt{#1}}
\renewcommand{\algorithmicensure}{}

\begin{figure}[ht]

\begin{minipage}[t]{.48\textwidth}
\begin{algorithm}[H]
\caption{\localupdate: Outer Loop}
\label{alg:outerloop}
\begin{algorithmic}
\SUB{OuterLoop$(x, \{\eta_t\}_{t\geq 1}, \{\gamma_t\}_{t \geq 1}, \Theta)$:}
\STATE $x_1 = x$
\FOR{each round $t = 1, 2, \dots$, T}
    \STATE $I_t \leftarrow$ (random set of $M$ clients)
    \FOR{each client $i \in I_t$ \textbf{in parallel}}
        \STATE $q_t^i \leftarrow \text{InnerLoop}(i, x_t, \gamma_t, \Theta)$
    \ENDFOR
    \STATE $q_t \leftarrow (\nicefrac{1}{M})\sum_{i \in I_t} q_{t}^i$
    \STATE $x_{t+1} = x_t-\eta_tq_t$
\ENDFOR
\STATE return $x_{T+1}$
\end{algorithmic}
\end{algorithm}
\end{minipage}
\begin{minipage}[t]{.48\textwidth}
\begin{algorithm}[H]
\caption{\localupdate: Inner loop}
\label{alg:innerloop}
\begin{algorithmic}
\renewcommand{\arraystretch}{1.6}
\SUB{$\InnerLoop(i, x, \gamma, \Theta$):}
\STATE $x_1 = x$
\FOR{$k = 1, 2, \dots, K(\Theta)$}
    \STATE sample a set $S_k$ of size $B$ from $\mD_i$
    \STATE $g_k = (\nicefrac{1}{B}) \sum_{z \in S_k} \nabla f(x_k ; z)$
    \STATE $x_{k+1} \leftarrow x_k - \gamma g_k$ 
\ENDFOR
\STATE return $\sum_{k = 1}^{K(\Theta)} \theta_k g_k$
\end{algorithmic}
\end{algorithm}
\end{minipage}

\end{figure}

This method recovers some well-known algorithms for specific choice of $\gamma, \eta$, and $\Theta$. For convenience of notation, we define
\begin{equation}\label{eq:theta_maml}
    \Theta_K = (\underbrace{0, \dots, 0}_{\text{K-1 times}}, 1),
\end{equation}
so in particular $\Theta_1 = (1)$ and $K(\Theta_1) = 1$, and similarly
\begin{equation}\label{eq:theta_fedavg}
    \Theta_{1:K} = (\underbrace{1, \dots, 1}_{\text{K times}}).
\end{equation}

Many existing training algorithms can be expressed as special cases of \localupdate. We give a non-exhaustive list below.
\begin{itemize}
    \item The simplest setting is mini-batch SGD. This can be recovered in multiple ways. For example, suppose each client $i$ corresponds to a single example $z_i$. Then, \localupdate with $\Theta = \Theta_1$ is equivalent to mini-batch SGD with batch size $M$ and learning rate $\eta$.
    \item Alternatively, if there is only a single client ($|\mI|=1$), then \localupdate with $\Theta = \Theta_1$ becomes mini-batch SGD with batch size $B$ and learning rate $\eta$.  As expected, the choice of $\gamma$ has no impact in either instance of mini-batch SGD.
    \item More generally, setting $\Theta = \Theta_1$ recovers distributed mini-batch SGD, with total batch size $MB$ and learning rate $\eta$.  Again, $\gamma$ has no impact on the global model.
    \item When there is a single client and $\Theta = \Theta_{1:K}$, then \localupdate recovers the Lookahead optimizer~\citep{zhang2019lookahead} with $K$ ``fast weights''.
    \item In the homogeneous setting, if $\Theta = \Theta_{1:K}$, and $\gamma = \eta$, then \localupdate is equivalent to Local SGD with $K$ local steps. For further details, see Appendix \ref{sec:special_cases}.
    \item In the heterogeneous setting, if we set $\Theta = \Theta_{1:K}$ and $\gamma = \eta$, then \localupdate is equivalent to FedAvg with $K$ local steps (see Appendix \ref{sec:special_cases} for details). When $\gamma$ and $\eta$ are not necessarily equal, we actually recover Reptile~\citep{nichol2018first}, as well as the Generalized FedAvg algorithm in \citep{reddi2020adaptive}. For convenience of notation, we will refer to this algorithm as FedAvg/Reptile throughout. This equivalence between FedAvg and Reptile was first noted by \citet{jiang2019improving}. In fact, we show in Section \ref{sec:exp_valid} that this decoupling of client and server learning rates is critical to understanding and improving the convergence of FedAvg.
    \item When $\Theta = \Theta_K$, we recover the first-order MAML (FOMAML) algorithm of \citet{finn2017model}. A similar functional relation between FOMAML and Reptile was previously described by \citet{nichol2018first}.
    \item In the MAML algorithm, \citet{finn2017model} use $K$ local update steps for each ``task'' (in our vocabulary, client). We will refer to this as $\mathcal{K}$-MAML throughout. As we show in Section~\ref{sec:maml}, when the underlying loss functions are quadratic and the clients perform gradient descent updates, $\mathcal{K}$-MAML is recovered by setting $\Theta = \Theta_{2K+1}$. This gives a previously unknown connection between FL and MAML algorithms. As we discuss in Section \ref{sec:maml}, this does not hold when the clients use SGD due to potential biases in estimating Hessian-gradient products via stochastic gradients.
\end{itemize}

As written, both clients and server use SGD as their optimizer in \localupdate. However, one could use techniques such as momentum or adaptive learning rates on either the server (as explored by \citet{reddi2020adaptive}) or the client (as explored by \citet{xie2019local}). While our results can be extended to these settings, we leave this to future work. Our goal is not to derive convergence results for as broad a class of algorithms as possible. Rather, we wish to understand how the choice of $\gamma$ and $\Theta$ impact the dynamics of optimization, especially in heterogeneous settings.

We note that in Algorithm \ref{alg:innerloop}, each clients performs a designated number of steps of mini-batch SGD, with samples taken from some underlying client distribution $\mathcal{D}_i$. When $\mathcal{D}_i$ is the uniform distribution over some finite set $\mathcal{S}_i$, we could instead write Algorithm \ref{alg:innerloop} in terms of performing some number of epochs $E$ of mini-batch SGD over $\mathcal{S}_i$, as is done in \citep{mcmahan17fedavg} and many other works on federated learning. In this case, the batch size $B$ dictates the number of client gradient steps (as the client roughly take $E|\mathcal{S}_i|/B$ steps). Thus, in such settings, the choice of $B$ has an analogous impact as the choice of the number of local steps $K(\Theta)$ in Algorithm \ref{alg:innerloop}. For simplicity of analysis, we will analyze the latter throughout, but our results can be easily extended to the former.

\subsection{Outline}
The rest of this paper is organized as follows. In Section~\ref{sec:local_update_as_sgd}, we show that a round of \localupdate method is equivalent to performing a \emph{single} (stochastic) gradient step with respect to  a surrogate objective, which we exactly characterize.

In Section~\ref{sec:example}, we use simple examples to show that the surrogate loss and the original loss can vary substantially. Moreover, we show how choices of $\gamma$ and $\Theta$ affect the discrepancy between the two losses. In particular, we highlight how the choice of $\gamma$ is crucial to the performance of \localupdate. In Section~\ref{sec:properties}, we analyze spectral properties of the surrogate loss, and show that \localupdate can be viewed as implicit regularization on the condition number, where the amount of regularization is controlled by $\gamma$ and $\Theta$.

The next sections present to the best of our knowledge a novel proof technique, characterizing the convergence of FedAvg/Reptile in heterogeneous settings.\footnote{While we focus on FedAvg and Reptile, we note that a similar analysis can be performed for any of the special cases listed above, using a similar proof strategy.} In Section \ref{sec:distance}, we bound the distance between the minimizers of the surrogate and the true loss function in terms of the client learning rate $\gamma$. We use these results in Section \ref{sec:convergence} to derive convergence rates for FedAvg/Reptile that highlight how the choice of client learning rate $\gamma$ gives rise to a trade-off between local and global optimization. In particular, we show that learning rate decay is both sufficient and necessary for convergence to the true risk minimizer.

While our theoretical results are valid only for quadratic loss functions, in Section~\ref{sec:exp_valid} we show empirically that our conclusions carry over to more general settings, including non-convex objectives. Our empirical results highlight the importance of learning rate tuning in federated learning. In Section \ref{sec:lr_decay}, we combine our theoretical insights with important systems-level constraints to design a method for automatic learning rate decay methods for local update methods. In particular, we present a simple, easy to implement method for automatic learning rate decay, and show its efficacy in improving accuracy and reducing the need for client learning rate tuning.

\section{\localupdate as SGD}
\label{sec:local_update_as_sgd}

When $K(\Theta) > 1$ and $\gamma > 0$, the dynamics of \localupdate may be very different than those of mini-batch SGD. We will show that for quadratic functions, these dynamics are related but distinct. In particular, we will show that any local update method on a quadratic function can be viewed as SGD on some appropriately defined surrogate loss function. Moreover, the discrepancy between the true loss function and the surrogate loss function is dictated by the choice of client learning rate $\gamma$ and $\Theta$.

For $i \in \mI$, define:
\[A_i := \E_{z \sim \mD_i}[A_z].\]
We assume throughout that $A_i$ is finite and invertible. We also define
\[c_i := A_i^{-1}\E_{z \sim \mD_i}[A_zc_z].\]
Again, we assume this is finite. We then have the following lemma.
\begin{lemma}\label{lem:f_i_loss}
For all $i \in \mI$, there is some constant $\tau_i$ such that
\[
f_i(x) = \dfrac{1}{2}\norm{A_i^{1/2}(x-c_i)}^2 + \tau_i.
\]
\end{lemma}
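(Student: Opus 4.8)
The plan is to expand the per-example quadratic $f(x;z)$ into monomials in $x$, push the expectation $\E_{z\sim\mD_i}$ through term by term, and then check that the definitions of $A_i$ and $c_i$ are precisely what is needed to match the quadratic and linear coefficients of the target expression $\tfrac12\norm{A_i^{1/2}(x-c_i)}^2$. Whatever remains is independent of $x$, and that leftover scalar is $\tau_i$.

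First I would drop the square root. Since $A_z$ is symmetric positive semi-definite, $A_z^{1/2}$ is symmetric with $(A_z^{1/2})^\top A_z^{1/2}=A_z$, so
\[
f(x;z)=\tfrac12 (x-c_z)^\top A_z (x-c_z)=\tfrac12 x^\top A_z x - x^\top A_z c_z + \tfrac12 c_z^\top A_z c_z.
\]
Taking $\E_{z\sim\mD_i}$ and using linearity to move it inside each term yields
\[
f_i(x)=\tfrac12 x^\top A_i x - x^\top \E_{z\sim\mD_i}[A_z c_z] + \tfrac12\,\E_{z\sim\mD_i}[c_z^\top A_z c_z],
\]
where I have used $A_i=\E_{z\sim\mD_i}[A_z]$ for the quadratic term.

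The key step is the linear term. By the definition $c_i=A_i^{-1}\E_{z\sim\mD_i}[A_z c_z]$ (which is where invertibility of $A_i$ enters), we have $\E_{z\sim\mD_i}[A_z c_z]=A_i c_i$, so the linear term is exactly $-x^\top A_i c_i$. Expanding the target expression in the same monomials,
\[
\tfrac12\norm{A_i^{1/2}(x-c_i)}^2=\tfrac12 (x-c_i)^\top A_i (x-c_i)=\tfrac12 x^\top A_i x - x^\top A_i c_i + \tfrac12 c_i^\top A_i c_i,
\]
I see the quadratic and linear terms agree. Subtracting, the difference $f_i(x)-\tfrac12\norm{A_i^{1/2}(x-c_i)}^2$ equals
\[
\tau_i:=\tfrac12\bigl(\E_{z\sim\mD_i}[c_z^\top A_z c_z]-c_i^\top A_i c_i\bigr),
\]
which is independent of $x$, establishing the claim.

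The only genuine obstacle is bookkeeping rather than depth: one must ensure every expectation is well-defined and finite so that the term-by-term interchange is legitimate and $\tau_i$ is an honest finite constant. This is exactly what the standing assumptions supply — finiteness and invertibility of $A_i$, finiteness of $c_i$, and finiteness of the scalar second moment $\E_{z\sim\mD_i}[c_z^\top A_z c_z]$. I would also note that $A_i^{1/2}$ is well defined because $A_i$, being an average of positive semi-definite matrices that is assumed invertible, is positive definite, and that no assumption on the individual $A_z$ beyond positive semi-definiteness is required.
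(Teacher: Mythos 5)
Your proof is correct and follows essentially the same route as the paper's: expand the quadratic, pass the expectation through term by term, use the definitions of $A_i$ and $c_i$ to match the quadratic and linear coefficients, and absorb the leftover scalar into $\tau_i$. In fact your constant $\tau_i=\tfrac12\bigl(\E_{z\sim\mD_i}[c_z^\top A_z c_z]-c_i^\top A_i c_i\bigr)$ is the correct one; the paper's appendix records it as $-c_i^\top A_i c_i+\tfrac12\E_{z\sim\mD_i}[c_z^\top A_z c_z]$, missing the factor $\tfrac12$ on the first term, which is a minor slip that your bookkeeping avoids.
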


In the sequel, we will omit the constant term $\tau_i$, and let
\[
f_i(x) = \dfrac{1}{2}\norm{A_i^{1/2}(x-c_i)}^2
\]
as this does not change the gradients of the loss $f_i$. Since each $A_z$ is symmetric and positive definite, so is $A_i$. We define the following:
\[
    A := \E_{i\sim\mP}[A_i],~~~c:=\E_{i\sim\mP}[c_i].
\]
We will assume that these expectations exist and are finite throughout. We will also utilize the following mild assumptions at different times.

\begin{assumption}\label{assm0}
$K(\Theta) > 0$.
\end{assumption}

\begin{assumption}\label{assm1}
There are $\mu, L > 0$ such that for all $i$, 
\[\mu I \preceq A_i \preceq L I.\]
\end{assumption}

\begin{assumption}\label{assm2}There are finite $\sigma_A$ and $\sigma_c$ such that
\[
\E_{i\sim\mP}\sbr{\norm{A_i-A}^2} \leq \sigma_A^2.
\]
\[
\E_{i\sim\mP}\sbr{\norm{c_i-c}^2} \leq \sigma_c^2.
\]
\end{assumption}

Assumption \ref{assm0} prevents pathologically bad choices of $\Theta$ in which clients simply send 0 to the server at every round. Assumption \ref{assm1} amounts to assuming upper and lower bounds on the Lipschitz and strong convexity parameters of each loss function $f_i$. This is satisfied if there are a finite number of clients, and for each, $A_i$ is positive definite. However, it is often true in more generality if the underlying matrices $A_z$ satisfy some kind of bounded eigenvalue condition. Moreover, when the number of clients is finite, we can always ensure that $\mu I \preceq A_i$ for all $i$ by adding $\ell_2$ regularization to our objective function.

Assumption \ref{assm2} assumes that the matrices $A_i$ and optimal points $c_i$ for each loss function have bounded variance. We do not assume that the gradients computed by the clients have bounded norm. Intuitively, as $\sigma_c \to 0$, local update methods should provide more benefit, as the clients are taking more steps towards a shared optimum. While $\sigma_c = 0$ in the case of homogeneous data distributions (i.e. $\mD_i$ are the same for all $i\in\mI$), these two conditions are not equivalent. There are heterogeneous data distributions which still yield $\sigma_c = 0$. Also, note that $c$ is in general not the minimizer of the objective $f$.

Fix $i \in \mI$, and consider Algorithm \ref{alg:innerloop}. We initialize $x_1 = x$, and then at each iteration $k$ we sample a set $S_k$ uniformly at random (with replacement) from $\mD_i$, then update via
\begin{equation}\label{eq:inner_g_update}
    g_{k} = \frac{1}{B}\sum_{z \in S_k} \nabla f(x_k ; z),
\end{equation}
\begin{equation}\label{eq:inner_x_update}
    x_{k+1} = x_k- \gamma g_k.
\end{equation}
We first prove a basic recurrence relation concerning the local gradients $g_k$ for task $i$.
\begin{lemma}\label{lem:grad_recurrence}
For $i \in \mI$, suppose that $A_i$ is invertible and $g_k$ as in Algorithm \ref{alg:innerloop}, for all $k \geq 1$,
\begin{equation}\label{eq:grad_recurrence}
    \E[g_{k+1}] = (I-\gamma A_i)\E[g_k].
\end{equation}
\end{lemma}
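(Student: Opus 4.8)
The plan is to exploit the fact that for a quadratic loss the stochastic gradient evaluated at a point is, in expectation over a fresh i.i.d. sample, exactly the true gradient of the client loss $f_i$ at that point. First I would record that since $f(x;z) = \tfrac{1}{2}\norm{A_z^{1/2}(x-c_z)}^2$, the per-example gradient is $\nabla f(x;z) = A_z(x-c_z)$, so that the mini-batch gradient in \eqref{eq:inner_g_update} is $g_k = \tfrac{1}{B}\sum_{z\in S_k} A_z(x_k - c_z)$.

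The crucial observation is that the batch $S_{k+1}$ is drawn independently of the iterate $x_{k+1}$, since $x_{k+1}$ is a deterministic function of the initial point $x_1$ and the earlier batches $S_1,\dots,S_k$ alone. Conditioning on $x_{k+1}$ and averaging over $S_{k+1}\sim\mD_i$, I would compute
\[
\E[g_{k+1}\mid x_{k+1}] = \E_{z\sim\mD_i}[A_z]\,x_{k+1} - \E_{z\sim\mD_i}[A_z c_z] = A_i x_{k+1} - A_i c_i = A_i(x_{k+1} - c_i),
\]
where the middle equality uses $A_i = \E_{z\sim\mD_i}[A_z]$ together with $A_i c_i = \E_{z\sim\mD_i}[A_z c_z]$ (the latter being exactly how $c_i$ is defined, which is where invertibility of $A_i$ is needed so that $c_i$ is well-defined). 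The identical argument with $k$ in place of $k+1$ yields $\E[g_k\mid x_k] = A_i(x_k - c_i)$, so by the tower property $\E[g_k] = A_i(\E[x_k]-c_i)$ and $\E[g_{k+1}] = A_i(\E[x_{k+1}]-c_i)$.

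Finally I would substitute the inner update $x_{k+1} = x_k - \gamma g_k$ from \eqref{eq:inner_x_update}, which gives $\E[x_{k+1}] = \E[x_k] - \gamma\E[g_k]$. Plugging this in,
\[
\E[g_{k+1}] = A_i\big(\E[x_k] - \gamma\E[g_k] - c_i\big) = A_i(\E[x_k]-c_i) - \gamma A_i\E[g_k] = \E[g_k] - \gamma A_i\E[g_k] = (I-\gamma A_i)\E[g_k],
\]
where the penultimate step re-uses the identity $\E[g_k] = A_i(\E[x_k]-c_i)$. There is no genuinely hard step here; the only subtlety to get right is the independence of $S_{k+1}$ from $x_{k+1}$, which is precisely what lets the expected fresh-sample gradient collapse to the deterministic gradient $A_i(x-c_i)$ of $f_i$, after which the recurrence follows by linearity.
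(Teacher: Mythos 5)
Your proof is correct and follows essentially the same route as the paper's: both rest on the observation that the fresh batch $S_{k+1}$ is independent of the iterate, so $\E[g_{k+1}] = A_i(\E[x_{k+1}] - c_i)$ by the tower property, after which substituting $x_{k+1} = x_k - \gamma g_k$ and reusing the same identity for $\E[g_k]$ yields the recurrence. The only cosmetic difference is that the paper passes through an explicit inversion step $\E[x_k] = A_i^{-1}(\E[g_k] - b_i)$ before substituting, whereas you reuse the identity $\E[g_k] = A_i(\E[x_k]-c_i)$ directly, which is an equivalent (and slightly tidier) arrangement of the same algebra.
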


\paragraph{Defining the surrogate loss}

Using Lemma \ref{lem:grad_recurrence}, we will show that Algorithm \ref{alg:innerloop} can be viewed as performing SGD on a \textit{surrogate loss}. This surrogate loss will be parameterized by the inputs $\gamma$ and $\Theta$ to Algorithm \ref{alg:innerloop}. To define the surrogate loss, we first define, for each client $i \in \mI$, a \textit{distortion matrix} $Q_i(\gamma, \Theta)$ as follows:
\begin{equation}\label{eq:Q_matrix}
Q_i(\gamma, \Theta) := \sum_{k=1}^{K(\theta)} \theta_k (I-\gamma A_i)^{k-1}.
\end{equation}
We can then define, for each $i \in \mI$, the client's surrogate loss function:
\begin{equation}\label{eq:surrogate_i}
\tilde{f}_i(x, \gamma, \Theta) := \frac{1}{2}\|(Q_i(\gamma, \Theta)A_i)^{1/2}(x-c_i)\|.
\end{equation}
The overall surrogate loss function is then given by
\begin{equation}\label{eq:surrogate_loss}
    \tilde{f}(x, \gamma, \Theta) := \E_{i \sim \mP} [\tilde{f}_i(x, \gamma, \Theta)].
\end{equation}
Informally, the matrix $Q_i(\gamma, \Theta)$ can be viewed as causing a distortion to the matrix $A_i$. When $\Theta = \Theta_1$, one can see that $Q_i(\gamma, \Theta) = I$, in which case there is no distortion. For other $\Theta$, $Q_i(\gamma, \Theta)$ may significantly distort $A_i$, and can amplify heterogeneity of the $A_i$. Using Lemma \ref{lem:grad_recurrence}, we derive the following property of the output of the Algorithm~\ref{alg:innerloop}.
\begin{theorem}\label{thm:sgd_objective}
Suppose that $A_i$ is invertible. Then
\begin{equation}\label{eq:thm_1}
\E[\InnerLoop(i, x, \gamma, \Theta)] = \nabla_x \tilde{f}_i(x, \gamma, \Theta) = Q_i(\gamma, \Theta)\nabla f_i(x).
\end{equation}
\end{theorem}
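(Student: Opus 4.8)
The plan is to establish the two equalities separately: first the left identity $\E[\InnerLoop(i,x,\gamma,\Theta)] = Q_i(\gamma,\Theta)\nabla f_i(x)$, which is the substantive part and relies on Lemma~\ref{lem:grad_recurrence}, and then the right identity $\nabla_x \tilde{f}_i(x,\gamma,\Theta) = Q_i(\gamma,\Theta)\nabla f_i(x)$, which is a direct gradient computation.

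For the first equality, I would begin from the definition of the inner loop output, $\InnerLoop(i,x,\gamma,\Theta) = \sum_{k=1}^{K(\Theta)} \theta_k g_k$, and use linearity of expectation to write $\E[\InnerLoop(i,x,\gamma,\Theta)] = \sum_{k=1}^{K(\Theta)} \theta_k \E[g_k]$. The task then reduces to evaluating $\E[g_k]$ in closed form. Lemma~\ref{lem:grad_recurrence} supplies the recurrence $\E[g_{k+1}] = (I-\gamma A_i)\E[g_k]$, so a straightforward induction on $k$ gives $\E[g_k] = (I-\gamma A_i)^{k-1}\E[g_1]$. Substituting this back and factoring $\nabla f_i(x)$ out of the sum recovers $\sum_{k=1}^{K(\Theta)} \theta_k (I-\gamma A_i)^{k-1} = Q_i(\gamma,\Theta)$ acting on $\nabla f_i(x)$, which is exactly the claim.

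The one genuinely substantive step is pinning down the base case $\E[g_1] = \nabla f_i(x)$. Since $x_1 = x$ is deterministic, $g_1 = (1/B)\sum_{z\in S_1}\nabla f(x;z)$ with $S_1$ sampled from $\mD_i$, so $\E[g_1] = \E_{z\sim\mD_i}[\nabla f(x;z)]$. Using $\nabla f(x;z) = A_z(x-c_z)$ together with the definitions $A_i = \E_z[A_z]$ and $A_i c_i = \E_z[A_z c_z]$, this expectation collapses to $A_i x - A_i c_i = A_i(x-c_i) = \nabla f_i(x)$; equivalently, one can simply exchange gradient and expectation to get $\E_z[\nabla f(x;z)] = \nabla \E_z[f(x;z)] = \nabla f_i(x)$. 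Invertibility of $A_i$ enters only to make $c_i$ well-defined. I expect this base-case bookkeeping, rather than the induction itself, to be the main (if mild) obstacle, since it is where the definitions of $A_i$ and $c_i$ must be unwound.

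For the second equality, I would observe that $Q_i(\gamma,\Theta)$ is a polynomial in $A_i$, hence symmetric and commuting with $A_i$, so $Q_i(\gamma,\Theta)A_i$ is symmetric. Reading the surrogate loss as the quadratic form $\tilde{f}_i(x,\gamma,\Theta) = \tfrac{1}{2}(x-c_i)^\top Q_i(\gamma,\Theta)A_i (x-c_i)$, its gradient in $x$ is $Q_i(\gamma,\Theta)A_i(x-c_i)$ by symmetry of the defining matrix. Since $\nabla f_i(x) = A_i(x-c_i)$, this equals $Q_i(\gamma,\Theta)\nabla f_i(x)$, matching the middle and right-hand expressions and completing the proof.
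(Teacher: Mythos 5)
Your proposal is correct and follows essentially the same route as the paper: iterate the recurrence of Lemma~\ref{lem:grad_recurrence} down to $\E[g_1]$, evaluate $\E[g_1] = A_i(x-c_i) = \nabla f_i(x)$, and match this against the direct computation $\nabla_x \tilde{f}_i(x,\gamma,\Theta) = Q_i(\gamma,\Theta)A_i(x-c_i)$ (the paper merely compresses your induction and base case into one displayed chain). The only quibble is your side remark that invertibility of $A_i$ ``enters only to make $c_i$ well-defined''---it is also a hypothesis of Lemma~\ref{lem:grad_recurrence} itself, whose proof inverts $A_i$---but this does not affect the validity of your argument.
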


\begin{proof}
By direct computation, 
\[\nabla \tilde{f}_i(x, \gamma, \Theta) = Q_i(\gamma, \Theta)A_i(x-c_i).\]
On the other hand, by Lemma \ref{lem:grad_recurrence},
\begin{align*}
\E[\InnerLoop(i, w, \gamma, \Theta)] &= \sum_{k=1}^{K(\theta)} \theta_k(1-\gamma A_i)^{k-1}\E[g_1]\\
&= \sum_{k=1}^{K(\theta)} \theta_k(1-\gamma A_i)^{k-1}A_i(x-c_i)\\
&= Q_i(\gamma, \Theta)A_i(x-c_i).
\end{align*}
This proves the first equality. The second follows from noting that $\nabla f_i(x) = A_i(x-c_i)$.
\end{proof}

Let $q_t$ be as in Algorithm \ref{alg:outerloop}. Then Theorem \ref{thm:sgd_objective} implies $\E[q_t] = \nabla \tilde{f}(x, \gamma, \Theta)$.
In particular, one round of \localupdate with a given $\gamma, \Theta$ is equivalent to performing one step of SGD on the surrogate loss function $\tilde{f}(x, \gamma, \Theta)$.

We note that a version of Theorem was first shown for the case $\Theta = \Theta_{2}$ by \citet{fallah2020personalized}, and was used to compare the behavior of FOMAML and MAML. We will take this comparison a step further, by showing in Section \ref{sec:maml} that in the non-stochastic client setting, MAML can also be viewed as performing SGD on a similarly-defined surrogate loss.

Theorem \ref{thm:sgd_objective} has important consequences regarding the impact of other ``outer optimizers'' in Algorithm \ref{alg:outerloop}, such as the adaptive server optimization \citep{reddi2020adaptive}. If we treat the the output of Algorithm~\ref{alg:innerloop} simply as a stochastic gradient oracle of $\tilde{f}(x, \gamma, \Theta)$, we can apply existing convergence guarantees of any gradient based methods to understand the behavior of \localupdate method with different outer optimizers. In particular, this implies that the choice of outer optimization method primarily impacts the \emph{speed} of convergence to $\argmin_x \tilde{f}(x, \gamma, \Theta)$, but not the point \localupdate actually converges to. We empirically analyze the use of adaptive server methods in \localupdate in Sections \ref{sec:exp_valid} and \ref{sec:lr_decay}.

\subsection{MAML}\label{sec:maml}

As previously discussed, in the setting above, one can actually view MAML as a special case of \localupdate. In this section we elaborate on the claim, using a similar presentation of MAML as in \citep{nichol2018first}. MAML with $K$ local steps (which we refer to as $\mathcal{K}$-MAML) can be viewed as a simple modification of \localupdate. Algorithm \ref{alg:outerloop} proceeds in the same manner. In Algorithm \ref{alg:innerloop}, each client still executes $K$ mini-batch SGD steps. However, what each client sends to the server differs from Algorithm \ref{alg:outerloop}.

For simplicity, we define $X_{K}^i(x)$ as the function that runs $K$ steps of mini-batch SGD, starting from $x$, for some fixed mini-batches $S_1, \dots, S_K$ of size $B$ drawn independently from $\mathcal{D}_i$. For convenience, we let $X_0^i(x) = x$. We then define
\begin{equation}
m^i_K(x ; z) := f(X_{K}^i(x); z),
\end{equation}
\begin{equation}
m_{K}^i(x) := \E_{z \sim \mD_i} m^i_K(x ; z).
\end{equation}
Note that these are implicitly functions of the mini-batches $S_1, \ldots, S_K$ sampled from $\mD_i$. The output $q^i(x)$ of client $i$ (as a function of its initial model $x$) is a stochastic estimate of $\nabla m_K^i(x)$, so that
\begin{equation}\label{eq:maml_output}
\E[q^i(x)] = \nabla m_K(x ; z),
\end{equation}
The remainder of the MAML algorithm proceeds in the same way as \localupdate. Namely, the server averages the client outputs, and uses this as a gradient estimate with learning rate $\eta$. That is,
\begin{equation}\label{eq:maml_server_update}
q(x) = \sum_{i \in I} q^i(x),
\end{equation}
\begin{equation}\label{eq:maml_server_update_2}
x' = x - \eta q(x).
\end{equation}

We now show that when the clients use gradient descent to perform their local update, $\mathcal{K}$-MAML is in expectation equivalent to performing \localupdate with $\Theta = \Theta_{2K+1}$.
\begin{theorem}\label{thm:maml}If $X_K^i(x)$ is the function that runs $K$ steps of gradient descent, starting from $x$, on the client dataset $\mD_i$, then
\[
\nabla m^i_K(x) = \E[\InnerLoop(x, \gamma, \Theta_{2K+1})] = \nabla_{x} \tilde{f}_i(x, \gamma, \Theta_{2K+1}).
\]
\end{theorem}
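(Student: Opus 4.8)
The plan is to compute $\nabla m_K^i(x)$ explicitly from the quadratic structure, show it equals $Q_i(\gamma,\Theta_{2K+1})\nabla f_i(x)$, and then invoke Theorem~\ref{thm:sgd_objective} to chain together all three quantities in the statement. First I would exploit the fact that, when the inner updates are full gradient descent on $f_i$, the one-step map is affine: since $\nabla f_i(x) = A_i(x-c_i)$, we have $x_{k+1}-c_i = (I-\gamma A_i)(x_k-c_i)$. Iterating gives $X_K^i(x) = c_i + (I-\gamma A_i)^K(x-c_i)$, so the adaptation map is affine with constant Jacobian $(I-\gamma A_i)^K$. For brevity write $P := I-\gamma A_i$.

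Next I would compute the gradient of $m_K^i(x) = f_i(X_K^i(x))$ via the chain rule, $\nabla m_K^i(x) = J_{X_K^i}(x)^\top \nabla f_i(X_K^i(x))$. The Jacobian is $P^K$, which is symmetric because $A_i$ is, and $\nabla f_i(X_K^i(x)) = A_i P^K(x-c_i)$. Hence $\nabla m_K^i(x) = P^K A_i P^K(x-c_i)$. Since $P$ is a polynomial in $A_i$ it commutes with $A_i$, so $P^K A_i P^K = P^{2K}A_i$, which yields $\nabla m_K^i(x) = P^{2K}A_i(x-c_i)$.

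It then remains to recognize the right-hand side. Because $\Theta_{2K+1}$ places a single unit weight in slot $2K+1$, the distortion matrix collapses to $Q_i(\gamma,\Theta_{2K+1}) = P^{(2K+1)-1} = P^{2K}$, and $\nabla f_i(x) = A_i(x-c_i)$, so $Q_i(\gamma,\Theta_{2K+1})\nabla f_i(x) = P^{2K}A_i(x-c_i) = \nabla m_K^i(x)$. Finally, applying Theorem~\ref{thm:sgd_objective} with $\Theta = \Theta_{2K+1}$ gives $\E[\InnerLoop(i,x,\gamma,\Theta_{2K+1})] = Q_i(\gamma,\Theta_{2K+1})\nabla f_i(x) = \nabla_x \tilde{f}_i(x,\gamma,\Theta_{2K+1})$, closing the chain of equalities.

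The main subtlety, and the reason the gradient-descent hypothesis is essential, is the Jacobian factor $P^K$: the exponent $2K$ arises from one factor of $P^K$ capturing the sensitivity of the adapted iterate to $x$ and one factor from evaluating $\nabla f_i$ at that iterate. With full-batch gradient descent the per-step Jacobian is exactly $I-\gamma A_i$ deterministically, so these factors multiply cleanly and commute with $A_i$. If the clients instead used mini-batch SGD, the Jacobian would be a product $\prod_k (I-\gamma H_k)$ of \emph{stochastic} Hessians that are correlated with the stochastic gradient at which it is evaluated, and the expectation would no longer factor as $Q_i\nabla f_i$. This is precisely the Hessian--gradient estimation bias alluded to after the statement, and it is where the deterministic inner-loop assumption does the real work.
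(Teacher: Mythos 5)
Your proof is correct and follows essentially the same route as the paper's: both compute $\nabla m_K^i(x)$ via the chain rule, using that each gradient-descent step has Jacobian $I-\gamma A_i$, combine the factor $(I-\gamma A_i)^K$ from the adaptation map with the factor from $\nabla f_i$ evaluated at $X_K^i(x)$ (via symmetry and commutativity with $A_i$) to obtain $(I-\gamma A_i)^{2K}\nabla f_i(x) = Q_i(\gamma,\Theta_{2K+1})\nabla f_i(x)$, and then close the chain of equalities with Theorem~\ref{thm:sgd_objective}. If anything, your explicit affine form $X_K^i(x) = c_i + (I-\gamma A_i)^K(x-c_i)$ makes the bookkeeping cleaner than the paper's step-by-step Jacobian product, whose write-up contains some notational slips in the exponents.
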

It is fruitful to reflect on what this means. Informally, this result shows that for quadratic functions, the gradient of the loss after $K$ steps of gradient descent, taken with respect to the initial point $x$, is in expectation the gradient of the loss function after $K+1$ additional SGD steps. In particular, given $q(x)$ as in \eqref{eq:maml_output}, we have
\[
\E[q(x)] = \nabla_x \tilde{f}(x, \gamma, \Theta_{2K+1}).
\]
Thus, the MAML update in \eqref{eq:maml_server_update_2} amounts to a single stochastic gradient update on the surrogate loss function $\tilde{f}(x, \gamma, \Theta_{2K+1}).$

We note that this result relied on the clients using gradient descent to compute $X_k^i(x)$. For computational efficiency, this is often instead done using mini-batch SGD. However, computing $\nabla m_K^i(x)$ then involves computing unbiased estimates of the Hessian and gradient using the same batches of data. By the chain rule, estimating $\nabla m_K^i(x)$ involves multiplying these Hessian and gradient estimates. However, the product of these unbiased estimators need not be unbiased since they were computed with respect to the same batch of data. Thus, this correspondence between $\mathcal{K}$-MAML and \localupdate may break down in computationally-efficient (but biased) MAML implementations. For more detailed discussion on this bias, see \citep{fallah2019convergence}. We also note that \citet{fallah2019convergence} analyze the convergence properties of MAML and FOMAML, and independently observe that MAML and FOMAML need not share stationary points for quadratic objectives.

\section{Local update methods tend towards different global minima}\label{sec:example}
As we will show in Section \ref{sec:properties}, $\tilde{f}_i(x, \gamma, \Theta)$ shares many properties with $f_i(x)$, including having the same global minima. However, we first highlight that a crucial correspondence breaks down when considering the population-level global minima in heterogeneous settings. That is, in general
\[\argmin_x f(x) \neq \argmin_x \tilde{f}(x, \gamma, \Theta).\]
In order to enhance our understanding of the surrogate loss function, we first give both analytic and empirical examples.

\paragraph{Analytic examples} Let $\mathcal{P}$ have support $\{1, 2\}$, with each option equally likely. Suppose that $\mD_1, \mD_2$ have support only on the points $z = 1$ and $z = 2$ respectively, and suppose $A_z = z, c_z = 1/z$, so that $f(x; z) = \frac{1}{2}z(x - 1/z)^2$.
We see that $f(x)$ is minimized at $x = 2/3$. On the other hand, let $\Theta = \Theta_{1:2}$. For $\gamma \geq 0$, we can compute the $i$-th distortion matrix by
\begin{align*}
    Q_i(\gamma, \Theta) = 1 + (1-\gamma i) = 2-\gamma i.
\end{align*}
Note that this is positive definite for $i = 1, 2$ as long as $\gamma \in [0, 1/2)$. By \eqref{eq:surrogate_loss},
\begin{align*}
\tilde{f}(x, \gamma, \Theta) = \dfrac{(2-\gamma)(x-1)^2}{4} + (1-\gamma)(x-1/2)^2.
\end{align*}
For $\gamma \in [0, 6/5)$, this is a positive definite quadratic function, with minimum given by
\[
\argmin_{x} \tilde{f}(x, \gamma, \Theta) = \dfrac{4-3\gamma}{6-5\gamma},
\]
implying
\[
\norm{\argmin_{x} \tilde{f}(x, \gamma, \Theta) - \argmin_{x} f(x)} = \dfrac{\gamma}{3(6-5\gamma)}.
\]
Therefore, even if we run \localupdate until convergence, it would not converge to the true risk minimizer. This holds even though there are only two clients, each with a single data point. In other words, some form of learning rate decay is \emph{necessary} for convergence to the risk minimizer. The necessity of learning rate decay was first shown by \citet{li2019convergence}, and later shown in \citep{malinovsky2020local} and \citep{pathak2020fedsplit}. We take this analysis further, by showing how this sub-optimal behavior is explicitly governed by algorithmic choices, especially learning rate and the number of local steps taken.

For instance, in the example above, as $\gamma \to 0$, the surrogate risk minimizer converges to the true risk minimizer. Conversely, as $\gamma$ increases, the two minima become further apart. In fact, as $\gamma \to 6/5$, the distance between the two optima grows arbitrarily large, even though $\tilde{f}(x, \gamma, \Theta)$ is a positive definite quadratic function. The critical issue here is that while $\gamma < 6/5$ ensures that $f(x, \gamma, \Theta_{1:2})$ is well-behaved (ie. that it is a positive definite quadratic function), it is not small enough to ensure that each client's surrogate loss function $\tilde{f}_i(x, \gamma, \Theta)$ is well-behaved. Note $f_2(x, \gamma, \Theta_{1:2})$ is a negative definite quadratic function for $\gamma > 1/2$, which causes this divergence. Even though the iterates of \localupdate are converging, there may be some client which is not converging in any meaningful sense. Therefore:

\begin{center}
    \textbf{The client learning rate should be set sufficiently small so that all client loss functions are well-behaved, even if the overall loss function is well-behaved.}
\end{center}

A similar analysis shows that if we instead fix $\gamma$, the distance between the surrogate risk and true risk minimizers depends on $K$. Let $\gamma = 1/2$. By \eqref{eq:Q_matrix},
\[
Q_1(1/2, \Theta_{1:K}) = \sum_{k=0}^{K-1} 2^{-k} = 2-2^{-(K-1)},~~Q_2(1/2, \Theta_{1:K}) = 1,
\]
implying
\begin{align*}
    f(x, 1/2, \Theta_{1:K}) =\dfrac{1}{2}\left((1-2^{-K})(x-1)^2 + (x-1/2)^2\right).
\end{align*}
This is a positive definite quadratic with minima given by
\[
\argmin_x f(x, 1/2, \Theta_{1:K}) = \dfrac{3\cdot 2^{K} - 2}{2^{K+2}-2}.
\]
This then implies
\[
\norm{\argmin_{x} f(x, 1/2, \Theta_{1:K}) - \argmin_{x} f(x)} = \dfrac{2^K-2}{6(2^{K+1}-1)}.
\]

When $K = 1$, this gap is 0, while as $K \to \infty$, the distance increases monotonically to $1/12$. In fact, as $K \to \infty$, the minimizer of the surrogate loss function converges to the expected value of the client loss minimizers. In Lemma \ref{lem:asymptotic} we prove an even stronger statement, and show that it holds for all positive definite quadratic loss functions.

\paragraph{Empirical examples}
Next, we give an empirical generalization of the above example for further illustration. Let $f(x; z) = (1/2)z x^2 - x$ for $z > 0$ (ie. $A_z = z$ and $c_z = 1/z$). We let $\mP$ have support $[0.5, 2]$ and density function $q(i) = 8i/15$. For each $i \in \supp(\mP)$, we let the client distribution $\mD_i$ be supported on a single point $z = i$, so that $f_i(x) = (1/2)ix^2 - x$. Again, deterministic $f_i$ will still be sufficient observe discrepancies between the true and surrogate loss functions.

In Figure~\ref{fig:1d-example}, we plot the behavior of FedAvg/Reptile with $K$ local steps (\localupdate with $\Theta = \Theta_{1:K}$) on this problem. We fix $\eta = 10^{-4}, M = B = 1$, and illustrate the change in behavior as either $\gamma$ varies and $K$ is fixed, or $K$ varies and $\gamma$ is fixed. We also plot the true minimizer and the average minimizer, $\E_{i\sim\mathcal{P}} [\argmin_x f_i(x)]$.

\begin{figure}[ht]
\centering
\includegraphics[width=0.9\linewidth]{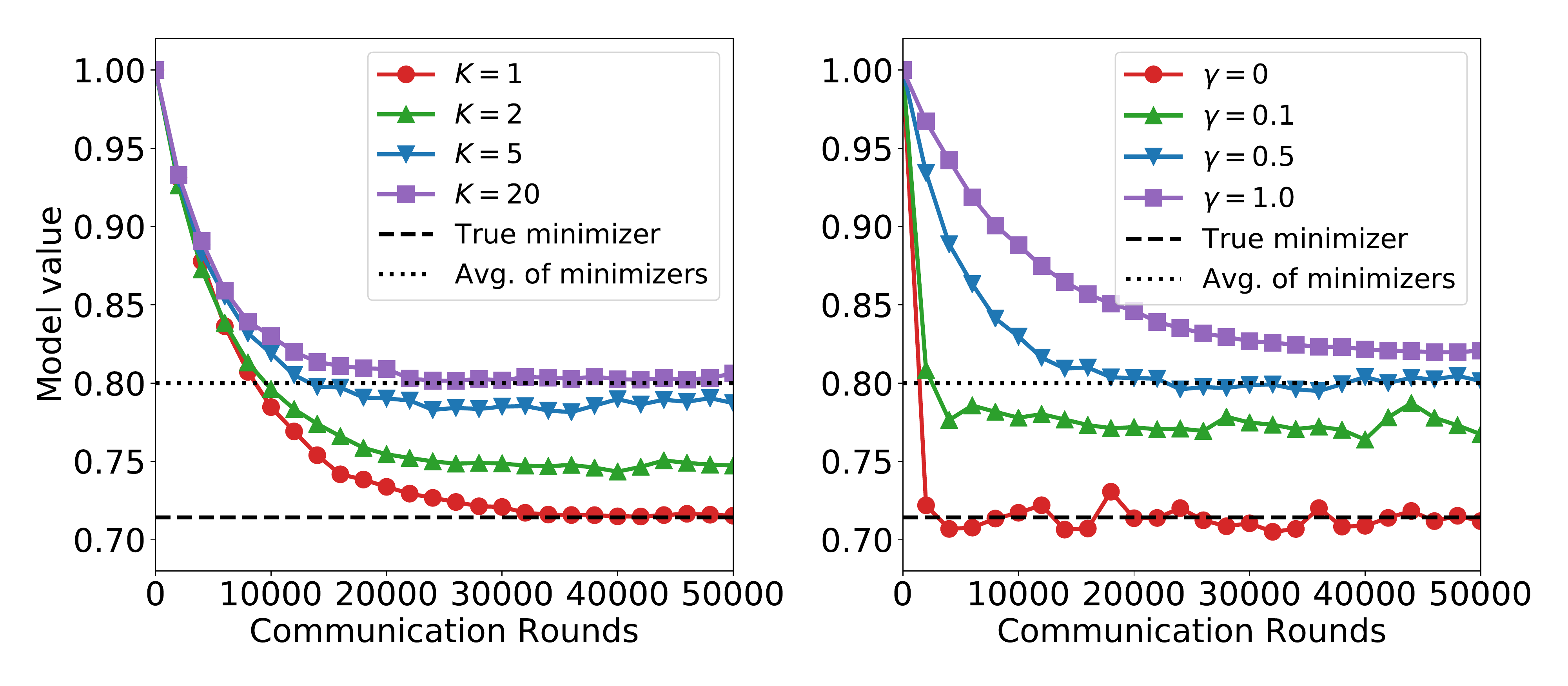}
\caption{\localupdate convergence behavior with $\Theta = \Theta_{1:K}$ on an example $1$-dimensional problem. (Left) We fix $\gamma = 0.5$ and vary $K$. (Right) We fix $K = 20$ and vary $\gamma$.}
\label{fig:1d-example}
\end{figure}

If $\gamma = 0$ or $K = 1$, we converge to $x^*$. As $\gamma$ or $K$ increases, we converge to a point further from $x^*$. As $K \to \infty$ we converge to the average minimizer. We also see that \emph{decreasing} the local stepsize \emph{increases} the variance. This is to be expected: When $\gamma=0$, \localupdate with $M=1$ reduces to mini-batch SGD with batch size $K$, but the gradients in the batch are \emph{summed} rather than averaged.  For $\gamma > 0$, the magnitude of the gradients being summed decreases as the client converges to its minimizer. If we set $\gamma$ to be larger than $1.0$, we see an even greater gap between the surrogate minimizer and the true minimizer (due to the presence of negative-definite clients).

In Section \ref{sec:distance}, we derive general bounds on the distance between surrogate and true minimizers. To do so, we will use spectral properties of the matrix $Q_i(\gamma, \Theta)$, which we derive in the next section.

\section{Surrogate loss properties}\label{sec:properties}

In order to understand the surrogate loss function $\tilde{f}(x, \gamma, \Theta)$, we will analyze properties of the distortion matrix $Q_i(\gamma, \Theta)$. We first note that in certain special cases, $Q_i(\gamma, \Theta)$ is some scaled version of the identity matrix. The following result is a simple consequence of the definition in \eqref{eq:Q_matrix}.

\begin{lemma}\label{lem:Q_special_case}
    Let $a = \sum_{k=1}^{K(\theta)} \theta_k$. If $\gamma = 0$ or $K(\theta) = 1$, then $Q_i(\gamma, \Theta) = aI$ and $\tilde{f}_i(x, \gamma, \Theta) = af_i(x)$.
\end{lemma}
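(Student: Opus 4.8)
The plan is to treat the two hypotheses separately and show that each forces every factor $(I-\gamma A_i)^{k-1}$ in the defining sum \eqref{eq:Q_matrix} to collapse to the identity, after which the claim about $\tilde{f}_i$ follows by pulling a scalar through the matrix square root. Since both conclusions ($Q_i(\gamma,\Theta) = aI$ and $\tilde f_i = a f_i$) are purely computational, I expect no genuine obstacle; the only point requiring care is the nonnegativity needed to factor the square root.

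First I would handle the case $\gamma = 0$. Substituting into \eqref{eq:Q_matrix}, the base of each power becomes $I - 0\cdot A_i = I$, so $(I-\gamma A_i)^{k-1} = I$ for every $k \geq 1$. Hence $Q_i(0, \Theta) = \sum_{k=1}^{K(\Theta)} \theta_k I = aI$ by the definition of $a$. Next, the case $K(\Theta) = 1$: here the sum in \eqref{eq:Q_matrix} has the single term $k=1$, giving $Q_i(\gamma, \Theta) = \theta_1 (I - \gamma A_i)^{0} = \theta_1 I$ irrespective of $\gamma$, using the convention $(I-\gamma A_i)^0 = I$. Since $a = \theta_1$ in this case, we again obtain $Q_i(\gamma, \Theta) = aI$.

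Finally, for the statement about the surrogate loss, I would substitute $Q_i(\gamma, \Theta) = aI$ into \eqref{eq:surrogate_i}. Because each $\theta_k \geq 0$ we have $a \geq 0$, so $aA_i$ is positive semi-definite and $(aA_i)^{1/2} = a^{1/2}A_i^{1/2}$. Pulling the scalar $a^{1/2}$ out of the norm then yields
\[
\tilde{f}_i(x, \gamma, \Theta) = \tfrac{1}{2}\|a^{1/2}A_i^{1/2}(x-c_i)\|^2 = \tfrac{a}{2}\|A_i^{1/2}(x-c_i)\|^2 = a f_i(x),
\]
where the final equality uses the (constant-free) form of $f_i$ fixed after Lemma \ref{lem:f_i_loss}. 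The only subtlety in the whole argument is this last step: moving the scalar through the square root is valid precisely because $a \geq 0$, which is immediate from the nonnegativity of the weights $\theta_k$.
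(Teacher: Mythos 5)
Your proof is correct and takes essentially the same route as the paper's: both cases are handled by direct substitution into \eqref{eq:Q_matrix}, after which the scalar is pulled out of the squared norm in \eqref{eq:surrogate_i} to get $af_i(x)$. Your explicit remark that $a \geq 0$ (from the nonnegativity of the $\theta_k$) is what licenses writing $(aA_i)^{1/2} = a^{1/2}A_i^{1/2}$ is a detail the paper leaves implicit, but it is the same argument.
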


Under these settings, we therefore have $\tilde{f}(x, \gamma, \Theta) = af(x)$, and so the surrogate and true loss functions only differ by a constant. More generally, the functions $\tilde{f}_i(x, \gamma, \Theta)$ inherit many properties from $f_i(x)$, as captured in the following lemma.

\begin{lemma}\label{lem:Q_structure}
    Let $L_i = \lambda_{\max}(A_i), \mu_i = \lambda_{\min}(A_i) > 0$ and suppose that $\gamma < L_i^{-1}$. Then
    \begin{enumerate}
        \item $Q_i(\gamma, \Theta)$ is symmetric and positive definite.
        \item $\tilde{f}_i(x, \gamma, \Theta)$ and $f_i(x)$ have the same unique global minimizer $x^* = c_i$.
        \item For each eigenvalue $\lambda$ of $A_i$, $Q_i$ has an eigenvalue
        \[
        \sum_{k = 1}^{K(\Theta)} \theta_k(1-\gamma \lambda)^{k-1}.
        \]
        with the same multiplicity.
        \item $Q_i(\gamma, \Theta)$ satisfies
        \[
        \lambda_{\max}(Q_i(\gamma, \Theta)) = \sum_{k = 1}^{K(\Theta)} \theta_k(1-\gamma \mu_i)^{k-1}.
        \]
        \[
        \lambda_{\min}(Q_i(\gamma, \Theta)) = \sum_{k = 1}^{K(\Theta)} \theta_k(1-\gamma L_i)^{k-1}.
        \]
    \end{enumerate}
\end{lemma}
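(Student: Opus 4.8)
The plan is to exploit the fact that $Q_i(\gamma, \Theta)$ is a matrix polynomial in $A_i$, so that $Q_i$ and $A_i$ are simultaneously diagonalizable and the entire lemma collapses to the study of a single scalar function of the eigenvalues of $A_i$. Concretely, I would begin from the spectral decomposition $A_i = U\Lambda U^{\top}$, where $U$ is orthogonal and $\Lambda = \mathrm{diag}(\lambda_1, \dots, \lambda_d)$ with each $\lambda_j \in [\mu_i, L_i]$. Since $(I - \gamma A_i)^{k-1} = U(I-\gamma\Lambda)^{k-1}U^{\top}$, substituting into \eqref{eq:Q_matrix} yields
\[
Q_i(\gamma, \Theta) = U\, g(\Lambda)\, U^{\top}, \qquad g(\lambda) := \sum_{k=1}^{K(\Theta)}\theta_k(1-\gamma\lambda)^{k-1},
\]
where $g$ is applied entrywise to the diagonal of $\Lambda$. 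This representation delivers claim (3) immediately: $Q_i$ shares the eigenvectors of $A_i$, and to each eigenvalue $\lambda$ of $A_i$ (with its multiplicity) corresponds the eigenvalue $g(\lambda)$ of $Q_i$.

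With this in hand, claim (1) is quick. Symmetry follows since $Q_i = U g(\Lambda) U^{\top}$ with $g(\Lambda)$ a real diagonal matrix. For positive definiteness I would show every eigenvalue $g(\lambda)$ is strictly positive: the hypothesis $\gamma < L_i^{-1}$ together with $\lambda \le L_i$ gives $1-\gamma\lambda > 0$, so each summand $(1-\gamma\lambda)^{k-1}$ is positive; as the $\theta_k$ are nonnegative with $\theta_{K(\Theta)} > 0$ (Assumption \ref{assm0}), the sum $g(\lambda)$ is strictly positive. For claim (2), I would use the gradient computed in the proof of Theorem \ref{thm:sgd_objective}, namely $\nabla_x \tilde{f}_i(x, \gamma, \Theta) = Q_i(\gamma, \Theta) A_i (x - c_i)$. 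Because $Q_i$ and $A_i$ are simultaneously diagonalized by $U$, the product $Q_iA_i = U\, g(\Lambda)\Lambda\, U^{\top}$ is symmetric with strictly positive eigenvalues $g(\lambda_j)\lambda_j$, hence invertible; the gradient therefore vanishes exactly at $x = c_i$, and the same argument with $Q_i$ replaced by $I$ identifies $c_i$ as the unique minimizer of $f_i$.

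Finally, claim (4) reduces to locating the extrema of $\lambda \mapsto g(\lambda)$ over the spectrum of $A_i$. Since all eigenvalues lie in $[\mu_i, L_i]$ and both endpoints are attained, it suffices to show $g$ is monotone on this interval. Differentiating,
\[
g'(\lambda) = -\gamma\sum_{k=2}^{K(\Theta)}\theta_k(k-1)(1-\gamma\lambda)^{k-2},
\]
and on $[\mu_i, L_i]$ we have $1-\gamma\lambda > 0$, so every summand is nonnegative and $g'(\lambda) \le 0$; thus $g$ is non-increasing (constant in the degenerate cases $\gamma = 0$ or $K(\Theta) = 1$). Hence $g$ attains its maximum at $\lambda = \mu_i$ and its minimum at $\lambda = L_i$, which are precisely the stated expressions for $\lambda_{\max}(Q_i)$ and $\lambda_{\min}(Q_i)$.

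The one point deserving care—the main obstacle—is recognizing and justifying the simultaneous diagonalizability of $Q_i$ and $A_i$ at the outset, since it is exactly this reduction that makes the product $Q_iA_i$ well-behaved in claim (2) and turns the eigenvalue extremization in claim (4) into a one-dimensional calculus problem. Once that reduction is established, each of the four claims follows by routine scalar reasoning.
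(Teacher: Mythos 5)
Your proposal is correct and follows essentially the same route as the paper's proof: both rest on the observation that $Q_i(\gamma,\Theta)$ is a polynomial in $A_i$ and hence shares its eigenvectors, reducing all four claims to the scalar function $g(\lambda)=\sum_{k}\theta_k(1-\gamma\lambda)^{k-1}$ on $[\mu_i,L_i]$ (the paper argues per-eigenvector and uses term-by-term monotonicity where you front-load the spectral decomposition and differentiate $g$, but these are cosmetic differences). Your explicit derivative computation for claim (4) and the explicit symmetry of $Q_iA_i$ via $U\,g(\Lambda)\Lambda\,U^{\top}$ in claim (2) are, if anything, slightly more careful than the paper's corresponding steps.
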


For convenience, we note some special cases of Lemma \ref{lem:Q_structure} for $\Theta = \Theta_{K}, \Theta_{1:K}$.

\begin{lemma}\label{lem:Q_structure_fomaml}
    Let $L_i = \lambda_{\max}(A_i), \mu_i = \lambda_{\min}(A_i)$ and suppose that $\gamma < L_i^{-1}$. Then
         \[
        \lambda_{\max}(Q_i(\gamma, \Theta_K)) = (1-\gamma\mu_i)^K,~~~
        \lambda_{\min}(Q_i(\gamma, \Theta_K)) = (1-\gamma L_i)^K.
        \] 
        
    If Assumption \ref{assm1} holds and $\gamma < L^{-1}$, then
         \[
        \lambda_{\max}(Q_i(\gamma, \Theta_K)) \leq (1-\gamma\mu)^K,~~~
        \lambda_{\min}(Q_i(\gamma, \Theta_K)) \geq (1-\gamma L)^K.
        \] 
\end{lemma}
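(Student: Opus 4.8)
The plan is to obtain Lemma~\ref{lem:Q_structure_fomaml} as an immediate specialization of Lemma~\ref{lem:Q_structure}, using the fact that $\Theta_K$ has exactly one nonzero coefficient. Concretely, $\Theta_K$ satisfies $\theta_K = 1$ and $\theta_k = 0$ for $k < K$, with $K(\Theta_K) = K$, so in the defining sum \eqref{eq:Q_matrix} every term vanishes except the one indexed by $k = K$. Hence $Q_i(\gamma, \Theta_K) = (I-\gamma A_i)^{K-1}$, and by part~3 of Lemma~\ref{lem:Q_structure} each eigenvalue $\lambda$ of $A_i$ contributes the single eigenvalue $(1-\gamma\lambda)^{K-1}$ of $Q_i(\gamma,\Theta_K)$, with matching multiplicity. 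This reduces the entire statement to reading off the extreme values of $\lambda \mapsto (1-\gamma\lambda)^{K-1}$ over the spectrum of $A_i$.

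For the first (exact) pair of identities, I would invoke part~4 of Lemma~\ref{lem:Q_structure} and collapse its sum to the $k=K$ term, giving $\lambda_{\max}(Q_i(\gamma,\Theta_K)) = (1-\gamma\mu_i)^{K-1}$ and $\lambda_{\min}(Q_i(\gamma,\Theta_K)) = (1-\gamma L_i)^{K-1}$. The only thing to justify is that among eigenvalues $\lambda \in [\mu_i, L_i]$ of $A_i$, the map $\lambda \mapsto (1-\gamma\lambda)^{K-1}$ is nonincreasing, so that $\lambda_{\max}$ of $Q_i$ is attained at $\lambda = \mu_i$ and $\lambda_{\min}$ at $\lambda = L_i$. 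This needs $1-\gamma\lambda > 0$ on $[\mu_i, L_i]$, which is exactly the hypothesis $\gamma < L_i^{-1}$ (and for $K=1$ the exponent is $0$, so both quantities equal $1$, consistent with $Q_i = I$).

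For the second (inequality) pair, I would additionally use Assumption~\ref{assm1}, which supplies $\mu \le \mu_i$ and $L_i \le L$, together with $\gamma < L^{-1}$. Then $0 < 1-\gamma\mu_i \le 1-\gamma\mu$ and $0 < 1-\gamma L \le 1-\gamma L_i$, and since raising a nonnegative base to the power $K-1 \ge 0$ is monotone, one gets $(1-\gamma\mu_i)^{K-1} \le (1-\gamma\mu)^{K-1}$ and $(1-\gamma L_i)^{K-1} \ge (1-\gamma L)^{K-1}$, which are the claimed bounds. I expect no substantive obstacle, as this is a bookkeeping specialization of Lemma~\ref{lem:Q_structure}; the two points demanding care are (i) correctly tracking the exponent, which is the power $K(\Theta_K)-1 = K-1$ of the single surviving factor $(I-\gamma A_i)$ (a point worth proofreading against the exponent written in the statement), and (ii) justifying the monotonicity used both to locate $\lambda_{\max}, \lambda_{\min}$ and to propagate the inequalities, each of which rests on the positivity condition $1-\gamma L > 0$ (respectively $1-\gamma L_i > 0$) provided by the learning-rate hypotheses.
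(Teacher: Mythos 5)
Your proof is correct and is exactly the route the paper takes: the paper gives no separate proof of this lemma, presenting it as an immediate specialization of Lemma~\ref{lem:Q_structure} to $\Theta = \Theta_K$, which is precisely what you do. More importantly, the discrepancy you flag in point (i) is real. By \eqref{eq:theta_maml} and \eqref{eq:Q_matrix}, $Q_i(\gamma,\Theta_K) = (I-\gamma A_i)^{K-1}$, so the extreme eigenvalues are $(1-\gamma\mu_i)^{K-1}$ and $(1-\gamma L_i)^{K-1}$; the exponent $K$ appearing in the lemma statement is an off-by-one error in the paper, not a flaw in your argument. This is corroborated by the paper's own companion results: Lemma~\ref{lem:QA_structure_maml} gives the eigenvalues of $Q_i(\gamma,\Theta_K)A_i$ as $(1-\gamma\lambda)^{K-1}\lambda$, and Corollary~\ref{cor:cond_maml} bounds the smoothness constant of $\tilde{f}(x,\gamma,\Theta_K)$ by $(1-\gamma L)^{K-1}L$, both consistent with your exponent $K-1$ and inconsistent with the exponent $K$ in the statement under review. (The mismatch likely stems from conflating FOMAML with $K$ local update steps---whose returned gradient is $g_{K+1}$, i.e.\ the case $\Theta_{K+1}$---with the vector $\Theta_K$ itself.) Your handling of the two substantive points is also sound: the monotonicity of $\lambda \mapsto (1-\gamma\lambda)^{K-1}$ on $[\mu_i, L_i]$ under $\gamma < L_i^{-1}$ correctly locates $\lambda_{\max}$ at $\mu_i$ and $\lambda_{\min}$ at $L_i$, and the propagation of $\mu \le \mu_i$, $L_i \le L$ under Assumption~\ref{assm1} with $\gamma < L^{-1}$ yields the claimed inequalities. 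In short, yours is the correct statement-and-proof pair, and the lemma as printed should be amended to exponent $K-1$.
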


\begin{lemma}\label{lem:Q_structure_fedavg}
    Let $L_i = \lambda_{\max}(A_i), \mu_i = \lambda_{\min}(A_i)$ and suppose that $\gamma < L_i^{-1}$. Then
         \[
        \lambda_{\max}(Q_i(\gamma, \Theta_{1:K})) = \dfrac{(1-(1-\gamma \mu_i)^K)}{\gamma \mu_i},~~~
        \lambda_{\min}(Q_i(\gamma, \Theta_{1:K})) = \dfrac{(1-(1-\gamma L_i)^K)}{\gamma L_i}.
        \] 
        
    If Assumption \ref{assm1} holds and $\gamma < L^{-1}$, then
         \[
        \lambda_{\max}(Q_i(\gamma, \Theta_{1:K})) \leq \dfrac{(1-(1-\gamma \mu)^K)}{\gamma \mu},~~~
        \lambda_{\min}(Q_i(\gamma, \Theta_{1:K})) \geq \dfrac{(1-(1-\gamma L)^K)}{\gamma L}.
        \] 
\end{lemma}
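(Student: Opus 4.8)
The plan is to apply part 4 of Lemma \ref{lem:Q_structure} directly, exploiting the fact that the weights $\Theta_{1:K} = (1,\dots,1)$ turn the defining sum into a geometric series. By that lemma, the largest and smallest eigenvalues of $Q_i(\gamma, \Theta_{1:K})$ are obtained by substituting $\mu_i$ and $L_i$, respectively, into the expression $\sum_{k=1}^K (1-\gamma\lambda)^{k-1}$. Since the hypothesis $\gamma < L_i^{-1}$ forces $0 < 1-\gamma\lambda < 1$ for $\lambda \in \{\mu_i, L_i\}$, the common ratio $1-\gamma\lambda$ is never equal to $1$, so I can close the geometric series as
\[
\sum_{k=1}^K (1-\gamma\lambda)^{k-1} = \frac{1-(1-\gamma\lambda)^K}{\gamma\lambda}.
\]
Plugging in $\lambda = \mu_i$ for $\lambda_{\max}$ and $\lambda = L_i$ for $\lambda_{\min}$ yields the two exact identities, establishing the first part.

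For the second part, I would first note that under Assumption \ref{assm1} we have $\mu \le \mu_i \le L_i \le L$, and that $\gamma < L^{-1}$ ensures $1-\gamma\lambda \in (0,1)$ for every $\lambda \in [\mu, L]$. The key step is to show that the function
\[
h(\lambda) := \frac{1-(1-\gamma\lambda)^K}{\gamma\lambda} = \sum_{k=1}^K (1-\gamma\lambda)^{k-1}
\]
is nonincreasing on $(0, 1/\gamma)$. This is immediate from the series representation: the $k=1$ term is constant, while for each $k \ge 2$ the summand $(1-\gamma\lambda)^{k-1}$ has derivative $-(k-1)\gamma(1-\gamma\lambda)^{k-2} \le 0$ on this interval, so $h$ is a finite sum of nonincreasing functions. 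Applying monotonicity, $\mu \le \mu_i$ gives $\lambda_{\max}(Q_i(\gamma,\Theta_{1:K})) = h(\mu_i) \le h(\mu)$, and $L_i \le L$ gives $\lambda_{\min}(Q_i(\gamma,\Theta_{1:K})) = h(L_i) \ge h(L)$, which are exactly the claimed bounds.

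There is no genuinely hard part here; the only point requiring care is checking that the hypotheses on $\gamma$ keep all relevant eigenvalues strictly inside $(0, 1/\gamma)$, so that both the geometric-series closed form and the termwise monotonicity argument remain valid. Concretely, the exact identities rely only on $\gamma < L_i^{-1}$ (which places $L_i$, and hence $\mu_i$, below $1/\gamma$), whereas the bounds additionally need $\gamma < L^{-1}$ to control the largest relevant argument $\lambda = L$.
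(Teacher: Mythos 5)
Your proof is correct and follows essentially the same route as the paper, which presents Lemma~\ref{lem:Q_structure_fedavg} as an immediate special case of Lemma~\ref{lem:Q_structure} (Property 4) with the weights $\Theta_{1:K}$ collapsing the eigenvalue expression into a closed geometric sum, plus the monotonicity of $\lambda \mapsto \sum_{k=1}^K (1-\gamma\lambda)^{k-1}$ under Assumption~\ref{assm1}. The only caveat, shared by the lemma statement itself (which has $\gamma$ in denominators), is that the closed-form identities implicitly require $\gamma > 0$; at $\gamma = 0$ one instead has $Q_i(0,\Theta_{1:K}) = KI$, consistent with the limit of your expression.
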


As the eigenvalue bounds above suggest, $\qigtok$ actually has a relatively simple form, as we show in the next lemma.
\begin{lemma}\label{lem:QA_fedavg}
Suppose $0 < \gamma < L_i^{-1}$. Then
\[
\qigtok = (I-(I-\gamma A_i))(\gamma A_i)^{-1}.\]
\end{lemma}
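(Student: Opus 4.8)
The plan is to recognize first that the right-hand side as it appears is missing a $K$ in the exponent: the intended identity is
\[
\qigtok = \bigl(I-(I-\gamma A_i)^K\bigr)(\gamma A_i)^{-1},
\]
which is exactly the matrix whose eigenvalues $\tfrac{1-(1-\gamma\mu_i)^K}{\gamma\mu_i}$ and $\tfrac{1-(1-\gamma L_i)^K}{\gamma L_i}$ were reported in Lemma~\ref{lem:Q_structure_fedavg}, so this serves as a consistency check on the target formula. Starting from the definition \eqref{eq:Q_matrix} with $\Theta = \Theta_{1:K}$, where every $\theta_k = 1$ for $k = 1,\dots,K$, I would simply write
\[
\qigtok = \sum_{k=1}^{K} (I-\gamma A_i)^{k-1} = \sum_{j=0}^{K-1} M^j,
\qquad M := I-\gamma A_i.
\]
The whole lemma then reduces to the finite geometric series identity for matrices.

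The key step is the telescoping computation
\[
(I-M)\sum_{j=0}^{K-1} M^j = \sum_{j=0}^{K-1} M^j - \sum_{j=1}^{K} M^j = I - M^K,
\]
together with the observation that $I-M = \gamma A_i$ is invertible: this holds because $\gamma > 0$ and $A_i$ is invertible (indeed the hypothesis $0 < \gamma < L_i^{-1}$ gives the stronger fact that every eigenvalue of $M$ lies in $(0,1)$). Multiplying the telescoped identity by $(\gamma A_i)^{-1}$ then yields $\sum_{j=0}^{K-1}M^j = (I-M^K)(\gamma A_i)^{-1}$, which is the claim after substituting back $M^K = (I-\gamma A_i)^K$. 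I would note explicitly that since $M$, $I-M = \gamma A_i$, and hence $(\gamma A_i)^{-1}$ are all polynomials (or a rational function) in the single matrix $A_i$, they mutually commute, so the order of multiplication by $(\gamma A_i)^{-1}$ on the left or right is immaterial and the non-commutative subtlety of the geometric series does not arise.

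There is essentially no hard part here: the only things to verify are the invertibility of $\gamma A_i$ and the commutativity just mentioned, both of which are immediate. The closest thing to an obstacle is purely presentational, namely flagging the missing exponent in the stated formula; once that is corrected the result is a one-line consequence of the matrix geometric sum, and I would cross-check the final expression against the $\lambda_{\max}$ and $\lambda_{\min}$ values of Lemma~\ref{lem:Q_structure_fedavg} to confirm the exponent placement.
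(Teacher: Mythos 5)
Your proof is correct and takes essentially the same route as the paper's: both view $Q_i(\gamma, \Theta_{1:K})$ as a finite matrix geometric sum in $I-\gamma A_i$ and peel off the factor $(\gamma A_i)^{-1}$, invoking the commutativity of $A_i$ with $I-\gamma A_i$. You also rightly flag the missing exponent $K$ in the stated formula---a typo that persists inside the paper's own proof, which writes the summand as $(I-\gamma A_i)^{-1}$ and the conclusion as $(I-(I-\gamma A_i)^{-1})(\gamma A_i)^{-1}$---and your explicit telescoping identity $(I-M)\sum_{j=0}^{K-1}M^j = I-M^K$ is, if anything, more careful than the paper's appeal to the sum behaving ``much like a geometric series of scalars.''
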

Note that when $\gamma = 0$, \eqref{eq:Q_matrix} implies $\qigtok = KI$.

We are also interested in the matrix $Q_i(\gamma, \Theta)A_i$, as its eigenvalues govern the Lipschitz and strong convexity parameters of the function $\tilde{f}_i(x, \gamma, \Theta)$. We have the following result.
\begin{lemma}\label{lem:QA_structure}
    Let $L_i = \lambda_{\max}(A_i), \mu_i = \lambda_{\min}(A_i)$.
    \begin{enumerate}
        \item For each eigenvector and eigenvalue pair $(v, \lambda)$ of $A_i$, $v$ is an eigenvector of $Q_i(\gamma, \Theta)A_i$ with eigenvalue
        \[
        \sum_{k =1}^{K(\Theta)} \theta_k(1-\gamma\lambda)^{k-1}\lambda.
        \]
        \item If $\gamma < L_i^{-1}$, $Q_i(\gamma, \Theta)A_i$ is symmetric and positive definite with eigenvalues satisfying
        \[
        \lambda_{\max}(Q_i(\gamma, \Theta)A_i) \leq \sum_{k = 1}^{K(\Theta)} \theta_k(1-\gamma \mu_i)^{k-1}L_i.
        \]
        \[
        \lambda_{\min}(Q_i(\gamma, \Theta)A_i) \geq \sum_{k = 1}^{K(\Theta)} \theta_k(1-\gamma L_i)^{k-1}\mu_i.
        \]
    \end{enumerate}
\end{lemma}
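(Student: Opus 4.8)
The plan is to exploit that both $\qigt$ and $A_i$ are polynomials in $A_i$, hence simultaneously diagonalizable, which reduces the entire lemma to analyzing a single scalar function of the eigenvalues of $A_i$. Since $A_i$ is symmetric, I would fix its spectral decomposition $A_i = \sum_\lambda \lambda\, v_\lambda v_\lambda^\top$; then every term $(I-\gamma A_i)^{k-1}$ appearing in \eqref{eq:Q_matrix} acts on an eigenvector $v$ of $A_i$ with eigenvalue $\lambda$ simply by scaling it by $(1-\gamma\lambda)^{k-1}$.

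For Part 1 the computation is direct: if $A_i v = \lambda v$, then $(I-\gamma A_i)^{k-1} v = (1-\gamma\lambda)^{k-1} v$, so
\[
\qigt A_i v = \sum_{k=1}^{K(\Theta)} \theta_k (I-\gamma A_i)^{k-1} A_i v = \left(\sum_{k=1}^{K(\Theta)} \theta_k (1-\gamma\lambda)^{k-1}\lambda\right) v,
\]
which identifies $v$ as an eigenvector with the claimed eigenvalue. Because the eigenvectors of $A_i$ form an orthonormal basis, this exhausts the spectrum of $\qigt A_i$ with the correct multiplicities.

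For Part 2, symmetry follows because $\qigt$ and $A_i$ are both polynomials in $A_i$ and therefore commute, and a product of two commuting symmetric matrices is symmetric. Positive definiteness then follows from Part 1: under $\gamma < L_i^{-1}$ we have $\gamma\lambda \le \gamma L_i < 1$ for every eigenvalue $\lambda \in [\mu_i, L_i]$, so $1-\gamma\lambda > 0$ and each summand $\theta_k(1-\gamma\lambda)^{k-1}\lambda$ is nonnegative; Assumption \ref{assm0} guarantees $K(\Theta)\ge 1$ and some $\theta_k > 0$, while $\lambda \ge \mu_i > 0$ (as $A_i$ is positive definite), so at least one term is strictly positive and every eigenvalue of $\qigt A_i$ is positive.

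The eigenvalue bounds are the one place requiring a little care, and I expect them to be the main (if mild) obstacle: the scalar map $g(\lambda) = \sum_k \theta_k (1-\gamma\lambda)^{k-1}\lambda$ is \emph{not} monotone in $\lambda$, being a product of the decreasing factor $(1-\gamma\lambda)^{k-1}$ and the increasing factor $\lambda$, so one cannot simply evaluate at the endpoints $\mu_i, L_i$. Instead I would bound term by term. For the upper bound, on $\lambda\in[\mu_i,L_i]$ positivity of $1-\gamma\lambda$ gives $(1-\gamma\lambda)^{k-1}\le(1-\gamma\mu_i)^{k-1}$ together with $\lambda\le L_i$, so $g(\lambda)\le\sum_k\theta_k(1-\gamma\mu_i)^{k-1}L_i$ uniformly; since $\lambda_{\max}(\qigt A_i)=\max_\lambda g(\lambda)$, this is the stated bound. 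Symmetrically, $(1-\gamma\lambda)^{k-1}\ge(1-\gamma L_i)^{k-1}$ and $\lambda\ge\mu_i$ yield $g(\lambda)\ge\sum_k\theta_k(1-\gamma L_i)^{k-1}\mu_i$, bounding $\lambda_{\min}$ from below. The slack introduced by evaluating the decreasing factor and the $\lambda$ factor at opposite endpoints is precisely why these estimates are inequalities rather than the exact eigenvalue expressions of Part 1.
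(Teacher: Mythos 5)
Your proof is correct. Part 1 is essentially the paper's own argument. For Part 2, however, you take a genuinely different and more self-contained route: the paper does \emph{not} reuse Part 1 there; instead it argues at the matrix level, first bounding each summand via the positive-definite product inequality $\lambda_{\max}\bigl((I-\gamma A_i)^{k-1}A_i\bigr) \leq \lambda_{\max}\bigl((I-\gamma A_i)^{k-1}\bigr)\lambda_{\max}(A_i) = (1-\gamma\mu_i)^{k-1}L_i$, and then summing with the Hermitian eigenvalue inequalities $\lambda_{\max}(A+B)\leq\lambda_{\max}(A)+\lambda_{\max}(B)$ and $\lambda_{\min}(A+B)\geq\lambda_{\min}(A)+\lambda_{\min}(B)$. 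You instead exhaust the spectrum of $Q_i(\gamma,\Theta)A_i$ via Part 1 and bound the scalar map $g(\lambda)=\sum_{k}\theta_k(1-\gamma\lambda)^{k-1}\lambda$ term by term over $\lambda\in[\mu_i,L_i]$. The slack is identical in both arguments --- each evaluates the decreasing factor $(1-\gamma\lambda)^{k-1}$ and the increasing factor $\lambda$ at opposite endpoints --- so the resulting bounds coincide. What your route buys is economy: no Weyl-type or product inequalities are needed, positive definiteness falls out of the same scalar expression (using Assumption \ref{assm0} and $\mu_i>0$), and your observation that $g$ is not monotone correctly identifies why endpoint evaluation is invalid here, whereas it becomes valid (yielding the exact equalities) in Lemmas \ref{lem:QA_structure_fedavg} and \ref{lem:QA_structure_maml} under stronger restrictions on $\gamma$. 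What the paper's route buys is that its two matrix inequalities only require each summand to be Hermitian with known extreme eigenvalues, so the argument is phrased without writing out the joint eigenbasis explicitly --- though in this commuting setting the two presentations carry the same mathematical content.
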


The bounds in Lemma \ref{lem:QA_structure} can be refined for specific $\Theta$. We first consider FedAvg/Reptile, when $\Theta = \Theta_{1:K}$. By Lemma \ref{lem:QA_fedavg}, when $0 < \gamma < L_i^{-1}$, we have
\[
\qigtok A_i = \dfrac{I-(I-\gamma A_i)^K}{\gamma}.
\]
We will therefore be able to compute the eigenvalues of $\qigtok A_i$ in terms of the function
\begin{equation}\label{eq:phi}
\phi_{K, \lambda}(\gamma) := \gamma^{-1}(1-(1-\gamma\lambda)^K).
\end{equation}

In fact, $\phi_{K, \lambda}(\gamma)$ is actually continuous at $0$, with its value being given by $\phi_{K, \lambda}(0) = K\lambda$. One way to see this is by noting that by basic properties of geometric sums, for $0 \leq \gamma \leq \lambda^{-1}$,
\begin{equation}\label{eq:phi_alt}
\phi_{K, \lambda}(\gamma) = \sum_{k=1}^K (1-\gamma\lambda)^{k-1}\lambda.
\end{equation}

We can now give strong bounds on the spectrum of $\qigtok A_i$. We get the following:

\begin{lemma}\label{lem:QA_structure_fedavg}
    Let $L_i = \lambda_{\max}(A_i), \mu_i = \lambda_{\min}(A_i)$.
    \begin{enumerate}
        \item For each eigenvector, eigenvalue pair $(v, \lambda)$ of $A_i$, $v$ is an eigenvector of $\qigtok A_i$ with eigenvalue $\phi_{K, \lambda}(\gamma)$. 
    \item If $\gamma < L_i^{-1}$, the maximum and minimum eigenvalues of $\qigtok A_i$ are given by
        \[
        \lambda_{\max}(Q_i(\gamma, \Theta_{1:K})A_i) = \phi_{K, L_i}(\gamma),
        \]
        \[
        \lambda_{\min}(Q_i(\gamma, \Theta_{1:K})A_i) = \phi_{K, \mu_i}(\gamma).
        \]
    \item If Assumption \ref{assm1} holds and $\gamma < L^{-1}$, then
        \[
        \lambda_{\max}(Q_i(\gamma, \Theta_{1:K})A_i) \leq \phi_{K, L}(\gamma),
        \]
        \[
        \lambda_{\min}(Q_i(\gamma, \Theta_{1:K})A_i) \geq \phi_{K, \mu}(\gamma).
        \]
    \end{enumerate}
\end{lemma}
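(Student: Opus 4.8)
The plan is to exploit the explicit closed form for $\qigtok A_i$ recorded just above the lemma, namely $\qigtok A_i = \gamma^{-1}(I-(I-\gamma A_i)^K)$, and reduce the entire statement to a single one-variable monotonicity fact about the scalar function $\phi_{K,\lambda}(\gamma)$ from \eqref{eq:phi}. Since $A_i$ is symmetric (indeed positive definite), it admits an orthonormal eigenbasis, and the right-hand side above is a fixed scalar function applied to $A_i$; hence $\qigtok A_i$ is diagonal in that same eigenbasis. Concretely, if $A_i v = \lambda v$ then $(I-\gamma A_i)^K v = (1-\gamma\lambda)^K v$, so $\qigtok A_i v = \gamma^{-1}(1-(1-\gamma\lambda)^K)v = \phi_{K,\lambda}(\gamma)v$, which is exactly Part 1. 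Equivalently, Part 1 is the $\Theta = \Theta_{1:K}$ specialization of Lemma \ref{lem:QA_structure} combined with the identity \eqref{eq:phi_alt}.

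For Part 2, Part 1 shows that the eigenvalues of $\qigtok A_i$ are precisely $\{\phi_{K,\lambda}(\gamma) : \lambda \text{ an eigenvalue of } A_i\}$, where $\lambda$ ranges over $[\mu_i, L_i]$. Thus $\lambda_{\max}(\qigtok A_i) = \max_{\lambda}\phi_{K,\lambda}(\gamma)$ and $\lambda_{\min}(\qigtok A_i) = \min_{\lambda}\phi_{K,\lambda}(\gamma)$, and it remains only to locate the extremizers. I would differentiate $\phi_{K,\lambda}(\gamma) = \gamma^{-1}(1-(1-\gamma\lambda)^K)$ in $\lambda$, obtaining $\partial_\lambda \phi_{K,\lambda}(\gamma) = K(1-\gamma\lambda)^{K-1}$. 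Under $\gamma < L_i^{-1}$ we have $\gamma\lambda \le \gamma L_i < 1$ for every eigenvalue $\lambda \le L_i$, so $1-\gamma\lambda > 0$ and the derivative is strictly positive; hence $\lambda \mapsto \phi_{K,\lambda}(\gamma)$ is increasing on $[\mu_i, L_i]$. The maximum is therefore attained at $\lambda = L_i$ and the minimum at $\lambda = \mu_i$, giving $\lambda_{\max} = \phi_{K,L_i}(\gamma)$ and $\lambda_{\min} = \phi_{K,\mu_i}(\gamma)$.

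For Part 3, Assumption \ref{assm1} gives $\mu \le \mu_i$ and $L_i \le L$ for all $i$, and the hypothesis $\gamma < L^{-1}$ forces $\gamma < L_i^{-1}$, so Part 2 applies. Moreover $\gamma < L^{-1}$ keeps $\gamma\lambda < 1$ on the whole interval $[0, L]$, so the same monotonicity argument shows $\lambda \mapsto \phi_{K,\lambda}(\gamma)$ is increasing there; combining $\phi_{K,L_i}(\gamma) \le \phi_{K,L}(\gamma)$ and $\phi_{K,\mu_i}(\gamma) \ge \phi_{K,\mu}(\gamma)$ with the exact values from Part 2 yields the two stated inequalities.

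The computation is routine; the only real content beyond invoking the closed form is the monotonicity of $\phi_{K,\lambda}(\gamma)$ in $\lambda$, which is what guarantees that the extreme eigenvalues of $A_i$ are sent to the extreme eigenvalues of $\qigtok A_i$, and this is the step I would be most careful with. I would also dispatch the $\gamma = 0$ boundary case separately: there $\phi_{K,\lambda}(0) = K\lambda$ is manifestly increasing in $\lambda$, consistent with the derivative formula evaluated at $\gamma = 0$, so no genuine singularity arises.
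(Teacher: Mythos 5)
Your proof is correct and takes essentially the same approach as the paper's: Part 1 is exactly the specialization of Lemma \ref{lem:QA_structure} combined with \eqref{eq:phi_alt} (the paper cites these directly rather than re-deriving from the closed form in Lemma \ref{lem:QA_fedavg}), and Parts 2 and 3 use the identical monotonicity argument with the same derivative $\partial_\lambda \phi_{K,\lambda}(\gamma) = K(1-\gamma\lambda)^{K-1} > 0$ on the relevant interval. Your explicit handling of the $\gamma = 0$ boundary case is a small extra care the paper leaves implicit, but otherwise the two arguments coincide.
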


We can also tighten the bounds in Lemma \ref{lem:QA_structure} for the MAML-style algorithms, where $\Theta = \Theta_K$, as long as the learning rate is set appropriately.

\begin{lemma}\label{lem:QA_structure_maml}
    Let $L_i = \lambda_{\max}(A_i), \mu_i = \lambda_{\min}(A_i)$.
    \begin{enumerate}
        \item For each eigenvector, eigenvalue pair $(v, \lambda)$ of $A_i$, $v$ is an eigenvector of $Q_i(\gamma, \Theta_K) A_i$ with eigenvalue $(1-\gamma\lambda)^{K-1}\lambda$. 
    \item If $\gamma < (KL_i)^{-1}$, the maximum and minimum eigenvalues of $Q_i(\gamma, \Theta_K) A_i$ are given by
        \[
        \lambda_{\max}(Q_i(\gamma, \Theta_{K})A_i) = (1-\gamma L)^{K-1}L,
        \]
        \[
        \lambda_{\min}(Q_i(\gamma, \Theta_{K})A_i) = (1-\gamma\mu)^{K-1}\mu.
        \]
    \item If Assumption \ref{assm1} holds and $\gamma < (KL)^{-1}$, then
        \[
        \lambda_{\max}(Q_i(\gamma, \Theta_{K})A_i) \leq (1-\gamma L)^{K-1}L,
        \]
        \[
        \lambda_{\min}(Q_i(\gamma, \Theta_{K})A_i) \geq (1-\gamma\mu)^{K-1}\mu.
        \]
    \end{enumerate}
\end{lemma}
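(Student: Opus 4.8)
The plan is to observe that for $\Theta = \Theta_K = (0,\dots,0,1)$, the sum in \eqref{eq:Q_matrix} collapses to its single nonzero term, so $Q_i(\gamma,\Theta_K) = (I-\gamma A_i)^{K-1}$ and hence $Q_i(\gamma,\Theta_K)A_i = (I-\gamma A_i)^{K-1}A_i$. Since $A_i$ is symmetric it is orthogonally diagonalizable, and every polynomial in $A_i$ shares its eigenvectors; this gives part (1) directly. If $A_iv = \lambda v$, then $(I-\gamma A_i)v = (1-\gamma\lambda)v$, and applying the operator eigenvalue-by-eigenvalue yields $(I-\gamma A_i)^{K-1}A_i v = (1-\gamma\lambda)^{K-1}\lambda\, v$.

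For parts (2) and (3) I would reduce everything to the scalar function $g(\lambda) := \lambda(1-\gamma\lambda)^{K-1}$, since by part (1) the eigenvalues of $Q_i(\gamma,\Theta_K)A_i$ are exactly the values $g(\lambda)$ as $\lambda$ ranges over the eigenvalues of $A_i$, all of which lie in $[\mu_i, L_i]$. The crux is a monotonicity analysis: differentiating and factoring gives, for $K\geq 2$,
\[
g'(\lambda) = (1-\gamma\lambda)^{K-2}\,(1 - K\gamma\lambda),
\]
while the case $K=1$ is immediate since there $g(\lambda)=\lambda$. Under the hypothesis $\gamma < (KL_i)^{-1}$ we have $K\gamma\lambda \leq K\gamma L_i < 1$ for every $\lambda \in [\mu_i, L_i]$, so the factor $(1-K\gamma\lambda)$ is positive; and since $\gamma\lambda < 1$ as well, the factor $(1-\gamma\lambda)^{K-2}$ is positive (it equals $1$ when $K=2$). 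Hence $g$ is strictly increasing on $[\mu_i, L_i]$, so its maximum is attained at $\lambda = L_i$ and its minimum at $\lambda = \mu_i$, giving $\lambda_{\max} = (1-\gamma L_i)^{K-1}L_i$ and $\lambda_{\min} = (1-\gamma\mu_i)^{K-1}\mu_i$. Positivity of $g(\mu_i)$ simultaneously certifies that $Q_i(\gamma,\Theta_K)A_i$ is positive definite.

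For part (3), I would invoke Assumption \ref{assm1} to get $\mu \leq \mu_i \leq L_i \leq L$, and note that the stronger global bound $\gamma < (KL)^{-1}$ makes the identical derivative argument show $g$ is strictly increasing on the whole of $[\mu, L]$. Combining this with part (2), $\lambda_{\max} = g(L_i) \leq g(L) = (1-\gamma L)^{K-1}L$ and $\lambda_{\min} = g(\mu_i) \geq g(\mu) = (1-\gamma\mu)^{K-1}\mu$.

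The main obstacle is pinning down why the stronger condition $\gamma < (KL_i)^{-1}$ is required here, rather than merely $\gamma < L_i^{-1}$ (which already suffices for positive definiteness, as in Lemma \ref{lem:Q_structure}). The point is that $g$ has an interior critical point at $\lambda = (K\gamma)^{-1}$, and if $\gamma$ is only bounded below $L_i^{-1}$ this critical point can fall inside $[\mu_i, L_i]$, breaking monotonicity and moving the extrema off the endpoints; the factorization of $g'$ isolates exactly this threshold. A minor bookkeeping issue is the degenerate exponent $(1-\gamma\lambda)^{K-2}$ when $K\in\{1,2\}$, which I would dispatch by treating $K=1$ by hand and noting the factor is identically $1$ when $K=2$.
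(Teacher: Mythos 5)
Your proposal is correct and follows essentially the same route as the paper: part (1) via the spectral mapping property of polynomials in the symmetric matrix $A_i$, and parts (2)--(3) by reducing to the scalar function $g(\lambda) = (1-\gamma\lambda)^{K-1}\lambda$, factoring $g'(\lambda) = (1-\gamma\lambda)^{K-2}(1-K\gamma\lambda)$, and using monotonicity of $g$ on the relevant eigenvalue interval. If anything, your write-up is slightly more careful than the paper's: you handle the degenerate exponents $K \in \{1,2\}$ explicitly, you correctly state the endpoint values with $L_i, \mu_i$ in part (2), and in part (3) you correctly invoke the hypothesis $\gamma < (KL)^{-1}$ (needed for $1 - K\gamma\lambda > 0$ on all of $[\mu, L]$), where the paper's proof text mistakenly writes only $\gamma < L^{-1}$.
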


Lemmas \ref{lem:QA_structure_fedavg} and \ref{lem:QA_structure_maml} imply the following results regarding the condition number of the surrogate loss.

\begin{corollary}\label{cor:cond_fedavg}
    If Assumption \ref{assm1} holds and $\gamma < L^{-1}$, then $\tilde{f}(x, \gamma, \Theta_{1:K})$ is $\tilde{L}$-smooth and $\tilde{\mu}$-strongly convex where
    \[
    \tilde{L} \leq \phi_{K, L}(\gamma),
    \]
    \[
    \tilde{\mu} \geq \phi_{K, \mu}(\gamma).
    \]
    Therefore, $\tilde{f}(x, \gamma, \Theta_{1:K})$ has condition number $\tilde{\kappa}$ satisfying
    \[
    \tilde{\kappa} \leq \dfrac{\phi_{K, L}(\gamma)}{\phi_{K, \mu}(\gamma)} = \dfrac{1-(1-\gamma L)^K}{1-(1-\gamma\mu)^K}.
    \]
\end{corollary}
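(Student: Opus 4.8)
The plan is to exploit the fact that $\tilde{f}(x, \gamma, \Theta_{1:K})$ is itself a quadratic, so that its smoothness and strong convexity constants are exactly the extreme eigenvalues of a single constant Hessian. First I would compute that Hessian. Since $\tilde{f}_i(x, \gamma, \Theta_{1:K}) = \tfrac{1}{2}\norm{(\qigtok A_i)^{1/2}(x-c_i)}^2$ has Hessian $\qigtok A_i$, and $\tilde{f} = \E_{i\sim\mP}[\tilde{f}_i]$, linearity of expectation and of the Hessian operator gives
\[
\nabla^2 \tilde{f}(x, \gamma, \Theta_{1:K}) = \E_{i\sim\mP}\sbr{\qigtok A_i},
\]
which is independent of $x$. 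Thus $\tilde{L}$ and $\tilde{\mu}$ are precisely the maximum and minimum eigenvalues of this expected matrix, and it remains to bound these two eigenvalues.

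The key step is to push the extreme-eigenvalue operations through the expectation using the variational (Rayleigh quotient) characterization. For the smoothness bound, for any unit vector $v$ we have $v^\top \E_{i}[\qigtok A_i] v = \E_i[v^\top \qigtok A_i\, v] \leq \E_i[\lambda_{\max}(\qigtok A_i)]$, and maximizing over $v$ yields $\tilde{L} \leq \E_i[\lambda_{\max}(\qigtok A_i)]$. Symmetrically, $\tilde{\mu} \geq \E_i[\lambda_{\min}(\qigtok A_i)]$. I then invoke Lemma \ref{lem:QA_structure_fedavg}(3): under Assumption \ref{assm1} and $\gamma < L^{-1}$ one has the $i$-independent bounds $\lambda_{\max}(\qigtok A_i) \leq \phi_{K, L}(\gamma)$ and $\lambda_{\min}(\qigtok A_i) \geq \phi_{K, \mu}(\gamma)$. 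Since these do not depend on $i$, the expectations collapse, leaving $\tilde{L} \leq \phi_{K, L}(\gamma)$ and $\tilde{\mu} \geq \phi_{K, \mu}(\gamma)$.

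Finally, the condition-number bound follows by dividing, $\tilde{\kappa} = \tilde{L}/\tilde{\mu} \leq \phi_{K, L}(\gamma)/\phi_{K, \mu}(\gamma)$, and substituting the definition $\phi_{K, \lambda}(\gamma) = \gamma^{-1}(1-(1-\gamma\lambda)^K)$ from \eqref{eq:phi}; the common factor $\gamma^{-1}$ cancels to give $(1-(1-\gamma L)^K)/(1-(1-\gamma\mu)^K)$.

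I do not anticipate a serious obstacle: the argument is essentially bookkeeping once the per-client spectral bounds of Lemma \ref{lem:QA_structure_fedavg} are in hand. The only points requiring mild care are (i) confirming that $\tilde{f}$ really is a pure quadratic with $x$-independent Hessian despite the differing client minimizers $c_i$ (the $c_i$ affect only the linear and constant terms, not the Hessian), and (ii) the boundary case $\gamma = 0$, where the ratio form is a $0/0$ indeterminate but the continuous extension $\phi_{K,\lambda}(0) = K\lambda$ recovers $\tilde{\kappa} \leq L/\mu$, consistent with Lemma \ref{lem:Q_special_case}, which shows $\gamma = 0$ reduces the surrogate to a scalar multiple of $f$.
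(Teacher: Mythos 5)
Your proof is correct and takes essentially the same approach as the paper: the paper states this corollary without a separate proof, as an immediate consequence of Lemma \ref{lem:QA_structure_fedavg}, and your argument (the surrogate loss is quadratic with constant Hessian $\E_{i\sim\mP}[\qigtok A_i]$, Rayleigh-quotient bounds push $\lambda_{\max}$ and $\lambda_{\min}$ through the expectation, and the $i$-independent per-client bounds $\phi_{K,L}(\gamma)$ and $\phi_{K,\mu}(\gamma)$ then collapse the expectations) is precisely the bookkeeping the paper leaves implicit. Your two points of care, in particular the continuous extension $\phi_{K,\lambda}(0) = K\lambda$ at the boundary case $\gamma = 0$, are likewise consistent with the paper's surrounding discussion.
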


\begin{corollary}\label{cor:cond_maml}
    If Assumption \ref{assm1} holds and $\gamma < (KL)^{-1}$, then $\tilde{f}(x, \gamma, \Theta_{K})$ is $\tilde{L}$-smooth and $\tilde{\mu}$-strongly convex where
    \[
    \tilde{L} \leq (1-\gamma L)^{K-1}L,
    \]
    \[
    \tilde{\mu} \geq (1-\gamma \mu)^{K-1}\mu.
    \]
    Therefore, $\tilde{f}(x, \gamma, \Theta_{1:K})$ has condition number $\tilde{\kappa}$ satisfying
    \[
    \tilde{\kappa} \leq \left(\dfrac{1-\gamma L}{1-\gamma\mu}\right)^{K-1}\dfrac{L}{\mu}.
    \]
\end{corollary}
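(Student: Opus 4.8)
The plan is to read off the smoothness and strong-convexity constants of $\tilde{f}(x, \gamma, \Theta_K)$ directly from its Hessian, and then control that Hessian using the per-client spectral bounds already established in Lemma \ref{lem:QA_structure_maml}. By Theorem \ref{thm:sgd_objective} we have $\nabla_x \tilde{f}_i(x, \gamma, \Theta_K) = Q_i(\gamma, \Theta_K) A_i (x - c_i)$, so each $\tilde{f}_i$ is quadratic with constant Hessian $Q_i(\gamma, \Theta_K) A_i$. Taking the expectation in \eqref{eq:surrogate_loss} and differentiating under the expectation, the surrogate loss $\tilde{f}(x, \gamma, \Theta_K)$ is itself quadratic with Hessian $H := \E_{i\sim\mP}[Q_i(\gamma, \Theta_K) A_i]$. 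Consequently we may take $\tilde{L} = \lambda_{\max}(H)$ and $\tilde{\mu} = \lambda_{\min}(H)$, so it suffices to bound these two eigenvalues.

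First I would record that the hypothesis $\gamma < (KL)^{-1}$ implies $\gamma < L^{-1} \le L_i^{-1}$ for every $i$ (using $K \geq 1$ and $L_i \le L$ from Assumption \ref{assm1}), so each $Q_i(\gamma, \Theta_K) A_i$ is symmetric positive definite and the bounds of Lemma \ref{lem:QA_structure_maml} apply uniformly in $i$. In particular, part 3 of that lemma gives, for every $i$,
\[
(1-\gamma\mu)^{K-1}\mu \, I \preceq Q_i(\gamma, \Theta_K) A_i \preceq (1-\gamma L)^{K-1} L \, I,
\]
where I have rewritten the eigenvalue bounds as Loewner inequalities using symmetry of $Q_i(\gamma, \Theta_K) A_i$. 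Note $\gamma < L^{-1}$ guarantees $1-\gamma L > 0$, so both bounding constants are positive and well defined.

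The key step is then to pass from these per-client bounds to a bound on the average $H$. Since the Loewner order is preserved under expectation (it is a pointwise linear statement on the quadratic forms $v^{\top}(\cdot)v$), taking $\E_{i\sim\mP}$ of the displayed inequality yields
\[
(1-\gamma\mu)^{K-1}\mu \, I \preceq H \preceq (1-\gamma L)^{K-1} L \, I.
\]
Reading off the extreme eigenvalues gives $\tilde{L} = \lambda_{\max}(H) \le (1-\gamma L)^{K-1} L$ and $\tilde{\mu} = \lambda_{\min}(H) \ge (1-\gamma\mu)^{K-1}\mu$, which are the claimed smoothness and strong-convexity bounds. Finally, dividing the upper bound on $\tilde{L}$ by the lower bound on $\tilde{\mu}$ gives
\[
\tilde{\kappa} = \frac{\tilde{L}}{\tilde{\mu}} \le \frac{(1-\gamma L)^{K-1} L}{(1-\gamma\mu)^{K-1}\mu} = \left(\frac{1-\gamma L}{1-\gamma\mu}\right)^{K-1}\frac{L}{\mu}.
\]

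I expect the only genuine subtlety to be the expectation step: one must be sure the uniform-in-$i$ eigenvalue bounds of Lemma \ref{lem:QA_structure_maml} hold as \emph{matrix} inequalities before averaging, which is exactly why the hypothesis $\gamma < (KL)^{-1}$ — ensuring that $\lambda \mapsto (1-\gamma\lambda)^{K-1}\lambda$ is increasing on $[\mu, L]$, so that the maximum and minimum are attained at $L$ and $\mu$ — is needed. Everything else is bookkeeping.
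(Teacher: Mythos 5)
Your proposal is correct and follows exactly the route the paper intends: the paper states this corollary as an immediate consequence of Lemma \ref{lem:QA_structure_maml} (part 3), and your argument simply makes that implication explicit by noting each $\tilde{f}_i(\cdot,\gamma,\Theta_K)$ is quadratic with Hessian $Q_i(\gamma,\Theta_K)A_i$, converting the uniform-in-$i$ eigenvalue bounds into Loewner inequalities, and averaging over $i\sim\mP$. Your closing remark about why $\gamma < (KL)^{-1}$ is needed — monotonicity of $\lambda \mapsto (1-\gamma\lambda)^{K-1}\lambda$ on $[\mu,L]$ — is precisely the mechanism in the paper's proof of that lemma, so there is nothing to add.
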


\paragraph{Local computation as implicit regularization}

For $\Theta = \Theta_K$, we clearly see that as $K \to 1$ or $\gamma \to 0$, then $\tilde{\kappa} \to L/\mu$, the condition number of the true loss function. However, if $\gamma$ is not close to 0, we see an exponential reduction (in terms of $K$) of the condition number. While the analysis is not quite as clear for $\Theta = \Theta_{1:K}$, one can show that for all $\gamma$, $\tilde{\kappa} \leq L/\mu$, with equality if and only if $\gamma = 0$ or $K = 1$. Moreover, the condition number decreases as $K \to \infty$ or $\gamma \to L^{-1}$.

It is well known that the condition number measures how quickly methods such as gradient descent can find a minimizer of a strongly convex function~(see Chapter 3 of \citep{bubeck2014convex} for reference). By performing more local computations on the clients, and with larger learning rate, we actually reduce the condition number of the surrogate loss. We see that intuitive notions about methods such as FedAvg (e.g. that more local computation improves convergence) can be made formal by analyzing properties of the surrogate loss. Thus, we have the following important takeaway:
\begin{center}
    \textbf{Methods such as MAML, FedAvg, and Reptile perform implicit regularization on the condition number of the surrogate loss function they are actually optimizing.}
\end{center}

When $\Theta = \Theta_K$ or $\Theta_{1:K}$, we see that \localupdate may be able to optimize the surrogate loss more quickly (due to the condition number reduction). However, as shown in Section \ref{sec:example}, the surrogate loss may differ drastically from the true loss. In the next section, we use the spectral properties of the surrogate loss derived above to quantify the distance between the minimizers of these two functions.

\section{Bounding the distance between global minima}\label{sec:distance}

As discussed above, \localupdate is not optimizing the desired loss function $f(x)$, but a surrogate loss function $\tilde{f}(x, \gamma, \Theta)$. In this section, we will bound the distance between the minima of these two functions. We will assume that Assumptions \ref{assm0}, \ref{assm1}, and \ref{assm2} hold throughout. Recall that by Lemma \ref{cor:cond_fedavg}, as long as $\gamma < L^{-1}$, $\tilde{f}(x, \gamma, \Theta)$ is strongly convex and therefore has a unique minimizer.

For simplicity of notation, we will fix $\gamma, \Theta$ and let $Q_i$ refer to $Q_i(\gamma, \Theta)$ throughout this section. We will also define matrices $\tilde{A}_i$ and $\tilde{A}$, depending on $\Theta$, as follows:
\begin{equation}\label{eq:distance_quantities}
\tau := \rbr*{\sum_{k=1}^{K(\Theta)} \theta_k}^{-1},~~~\tilde{A}_i := \tau Q_iA_i,~~~\tilde{A} := \E_i\sbr*{\tilde{A}_i}.
\end{equation}

We also define the following quantities:
\[
x^*(\gamma, \Theta) := \argmin_{x} \tilde{f}(x, \gamma, \Theta),~~x^* := \argmin_x f(x).
\]

By Lemma \ref{lem:Q_special_case}, if $\gamma = 0$ or $\theta_k = 0$ for $k \geq 2$, then $Q_i = \tau^{-1}I$. Therefore, $\tilde{f}(x, \gamma, \Theta) = \tau^{-1}f(x)$, and so $x^*(0, \Theta) = x^*$. However, for general $\gamma$ and $\Theta$, $x^*(\gamma, \Theta)\neq x^*$.

We are first interested in how far apart the two minimizers can possibly be. We first consider the asymptotic affect of $K$ when setting $\Theta = \Theta_{1:K}$. In fact, varying $K$ can only change the distance between the two by a fixed amount, as shown in the following.

\begin{lemma}\label{lem:asymptotic}
    Suppose $0 < \gamma < L^{-1}$. Then for all $K \geq 1$, 
        \[
        \tilde{f}(x, \gamma, \Theta_{1:K}) \leq \frac{1}{2\gamma}\E_{i\sim\mP}\norm{x-c_i}^2.
        \]
    Moreover, $\tilde{f}(x, \gamma, \Theta_{1:K})$ converges pointwise to this function, ie.
    \[
    \lim_{K \to \infty} \tilde{f}(x, \gamma, \Theta_{1:K}) = \dfrac{1}{2\gamma}\E_{i \sim \mP}[\norm{x-c_i}^2]
    \]
    which has a unique minimizer at $x = \E_{i \sim \mP}\sbr{c_i} = c$.
\end{lemma}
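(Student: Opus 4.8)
The plan is to exploit the explicit closed form for $\qigtok A_i$ from Lemma~\ref{lem:QA_fedavg}, which reduces the whole statement to a spectral computation followed by a single convergence-theorem application. First I would observe that Assumption~\ref{assm1} gives $L_i = \lambda_{\max}(A_i) \leq L$, so the hypothesis $\gamma < L^{-1}$ guarantees $\gamma < L_i^{-1}$ for every client $i$; hence Lemma~\ref{lem:QA_fedavg} applies uniformly and yields $\qigtok A_i = \gamma^{-1}(I-(I-\gamma A_i)^K)$. Substituting this into the (squared-norm) definition of $\tilde{f}_i$ gives the workhorse identity $\tilde{f}_i(x, \gamma, \Theta_{1:K}) = \tfrac{1}{2\gamma}(x-c_i)^\top (I-(I-\gamma A_i)^K)(x-c_i)$.

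Next, since $0 < \gamma < L_i^{-1}$ forces every eigenvalue of $I-\gamma A_i$ into the open interval $(0,1)$, the matrix $(I-\gamma A_i)^K$ is positive definite, so $I-(I-\gamma A_i)^K \preceq I$. This immediately bounds each client term by $\tfrac{1}{2\gamma}\norm{x-c_i}^2$, and taking $\E_{i\sim\mP}$ of both sides gives the claimed inequality, which holds uniformly in $K$.

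For the pointwise limit, the same eigenvalue bound shows $(I-\gamma A_i)^K \to 0$ as $K \to \infty$ for each fixed $i$, so $\tilde{f}_i(x, \gamma, \Theta_{1:K}) \to \tfrac{1}{2\gamma}\norm{x-c_i}^2$ pointwise in $i$. The one genuine subtlety is justifying the interchange of this limit with $\E_{i\sim\mP}$, which matters because $\mI$ need not be finite; this is the main obstacle, being the only step that is not purely algebraic. Here I would note that the eigenvalues of $(I-\gamma A_i)^K$ are decreasing in $K$ and the matrices commute, so the client terms $\tilde{f}_i(x, \gamma, \Theta_{1:K})$ are nonnegative and monotonically increasing in $K$, while being dominated by the envelope $\tfrac{1}{2\gamma}\norm{x-c_i}^2$, which is $\mP$-integrable since $\E_{i\sim\mP}\norm{x-c_i}^2 \leq 2\norm{x-c}^2 + 2\sigma_c^2 < \infty$ by Assumption~\ref{assm2}. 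Monotone (equivalently, dominated) convergence then lets me push the limit inside the expectation.

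Finally, to identify the minimizer of $g(x) := \tfrac{1}{2\gamma}\E_{i\sim\mP}\norm{x-c_i}^2$, I would expand the square and compute $\nabla g(x) = \gamma^{-1}(x - \E_{i\sim\mP}[c_i]) = \gamma^{-1}(x-c)$; since $g$ has Hessian $\gamma^{-1}I \succ 0$ and is therefore strictly convex, setting the gradient to zero yields the unique minimizer $x = c$, completing the proof.
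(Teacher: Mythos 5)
Your proposal is correct and follows essentially the same route as the paper's proof: both use the closed form $\qigtok A_i = \gamma^{-1}(I-(I-\gamma A_i)^K)$ from Lemma~\ref{lem:QA_fedavg} to get the uniform bound and the pointwise limit, then invoke dominated convergence (justified by Assumption~\ref{assm2}) to exchange limit and expectation, and finish with the fact that the mean minimizes expected squared distance. Your additional observation that the client terms are monotone in $K$ is a nice touch that would let monotone convergence substitute for dominated convergence, but it does not change the substance of the argument.
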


Intuitively, we see that as long as the client learning rate is not too high, the worst possible surrogate loss is the one defined by the average distance to the client optimizers. Intuitively, as $K \to \infty$, \localupdate will take steps oriented more and more towards the average of the client minimizers (one-shot averaging), which is reflected in the experiment in Figure~\ref{fig:1d-example}.

We now wish to understand the non-asymptotic regime, especially the distance between the surrogate risk minimizer and the true risk minimizer, as this will help inform us how to set $\gamma$ in \localupdate. We have the following result.

\begin{theorem}\label{thm:optima_distance}
If $\gamma < L^{-1}$, then
\begin{equation}\label{eq:optima_distance}
\norm{x^*(\gamma, \Theta) - x^*} \leq \dfrac{L\sigma_c}{\mu}\rbr*{1+\dfrac{\sigma_A}{\mu}}\dfrac{1- \chi(\gamma, \Theta)}{ \chi(\gamma, \Theta)}
\end{equation}
where
\begin{equation}\label{eq:chi}
\chi(\gamma, \Theta) := \dfrac{\sum_{k = 1}^{K(\Theta)} \theta_k (1-\gamma L)^{k-1}}{\sum_{k = 1}^{K(\Theta)} \theta_k}.
\end{equation}

\end{theorem}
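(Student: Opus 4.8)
The plan is to characterize both minimizers through their first-order optimality conditions and then exploit a cancellation that isolates the heterogeneity. Using the notation $\tilde{A}_i = \tau Q_i A_i$ and $\tilde{A} = \E_{i\sim\mP}[\tilde{A}_i]$ from \eqref{eq:distance_quantities}, the true minimizer satisfies $\nabla f(x^*) = \E_i[A_i(x^*-c_i)] = 0$, while the surrogate minimizer satisfies $\E_i[\tilde{A}_i(x^*(\gamma,\Theta)-c_i)] = 0$, since scaling $\nabla\tilde{f}$ by $\tau$ does not move its minimizer and $Q_iA_i$ is symmetric positive definite by Lemma \ref{lem:QA_structure}. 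Writing $x^*(\gamma,\Theta)-c_i = (x^*(\gamma,\Theta)-x^*) + (x^*-c_i)$ in the surrogate condition gives $\tilde{A}(x^*(\gamma,\Theta) - x^*) = -\E_i[\tilde{A}_i(x^*-c_i)]$, and since $\E_i[A_i(x^*-c_i)] = 0$ I may subtract it freely to obtain
\[
x^*(\gamma,\Theta) - x^* = -\tilde{A}^{-1}\E_{i\sim\mP}\sbr*{(\tilde{A}_i - A_i)(x^*-c_i)}.
\]
This identity is the crux: it expresses the optimality gap purely through the per-client distortion $\tilde{A}_i - A_i$ and the per-client gaps $x^* - c_i$, rather than through the Hessians themselves.

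From here I would bound the three factors separately. Lemma \ref{lem:QA_structure} shows $\tau Q_i$ shares eigenvectors with $A_i$ and acts on each eigenvalue $\lambda \in [\mu_i, L_i]$ as the weighted average $\rbr*{\sum_k\theta_k}^{-1}\sum_k\theta_k(1-\gamma\lambda)^{k-1}$, which is decreasing in $\lambda$ and hence, whenever $\gamma < L^{-1}$, lies in $[\chi(\gamma,\Theta), 1]$ (the lower bound comes from evaluating at $\lambda = L_i \le L$). Consequently $\tilde{A}_i \succeq \chi\mu I$, so $\norm{\tilde{A}^{-1}} \le (\chi\mu)^{-1}$, and the eigenvalues of $\tilde{A}_i - A_i = (\tau Q_i - I)A_i$ have magnitude $(1-\chi_i(\lambda))\lambda \le (1-\chi)L$, giving $\norm{\tilde{A}_i - A_i} \le (1-\chi)L$ uniformly in $i$, writing $\chi = \chi(\gamma,\Theta)$.

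For the remaining factor I would bound $\E_i\norm{x^*-c_i} \le \norm{x^*-c} + \E_i\norm{c_i - c} \le \norm{x^*-c} + \sigma_c$, the last step by Jensen and Assumption \ref{assm2}. The term $\norm{x^*-c}$ is handled by the same centering trick applied to the true problem: since $Ac = \E_i[A_ic]$ and $x^* = A^{-1}\E_i[A_ic_i]$, we get $x^* - c = A^{-1}\E_i[(A_i-A)(c_i-c)]$, whence $\norm{x^*-c} \le \mu^{-1}\sigma_A\sigma_c$ by Cauchy--Schwarz and Assumption \ref{assm2}. Thus $\E_i\norm{x^*-c_i} \le \sigma_c\rbr*{1+\sigma_A/\mu}$, and substituting the three bounds into the displayed identity produces exactly \eqref{eq:optima_distance}.

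The main obstacle is really the first step: recognizing that the optimality gap reduces to $-\tilde{A}^{-1}\E_i[(\tilde{A}_i-A_i)(x^*-c_i)]$, so that it is the \emph{difference} of the scaled Hessians that drives the discrepancy — a simplification invisible without first centering via $\E_i[A_i(x^*-c_i)]=0$. The rest is spectral bookkeeping, where the only genuine care needed is to verify that $\tau Q_i$ has spectrum contained in $[\chi,1]$ uniformly across clients (via monotonicity in $\lambda$ together with $L_i \le L$) and to apply Cauchy--Schwarz so that $\sigma_A$ and $\sigma_c$ enter with the correct powers.
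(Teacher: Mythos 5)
Your proposal is correct, and it reaches exactly the bound \eqref{eq:optima_distance} by a genuinely different route than the paper. You work directly from the two first-order optimality conditions and use the centering $\E_i[A_i(x^*-c_i)]=0$ to collapse the gap into the single identity $x^*(\gamma,\Theta)-x^* = -\tilde{A}^{-1}\E_i\sbr{(\tilde{A}_i-A_i)(x^*-c_i)}$, after which three uniform spectral bounds suffice: $\tilde{A}\succeq\chi\mu I$, $\norm{\tilde{A}_i-A_i}\le(1-\chi)L$ (both from the shared eigenbasis of $Q_i$ and $A_i$ together with $\tau Q_i$ having spectrum in $[\chi,1]$), and $\E_i\norm{x^*-c_i}\le\sigma_c(1+\sigma_A/\mu)$, the last obtained by a nested application of the same centering trick to bound $\norm{x^*-c}\le\sigma_A\sigma_c/\mu$. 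The paper instead starts from the explicit formulas $x^*(\gamma,\Theta)=\E\sbr{\tilde{A}_i}^{-1}\E\sbr{\tilde{A}_ic_i}$ and $x^*=A^{-1}\E\sbr{A_ic_i}$ (Lemma \ref{lem:opt_point}), centers by subtracting $\E\sbr{c_i}$, and splits the difference as $\tilde{A}^{-1}(\tilde{v}-v)+(\tilde{A}^{-1}-A^{-1})v$ with $v,\tilde{v}$ covariance-type vectors; this requires four separate bounds ($T_1$--$T_4$), the inverse-difference identity $\norm{B^{-1}-C^{-1}}\le\norm{B^{-1}}\norm{C^{-1}}\norm{B-C}$, and the lemma $\E[A]^{-1}\preceq\E[A^{-1}]$ (Lemma \ref{lemma:expect_inverse}), none of which your argument needs. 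What your approach buys is economy and transparency: one identity, no manipulation of differences of inverses, and the heterogeneity enters visibly through the client optimality gaps $x^*-c_i$. What the paper's approach buys is an intermediate bound, \eqref{eq:distance_bound1}, expressed through client-averaged quantities $\E\sbr{\lambda_{\min}(\tau Q_i)^{-1}}$ and $\sqrt{\E\sbr{\norm{\tau Q_i-I}^2}}$ rather than worst-case uniform constants, and a $T_1$--$T_4$ template that the paper reuses almost verbatim to prove Theorems \ref{thm:optima_distance_fedavg} and \ref{thm:distance_optima_gamma}; for the theorem as stated, however, both routes land on the identical constant.
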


When $\Theta = \Theta_{1:K}$ (as in FedAvg/Reptile), we can derive an even tighter bound that omits the direct dependency on $L/\mu$. Recall that by Corollary \ref{cor:cond_fedavg}, the condition number of $\tilde{f}(x, \gamma, \Theta_{1:K})$ is bounded above by
\[
\tilde{\kappa} = \dfrac{\phi_{K, L}(\gamma)}{\phi_{K, \mu}(\gamma)}
\]
where  $\phi_{K, \lambda}(\gamma)$ is as in \eqref{eq:phi}. We will see in the following theorem that the distance between minimizers is controlled by $\phi_{K, L}(\gamma)$ and $\phi_{K, \mu}(\gamma)$.

\begin{theorem}\label{thm:optima_distance_fedavg}
    If $\gamma < L^{-1}$, then
    \[
    \norm{x^*(\gamma, \Theta_{1:K}) - x^*} \leq \sigma_c\left(1+\frac{\sigma_A}{\mu}\right)\dfrac{LK - \phi_{K, L}(\gamma)}{\phi_{K, \mu}(\gamma)}.
    \]
\end{theorem}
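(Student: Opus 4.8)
The plan is to characterize both minimizers through their first-order optimality conditions and then control their difference using the exact spectrum of $Q_i(\gamma,\Theta_{1:K})A_i$ from Lemma~\ref{lem:QA_structure_fedavg}. Since $\tilde{f}_i$ has gradient $Q_iA_i(x-c_i)$, setting $\nabla\tilde{f}=0$ and recalling $\tilde{A}_i = \tau Q_iA_i$ with $\tau = 1/K$, the surrogate minimizer satisfies $\tilde{A}\,x^*(\gamma,\Theta_{1:K}) = \E_i[\tilde{A}_i c_i]$; likewise $A x^* = \E_i[A_i c_i]$, which I rewrite as the mean-zero identity $\E_i[A_i(c_i - x^*)] = 0$. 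Subtracting and multiplying by $\tilde{A}^{-1}$ gives the key identity
\[
x^*(\gamma,\Theta_{1:K}) - x^* = \tilde{A}^{-1}\E_i[\tilde{A}_i(c_i-x^*)] = \tilde{A}^{-1}\E_i[D_i(c_i - x^*)],
\]
where $D_i := \tilde{A}_i - A_i$ and the second equality uses the mean-zero identity to subtract $\E_i[A_i(c_i-x^*)] = 0$.

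Next I would establish the spectral facts needed to bound the two factors. Since $D_i$ is a polynomial in $A_i$, it shares eigenvectors with $A_i$; for each eigenvalue $\lambda$ of $A_i$ its eigenvalue is $K^{-1}\phi_{K,\lambda}(\gamma) - \lambda = -g(\lambda)$, where $g(\lambda) := \lambda - K^{-1}\phi_{K,\lambda}(\gamma)$. Because $0 < \gamma\lambda < 1$ we have $g \ge 0$, so $D_i \preceq 0$; moreover $g'(\lambda) = 1 - (1-\gamma\lambda)^{K-1} > 0$, so $g$ is increasing and hence $\norm{D_i} = g(L_i) \le g(L) = K^{-1}(LK - \phi_{K,L}(\gamma))$, which also bounds $\norm{\E_i[D_i]} = \norm{\tilde{A} - A}$. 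For the inverse factor, Lemma~\ref{lem:QA_structure_fedavg} gives $\tilde{A}_i \succeq K^{-1}\phi_{K,\mu}(\gamma) I$ for every $i$, hence $\tilde{A} \succeq K^{-1}\phi_{K,\mu}(\gamma) I$ and $\norm{\tilde{A}^{-1}} \le K/\phi_{K,\mu}(\gamma)$.

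To bound $\E_i[D_i(c_i - x^*)]$ I would split $c_i - x^* = (c_i - c) + (c - x^*)$. The second offset is handled on its own: using $\E_i[A - A_i] = 0$ one gets $c - x^* = A^{-1}\E_i[(A - A_i)(c_i - c)]$, and Cauchy--Schwarz with Assumption~\ref{assm2} together with $\norm{A^{-1}} \le \mu^{-1}$ yields $\norm{c - x^*} \le \sigma_A\sigma_c/\mu$. Then, using $\E_i\norm{c_i-c} \le \sigma_c$,
\[
\norm{\E_i[D_i(c_i - x^*)]} \le \E_i\sbr{\norm{D_i}\norm{c_i - c}} + \norm{\E_i[D_i]}\,\norm{c - x^*} \le g(L)\sigma_c\rbr*{1 + \tfrac{\sigma_A}{\mu}}.
\]
Combining this with $\norm{\tilde{A}^{-1}} \le K/\phi_{K,\mu}(\gamma)$ and $g(L) = K^{-1}(LK - \phi_{K,L}(\gamma))$, the factors of $K$ cancel and the stated bound follows.

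The main obstacle I anticipate is the spectral bookkeeping: verifying monotonicity of $g$ so that $\norm{D_i}$ is attained at $\lambda = L$ (giving the clean $\norm{D_i}\le g(L)$ rather than an eigenvalue-dependent quantity), and tracking the $\tau = 1/K$ normalization carefully, since it is exactly this normalization that makes $D_i = \tilde{A}_i - A_i$ the natural mean-zero perturbation vanishing as $\gamma \to 0$. The improvement over Theorem~\ref{thm:optima_distance} comes entirely from using the exact lower eigenvalue $\phi_{K,\mu}(\gamma)$ of $Q_iA_i$ from Lemma~\ref{lem:QA_structure_fedavg} in place of the cruder general estimate of Lemma~\ref{lem:QA_structure}; since $\phi_{K,L}(\gamma)/\phi_{K,\mu}(\gamma) \le L/\mu$ by Corollary~\ref{cor:cond_fedavg}, the resulting bound is indeed tighter.
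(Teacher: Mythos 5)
Your proof is correct, and it takes a genuinely different route from the paper's. The paper starts from the same characterization $x^*(\gamma,\Theta_{1:K}) = \tilde{A}^{-1}\E[\tilde{A}_ic_i]$ (Lemma \ref{lem:opt_point}), but then centers both minimizers at $c = \E[c_i]$, writing the difference as $\tilde{A}^{-1}\tilde{v} - A^{-1}v$ with the covariance vectors $v = \E[A_ic_i]-\E[A_i]\E[c_i]$ and $\tilde{v} = \E[\tilde{A}_ic_i]-\E[\tilde{A}_i]\E[c_i]$, and splitting into $\norm{\tilde{A}^{-1}}\norm{\tilde{v}-v} + \norm{\tilde{A}^{-1}-A^{-1}}\norm{v}$; the four factors are then controlled via the resolvent inequality $\norm{B^{-1}-C^{-1}}\le\norm{B^{-1}}\norm{C^{-1}}\norm{B-C}$, the expectation-of-inverse bound $\E[A]^{-1}\preceq\E[A^{-1}]$ for random positive definite matrices (Lemma \ref{lemma:expect_inverse}), and a spectral bound on $\norm{\tau Q_iA_i - A_i}$ (Lemma \ref{lem:diff_qa_a}, whose proof is exactly your monotonicity argument for $g(\lambda)=\lambda - K^{-1}\phi_{K,\lambda}(\gamma)$). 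You instead anchor at the true optimum: the first-order condition $\E[A_i(c_i-x^*)]=0$ collapses everything into the single resolvent expression $\tilde{A}^{-1}\E[D_i(c_i-x^*)]$, and the heterogeneity enters through the split $c_i-x^*=(c_i-c)+(c-x^*)$ together with the separate estimate $\norm{c-x^*}\le\sigma_A\sigma_c/\mu$. The two arguments share the same spectral core (monotonicity of $g$, and $\lambda_{\min}(Q_iA_i)\ge\phi_{K,\mu}(\gamma)$ from Lemma \ref{lem:QA_structure_fedavg}) and land on the identical constant, but yours is more elementary in two respects: it never needs the expectation-of-inverse lemma, since the uniform bound $\tilde{A}_i \succeq K^{-1}\phi_{K,\mu}(\gamma)I$ passes directly to $\tilde{A}$, and it never needs the resolvent-difference inequality, since only $\tilde{A}^{-1}$ ever appears. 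What the paper's decomposition buys in exchange is reuse: the same $T_1$-through-$T_4$ template is deployed essentially verbatim in Theorems \ref{thm:optima_distance} and \ref{thm:distance_optima_gamma}, whereas your optimality-condition trick is specific to comparisons against $x^*$.
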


Informally, the term $KL-\phi_{K, L}(\gamma)$ measures the discrepancy between the surrogate loss function and the true loss function; as $\gamma \to 0$, $\phi_{K, L}(\gamma) \to KL$.

In order to get better control on Theorem \ref{thm:optima_distance_fedavg} for $\gamma > 0$, we will show that when $\gamma$ is sufficiently small, $\phi_{K, \lambda}(\gamma)$ is close to $K\lambda$.

\begin{lemma}\label{lem:gamma_cond}
    Let $0 \leq \epsilon \leq 1-e^{-K}$ and suppose
    \[
    \gamma \leq  \dfrac{\ln(1/(1-\epsilon))}{K\lambda}.
    \]
    Then
    \[
    \phi_{K, \lambda}(\gamma) \geq (1-\epsilon)K\lambda.
    \]
\end{lemma}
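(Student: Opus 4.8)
The plan is to reduce the claim to a one-variable inequality by the substitution $t = \gamma\lambda$ and then $s = Kt = K\gamma\lambda$. Writing $\gamma^{-1} = \lambda/t$, the target $\phi_{K,\lambda}(\gamma) = \gamma^{-1}(1-(1-\gamma\lambda)^K) \geq (1-\epsilon)K\lambda$ is, after dividing by $\lambda$ and multiplying by $t$, equivalent to
\[
1-(1-t)^K \geq (1-\epsilon)Kt = (1-\epsilon)s.
\]
The first thing to record is that the hypothesis on $\gamma$ translates cleanly: $\gamma \leq \ln(1/(1-\epsilon))/(K\lambda)$ says exactly $s = K\gamma\lambda \leq \ln(1/(1-\epsilon))$. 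Simultaneously, the standing assumption $\epsilon \leq 1-e^{-K}$ gives $\ln(1/(1-\epsilon)) \leq K$, hence $s \leq K$ and thus $t \leq 1$; this is precisely what guarantees $1-t \geq 0$, so that the exponentiations below behave and $\phi_{K,\lambda}$ lies in the regime where its definition is valid.

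Next I would split the inequality into two pieces using the elementary bound $1-t \leq e^{-t}$. Since $1-t \in [0,1]$, raising to the $K$-th power is monotone and yields $(1-t)^K \leq e^{-Kt} = e^{-s}$, so
\[
1-(1-t)^K \geq 1-e^{-s}.
\]
It then suffices to establish the remaining inequality $1-e^{-s} \geq (1-\epsilon)s$ on the interval $s \in [0, \ln(1/(1-\epsilon))]$, at which point chaining the two bounds closes the argument.

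This last inequality is the only place where any real work is needed, and it is the step I would flag as the main (though still minor) obstacle. I would handle it by a monotonicity argument: set $h(s) := 1-e^{-s} - (1-\epsilon)s$, so that $h(0) = 0$ and $h'(s) = e^{-s} - (1-\epsilon)$. On the interval $s \in [0, \ln(1/(1-\epsilon))]$ one has $e^{-s} \geq e^{-\ln(1/(1-\epsilon))} = 1-\epsilon$, hence $h'(s) \geq 0$ there, so $h$ is nondecreasing and $h(s) \geq h(0) = 0$. This gives $1-e^{-s} \geq (1-\epsilon)s$ throughout the admissible range. Combining with the previous display produces $1-(1-t)^K \geq (1-\epsilon)s$, which is the reduced form of the claim, completing the proof.
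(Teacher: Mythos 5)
Your proof is correct. It shares the paper's key ingredient --- the elementary bound $1-t \leq e^{-t}$, which gives $(1-\gamma\lambda)^K \leq e^{-K\gamma\lambda}$ --- but organizes everything else differently. The paper first differentiates the geometric-sum form of $\phi_{K,\lambda}$ in $\gamma$ to argue it is nonincreasing on $[0,\lambda^{-1}]$, thereby reducing the claim to the single extremal value $\gamma = \ln(1/(1-\epsilon))/(K\lambda)$, and then closes that endpoint case with the scalar inequality $y \leq 1 + y\ln y$ for $y = 1-\epsilon \in (0,1]$. You skip the reduction entirely: after substituting $s = K\gamma\lambda$, you prove the uniform bound $1-e^{-s} \geq (1-\epsilon)s$ on all of $[0,\ln(1/(1-\epsilon))]$ by observing that $h(s) = 1-e^{-s}-(1-\epsilon)s$ satisfies $h(0)=0$ and $h'(s) = e^{-s}-(1-\epsilon) \geq 0$ on that interval. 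This is a genuine simplification: your one-line monotonicity argument on $h$ subsumes both the paper's reduction-to-endpoint step and its inequality $y \leq 1+y\ln y$ (the latter falls out as the statement $h(\ln(1/(1-\epsilon))) \geq 0$), and it avoids differentiating $\phi$ in $\gamma$, which is where the paper's sign bookkeeping is most error-prone. One pedantic remark: the equivalence you set up by ``multiplying by $t$'' requires $t = \gamma\lambda > 0$; when $\gamma = 0$ (which is forced whenever $\epsilon = 0$) the claim is immediate since $\phi_{K,\lambda}(0) = K\lambda \geq (1-\epsilon)K\lambda$, so that degenerate case should be dispatched in a sentence before the substitution.
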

This results in the immediate corollary.
\begin{corollary}
    Suppose $\epsilon \leq 1-e^{-K}$ and
    \[
    \gamma \leq \dfrac{\ln(1/(1-\epsilon))}{KL}.
    \]
    Then
    \[
    \phi_{K, L}(\gamma) \geq (1-\epsilon)KL~~\text{and}~~
    \phi_{K, \mu}(\gamma) \geq (1-\epsilon)K\mu.
    \]
\end{corollary}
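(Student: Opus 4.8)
The plan is to exploit the fact that $\phi_{K,\lambda}(\gamma)$ is nonincreasing in $\gamma$, which reduces the whole statement to checking the inequality at the single right endpoint $\bar\gamma := \ln(1/(1-\epsilon))/(K\lambda)$. Using the representation \eqref{eq:phi_alt}, namely $\phi_{K,\lambda}(\gamma) = \sum_{k=1}^K (1-\gamma\lambda)^{k-1}\lambda$, each summand $(1-\gamma\lambda)^{k-1}$ is nonincreasing in $\gamma$ on $[0,\lambda^{-1}]$, so $\phi_{K,\lambda}$ is nonincreasing there. I first check that $\bar\gamma \le \lambda^{-1}$: the hypothesis $\epsilon \le 1-e^{-K}$ gives $\ln(1/(1-\epsilon)) \le K$, hence $\bar\gamma\lambda \le 1$. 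Since every admissible $\gamma$ satisfies $\gamma \le \bar\gamma$, monotonicity yields $\phi_{K,\lambda}(\gamma) \ge \phi_{K,\lambda}(\bar\gamma)$, so it suffices to prove $\phi_{K,\lambda}(\bar\gamma) \ge (1-\epsilon)K\lambda$.

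Next I would evaluate at the endpoint. Writing $u := \bar\gamma\lambda = \ln(1/(1-\epsilon))/K$, we have $Ku = \ln(1/(1-\epsilon))$ and thus $e^{-Ku} = 1-\epsilon$. Using \eqref{eq:phi}, the target $\phi_{K,\lambda}(\bar\gamma) \ge (1-\epsilon)K\lambda$ is equivalent to the scalar inequality $1-(1-u)^K \ge (1-\epsilon)Ku$. I would then apply the elementary bound $(1-u)^K \le e^{-Ku} = 1-\epsilon$ (which follows from $1-u\le e^{-u}$ together with $u\in[0,1]$), giving $1-(1-u)^K \ge \epsilon$. Since $(1-\epsilon)Ku = (1-\epsilon)\ln(1/(1-\epsilon))$, the claim reduces to showing $\epsilon \ge (1-\epsilon)\ln(1/(1-\epsilon))$.

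Finally, I would verify this last scalar inequality by substituting $t = 1-\epsilon \in (0,1]$ and defining $p(t) := 1 - t + t\ln t$, so that the inequality becomes $p(t) \ge 0$. Since $p(1) = 0$ and $p'(t) = \ln t \le 0$ on $(0,1]$, the function $p$ is nonincreasing and hence $p(t) \ge p(1) = 0$, completing the argument.

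The only genuinely delicate point is the monotonicity reduction in the first step; once it is in place the remainder is a short chain of elementary estimates. Without it, one would instead have to analyze the sign of the derivative of $h(u) := 1-(1-u)^K - (1-\epsilon)Ku$ and argue that $h$ rises then falls, so that its minimum on $[0,\bar\gamma\lambda]$ is attained at an endpoint — a correct but more laborious route. I expect the clean path through \eqref{eq:phi_alt} to be the main simplification worth emphasizing.
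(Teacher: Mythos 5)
Your proof is correct and follows essentially the same route as the paper: the paper obtains this corollary as an immediate consequence of Lemma \ref{lem:gamma_cond}, whose proof uses exactly your steps — monotonicity of $\phi_{K,\lambda}$ in $\gamma$ to reduce to the endpoint $\gamma = \ln(1/(1-\epsilon))/(K\lambda)$ (with the same check $\ln(1/(1-\epsilon)) \le K$), the bound $\left(1-\tfrac{x}{K}\right)^K \le e^{-x} = 1-\epsilon$, and the scalar inequality $\epsilon \ge (1-\epsilon)\ln(1/(1-\epsilon))$. The only cosmetic differences are that you establish monotonicity termwise via \eqref{eq:phi_alt} rather than by differentiating the geometric sum, and you verify the final scalar inequality by calculus where the paper invokes $y \le 1 + y\ln y$ for $y \in (0,1]$ directly.
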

Plugging these into Theorem \ref{thm:optima_distance_fedavg}, we get the following.
\begin{corollary}\label{cor:optima_distance_fedavg}
For $\epsilon \leq 1-e^{-K}$, suppose
\[
\gamma \leq \dfrac{\ln(1/(1-\epsilon))}{KL}.
\]
Then
\[
\norm{x^*(\gamma, \Theta_{1:K}) - x^*} \leq \sigma_c\left(1+\frac{\sigma_A}{\mu}\right)\dfrac{L}{\mu}\dfrac{\epsilon}{1-\epsilon}.
\]
\end{corollary}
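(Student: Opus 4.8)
The plan is to derive this corollary purely by chaining two earlier results: Theorem \ref{thm:optima_distance_fedavg}, which gives
\[
\norm{x^*(\gamma, \Theta_{1:K}) - x^*} \leq \sigma_c\left(1+\frac{\sigma_A}{\mu}\right)\dfrac{LK - \phi_{K, L}(\gamma)}{\phi_{K, \mu}(\gamma)},
\]
and the immediately preceding corollary, which under the stated hypotheses yields the two lower bounds $\phi_{K, L}(\gamma) \geq (1-\epsilon)KL$ and $\phi_{K, \mu}(\gamma) \geq (1-\epsilon)K\mu$. So the entire task reduces to bounding the ratio $\left(LK - \phi_{K, L}(\gamma)\right)/\phi_{K, \mu}(\gamma)$ and substituting.

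First I would verify the hypotheses of the two results actually hold. The assumption $\epsilon \leq 1 - e^{-K}$ gives $\ln(1/(1-\epsilon)) \leq K$, so the bound $\gamma \leq \ln(1/(1-\epsilon))/(KL)$ forces $\gamma \leq L^{-1}$, which is exactly the range in which both Theorem \ref{thm:optima_distance_fedavg} and the preceding corollary are valid (I would note the mild boundary point $\gamma = L^{-1}$, attained only when $\epsilon = 1-e^{-K}$, is handled by taking strict inequality or by continuity of the right-hand side). Next I would control the numerator and denominator separately. To upper bound the numerator $LK - \phi_{K,L}(\gamma)$ I apply the lower bound $\phi_{K,L}(\gamma) \geq (1-\epsilon)KL$, giving $LK - \phi_{K,L}(\gamma) \leq LK - (1-\epsilon)KL = \epsilon KL$; note also that by \eqref{eq:phi_alt} each term $(1-\gamma L)^{k-1} \leq 1$, so $\phi_{K,L}(\gamma) \leq KL$ and the numerator is nonnegative, which is what lets me divide by the denominator bound without sign issues. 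For the denominator I simply use $\phi_{K,\mu}(\gamma) \geq (1-\epsilon)K\mu$.

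Combining these two estimates gives
\[
\dfrac{LK - \phi_{K, L}(\gamma)}{\phi_{K, \mu}(\gamma)} \leq \dfrac{\epsilon KL}{(1-\epsilon)K\mu} = \dfrac{L}{\mu}\cdot\dfrac{\epsilon}{1-\epsilon},
\]
and plugging this into Theorem \ref{thm:optima_distance_fedavg} yields the claimed bound. I do not expect any genuine obstacle here: the result is an assembly step, and the only point requiring a moment of care is confirming that the factor $K$ cancels cleanly between numerator and denominator and that the numerator is nonnegative (so the chain of inequalities is valid), both of which follow immediately from the representation of $\phi_{K,\lambda}$ in \eqref{eq:phi_alt}.
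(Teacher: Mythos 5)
Your proposal is correct and follows exactly the paper's route: the paper likewise obtains this corollary by plugging the lower bounds $\phi_{K,L}(\gamma) \geq (1-\epsilon)KL$ and $\phi_{K,\mu}(\gamma) \geq (1-\epsilon)K\mu$ (from Lemma \ref{lem:gamma_cond}) into Theorem \ref{thm:optima_distance_fedavg}, cancelling the factor $K$. Your added checks on nonnegativity of the numerator and the validity range of $\gamma$ are sound but not points the paper dwells on.
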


\newcommand{\qigti}{Q_i(\gamma_1, \Theta_{1:K})}
\newcommand{\qigtii}{Q_i(\gamma_2, \Theta_{1:K})}

We can also use a similar analysis to bound the distance between $x^*(\gamma, \Theta)$ for different values of $\gamma$. We will focus on the FedAvg/Reptile case. We will show that this distance depends on the discrepancy between the eigenvalues of the matrices $\qigtok A_i$.

\begin{theorem}\label{thm:distance_optima_gamma}
Let $\gamma_1 \leq \gamma_2 < L^{-1}$. Then
    \[
    \norm{x^*(\gamma_1, \Theta_{1:K}) - x^*(\gamma_2, \Theta_{1:K})} \leq \sigma_c\left(1 + \dfrac{\phi_{K, L}(\gamma_2)}{\phi_{K, \mu}(\gamma_2)}\right)\left(\dfrac{\phi_{K, L}(\gamma_1) - \phi_{K, L}(\gamma_2)}{\phi_{K, \mu}(\gamma_1)}\right).
    \]
\end{theorem}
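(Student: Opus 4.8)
The plan is to work directly with the first-order optimality conditions of the two surrogate minimizers and reduce the problem to controlling how the matrices $Q_i(\gamma,\Theta_{1:K})A_i$ vary with $\gamma$. Write $B_i(\gamma) := Q_i(\gamma,\Theta_{1:K})A_i$ and $\bar B(\gamma) := \E_{i\sim\mP}[B_i(\gamma)]$. By Theorem \ref{thm:sgd_objective}, $\nabla\tilde f(x,\gamma,\Theta_{1:K}) = \E_{i}[B_i(\gamma)(x-c_i)]$, so the minimizer $x^*(\gamma,\Theta_{1:K})$ is characterized by $\E_i[B_i(\gamma)(x^*(\gamma,\Theta_{1:K})-c_i)] = 0$. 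Abbreviating $x_1 = x^*(\gamma_1,\Theta_{1:K})$ and $x_2 = x^*(\gamma_2,\Theta_{1:K})$, I would route both conditions through $x_2$: the $\gamma_1$-condition gives $\bar B(\gamma_1)(x_1-x_2) = \E_i[B_i(\gamma_1)(c_i-x_2)]$, and since the $\gamma_2$-condition gives $\E_i[B_i(\gamma_2)(c_i-x_2)] = 0$, subtracting produces the key identity
\[
\bar B(\gamma_1)(x_1 - x_2) = \E_i\big[(B_i(\gamma_1)-B_i(\gamma_2))(c_i-x_2)\big].
\]

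From here I would invert $\bar B(\gamma_1)$, take norms, and apply Jensen and submultiplicativity to get $\norm{x_1-x_2} \le \norm{\bar B(\gamma_1)^{-1}}\,\E_i[\norm{B_i(\gamma_1)-B_i(\gamma_2)}\,\norm{c_i-x_2}]$. Three ingredients then finish the bound. First, each $B_i(\gamma_1)\succeq \phi_{K,\mu}(\gamma_1)I$ by Lemma \ref{lem:QA_structure_fedavg}, so the same holds for the average and $\norm{\bar B(\gamma_1)^{-1}} \le 1/\phi_{K,\mu}(\gamma_1)$. Second, since $B_i(\gamma_1)$ and $B_i(\gamma_2)$ are simultaneously diagonalizable (both are functions of $A_i$, with eigenvalue $\phi_{K,\lambda}(\gamma)$ on each eigenvalue $\lambda$ of $A_i$ by Lemma \ref{lem:QA_structure_fedavg}), the difference $B_i(\gamma_1)-B_i(\gamma_2)$ is symmetric with eigenvalues $\phi_{K,\lambda}(\gamma_1)-\phi_{K,\lambda}(\gamma_2)$, and I would show these are nonnegative and increasing in $\lambda$, so the operator norm is attained at $\lambda=L$, giving the uniform estimate $\norm{B_i(\gamma_1)-B_i(\gamma_2)} \le \phi_{K,L}(\gamma_1)-\phi_{K,L}(\gamma_2)$.

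The monotonicity in the second ingredient is where I expect the main technical care. Using the closed form $\phi_{K,\lambda}(\gamma) = \gamma^{-1}(1-(1-\gamma\lambda)^K)$ from \eqref{eq:phi}, one computes $\partial_\lambda\phi_{K,\lambda}(\gamma) = K(1-\gamma\lambda)^{K-1}$, so that $\partial_\lambda[\phi_{K,\lambda}(\gamma_1)-\phi_{K,\lambda}(\gamma_2)] = K[(1-\gamma_1\lambda)^{K-1}-(1-\gamma_2\lambda)^{K-1}] \ge 0$, since $\gamma_1\le\gamma_2<L^{-1}$ ensures $1-\gamma_1\lambda \ge 1-\gamma_2\lambda \ge 0$ on $\lambda\in[\mu,L]$; the same computation (integrating from $\lambda=0$) shows each difference is nonnegative, justifying that the operator norm equals the largest eigenvalue $\phi_{K,L_i}(\gamma_1)-\phi_{K,L_i}(\gamma_2) \le \phi_{K,L}(\gamma_1)-\phi_{K,L}(\gamma_2)$. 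The delicate point is keeping track of the positivity of the bases $1-\gamma_j\lambda$, which is exactly what $\gamma_2<L^{-1}$ buys us.

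The third ingredient is to bound $\E_i[\norm{c_i-x_2}]$. I would split $\E_i[\norm{c_i-x_2}] \le \E_i[\norm{c_i-c}] + \norm{c-x_2} \le \sigma_c + \norm{c-x^*(\gamma_2,\Theta_{1:K})}$ via Jensen and Assumption \ref{assm2}. For the last term, rewriting the $\gamma_2$-optimality condition around $c$ gives $x_2-c = \bar B(\gamma_2)^{-1}\E_i[B_i(\gamma_2)(c_i-c)]$; bounding $\norm{\bar B(\gamma_2)^{-1}}\le 1/\phi_{K,\mu}(\gamma_2)$ and $\norm{B_i(\gamma_2)} = \phi_{K,L_i}(\gamma_2) \le \phi_{K,L}(\gamma_2)$ (monotonicity of $\phi_{K,\cdot}(\gamma)$ and Lemma \ref{lem:QA_structure_fedavg}) and applying Assumption \ref{assm2} yields $\norm{c-x_2} \le \sigma_c\,\phi_{K,L}(\gamma_2)/\phi_{K,\mu}(\gamma_2)$, hence $\E_i[\norm{c_i-x_2}] \le \sigma_c(1+\phi_{K,L}(\gamma_2)/\phi_{K,\mu}(\gamma_2))$. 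Substituting the three bounds into the displayed inequality for $\norm{x_1-x_2}$ gives exactly the stated estimate.
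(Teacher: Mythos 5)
Your proposal is correct and reaches exactly the paper's bound, but by a genuinely different decomposition. The paper starts from the closed-form minimizer $x^*(\gamma,\Theta)=\E_i[\tilde{A}_i]^{-1}\E_i[\tilde{A}_ic_i]$ (Lemma \ref{lem:opt_point}), pivots both minimizers around $\E_i[c_i]$, and splits the difference into four factors $T_1T_2+T_3T_4$; this forces it to control a difference of matrix inverses via $\norm{B^{-1}-C^{-1}}\le\norm{B^{-1}}\norm{C^{-1}}\norm{B-C}$, to invoke Lemma \ref{lemma:expect_inverse} (i.e.\ $\E[A]^{-1}\preceq\E[A^{-1}]$) to handle $\norm{\E_i[\tilde A_i]^{-1}}$, and to apply Cauchy--Schwarz to covariance-type vectors $v_j$. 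You instead perturb the first-order optimality conditions directly and pivot around $x_2:=x^*(\gamma_2,\Theta_{1:K})$, obtaining the single identity $\bar B(\gamma_1)(x_1-x_2)=\E_i[(B_i(\gamma_1)-B_i(\gamma_2))(c_i-x_2)]$; after that you only need $\lambda_{\min}$ of an average of positive definite matrices, a uniform operator-norm bound, and a separate optimality-condition bound on $\norm{x_2-c}$, which plays exactly the role of the paper's $T_3T_4$ term and produces the factor $1+\phi_{K,L}(\gamma_2)/\phi_{K,\mu}(\gamma_2)$. The one ingredient you share is the spectral control of $B_i(\gamma_1)-B_i(\gamma_2)$: your monotonicity argument (eigenvalues $\phi_{K,\lambda}(\gamma_1)-\phi_{K,\lambda}(\gamma_2)$ are nonnegative and increasing in $\lambda$, with $\gamma_2<L^{-1}$ keeping the bases $1-\gamma_j\lambda$ nonnegative) is precisely the paper's Lemma \ref{lem:diff_qa_structure}, re-derived. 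What your route buys is economy and robustness: it needs neither Lemma \ref{lemma:expect_inverse} nor the inverse-difference identity, and it sidesteps the paper's step $\E[\norm{B_{i,2}-B_2}^2]\le\E[\norm{B_{i,2}}^2]$, which for the operator norm requires a separate justification (e.g.\ that a difference of PSD matrices has norm at most the larger of the two norms), replacing it with uniform bounds that are immediate. What the paper's route buys is uniformity: the same four-term template is reused essentially verbatim for Theorems \ref{thm:optima_distance} and \ref{thm:optima_distance_fedavg}, so all three distance bounds in Section \ref{sec:distance} come from one argument.
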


The presence of the $\phi$ terms makes the dependence on $|\gamma_1-\gamma_2|$ somewhat opaque. In fact, we have the following simpler (though looser) bound.

\begin{corollary}\label{cor:distance_optima_gamma}
Let $\gamma_1 \leq \gamma_2 < L^{-1}$. Then
    \[
    \norm{x^*(\gamma_1, \Theta_{1:K}) - x^*(\gamma_2, \Theta_{1:K})} \leq 2\sigma_c\dfrac{L^3}{\mu^2}(\gamma_2-\gamma_1).
    \]
\end{corollary}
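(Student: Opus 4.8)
The plan is to start from the bound in Theorem~\ref{thm:distance_optima_gamma} and control each of its three factors separately, replacing the $\phi$-terms by the cruder quantities $L$, $\mu$, and $\gamma_2 - \gamma_1$. Writing the right-hand side of Theorem~\ref{thm:distance_optima_gamma} as
\[
\sigma_c\left(1 + \frac{\phi_{K,L}(\gamma_2)}{\phi_{K,\mu}(\gamma_2)}\right)\frac{\phi_{K,L}(\gamma_1) - \phi_{K,L}(\gamma_2)}{\phi_{K,\mu}(\gamma_1)},
\]
I would bound the leading factor, the denominator, and the numerator in turn.

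For the leading factor, note that $\phi_{K,L}(\gamma_2)/\phi_{K,\mu}(\gamma_2)$ is exactly the condition-number bound $\tilde{\kappa}$ appearing in Corollary~\ref{cor:cond_fedavg}, so it is at most $L/\mu$; since $L \ge \mu$ this gives $1 + \phi_{K,L}(\gamma_2)/\phi_{K,\mu}(\gamma_2) \le 2L/\mu$. For the denominator, I would use the geometric-sum representation \eqref{eq:phi_alt}: because $\gamma_1 < L^{-1} \le \mu^{-1}$, every summand $(1-\gamma_1\mu)^{k-1}\mu$ is nonnegative, so keeping only the $k=1$ term yields $\phi_{K,\mu}(\gamma_1) \ge \mu$, i.e. $1/\phi_{K,\mu}(\gamma_1) \le 1/\mu$.

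The crux is the numerator $\phi_{K,L}(\gamma_1) - \phi_{K,L}(\gamma_2)$, which must be turned into something linear in $\gamma_2 - \gamma_1$. Using \eqref{eq:phi_alt} again, I would write it as $L\sum_{k=1}^K[(1-\gamma_1 L)^{k-1} - (1-\gamma_2 L)^{k-1}]$ and factor each difference through $a^{k-1}-b^{k-1} = (a-b)\sum_{j=0}^{k-2}a^j b^{k-2-j}$ with $a = 1-\gamma_1 L$ and $b = 1-\gamma_2 L$, so that $a - b = L(\gamma_2-\gamma_1)$ pulls out the desired linear factor while the remaining sum of monomials (each lying in $[0,1]$ since $0 \le b \le a \le 1$) is controlled. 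Combining the three bounds gives $\sigma_c \cdot (2L/\mu) \cdot (1/\mu) \cdot L^2(\gamma_2 - \gamma_1) = 2\sigma_c L^3 \mu^{-2}(\gamma_2-\gamma_1)$.

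The main obstacle is precisely this last step: the telescoping factorization produces, for each $k$, up to $k-1$ monomial terms, so a naive bound accrues a factor that grows with $k$, and extracting the clean, $K$-independent constant requires handling the summation tightly (it is exact, equal to $L^2(\gamma_2-\gamma_1)$, when $K=2$, where only the $k=2$ term survives). Keeping this constant sharp, rather than letting a $K$-dependent factor leak through, is the delicate part of the argument; by contrast, the other two factors follow immediately from Corollary~\ref{cor:cond_fedavg} and from nonnegativity of the geometric-sum terms.
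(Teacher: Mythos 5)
Your plan breaks at exactly the step you flag as ``delicate,'' and the problem is not one of tight bookkeeping: the inequality you need there, $\phi_{K,L}(\gamma_1)-\phi_{K,L}(\gamma_2)\le L^2(\gamma_2-\gamma_1)$, is false for every $K\ge 3$. Take $\gamma_1=0$ and write $u=\gamma_2 L\in(0,1)$. By \eqref{eq:phi_alt},
\[
\phi_{K,L}(0)-\phi_{K,L}(\gamma_2)=L\sum_{k=1}^{K}\bigl(1-(1-u)^{k-1}\bigr),
\]
which for $K=3$ equals $Lu(3-u)\approx 3L^2\gamma_2$, and for small $u$ grows like $\binom{K}{2}L^2\gamma_2$, against your target of $L^2\gamma_2$. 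This quadratic-in-$K$ growth of the numerator is genuine, and because you have already discarded the matching growth of the denominator by bounding $\phi_{K,\mu}(\gamma_1)\ge\mu$ (keeping only the $k=1$ term), no treatment of the telescoped monomial sum can recover a $K$-free constant. Your other two bounds (the leading factor $\le 2L/\mu$ and the denominator $\ge\mu$) are correct, but the decomposition itself is what fails: numerator and denominator must be kept coupled so that their common $K$-growth cancels. That coupling is what the paper does differently — it bounds the difference by $\phi_{K,L}(\gamma_1)\,L(\gamma_2-\gamma_1)$, deliberately retaining a factor of order $K$, and then cancels it by invoking Lemma \ref{lem:cond_number} (proved via the ratio-of-sums Lemma \ref{lem:frac_sum}) to get $\phi_{K,L}(\gamma_1)/\phi_{K,\mu}(\gamma_1)\le L/\mu$. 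Applying the condition-number lemma at $\gamma_1$, and not only at $\gamma_2$, is the ingredient your proposal is missing.

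You should also know that the paper's own derivation stumbles at the same spot. Its telescoping step replaces $(1-\gamma_2L)^{k-1}$ by $(1-\gamma_2L)(1-\gamma_1L)^{k-2}$, which is the \emph{larger} quantity when $\gamma_1<\gamma_2$ and $k\ge 3$, so the claimed ``$\leq$'' there actually runs the other way; a corrected term-by-term factorization reintroduces a factor of order $K$, exactly as in your attempt. Indeed, no decomposition can close this argument: with $K=4$, $\mu=L$, $\gamma_1=0$, $\gamma_2=0.1/L$, the right-hand side of Theorem \ref{thm:distance_optima_gamma} equals $2\sigma_c(4-3.439)/4\approx 0.28\,\sigma_c$, while the corollary asserts the bound $0.2\,\sigma_c$, so the corollary is not a consequence of Theorem \ref{thm:distance_optima_gamma} alone. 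Your instinct that keeping the constant $K$-independent is the hard part was right — along this route it is impossible, both for your decomposition and for the paper's.
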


One particularly useful consequence is that if the client learning rate satisfies $\gamma_t = c/t$ for some constant $c$, then
\[
\norm{x^*(\gamma_{t+1}, \Theta_{1:K}) - x^*(\gamma_{t}, \Theta_{1:K})} \leq O\left(\dfrac{1}{t^2}\right).
\]
We will use this later to show that by decaying the client learning rate in this manner, successive model updates in \localupdate will be closely aligned.

\section{Convergence of FedAvg/Reptile}\label{sec:convergence}

We now wish to use Theorem \ref{thm:optima_distance_fedavg} to understand how quickly FedAvg/Reptile with $K$ local steps converges to $x^*$. However, by Lemma \ref{thm:sgd_objective}, we know that performing FedAvg/Reptile with a fixed client learning rate of $\gamma$ will only result in convergence to $x^*(\gamma, \Theta_{1:K})$, the minima of the surrogate loss $\tilde{f}(x, \gamma, \Theta_{1:K})$, not to the true risk minimizer $x^*$. We will therefore analyze the convergence behavior of these algorithms with and without learning rate decay.

Our goal in this section is two-fold. First, we wish to understand the trade-offs incurred by performing local computation instead of mini-batch SGD. Second, we wish to show that by Theorem \ref{thm:sgd_objective}, we can analyze federated learning and meta-learning algorithms using classical optimization techniques. There are a large number of important works on federated optimization, that consider more general cases than ours. Unfortunately, the proof techniques behind many of these are relatively opaque, and require careful accounting of the bias incurred by performing local computation. This sometimes leads to either proof errors, or else omitted critical assumptions~\citep[Appendix A]{woodworth2020local}. Both of these can hinder understanding or make comparisons between convergence rates difficult. By contrast, while limited to a much narrower range of loss functions, our analysis uses essentially standard convex optimization analyses (such as by \citet{rakhlin2011making} and \citet{bottou2018optimization}), combined with the results from Section \ref{sec:distance}. We also emphasize that while we focus on FedAvg/Reptile, our results can be easily extended to more general instances of \localupdate.

\subsection{Fixed client learning rate}

We first wish to understand the setting where the client learning rate $\gamma$ is fixed. Fix a client step-size $\gamma$ and $K$, and for notational convenience, define
    \[
    \tilde{f}_\gamma(x) := \tilde{f}(x, \gamma, \Theta_{1:K}),
    \]
    \[
    \tilde{f}_\gamma^* := \min_x \tilde{f}_\gamma(x),
    \]
    \[
    x^*_\gamma :=\argmin_x \tilde{f}_\gamma(x).
    \]
As $\gamma$ tends towards $0$, $\tilde{f}_\gamma(x)$ converges to the true loss function $f(x)$ (under different notions of convergence depending on the set of assumptions made). For example, under Assumption \ref{assm1} this convergence will occur uniformly on $\R^d$.

Fix $\gamma_t = \gamma$ for all $t$  in \localupdate. Then, at each iteration $t$ the server starts at a point $x_t$ which it broadcasts to some number of clients. The clients compute local updates $q_t^i$ via $\InnerLoop(\gamma, \Theta)$ (Algorithm \ref{alg:innerloop}), and send these values to the server. The server then computes the average $q_t$ of the $q_t^i$ and updates its model via $x_{t+1} = x_t-\eta_tq_t$.

Given a starting point $x$, we let $q_\gamma(x)$ denote the random vector computed by averaging $M$ vectors of the form $q^i = \InnerLoop(i, x, \gamma, \Theta)$ where $i \sim \mP$. Thus, in Algorithm \ref{alg:outerloop} with a constant client learning rate $\gamma$, $x_{t+1} = x_t - \eta_tq_\gamma(x_t)$. Recall that by Theorem \ref{thm:sgd_objective},
\[
    \E[q_\gamma(x)] = \nabla \tilde{f}_\gamma(x).
\]

Throughout this section, we will assume Assumptions \ref{assm1} and \ref{assm2}, as well as the following ``bounded variance'' condition.

\begin{assumption}\label{assm3}
For all $x$ and $i$, $\E_{z \sim \mD_i}\norm{\nabla_x f(x ; z) - \nabla f_i(x)}^2 \leq G^2$.
\end{assumption}
We can then translate this into a bound on the variance of $q(x, \gamma, \Theta_{1:K})$.

\begin{lemma}\label{lem:var_bound}
    Suppose Assumption \ref{assm3} holds. Then for all $\gamma \geq 0$,
    \[
    \E\norm{q_\gamma(x) - \nabla \tilde{f}_\gamma(x)}^2 \leq \dfrac{KG^2}{MB}.
    \]
\end{lemma}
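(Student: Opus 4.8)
The plan is to bound the total variance $\E\norm{q_\gamma(x) - \nabla \tilde{f}_\gamma(x)}^2$ directly, where the randomness is over both the sampling of the $M$ clients $i \sim \mP$ and the mini-batch samples each client draws. The key observation is that $q_\gamma(x) = (1/M)\sum_{j=1}^M q^{i_j}$ where the $M$ client-update vectors $q^{i_j} = \InnerLoop(i_j, x, \gamma, \Theta_{1:K})$ are i.i.d.\ (each client $i_j \sim \mP$ independently, and each runs its own independent inner loop). By Theorem \ref{thm:sgd_objective}, $\E[q^{i_j}] = \E_{i\sim\mP}[Q_i(\gamma,\Theta_{1:K})\nabla f_i(x)] = \nabla\tilde{f}_\gamma(x)$, so $q_\gamma(x)$ is an unbiased estimator of $\nabla\tilde{f}_\gamma(x)$. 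Because the $M$ summands are i.i.d., averaging reduces the variance by a factor of $M$, so it suffices to bound the per-client variance $\E\norm{q^i - \nabla\tilde{f}_\gamma(x)}^2$ and divide by $M$.

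First I would reduce to a single client. Writing $\bar{q} := \nabla\tilde{f}_\gamma(x) = \E[q^i]$, independence of the $q^{i_j}$ gives
\[
\E\norm{q_\gamma(x) - \bar{q}}^2 = \frac{1}{M}\,\E\norm{q^{i} - \bar{q}}^2,
\]
where $i \sim \mP$. Next I would expand the per-client variance using the law of total variance (or simply $\E\norm{q^i - \bar{q}}^2 = \E\norm{q^i - \E[q^i \mid i]}^2 + \E\norm{\E[q^i\mid i] - \bar{q}}^2$), conditioning on the identity of the sampled client $i$. The first term is the expected within-client mini-batch variance, and the second is the between-client dispersion. The crux is to bound the within-client term; the between-client dispersion is nonnegative, so I would need to argue it is absorbed, which suggests the cleaner route below.

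The main work is controlling the mini-batch variance of a single client's output $q^i = \sum_{k=1}^{K} g_k$ (recall $\Theta_{1:K}$ has all weights $1$). Each $g_k = (1/B)\sum_{z\in S_k}\nabla f(x_k; z)$ is a $B$-sample average, and crucially the mini-batches $S_1, \dots, S_K$ are drawn independently across the $K$ inner steps. The subtlety is that $x_k$ itself depends on $S_1, \dots, S_{k-1}$, so the $g_k$ are not independent; however, for quadratic losses the gradient is affine in $x_k$, and I expect the martingale-difference structure (conditioning step $k$ on the filtration generated by $S_1,\dots,S_{k-1}$) to make the cross terms vanish in expectation, so that the variance of $\sum_k g_k$ decomposes into a sum of per-step variances. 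Each per-step conditional variance is at most $G^2/B$ by Assumption \ref{assm3} (the mini-batch of size $B$ divides the single-sample variance bound $G^2$), giving a total of $KG^2/B$ for one client. Dividing by $M$ yields the claimed $KG^2/(MB)$. The hardest step is making the decomposition of $\Var(\sum_k g_k)$ rigorous despite the statistical dependence of $x_k$ on earlier batches and, relatedly, reconciling this per-client bound with the between-client dispersion term so that the final constant is exactly $KG^2/(MB)$ with no extra additive heterogeneity term; I suspect the intended reading is that the bound is on the mini-batch (conditional-on-clients) contribution, or that the stated Assumption \ref{assm3} combined with the affine gradient structure is meant to control the whole expression, and I would verify which by checking whether $\nabla\tilde f_\gamma$ is being compared against the per-client conditional means.
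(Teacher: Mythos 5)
Your proposal is, in outline, exactly the paper's proof: the i.i.d. reduction to a single client (giving the factor $1/M$), writing the per-client output as $q^i = \sum_{k=1}^K g_k$, a martingale-type orthogonality argument across the $K$ inner steps, and the per-step bound $G^2/B$ from Assumption \ref{assm3}. Moreover, the two points you flag as unresolved are both genuine, and neither is resolved in the paper's proof either; in that sense your attempt is as complete as the published argument.

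On the between-client dispersion: the paper's proof asserts that, by independence, it ``suffices'' to prove $\E\norm{q^i - \E[q^i]}^2 \le KG^2/B$ for each \emph{fixed} $i$, where $\E[q^i] = Q_i(\gamma,\Theta_{1:K})\nabla f_i(x)$ is the conditional mean. But the i.i.d. reduction yields $\E\norm{q_\gamma(x)-\nabla\tilde{f}_\gamma(x)}^2 = \frac{1}{M}\E\norm{q^i - \nabla\tilde{f}_\gamma(x)}^2$ with $i\sim\mP$, and the law of total variance leaves precisely the dispersion term $\frac{1}{M}\E_{i\sim\mP}\norm{Q_iA_i(x-c_i)-\nabla\tilde{f}_\gamma(x)}^2$ that you identified; the paper silently drops it. Assumption \ref{assm3} cannot control this term: if every $\mD_i$ is a point mass then $G=0$, yet with heterogeneous $(A_i, c_i)$ the vector $q_\gamma(x)$ still varies with the sampled clients, so the claimed bound (which would be $0$) fails. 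Your ``intended reading'' is therefore the correct repair: the lemma is true for the conditional-on-clients (mini-batch) component of the variance, which is all the paper actually bounds; as literally stated it needs either homogeneity of the conditional means or an additive heterogeneity term.

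On making the inner decomposition rigorous: your worry is also warranted. With the centering $s_k = g_k - \E[g_k]$ used by the paper (and in your sketch), the partial sums are \emph{not} a martingale --- conditional on $S_1,\dots,S_{k-1}$ one has $\E[s_k] = A_i(x_k - \E[x_k]) \neq 0$ --- so the cross terms do not vanish, and $\E\norm{s_k}^2 \le G^2/B$ also fails in general. The fix uses the affine structure you pointed to: set $\xi_k := g_k - A_i(x_k - c_i)$, which is a true martingale difference satisfying $\E[\norm{\xi_k}^2 \mid S_1,\dots,S_{k-1}]\le G^2/B$, and unroll the recursion $x_{k+1}-c_i = (I-\gamma A_i)(x_k-c_i)-\gamma\xi_k$ to obtain $q^i - \E[q^i] = \sum_{k=1}^K (I-\gamma A_i)^{K-k}\xi_k$. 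Orthogonality of the $\xi_k$ then gives $\E\norm{q^i-\E[q^i]}^2 \le \sum_{k=1}^K\norm{I-\gamma A_i}^{2(K-k)}G^2/B$, which is at most $KG^2/B$ when $\norm{I-\gamma A_i}\le 1$ (e.g.\ $\gamma \le 2/L$), but not for literally all $\gamma\ge 0$ as the lemma claims. So the steps you could not close are gaps in the paper's argument, not defects unique to your proof.
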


We will also use the following bound on the strong convexity parameter of our surrogate loss functions.

\begin{lemma}\label{lem:ft_params}
    Suppose that Assumption \ref{assm1} holds. Then for all $t$, $\tilde{f}_\gamma$ is $\mu_\gamma$-strongly convex where
    \[
    \mu_\gamma := \phi_{K, \mu}(\gamma) = \dfrac{1-(1-\gamma \mu)^K}{\gamma}.
    \]
    and $L_\gamma$-smooth where
    \[
    L_\gamma := \phi_{K, L}(\gamma) = \dfrac{1-(1-\gamma L)^K}{\gamma}.
    \]    
\end{lemma}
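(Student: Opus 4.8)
The plan is to identify the Hessian of $\tilde{f}_\gamma$ and then translate eigenvalue bounds on that Hessian into the claimed strong convexity and smoothness constants. Recall that $\tilde{f}_\gamma(x) = \tilde{f}(x, \gamma, \Theta_{1:K}) = \E_{i \sim \mP}[\tilde{f}_i(x, \gamma, \Theta_{1:K})]$, and that by the computation in the proof of Theorem \ref{thm:sgd_objective} we have $\nabla \tilde{f}_i(x, \gamma, \Theta_{1:K}) = \qigtok A_i (x - c_i)$. Since this gradient is affine in $x$, each client surrogate $\tilde{f}_i$ is a quadratic with constant Hessian $\qigtok A_i$, and by linearity of expectation the Hessian of $\tilde{f}_\gamma$ is
\[
\nabla^2 \tilde{f}_\gamma(x) = \E_{i \sim \mP}\sbr*{\qigtok A_i}.
\]
Thus it suffices to sandwich this matrix between $\phi_{K, \mu}(\gamma) I$ and $\phi_{K, L}(\gamma) I$ in the Loewner order, since for a quadratic those eigenvalue bounds are precisely the smoothness and strong convexity parameters.

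First I would invoke Lemma \ref{lem:QA_structure_fedavg}, which under Assumption \ref{assm1} and $\gamma < L^{-1}$ gives, for every client $i$,
\[
\phi_{K, \mu}(\gamma) I \preceq \qigtok A_i \preceq \phi_{K, L}(\gamma) I.
\]
Here I am using that $\qigtok A_i$ is symmetric positive definite (also from that lemma) so these scalar eigenvalue bounds are equivalent to Loewner bounds. The key remaining step is to pass these per-client bounds through the expectation over $i \sim \mP$: because the Loewner order is preserved under convex combinations and, more generally, under taking expectations of matrix-valued random variables, the uniform bounds above integrate to
\[
\phi_{K, \mu}(\gamma) I \preceq \E_{i \sim \mP}\sbr*{\qigtok A_i} \preceq \phi_{K, L}(\gamma) I.
\]

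Finally I would record that the two $\phi$ expressions equal the claimed closed forms: by \eqref{eq:phi}, $\phi_{K, \mu}(\gamma) = \gamma^{-1}(1 - (1-\gamma\mu)^K)$ and $\phi_{K, L}(\gamma) = \gamma^{-1}(1 - (1-\gamma L)^K)$, matching the stated $\mu_\gamma$ and $L_\gamma$. Combining with the Hessian sandwich yields that $\tilde{f}_\gamma$ is $\mu_\gamma$-strongly convex and $L_\gamma$-smooth, as desired.

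I expect essentially no serious obstacle here; the result is a direct corollary of the spectral analysis already in hand. The only point requiring any care is the monotonicity of the Loewner order under expectation over $\mP$ (which need not be finitely supported), so I would state that step explicitly rather than treating it as a finite convex combination, and I would flag that the implicit hypothesis $\gamma < L^{-1}$ from this section is what licenses the application of Lemma \ref{lem:QA_structure_fedavg}.
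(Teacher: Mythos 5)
Your proof is correct and takes essentially the same approach as the paper, whose entire argument for this lemma is the one-line remark that it "follows directly from Lemma \ref{lem:QA_structure_fedavg} and the fact that the strong convexity and smoothness parameters are governed by the maximum and minimum eigenvalues of $Q_i(\gamma, \Theta_{1:K})A_i$." Your write-up merely makes explicit the two details the paper leaves implicit — that the Hessian of $\tilde{f}_\gamma$ is $\E_{i\sim\mP}[Q_i(\gamma,\Theta_{1:K})A_i]$ and that the per-client Loewner bounds survive the expectation over $\mP$ — along with the standing hypothesis $\gamma < L^{-1}$ from this section, all of which is appropriate.
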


Note that these results follow directly from Lemma \ref{lem:QA_structure_fedavg} and the fact that the strong convexity and smoothness parameters are governed by the maximum and minimum eigenvalues of $\qigtok A_i$.

Using techniques similar to those in \citep{rakhlin2011making}, we arrive at the following descent lemma.

\begin{lemma}\label{lem:descent_step}
Suppose that Assumptions \ref{assm1} and \ref{assm3} hold, and that we have step sizes $\{\eta_t\}_{t \geq 1}$ satisfying
\begin{equation}\label{eq:eta_t_bound}
\eta_t \leq \dfrac{\mu_\gamma}{L_\gamma^2}.
\end{equation}
Then
\[
    \E[\norm{x_{t+1} - x_\gamma^*}^2] \leq (1-\eta_t\mu_\gamma)\E[\norm{x_t - x_\gamma^*}^2] + \eta_t^2\dfrac{KG^2}{MB}.
\]
\end{lemma}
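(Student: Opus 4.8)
The plan is to carry out a standard one-step stochastic gradient descent analysis on the strongly convex, smooth surrogate loss $\tilde{f}_\gamma$, using the three ingredients already assembled: unbiasedness of the inner-loop output (Theorem \ref{thm:sgd_objective}), the variance bound (Lemma \ref{lem:var_bound}), and the strong convexity and smoothness constants $\mu_\gamma, L_\gamma$ (Lemma \ref{lem:ft_params}). Writing $x^* = x_\gamma^*$ for brevity, I would begin from the update rule $x_{t+1} = x_t - \eta_t q_\gamma(x_t)$ and expand
\[
\norm{x_{t+1} - x^*}^2 = \norm{x_t - x^*}^2 - 2\eta_t\inp{q_\gamma(x_t)}{x_t - x^*} + \eta_t^2\norm{q_\gamma(x_t)}^2.
\]
Taking the expectation conditioned on $x_t$ and using $\E[q_\gamma(x_t) \mid x_t] = \nabla\tilde{f}_\gamma(x_t)$ replaces the cross term by $-2\eta_t\inp{\nabla\tilde{f}_\gamma(x_t)}{x_t - x^*}$.

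Next I would handle the two remaining terms separately. For the second-moment term, the bias-variance decomposition $\E[\norm{q_\gamma(x_t)}^2 \mid x_t] = \norm{\nabla\tilde{f}_\gamma(x_t)}^2 + \E[\norm{q_\gamma(x_t) - \nabla\tilde{f}_\gamma(x_t)}^2 \mid x_t]$ together with Lemma \ref{lem:var_bound} bounds it by $\norm{\nabla\tilde{f}_\gamma(x_t)}^2 + KG^2/(MB)$, and $L_\gamma$-smoothness (with $\nabla\tilde{f}_\gamma(x^*) = 0$) gives $\norm{\nabla\tilde{f}_\gamma(x_t)}^2 \le L_\gamma^2\norm{x_t - x^*}^2$. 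For the cross term, $\mu_\gamma$-strong convexity applied between $x_t$ and the minimizer $x^*$ yields $\inp{\nabla\tilde{f}_\gamma(x_t)}{x_t - x^*} \ge \mu_\gamma\norm{x_t - x^*}^2$. Combining these,
\[
\E[\norm{x_{t+1} - x^*}^2 \mid x_t] \le \rbr*{1 - 2\eta_t\mu_\gamma + \eta_t^2 L_\gamma^2}\norm{x_t - x^*}^2 + \eta_t^2\dfrac{KG^2}{MB}.
\]

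Finally I would invoke the step-size hypothesis \eqref{eq:eta_t_bound}. Since $\eta_t \le \mu_\gamma/L_\gamma^2$ implies $\eta_t^2 L_\gamma^2 \le \eta_t\mu_\gamma$, the contraction factor simplifies as $1 - 2\eta_t\mu_\gamma + \eta_t^2 L_\gamma^2 \le 1 - \eta_t\mu_\gamma$; taking the total expectation over the history then gives the claimed recursion. I do not anticipate a serious obstacle here, as every estimate is routine; the only points requiring care are the conditioning, namely justifying that $q_\gamma(x_t)$ is an unbiased, bounded-variance estimate of $\nabla\tilde{f}_\gamma(x_t)$ given $x_t$ (which is exactly what Theorem \ref{thm:sgd_objective} and Lemma \ref{lem:var_bound} provide), and verifying that hypothesis \eqref{eq:eta_t_bound} is precisely the inequality needed to absorb the $\eta_t^2 L_\gamma^2$ term into the linear contraction rather than a weaker $\eta_t \le 1/L_\gamma$ condition.
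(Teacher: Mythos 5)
Your proposal is correct and follows essentially the same route as the paper's proof: expand the squared distance, use unbiasedness of $q_\gamma(x_t)$ for the cross term, apply strong convexity to get the $\mu_\gamma$ contraction, split the second moment via Lemma \ref{lem:var_bound} and $L_\gamma$-smoothness, and absorb $\eta_t^2 L_\gamma^2$ using the step-size condition. The only cosmetic difference is that the paper derives the coercivity bound $\inp{\nabla\tilde{f}_\gamma(x_t)}{x_t - x_\gamma^*} \geq \mu_\gamma\norm{x_t - x_\gamma^*}^2$ in two steps (via function-value inequalities) whereas you state it directly; the arguments are otherwise identical.
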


\paragraph{On upper bounds for server learning rates}
Note that in Lemma \ref{lem:descent_step}, we assume a slightly stronger condition than is often assumed in optimization literature, namely that $\eta \leq 1/L_\gamma\kappa_\gamma$ where $\kappa_\gamma$ is the condition number. Typically, works on optimization would only require $\eta$ to be at most the inverse of the Lipschitz constant.
While it is an open question as to whether this condition is necessary, there are a few relevant factors. First, we note that we can relax \eqref{eq:eta_t_bound} to $\eta_t \leq L_\gamma^{-1}$ if we strengthen Assumption \ref{assm3} to a bounded gradient assumption instead of a bounded variance assumption. Second, when $\gamma$ is moderately large with respect to $\mu$, Lemma \ref{cor:cond_fedavg} implies that $\mu_\gamma \approx L_\gamma$, so \eqref{eq:eta_t_bound} gives a similar condition to assuming $\eta_t \leq L_\gamma^{-1}$. Finally, we note that a similar bound on the learning rate was used by \citet{reisizadeh2019fedpaq} in conjunction with a bounded variance assumption. While we conjecture that this condition can be relaxed, we leave this for future work.

Applying Lemma \ref{lem:descent_step} repeatedly, we derive at the following.

\begin{theorem}[Fixed client LR, fixed server LR]\label{thm:fix_client_fix_server}
Suppose that $\eta \leq \mu_\gamma/L_\gamma^2$. Then the outputs of $\localupdate(\eta, \gamma, \Theta_{1:K})$ satisfy
\[
\E[\norm{x_{t+1}-x_\gamma^*}^2] \leq (1-\eta\mu_\gamma)^t\norm{x_1-x_\gamma^*}^2 + \dfrac{\eta KG^2}{\mu_\gamma MB}.
\]
\end{theorem}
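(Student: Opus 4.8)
The plan is to treat Lemma~\ref{lem:descent_step} as a one-step contraction and simply unroll it into a geometric recurrence. First I would fix $\eta_t = \eta$ for all $t$ and define the scalar sequence $a_t := \E[\norm{x_t - x_\gamma^*}^2]$. The hypothesis $\eta \leq \mu_\gamma/L_\gamma^2$ is precisely condition \eqref{eq:eta_t_bound}, so Lemma~\ref{lem:descent_step} applies at every step and yields the linear recurrence
\[
a_{t+1} \leq (1-\eta\mu_\gamma)\,a_t + \eta^2\dfrac{KG^2}{MB}.
\]
Writing $\rho := 1-\eta\mu_\gamma$ and $C := \eta^2 KG^2/(MB)$, this is just $a_{t+1} \leq \rho\,a_t + C$.

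Next I would check that the contraction factor $\rho$ lies in $[0,1)$, since this is what makes the accumulated noise term bounded. By Lemma~\ref{lem:ft_params} we have $\mu_\gamma \leq L_\gamma$, so the step-size restriction gives $\eta\mu_\gamma \leq \mu_\gamma^2/L_\gamma^2 \leq 1$, hence $\rho \geq 0$; and $\eta,\mu_\gamma > 0$ gives $\rho < 1$. With $\rho \in [0,1)$ in hand, iterating the recurrence from step $1$ up to $t$ produces
\[
a_{t+1} \leq \rho^t a_1 + C\sum_{j=0}^{t-1}\rho^j.
\]
Bounding the finite geometric sum by its infinite counterpart, $\sum_{j=0}^{t-1}\rho^j = (1-\rho^t)/(1-\rho) \leq 1/(1-\rho)$, and using $1-\rho = \eta\mu_\gamma$, the second term is at most
\[
\eta^2\dfrac{KG^2}{MB}\cdot\dfrac{1}{\eta\mu_\gamma} = \dfrac{\eta KG^2}{\mu_\gamma MB}.
\]
Since $x_1 = x$ is deterministic, $a_1 = \norm{x_1 - x_\gamma^*}^2$, and substituting gives exactly the claimed bound with $\rho^t = (1-\eta\mu_\gamma)^t$.

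The key point I want to emphasize is that there is no deep obstacle here: the result is a textbook SGD-on-strongly-convex-objective argument, and all the analytic work (the descent inequality, the variance bound, and the strong convexity/smoothness of the surrogate) has already been done in Lemmas~\ref{lem:var_bound}, \ref{lem:ft_params}, and \ref{lem:descent_step}. The only thing that genuinely requires care is verifying $\rho \geq 0$, so that the geometric series can be controlled uniformly in $t$; this is where the slightly stronger step-size condition $\eta \leq \mu_\gamma/L_\gamma^2$ (rather than the more usual $\eta \leq L_\gamma^{-1}$) is used, consistent with the discussion following Lemma~\ref{lem:descent_step}.
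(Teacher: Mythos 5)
Your proposal is correct and follows essentially the same route as the paper, which proves this theorem simply by applying Lemma~\ref{lem:descent_step} repeatedly with constant $\eta$ and bounding the resulting geometric sum by $1/(\eta\mu_\gamma)$. Your additional verification that the contraction factor $1-\eta\mu_\gamma$ lies in $[0,1)$ (via $\mu_\gamma \leq L_\gamma$) is a worthwhile detail the paper leaves implicit.
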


Many prior results for fixed client learning rate provide a bound of the same general form as Theorem~\ref{thm:fix_client_fix_server} (ie. a sum of a decaying term and a constant error term), but bound the distance from $x^*$, rather than from $x_\gamma^*$. This makes the error term's significance more opaque. While non-federated optimization results often have constant error terms due to stochasticity, in federated convergence results  (eg. \cite[Theorem 5]{khaled2020tighter}), the constant term often does not disappear in deterministic setting ($G=0$). To the reader, it may not be immediately clear why this is the case. This could be due to actual convergence properties, or due to the analysis not being tight. By contrast, our result shows that this error term is an inherent property of the algorithm, as in general, $x_\gamma^* \neq x^*$.

As is the case in general stochastic optimization, a constant learning rate is only sufficient to arrive in a neighborhood of the critical point of the underlying loss. However, this critical point is not the true risk minimizer $x^*$. As in Section \ref{sec:example}, we see that in heterogeneous settings, client learning rate decay is necessary for convergence to the true risk minimizer.

To make the suboptimality gap tend towards zero, we must decay the server learning rate $\eta$ over time, as in the following theorem.

\begin{theorem}[Fixed client LR, decaying server LR]\label{thm:fix_client_decay_server}
Suppose that for all $t \geq 1$,
\[
\eta_t = \dfrac{a_\gamma}{b_\gamma+t}~~\text{for}~~a_\gamma = \frac{2}{\mu_\gamma}~~\text{and}~~b_\gamma \geq \frac{2L_\gamma^2}{\mu_\gamma^2}.
\]
Then the outputs of $\localupdate(\{\eta_t\}, \gamma, \Theta_{1:K})$ satisfy
\[
\E[\norm{x_{t}-x_\gamma^*}^2] \leq \dfrac{\nu_\gamma}{b_\gamma + t}
\]
where
\[
\nu_\gamma := \max\left\{\dfrac{4KG^2}{\mu_\gamma^2MB}, (b_\gamma + 1)\norm{x_1-x_\gamma^*}^2 \right\}.
\]
\end{theorem}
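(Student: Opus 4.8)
The plan is to prove the bound by induction on $t$, using the one-step descent inequality from Lemma \ref{lem:descent_step} as the engine. Writing $\delta_t := \E[\norm{x_t-x_\gamma^*}^2]$ and $c := KG^2/(MB)$, the first thing I would check is that the prescribed step sizes actually satisfy the hypothesis $\eta_t \leq \mu_\gamma/L_\gamma^2$ of Lemma \ref{lem:descent_step}. Since $\eta_t = a_\gamma/(b_\gamma+t) = 2/(\mu_\gamma(b_\gamma+t))$ and $b_\gamma \geq 2L_\gamma^2/\mu_\gamma^2$, for every $t \geq 1$ we have $b_\gamma + t > 2L_\gamma^2/\mu_\gamma^2$, whence $\eta_t < (2/\mu_\gamma)(\mu_\gamma^2/(2L_\gamma^2)) = \mu_\gamma/L_\gamma^2$. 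Thus the descent inequality is applicable at every iteration.

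For the induction, the base case $t=1$ is immediate: $\delta_1 = \norm{x_1-x_\gamma^*}^2$ is deterministic, and the second argument of the max defining $\nu_\gamma$ gives $\nu_\gamma/(b_\gamma+1) \geq \norm{x_1-x_\gamma^*}^2 = \delta_1$. For the inductive step, I would abbreviate $s := b_\gamma+t$, so that $\eta_t\mu_\gamma = 2/s$ and $\eta_t^2 c = 4c/(\mu_\gamma^2 s^2)$. Feeding the inductive hypothesis $\delta_t \leq \nu_\gamma/s$ into Lemma \ref{lem:descent_step} and using the first argument of the max, namely $4c/\mu_\gamma^2 \leq \nu_\gamma$, gives
\[
\delta_{t+1} \leq \left(1-\frac{2}{s}\right)\frac{\nu_\gamma}{s} + \frac{\nu_\gamma}{s^2} = \frac{\nu_\gamma(s-1)}{s^2}.
\]
It then remains only to verify the elementary inequality $(s-1)/s^2 \leq 1/(s+1)$, which after cross-multiplying is equivalent to $s^2-1 \leq s^2$; this closes the step, yielding $\delta_{t+1}\leq \nu_\gamma/(s+1) = \nu_\gamma/(b_\gamma+t+1)$.

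Conceptually there is no deep obstacle here: this is the standard $O(1/t)$ argument for strongly convex stochastic optimization in the style of \citet{rakhlin2011making} and \citet{bottou2018optimization}. The only delicate point, and where I would focus attention, is confirming that the constants are tuned exactly so the recursion is self-sustaining. The factor $a_\gamma = 2/\mu_\gamma$ is what turns $\eta_t\mu_\gamma$ into $2/s$ (the $2$ is essential for the telescoping inequality to hold), the lower bound on $b_\gamma$ is precisely what keeps $\eta_t$ admissible for the descent lemma, and the definition of $\nu_\gamma$ as a maximum is engineered so that both the base case and the noise term in the inductive step are absorbed. Getting any of these constants slightly wrong breaks the induction, so the real work is careful bookkeeping rather than new insight.
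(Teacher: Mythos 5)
Your proof is correct and follows essentially the same route as the paper's: an induction driven by Lemma \ref{lem:descent_step}, with the step-size admissibility check $\eta_t \leq \mu_\gamma/L_\gamma^2$, the base case absorbed by the second argument of the max, the noise term absorbed by the first, and the same closing inequality $(\hat{t}-1)(\hat{t}+1) \leq \hat{t}^2$. The only difference is cosmetic bookkeeping (your $(s-1)/s^2 \leq 1/(s+1)$ versus the paper's split of $(\hat{t}-2)/\hat{t}^2$ into $(\hat{t}-1)/\hat{t}^2 - 1/\hat{t}^2$), so nothing further is needed.
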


We can now derive a convergence rate towards the true risk minimizer, $x^*$.

\begin{corollary}\label{cor:fix_client_decay_server}
Fix $\epsilon$ satisfying $\epsilon \leq 1-e^{-K}$. Suppose
\[
\gamma \leq \dfrac{\ln(1/(1-\epsilon))}{KL}
\]
and that $\eta_t$ is as in Theorem \ref{thm:fix_client_decay_server}. Then the outputs of $\localupdate(\{\eta_t\}, \gamma, \Theta_{1:K})$ satisfy
\[
\E[\norm{x_t-x^*}^2] \leq \dfrac{2\nu_\gamma}{b_\gamma + t} + 2\sigma_c^2\left(1 + \dfrac{\sigma_a}{\mu}\right)^2\dfrac{L^2}{\mu^2}\dfrac{\epsilon^2}{(1-\epsilon)^2}
\]
where
\[
\nu_\gamma \leq \max\left\{\dfrac{4G^2}{(1-\epsilon)^2\mu^2 KMB} , (b_\gamma+1)\norm{x_1-x_\gamma^*}^2\right\}.
\]
\end{corollary}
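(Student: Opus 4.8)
The plan is to reduce the bound on the distance to the true minimizer $x^*$ to two quantities we have already controlled: the distance of the iterates to the surrogate minimizer $x_\gamma^*$ (from Theorem \ref{thm:fix_client_decay_server}) and the gap between the surrogate and true minimizers (from Corollary \ref{cor:optima_distance_fedavg}). The bridge is the elementary inequality $\norm{a+b}^2 \leq 2\norm{a}^2 + 2\norm{b}^2$, applied with $a = x_t - x_\gamma^*$ and $b = x_\gamma^* - x^*$. Concretely, I would first write
\[
\E\norm{x_t - x^*}^2 \leq 2\,\E\norm{x_t - x_\gamma^*}^2 + 2\norm{x_\gamma^* - x^*}^2,
\]
where the second term is deterministic since both minimizers are fixed points.

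For the first term I would invoke Theorem \ref{thm:fix_client_decay_server} directly, which yields $2\E\norm{x_t - x_\gamma^*}^2 \leq 2\nu_\gamma/(b_\gamma + t)$. For the second term I would square the bound of Corollary \ref{cor:optima_distance_fedavg}; this is legitimate precisely because the hypothesis $\gamma \leq \ln(1/(1-\epsilon))/(KL)$ assumed here is exactly the hypothesis of that corollary. Squaring gives $2\norm{x_\gamma^* - x^*}^2 \leq 2\sigma_c^2(1 + \sigma_A/\mu)^2 (L^2/\mu^2)\,\epsilon^2/(1-\epsilon)^2$, and summing the two contributions produces the stated bound on $\E\norm{x_t - x^*}^2$.

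The remaining step is to verify the claimed upper bound on $\nu_\gamma$. Recall that Theorem \ref{thm:fix_client_decay_server} defines $\nu_\gamma = \max\{4KG^2/(\mu_\gamma^2 MB),\ (b_\gamma+1)\norm{x_1 - x_\gamma^*}^2\}$, so I would bound the max termwise, leaving the second argument untouched and controlling only the first. Under the standing hypothesis on $\gamma$, the corollary immediately following Lemma \ref{lem:gamma_cond} gives $\mu_\gamma = \phi_{K,\mu}(\gamma) \geq (1-\epsilon)K\mu$. Substituting this lower bound into the denominator yields
\[
\frac{4KG^2}{\mu_\gamma^2 MB} \leq \frac{4KG^2}{(1-\epsilon)^2 K^2 \mu^2 MB} = \frac{4G^2}{(1-\epsilon)^2 \mu^2 K MB},
\]
which is exactly the first argument of the claimed max for $\nu_\gamma$.

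I expect no substantive obstacle here, as the corollary is a bookkeeping assembly of three previously established results. The only points requiring care are confirming that the single condition on $\gamma$ simultaneously activates both Corollary \ref{cor:optima_distance_fedavg} and the lower bound $\mu_\gamma \geq (1-\epsilon)K\mu$, and that the $\max$ defining $\nu_\gamma$ is bounded argument-by-argument rather than collapsed prematurely (note that the $\sigma_a$ appearing in the statement should read $\sigma_A$, matching Corollary \ref{cor:optima_distance_fedavg}).
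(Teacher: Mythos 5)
Your proposal is correct and follows essentially the same route as the paper's own proof: the same decomposition $\E\norm{x_t-x^*}^2 \leq 2\E\norm{x_t-x_\gamma^*}^2 + 2\norm{x_\gamma^*-x^*}^2$, the same invocation of Theorem \ref{thm:fix_client_decay_server} and Corollary \ref{cor:optima_distance_fedavg}, and the same use of Lemma \ref{lem:gamma_cond} to get $\mu_\gamma \geq (1-\epsilon)K\mu$ and thereby bound the first argument of the max defining $\nu_\gamma$. Your observation that $\sigma_a$ in the statement should read $\sigma_A$ is also correct.
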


While notationally complex, this result has a few important facets. Define
\[
\beta_1 = \dfrac{G^2}{(1-\epsilon)^2\mu^2 KMB},~~\beta_2 = (b_\gamma+1)\norm{x_1-x_\gamma^*},~~\beta_3 = \sigma_c^2\left(1 + \dfrac{\sigma_a}{\mu}\right)^2\dfrac{L^2}{\mu^2}\dfrac{\epsilon^2}{(1-\epsilon)^2}.
\]
Thus, this result shows that,
\[
\E[\norm{x_t-x^*}^2] \leq \mathcal{O}\rbr*{ \frac{\max\{\beta_1, \beta_2\}}{t} + \beta_3}.
\]

As $\epsilon \to 0$, $\gamma \to 0$, \localupdate becomes roughly equivalent to mini-batch SGD with batches of size $KB$ on $M$ clients. In fact, $\beta_1$ is the convergence term we would derive from performing mini-batch SGD with batches of size $KB$ on $M$ clients per round. In particular, we get a variance reduction of $(KMB)^{-1}$. This variance reduction is similar in nature to work by \cite{woodworth2020local} in the homogeneous setting ($\sigma_A = \sigma_c = 0$), which shows an analogous improvement in convergence rates. Note that $\beta_1$ gets larger as $\epsilon$ gets larger. Thus, the variance reduction is reduced in heterogeneous settings.

The term $\beta_2$ measures the initial suboptimality to the surrogate loss function. It is here that having a small $\gamma$ may incur a price. Note that $b_\gamma$ is twice the condition number of $\tilde{f}_\gamma$. As shown in Corollary \ref{cor:cond_fedavg} and in the discussion thereafter, larger $\gamma$ lead to smaller $b_\gamma$ due to the implicit regularization of local computation. Conversely, at $\gamma = 0$, $b_\gamma = 2 L^2/\mu^2$, which may dominate $\beta_1$. Thus, smaller $\gamma$ lead to a larger effective initial suboptimality gap.

The last term $\beta_3$ measures the discrepancy between the surrogate loss function and the true loss function. If $\sigma_c = 0$, for instance, when the data is completely homogeneous, this discrepancy is 0. We then recover similar results to that of \cite{woodworth2020local}, which shows a $1/K$ improvement in convergence for Local SGD with $K$ steps, but in the heterogeneous setting. If $\sigma_c \neq 0$, we can still remove the effect of heterogeneity by setting $\epsilon = 0$ (which means setting $\gamma = 0$). In this case, FedAvg reduces to mini-batch SGD with batches of size $KB$ on each client.

\subsection{Decaying client and server learning rates}

In this section, we will show that by decaying the client and server learning rates appropriately, we can derive a bound on $\norm{x_t-x^*}^2$ that does not require setting the learning rates in terms of the desired optimality gap $\epsilon$.

Throughout this section, we again focus on the FedAvg/Reptile setting. At every iteration $t$, we will use client learning rate $\gamma_t$ and server learning rate $\eta_t$ that decay at a $1/t$ rate. Let $\kappa = L/\mu$ denote the condition number of $f(x)$. We have the following theorem.

\begin{theorem}[Decaying client LR, decaying server LR]\label{thm:convergence_overall}
Let
\[
\eta_t = \dfrac{a_t}{b+t}~~\text{for}~~a_t = \frac{3}{\mu_t}~~\text{and}~~b = 3\kappa^2
\]
and let
\[
\gamma_t = \min\left\{\dfrac{1}{L(b + t)}, \dfrac{\ln(2)}{K\mu}\right\}.
\]
Suppose we run \localupdate with $\gamma_t, \eta_t$ as above and $\Theta = \Theta_{1:K}$ to produce iterates $x_t$. Then for all $t \geq 1$,
\[
\E[\norm{x_t-x^*}^2] \leq \dfrac{2\nu + 16\sigma_c^2\kappa^2}{b + t}.
\]
where
\[
\nu := \max\left\{\dfrac{18G^2}{\mu^2 KMB}, (b+1)\norm{x_1-x_1^*}^2\right\}.
\]
\end{theorem}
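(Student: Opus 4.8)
The plan is to lean on Theorem~\ref{thm:sgd_objective}: at round $t$, one step of \localupdate with client rate $\gamma_t$ is \emph{exactly} one stochastic-gradient step on the surrogate $\tilde f_{\gamma_t} := \tilde f(\cdot, \gamma_t, \Theta_{1:K})$, whose unique minimizer I denote $x_t^* := x^*(\gamma_t, \Theta_{1:K})$. The essential difficulty is that $x_t^*$ \emph{moves} from round to round because $\gamma_t$ decays, so I will track the distance to the \emph{current} surrogate minimizer, $d_t := \E\norm{x_t - x_t^*}^2$, bound it by a $1/(b+t)$ induction, and only at the very end convert to $\E\norm{x_t - x^*}^2$ through $\norm{x_t - x^*}^2 \le 2\norm{x_t - x_t^*}^2 + 2\norm{x_t^* - x^*}^2$. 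This separates the stochastic/initialization error (which will feed $\nu$) from the surrogate-versus-true bias (which will feed the additive $\sigma_c^2\kappa^2$ term).

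First I would verify that the per-round descent estimate applies. The choice $b = 3\kappa^2$, together with $\kappa_t := L_t/\mu_t \le \kappa$ (the condition-number bound of Corollary~\ref{cor:cond_fedavg}, writing $\mu_t,L_t$ for the parameters of $\tilde f_{\gamma_t}$), gives $b + t \ge 3\kappa_t^2$, which is precisely the condition $\eta_t \le \mu_t/L_t^2$ required by Lemma~\ref{lem:descent_step}; note $\eta_t\mu_t = 3/(b+t)$. Applying Lemma~\ref{lem:descent_step} to $\tilde f_{\gamma_t}$ with the variance bound of Lemma~\ref{lem:var_bound} yields
\[
\E\norm{x_{t+1} - x_t^*}^2 \le \Big(1 - \tfrac{3}{b+t}\Big)\E\norm{x_t - x_t^*}^2 + \eta_t^2\tfrac{KG^2}{MB}.
\]
Here the cap $\gamma_t \le \ln(2)/(K\mu)$ lets me invoke Lemma~\ref{lem:gamma_cond} with $\epsilon = 1/2$ to get $\mu_t = \phi_{K,\mu}(\gamma_t) \ge K\mu/2$, so the variance term is of order $G^2/\big(K\mu^2 MB\,(b+t)^2\big)$, which is what produces the $18G^2/(\mu^2 KMB)$ appearing inside $\nu$.

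Next I would account for the motion of the target. Writing $\norm{x_{t+1} - x_{t+1}^*}^2 \le (1+\alpha_t)\norm{x_{t+1} - x_t^*}^2 + (1 + \alpha_t^{-1})\norm{x_t^* - x_{t+1}^*}^2$ with $\alpha_t$ of order $1/(b+t)$, the first factor retains a contraction $1 - c'/(b+t)$ with $c' \ge 2$, while the drift is controlled by Corollary~\ref{cor:distance_optima_gamma}: since $\abs{\gamma_t - \gamma_{t+1}} \le 1/\big(L(b+t)^2\big)$ in every regime of the $\min$, one gets $\norm{x_t^* - x_{t+1}^*} \le 2\sigma_c (L^3/\mu^2)\abs{\gamma_t-\gamma_{t+1}} \le 2\sigma_c\kappa^2/(b+t)^2$, which enters the recursion only at order $1/(b+t)^3$. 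A standard $\nu/(b+t)$ induction (as in \citet{rakhlin2011making}) then closes, giving $d_t \le \big(\nu + O(\sigma_c^2\kappa^2)\big)/(b+t)$. For the conversion I would apply Corollary~\ref{cor:distance_optima_gamma} once more with $\gamma_1 = 0$ (so that $x^*(0,\Theta_{1:K}) = x^*$ by Lemma~\ref{lem:Q_special_case}) and $\gamma_2 = \gamma_t \le 1/(L(b+t))$, obtaining the clean, $\sigma_A$- and $K$-free bound
\[
\norm{x_t^* - x^*} \le 2\sigma_c\tfrac{L^3}{\mu^2}\,\gamma_t \le \tfrac{2\sigma_c\kappa^2}{b+t},
\]
so $\norm{x_t^* - x^*}^2 \le 4\sigma_c^2\kappa^4/(b+t)^2 \le \tfrac{4}{3}\sigma_c^2\kappa^2/(b+t)$ using $b+t \ge 3\kappa^2$. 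Plugging $d_t$ and this gap into $\norm{x_t - x^*}^2 \le 2d_t + 2\norm{x_t^* - x^*}^2$ produces the advertised $\E\norm{x_t-x^*}^2 \le (2\nu + 16\sigma_c^2\kappa^2)/(b+t)$.

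I expect the main obstacle to be precisely this moving-target bookkeeping: one must pick $\alpha_t$ so that the extra $1+\alpha_t$ factor erodes the $3/(b+t)$ contraction by less than the slack needed to keep the induction's contraction constant at least $2$, while simultaneously ensuring the drift and variance remainders stay at order $1/(b+t)^2$ so that they are absorbed into $\nu$ and the additive $\sigma_c^2\kappa^2$ term with the stated constants. It is also worth emphasizing that the two branches of the $\min$ defining $\gamma_t$ play distinct and both-necessary roles throughout: the $1/(L(b+t))$ branch forces $\norm{x_t^*-x^*}\to 0$ at rate $1/(b+t)$ via Corollary~\ref{cor:distance_optima_gamma}, while the $\ln(2)/(K\mu)$ branch keeps $\mu_t \ge K\mu/2$ so the stochastic term does not blow up; verifying that both bounds are in force at every $t$ (including the transition between branches) is the delicate part of the argument.
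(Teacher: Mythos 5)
Your proposal is correct and follows essentially the same route as the paper's proof: the same decomposition into distance-to-current-surrogate-minimizer plus surrogate-to-true bias, the same use of Lemma~\ref{lem:descent_step} and Lemma~\ref{lem:var_bound} with the step-size condition verified via $L_t/\mu_t \le \kappa$, the same invocation of Lemma~\ref{lem:gamma_cond} with $\epsilon = 1/2$ to keep $\mu_t \ge K\mu/2$, the same $(1+\omega_t)$-weighted moving-target recursion controlled by Corollary~\ref{cor:distance_optima_gamma}, and the same final conversion with $\gamma_1 = 0$. The only cosmetic difference is citing Corollary~\ref{cor:cond_fedavg} rather than Lemma~\ref{lem:cond_number} for the bound $\kappa_t \le \kappa$, which is immaterial.
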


No attempt was made to optimize constants. Rather, the point was to show that we can derive bounds on the distance to the true risk minimizer $x^*$ that hold for all $t$ and that help illustrate the implicit trade-offs in \localupdate. In particular, we do not require that $\gamma$, $\eta$, or $t$ be defined in terms of the desired suboptimality gap $\epsilon$. Rather, all that was required was learning rate decay on the order of $O(1/t)$ at both the server and the client.

Recall that by Theorem \ref{thm:fix_client_fix_server}, some form of learning rate decay is necessary for convergence to the true risk minimizer. We therefore see that in some settings, learning rate decay is sufficient for \localupdate to converge to the true risk minimizer. While this was previously shown in the finite-sum setting for strongly convex functions by \citet{li2019convergence}, this result required bounded gradients, and did not illustrate the trade-offs in using FedAvg/Reptile over mini-batch SGD. 

\paragraph{Comparison to mini-batch SGD} Similar to Theorem 1 of \citep{woodworth2020local}, we see that performing \localupdate incurs a kind of variance reduction of $1/MB$ when compared to performing vanilla SGD over a shuffled version of the entire dataset. We also see that using \localupdate with $\gamma > 0$ does incur a potential benefit: Rather than having the convergence depend on the suboptimality gap $\norm{x_1-x^*}^2$ (as is the case for mini-batch SGD), it depends on $\norm{x_1-x_1^*}^2$. This may be much smaller depending on the initialization. For example, recall that by Lemma \ref{lem:asymptotic}, if $\gamma_1 > 0$, then as $K \to \infty$, $x_1^*$ tends to the ``one-shot average'' of the client minimizers, which typically requires many fewer communication rounds to estimate than the true risk minimizer. However, this reduction only benefits the convergence up to a point, in which case it becomes beneficial to use smaller $\gamma$.

As suggested by lower bounds on FedAvg in \citep{karimireddy2019scaffold} on FedAvg, our bounds do not show that FedAvg always converges faster than mini-batch SGD, and in fact, doing so may not be possible without further assumptions (such as a bound on the heterogeneity among clients) or more sophisticated optimization techniques (such as the use of control variates in SCAFFOLD \citep{karimireddy2019scaffold}).

While our theoretical results are only valid for the case of quadratic loss functions, we conjecture that even in much broader settings, the choice of learning rate $\gamma$ still dictates a trade-off between accuracy and initial convergence. we will show in the next section that this holds empirically, even in non-convex settings.

\section{Experimental results}\label{sec:exp_valid}

In this section, we analyze \localupdate empirically, in order to understand how the choice of client and server learning rates impact convergence in more realistic machine learning tasks. In particular, we focus on (not necessarily convex) tasks and datasets that reflect federated learning in practice. We will show that both the choice of client learning rate and tuning of the corresponding server learning rate can be vital to attain the best performance of \localupdate, especially in limited communication settings.

\paragraph{Datasets and models} We use four different datasets: the federated extended MNIST dataset (FEMNIST)~\citep{caldas2018leaf}, the federated version of CIFAR-100 created by \citet{reddi2020adaptive}, the Shakespeare dataset~\citep{caldas2018leaf} and the Stack Overflow dataset~\citep{stackoverflow}. The first two are image datasets, the second two are text datasets. All datasets are publicly available. We specifically use the versions available in TensorFlow Federated~\citep{ingerman2019tff}. All four datasets contain training and test clients.  For the purposes of our experiments, we only use the training clients, as our work only concerns the loss of clients in the training population. Notably, our work does not broach the subject of generalization, which we leave to future work. The number of clients and examples in each dataset is presented in Table \ref{table:datasets}.

\begin{table}[ht]
    \caption{Dataset statistics.}
    \label{table:datasets}
    \begin{center}
    \begin{sc}
    \begin{tabular}[t]{@{}lrr@{}}    
        \toprule
        Dataset & \# of Clients & Total \# of Examples \\
        \midrule
        CIFAR-100 & 500 & 50,000 \\
        FEMNIST & 3,400 & 671,585\\
        Shakespeare & 715 & 16,068\\
        Stack Overflow & 342,477 & 135,818,730\\
        \bottomrule
    \end{tabular}
    \end{sc}
    \end{center}
\end{table}

For FEMNIST, we train a moderately-sized CNN (the same as used in \citep{mcmahan17fedavg}) to perform character recognition. For CIFAR-100, we train a ResNet-18 (where we replace the batch norm layers with group norm, as suggested by \citep{hsieh2019non} and used by \citep{reddi2020adaptive}). For Shakespeare, we train an RNN with 2 LSTM layers to perform next-character-prediction. For Stack Overflow, we perform two distinct task: tag prediction (TP) and next-word-prediction (NWP). For Stack Overflow TP, we use a logistic regression classifier with one-versus-all classification. Note that this implies that Stack Overflow TP is a \emph{convex} task. For Stack Overflow NWP, we train an RNN with 1 LSTM layer to perform next-word-prediction. These five tasks were previously analyzed by \citet{reddi2020adaptive} for the purposes of comparing adaptive and non-adaptive federated optimization methods. For further details on datasets and models, see Appendix \ref{sec:models}.

\paragraph{Implementation and hyperparameters} We implement \localupdate in TensorFlow Federated~\citep{ingerman2019tff}. In all experiments, the set of clients $\mI$ is finite and we let $\mP$ be the uniform distribution over these clients. Each $\mD_i$ is the uniform distribution over some finite set of client examples. We analyze the performance of \localupdate with $\Theta = \Theta_{1:10}$ (ie. FedAvg/Reptile with $K = 10$ local steps) across the tasks discussed above. For FEMNIST, CIFAR-100, and Shakespeare, we sample $M = 10$ clients per round, while for Stack Overflow, we sample $M = 50$ clients (due to its much larger number of clients). In order to derive fair comparisons for different hyperparameter settings, we use a random seed to determine which clients are sampled at each round from $\mP$. All plots are made using the same seed. We sample clients without replacement within each round, but with replacement across rounds. We use a batch size of $B = 20$ for FEMNIST and CIFAR-100, $B = 4$ for Shakespeare, and $B = 16$ for Stack Overflow.

\subsection{Fixed server learning rates}

We first perform a comparable analysis to that in Section \ref{sec:example} above for FEMNIST, CIFAR-100, and Shakespeare. Namely, we fix the server learning rate $\eta = 0.01$, and see how the training loss varies as a function of the client learning rate $\gamma$. We vary $\gamma$ over
\[
\gamma \in \{0, 10^{-3}, \dots, 10^{-1}, 1, 10, 10^2\}
\]
and omit results that diverged due to the client learning rate being set too large. We plot the true loss function $f(x)$ (defined in \eqref{eq:objective}) in Figure \ref{fig:constant_server_lr}.

\begin{figure}[ht]
\centering
    \begin{subfigure}{.31\linewidth}
    \centering
    \includegraphics[width=\linewidth]{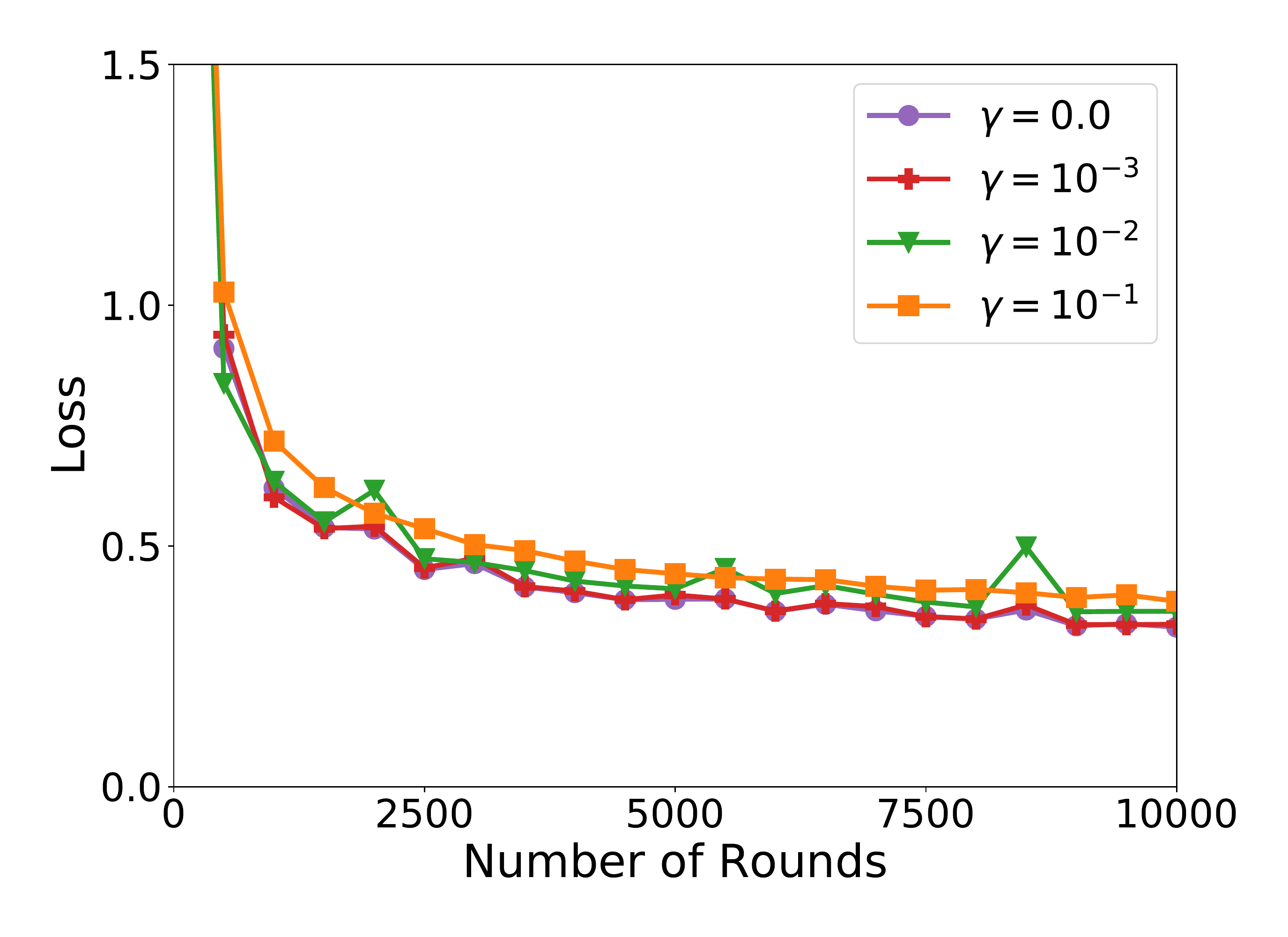}
    \caption{FEMNIST}
    \end{subfigure}
    \begin{subfigure}{.31\linewidth}
    \centering
    \includegraphics[width=\linewidth]{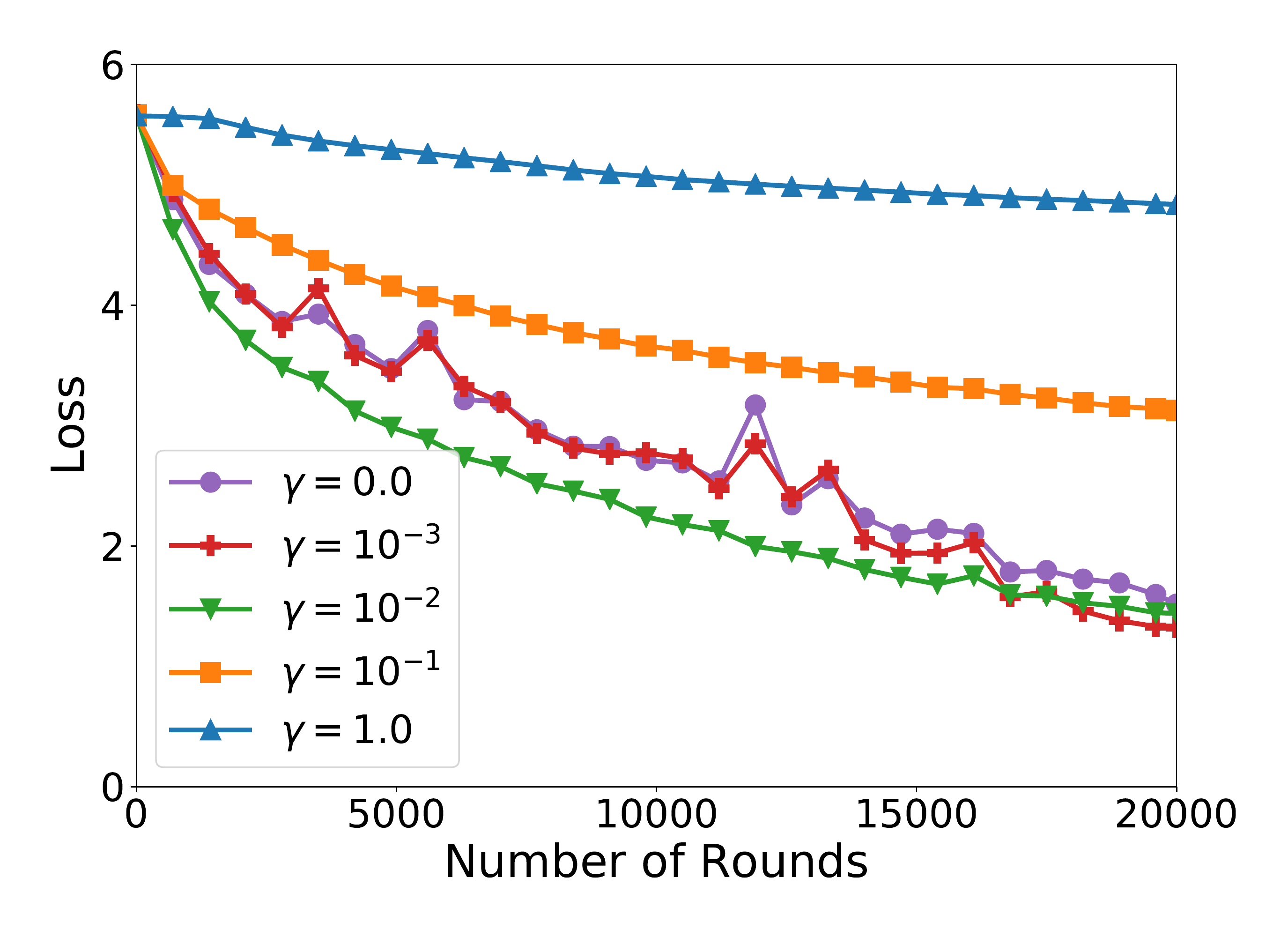}
    \caption{CIFAR-100}
    \end{subfigure}
    \begin{subfigure}{.31\linewidth}
    \centering
    \includegraphics[width=\linewidth]{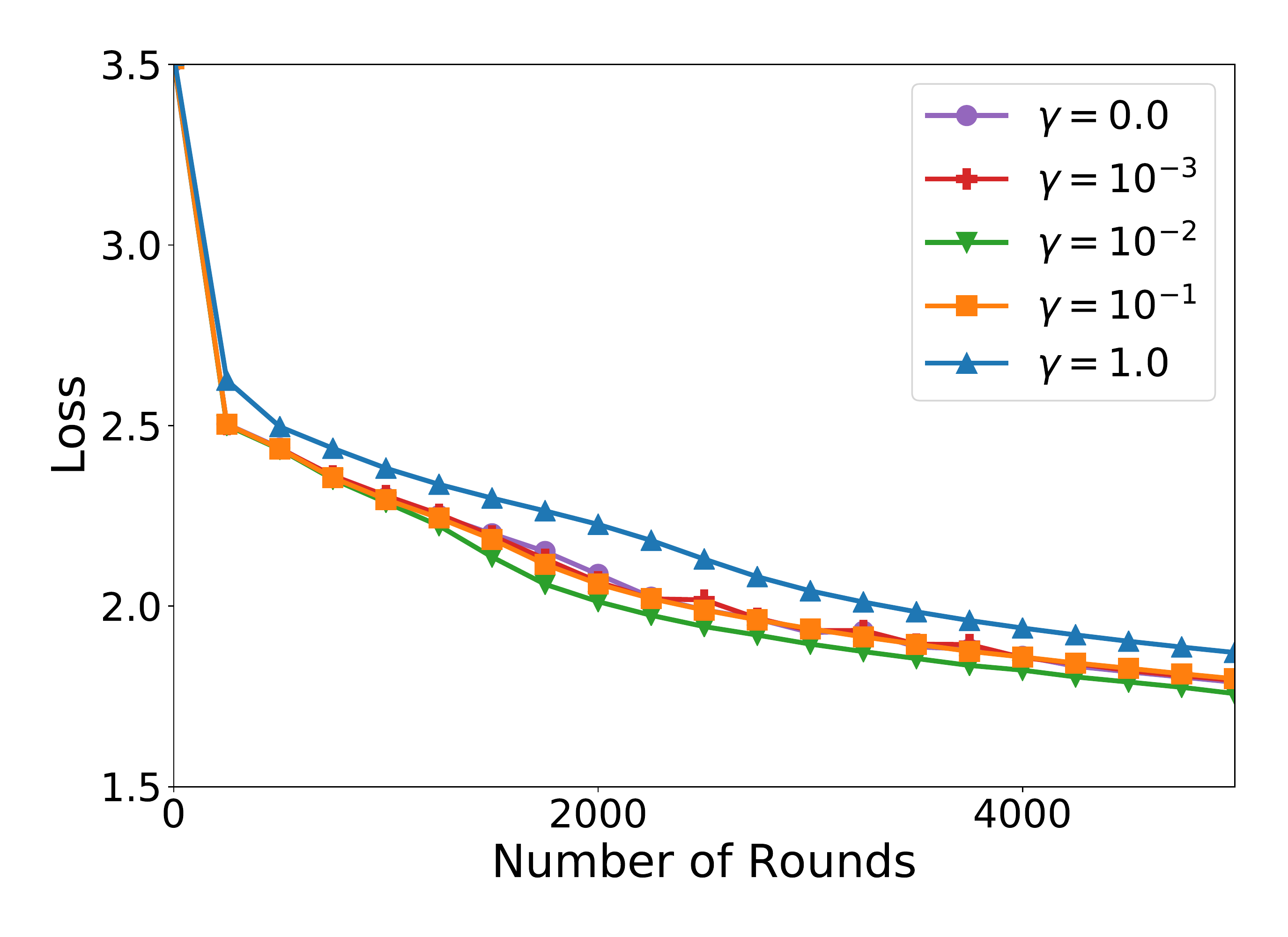}
    \caption{Shakespeare}
    \end{subfigure}
\caption{Cross-entropy training loss of \localupdate with fixed server learning rate $\eta = 0.01$ and $\Theta = \Theta_{1:10}$, and varying client learning rate $\gamma$.}
\label{fig:constant_server_lr}
\end{figure}

We see that on all three tasks, especially CIFAR-100, the choice of client learning rate can impact not just the speed of convergence, but what point the algorithm converges to.  In general, we see very similar behavior to that in Section \ref{sec:example}, despite the non-convex loss functions involved in all three tasks. For both FEMNIST and CIFAR-100, smaller client learning rates eventually reach lower training losses than higher learning rates. This is particularly evident in the results for CIFAR-100. While $\gamma = 10^{-2}$ initially performs better than all other methods, it is eventually surpassed by $\gamma = 10^{-3}$, and $\gamma = 0$ ends up obtaining a comparable accuracy. Conversely, setting $\gamma = 1.0$ results in a sub-optimal training loss.

We also see that while suboptimality gaps exist for FEMNIST and Shakespeare, they are much smaller than for CIFAR-100. Thus, our results also suggest that the theoretical suboptimality of larger $\gamma$ may not be as important a facet in practice. We see here that despite the asymptotic suboptimality of FedAvg with $\gamma > 0$, in realistic settings where the number of communication rounds is limited, there may be little to no disadvantage to using $\gamma > 0$. In fact, as we see for CIFAR-100, there may be advantages to using larger $\gamma$ in settings with relatively few communication rounds.

\subsection{Deriving fair comparisons between different client learning rates}

While the results in the previous section indicate the importance of $\gamma$, these comparisons are in some sense unfair. In particular, there is no reason that we need to fix the server learning rate $\eta$ across different values of $\gamma$. In fact, this ignores differences in the Lipschitz constant of the associated surrogate loss functions. As shown in Corollary \ref{cor:cond_fedavg}, choosing smaller $\gamma$ increases the Lipschitz constant $L_\gamma$ of the surrogate loss function, which in turn generally necessitates a smaller server learning rate. Thus, choosing larger $\gamma$ enable higher values of $\eta$. To justify this further, we plot the $\ell_2$ norm of the update applied to the server model at each round. That is, recall that in Algorithm \ref{alg:outerloop}, we update the model via
\[
x_{t+1} = x_t - \eta q_t.
\]
In Figure \ref{fig:constant_server_lr_update_norm}, we plot $\norm{q_t}$ for different choices of $\gamma$ in the CIFAR-100 task. Specifically, we plot the mean value of $\norm{q_t}$ for each consecutive 1000 rounds, as well as the standard deviation within those rounds.

\begin{figure}[ht]
\centering
    \begin{subfigure}{.6\linewidth}
    \centering
    \includegraphics[width=\linewidth]{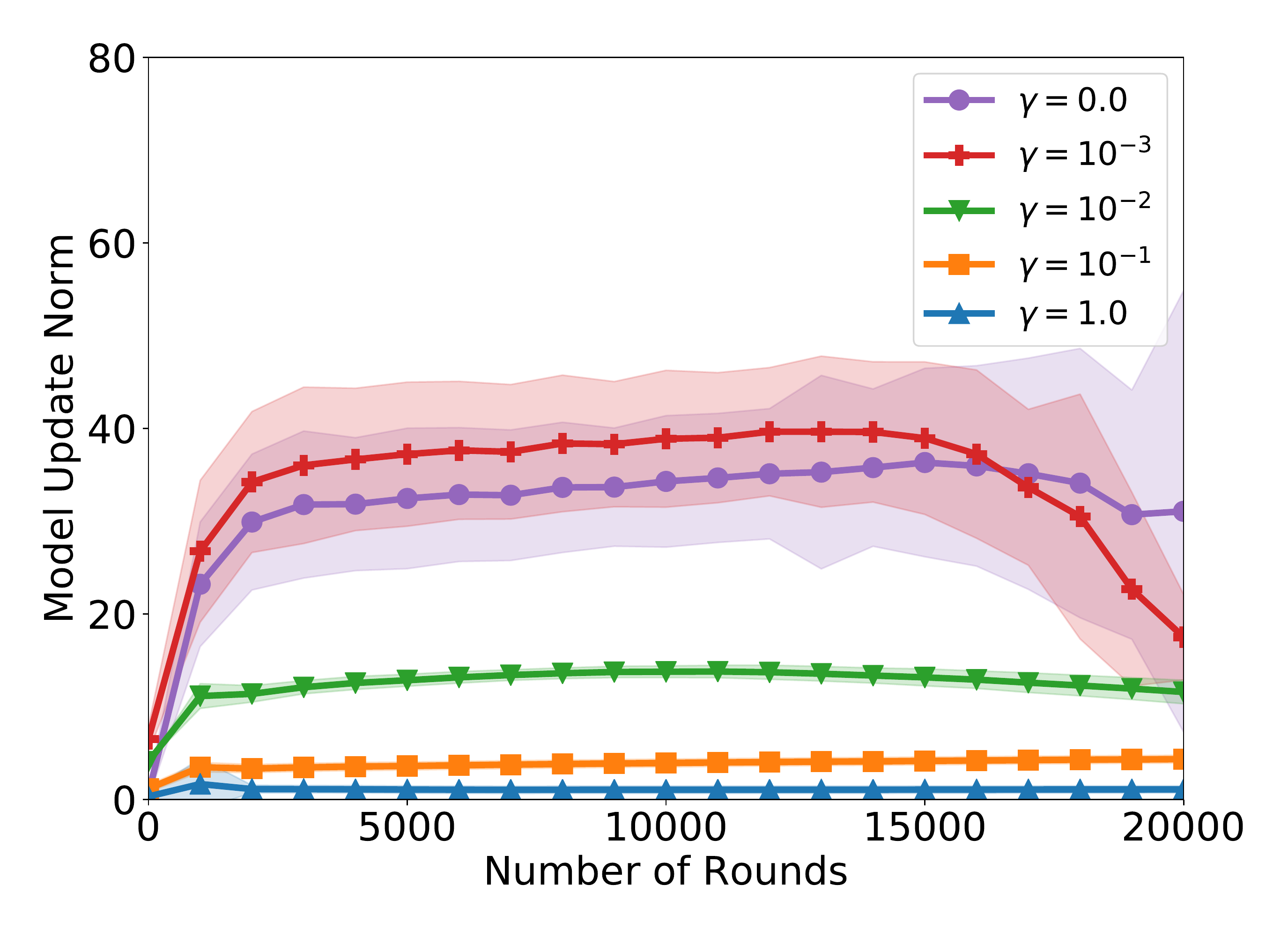}
    \end{subfigure}
\caption{The $\ell_2$ norm of the model update of \localupdate on CIFAR-100 at each round with fixed server learning rate $\eta = 0.01$ and varying client learning rate $\gamma$. We plot the mean $\ell_2$ norm across every consecutive 1000 rounds (bold lines), as well as the standard deviation across these rounds (pale regions).}
\label{fig:constant_server_lr_update_norm}
\end{figure}

As expected, the $\ell_2$ norm of the updates is highly dependent on the choice of $\gamma$. In particular, smaller learning rates lead to larger model updates. We also see that the absolute variance of the model update norm increases as well. As discussed after Corollary \ref{cor:fix_client_decay_server}, the variance reduction offered by setting $\gamma$ small SGD is potentially at odds with the increase in the condition number increase, a phenomenon reflected in Figure \ref{fig:constant_server_lr_update_norm}. Thus, we see that to derive a fair comparison between client learning rates, we must also allow the server learning rate to increase as the client learning rate increases.

One important caveat to this observation is that it is not enough to simply directly tie the client and server learning rate together. This is in fact what is done in the original incarnation of FedAvg in \citep{mcmahan17fedavg}. As discussed in Section \ref{sec:local_update_algs}, the original ``vanilla'' FedAvg algorithm corresponds to \localupdate with $\Theta = \Theta_{1:K}$ and $\gamma = \eta$. This does allow for the use of larger server learning rates with smaller client learning rates. However, this is not enough to necessarily derive optimal performance of \localupdate.

To demonstrate this, we plot the performance of \localupdate with $\eta = \gamma$ and $\Theta = \Theta_{1:10}$ for the CIFAR-100 task in Figure \ref{fig:cifar100_vanilla_fedavg}. We see results in stark contrast to Figure \ref{fig:constant_server_lr}(b). In particular, $\gamma = 1.0$ actually results in an increased training loss, and $\gamma = 0.1$ outperforms $\gamma = 0.01$ for most rounds. Moreover, by setting $\gamma = \eta$, we do not allow $\gamma = 0$, even though $\gamma > 0$ necessarily results in a surrogate loss that does not match the true training loss. While this version of FedAvg is convenient from an implementation and hyperparameter tuning point of view, it does not result in the best performance of \localupdate. As we show in the next section, we can greatly improve performance by tuning client and server learning rates separately.

\begin{figure}[ht]
\centering
    \begin{subfigure}{.6\linewidth}
    \centering
    \includegraphics[width=\linewidth]{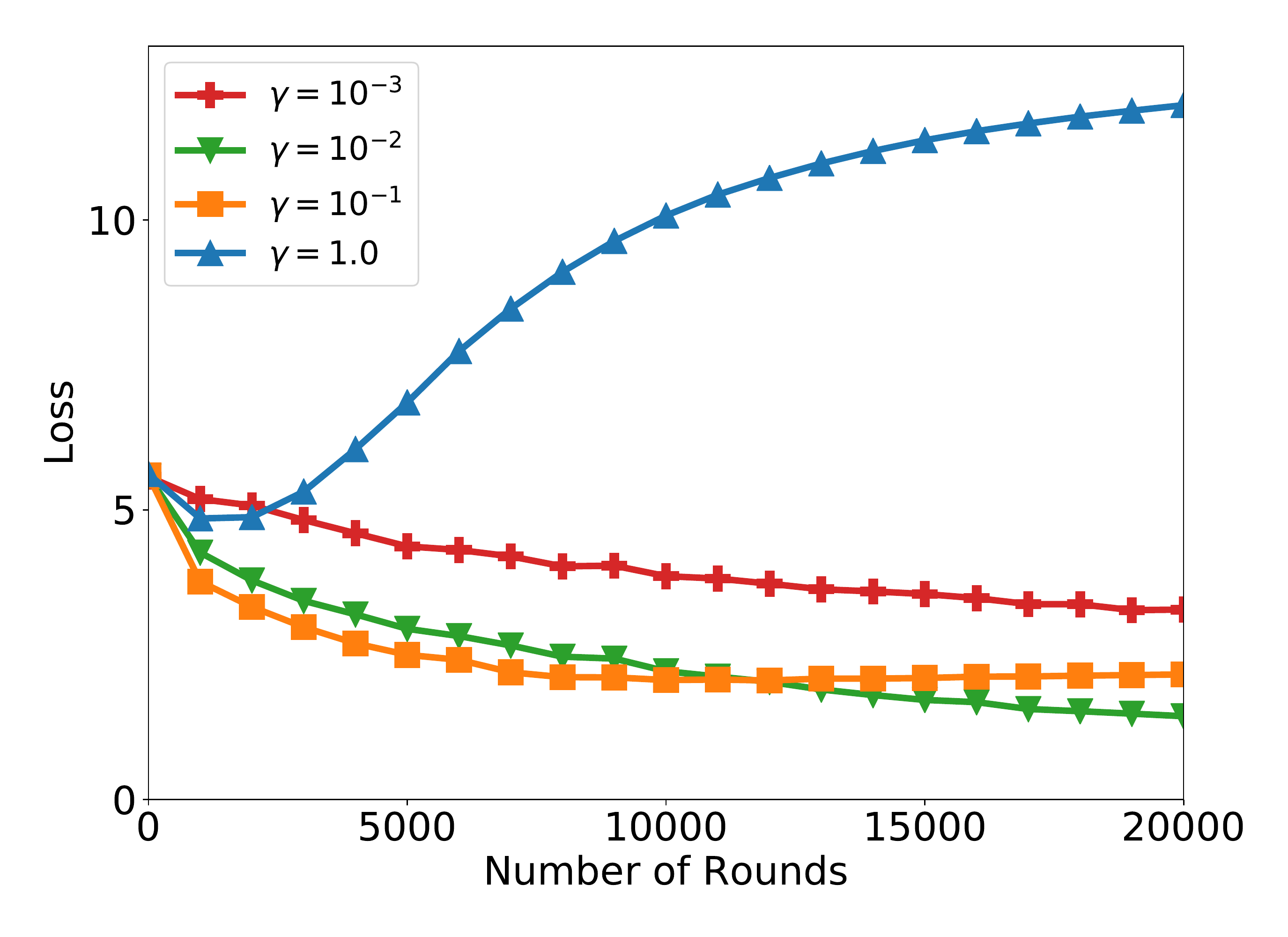}
    \end{subfigure}
\caption{Cross-entropy training loss of \localupdate with $\Theta = \Theta_{1:10}$ and $\gamma = \eta$ for varying values of $\gamma$ on the CIFAR-100 task.}
\label{fig:cifar100_vanilla_fedavg}
\end{figure}

\subsection{On the importance of tuned server learning rates}\label{sec:tuned_server}

Based on the discussion in the section above, to give the most fair comparisons between different values of $\gamma$ in \localupdate, we must also tune the server learning rate $\eta$. We perform the same experiments as in Figure \ref{fig:constant_server_lr}, but where we tune the server learning rate $\eta$ for each choice of $\gamma$. We vary $\eta$ over
\begin{equation}\label{eq:server_grid}
\eta \in \{10^{-3}, 10^{-2.5}, \dots, 10^{1}\}
\end{equation}
and select $\eta$ that results in the smallest average training loss over the last 100 rounds. We note that we use this averaging method as a single round of federated learning only samples a small number of clients. We plot the loss of \localupdate under these settings in Figures \ref{fig:tuned_server_lr}. For a list of all client learning rates and the corresponding best server learning rate for each task, see Appendix \ref{sec:tuned_lrs}.

Notably, we still see the same general trend discussed in our convergence rates in Section \ref{sec:convergence}. While large values of $\gamma$ may obtain a smaller training loss initially, eventually smaller values of $\gamma$ perform comparably, if not better. However, it is instructive to note that this may take many thousands of communication rounds. In particular, for all three tasks, we see that $\gamma = 0$ does not perform comparably larger $\gamma$ until near the end of our training procedure. We thus see the following:

\begin{center}
    \textbf{By performing appropriate server and client learning rate tuning, we can mitigate the suboptimality of FedAvg/Reptile for $\gamma > 0$, especially when the number of communication rounds is limited.}
\end{center}

In particular, our empirical results suggest that in realistic federated learning training tasks, the suboptimality of FedAvg/Reptile may only be an asymptotic concern. If we can only perform a limited number of training rounds, it often does benefit us to use larger values of $\gamma$. We note that this aligns with our discussion of how $\gamma > 0$ leads to condition number regularization (see Corollary \ref{cor:cond_fedavg}).

\begin{figure}[ht]
\centering
    \begin{subfigure}{.31\linewidth}
    \centering
    \includegraphics[width=\linewidth]{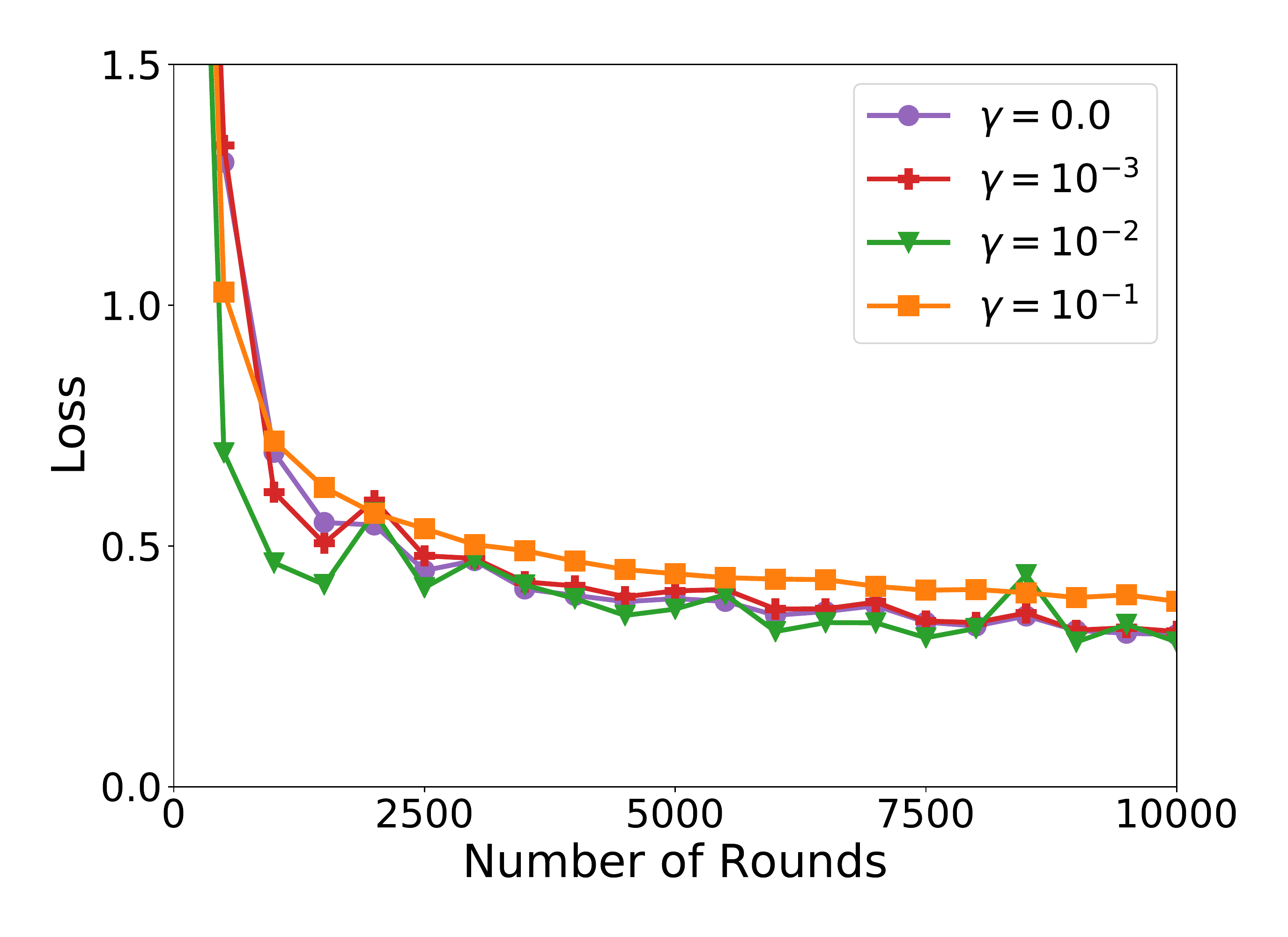}
    \caption{FEMNIST}
    \end{subfigure}
    \begin{subfigure}{.31\linewidth}
    \centering
    \includegraphics[width=\linewidth]{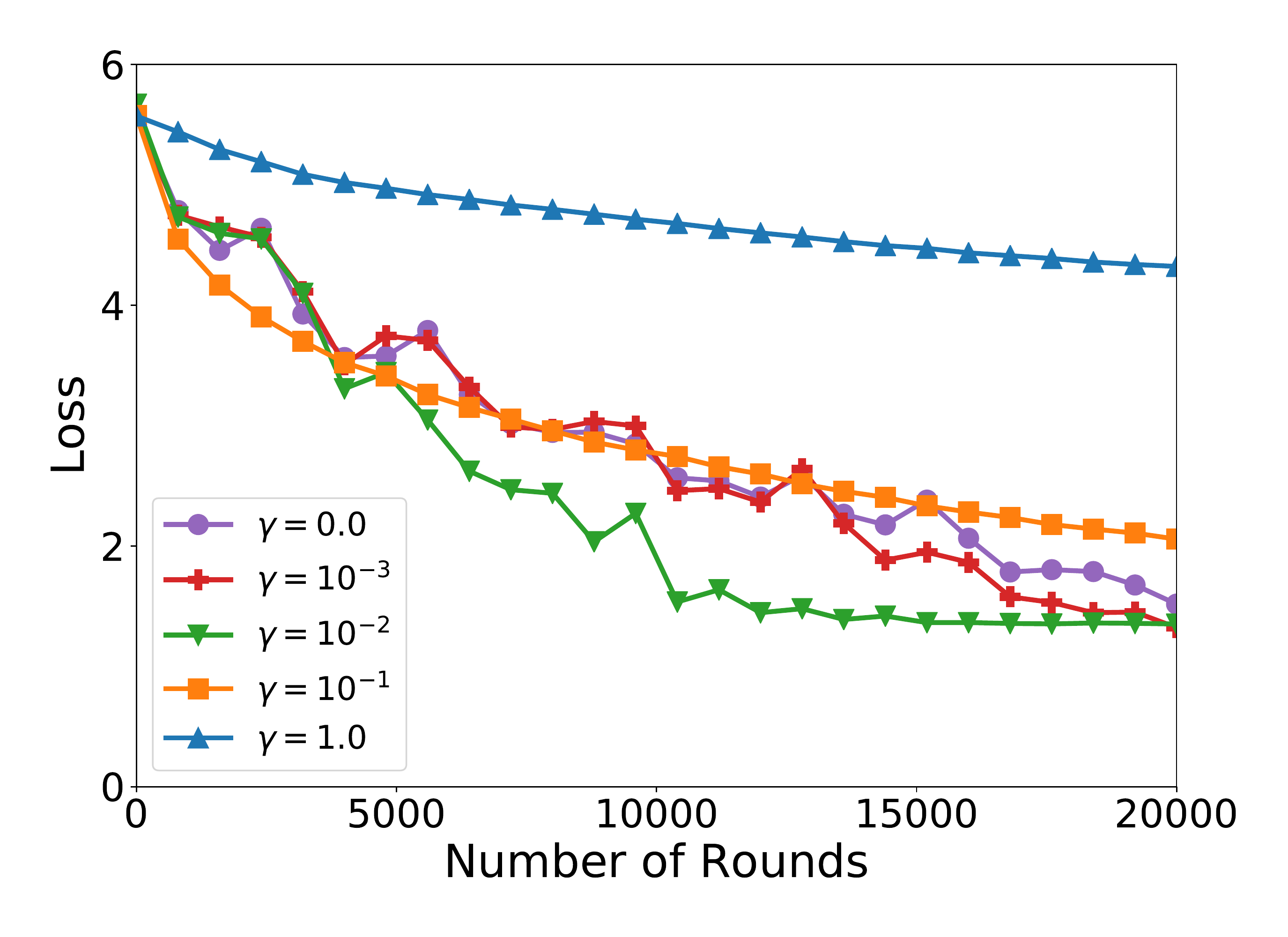}
    \caption{CIFAR-100}
    \end{subfigure}
    \begin{subfigure}{.31\linewidth}
    \centering
    \includegraphics[width=\linewidth]{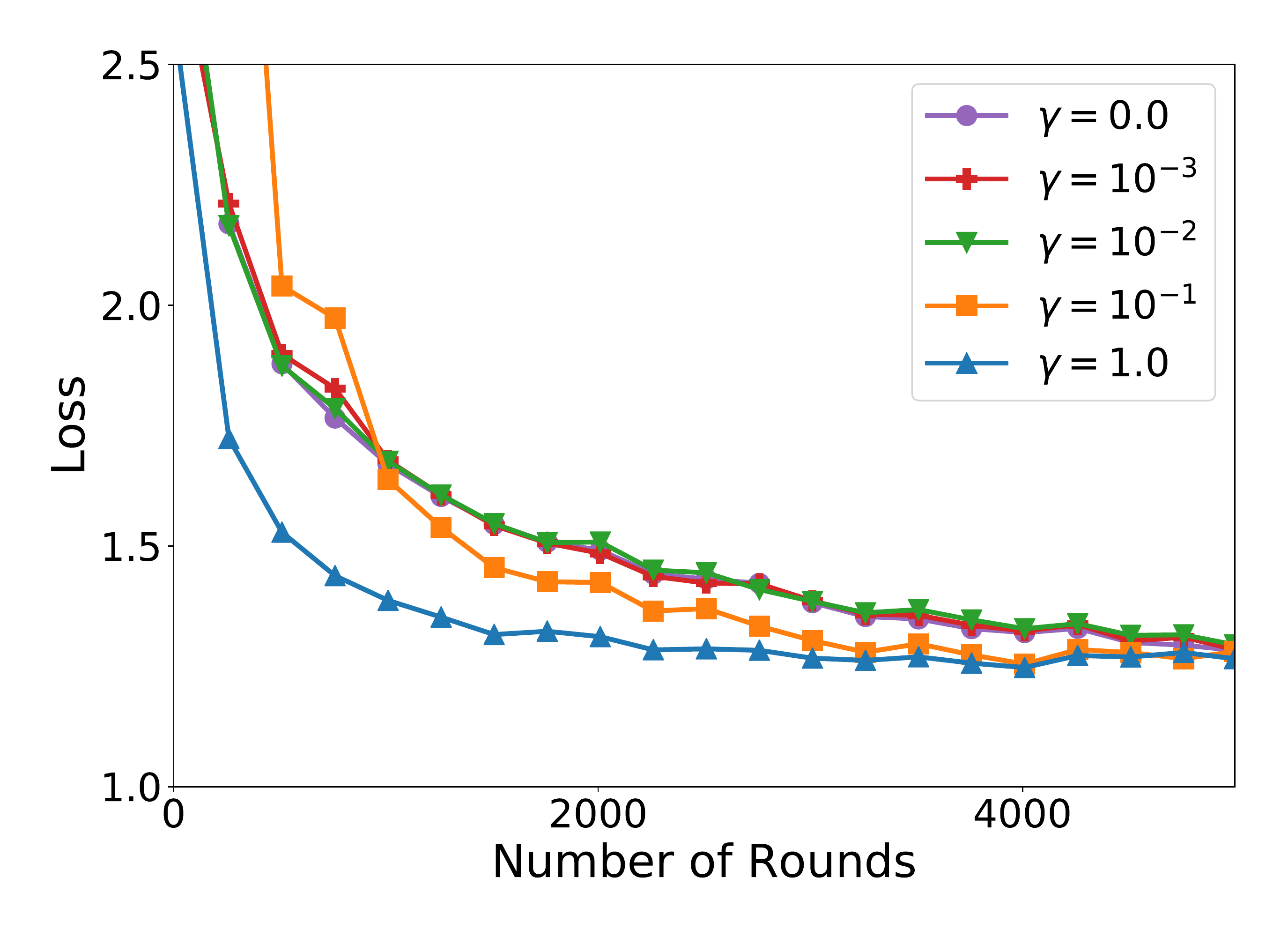}
    \caption{Shakespeare}
    \end{subfigure}
\caption{Cross-entropy training loss of \localupdate with $\Theta = \Theta_{1:10}$, varying client learning rate $\gamma$, and tuned server learning rate $\eta$.}
\label{fig:tuned_server_lr}
\end{figure}

\subsection{Adaptive optimization and \localupdate}\label{sec:adaptive_localupdate}

In order to understand the behavior of \localupdate more generally, we perform similar experiments on the Stack Overflow dataset. However, as shown by \citet{reddi2020adaptive}, performance of FedAvg on next-word-prediction tasks can be greatly improved by the use of adaptive optimization. In particular, \citet{reddi2020adaptive} found that the use of the \yogi optimizer~\citep{zaheer2018adaptive} improved performance in a wide variety of settings, including on the same tag-prediction and next-word-prediction tasks. Note that the \yogi optimizer is similar to the \adam optimizer~\citep{kingma2014adam}, except that it uses a kind of \emph{additive} adaptive update that can improve convergence by making more controlled progress. For more details, see \citep{zaheer2018adaptive}.

Thus, for these tasks, we use a modified version of \localupdate in which the server uses the client update $q_t$ as an estimate of the gradient of the loss function, and applies \yogi to this gradient. That is, we use Algorithm \ref{alg:outerloop}, but update the model via
\[
x_{t+1} = \yogi(x_t, q_t, \eta_t).
\]
We use the version of \yogi proposed by \citet{zaheer2018adaptive}, with first momentum term of $\beta_1 = 0.9$, second momentum parameter of $\beta_2 = 0.99$, an initial accumulator value of $0$, and an $\epsilon$ value of $10^{-5}$. We then perform analogous experiments to those above, where we first fix $\eta = 0.01$ and vary $\gamma$. However, due to the size of the Stack Oveflow dataset (see Table \ref{table:datasets}), we did not compute the total loss $f(x)$ over all clients. Instead, we plot the average loss of the clients that participated in a given round, \emph{before} local training occurs. This ``loss at current round'' can be viewed as a stochastic estimate of the true loss $f(x)$. This loss is plotted in Figure \ref{fig:stack_overflow_constant}. We also perform experiments where we vary $\gamma$ and tune $\eta$. We select the value of $\eta$ with the smallest average training loss over the last 100 communication rounds. The result is given in Figure \ref{fig:stack_overflow_tuned}. For a list of all client learning rates and the corresponding best server learning rate for each task, see Appendix \ref{sec:tuned_lrs}.

\begin{figure}[ht]
\centering
    \begin{subfigure}{.45\linewidth}
    \centering
    \includegraphics[width=\linewidth]{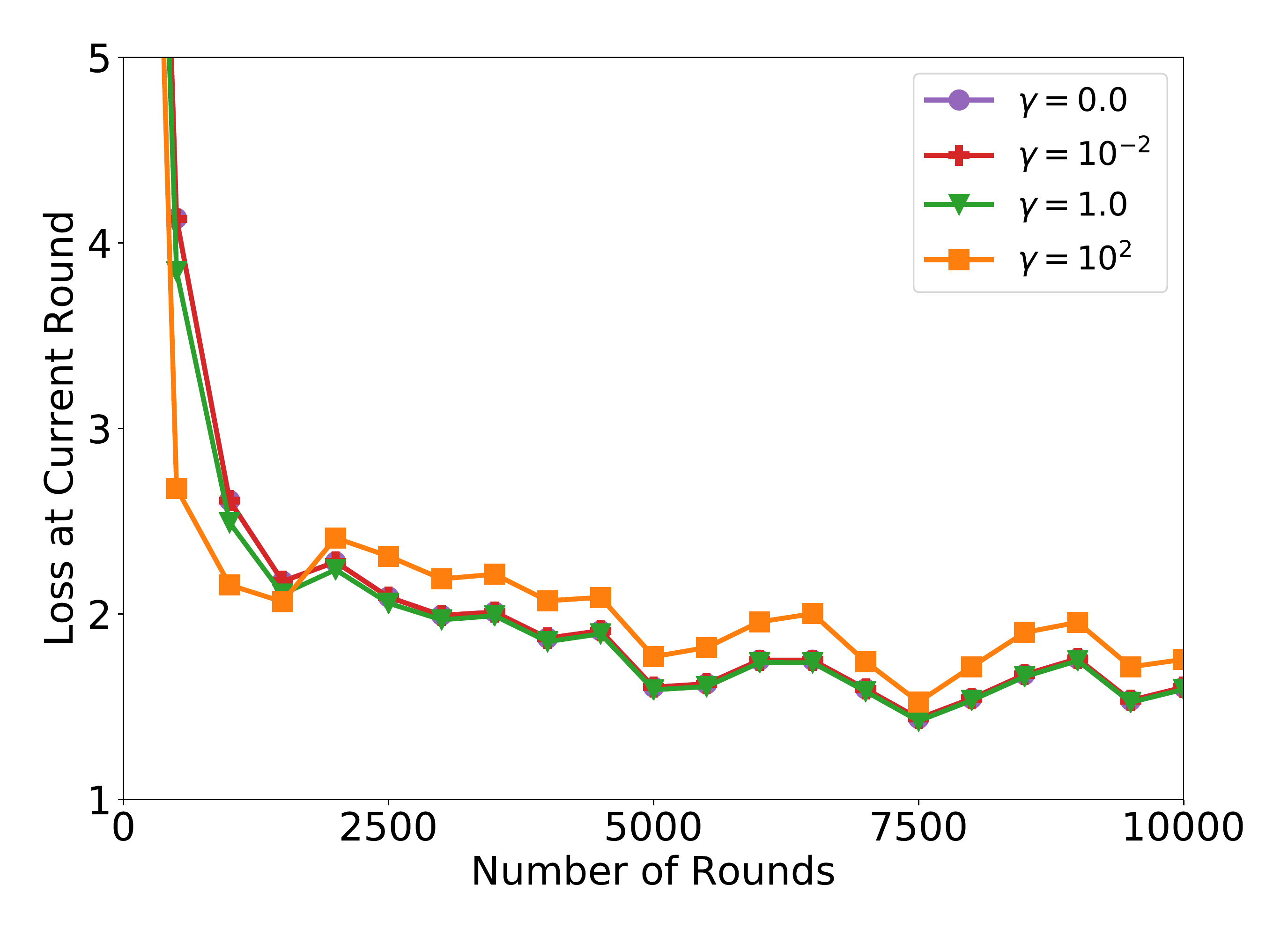}
    \caption{Stack Overflow - Tag Prediction}
    \end{subfigure}
    \begin{subfigure}{.45\linewidth}
    \centering
    \includegraphics[width=\linewidth]{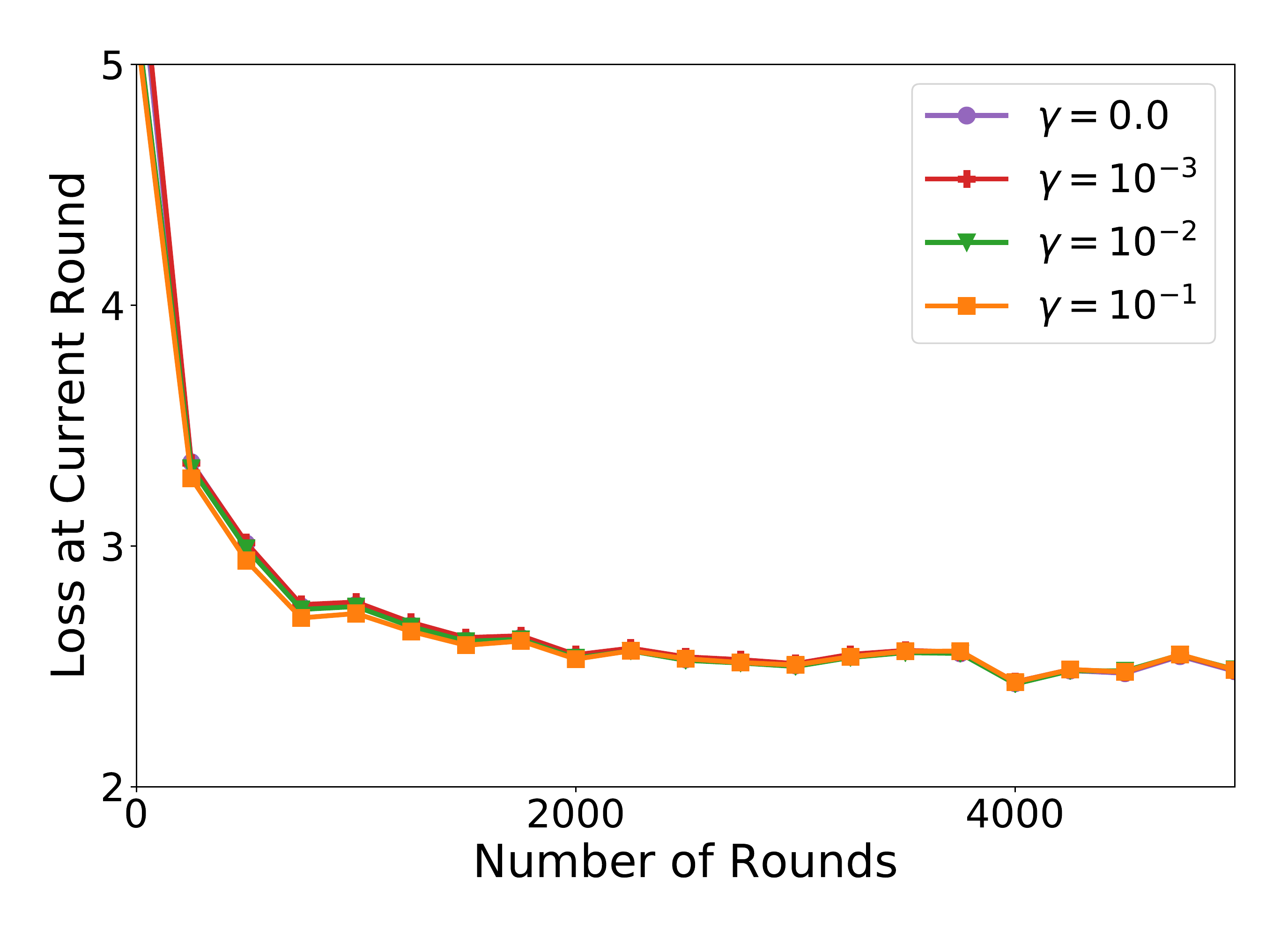}
    \caption{Stack Overflow - Next Word Prediction}
    \end{subfigure}
\caption{Cross-entropy training loss of \localupdate with server \yogi and $\Theta = \Theta_{1:10}$ on Stack Overflow with varying client learning rate $\gamma$ and fixed server learning rate $\eta = 0.01$.}
\label{fig:stack_overflow_constant}
\end{figure}

\begin{figure}[ht]
\centering
    \begin{subfigure}{.45\linewidth}
    \centering
    \includegraphics[width=\linewidth]{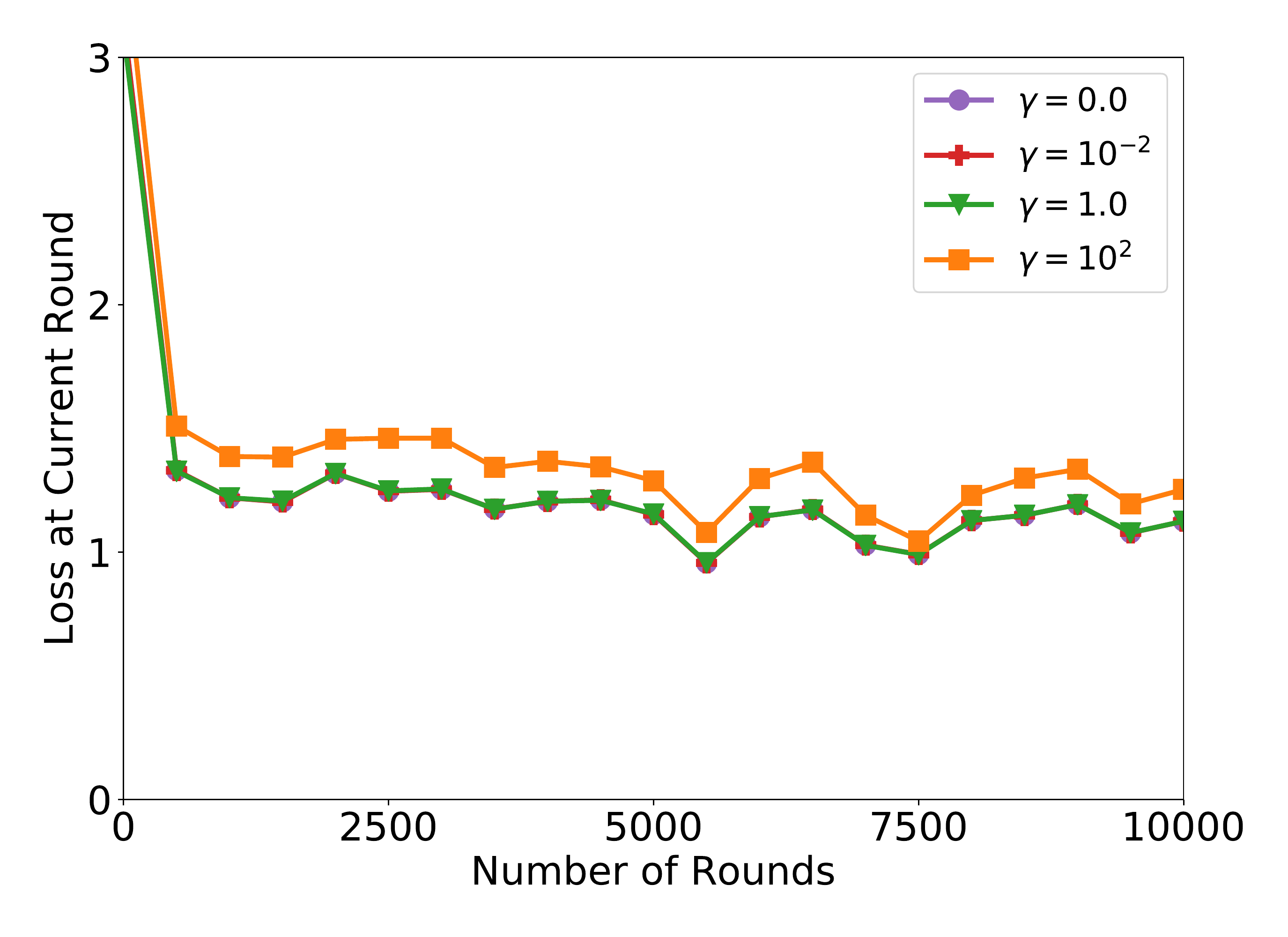}
    \caption{Stack Overflow - Tag Prediction}
    \end{subfigure}
    \begin{subfigure}{.45\linewidth}
    \centering
    \includegraphics[width=\linewidth]{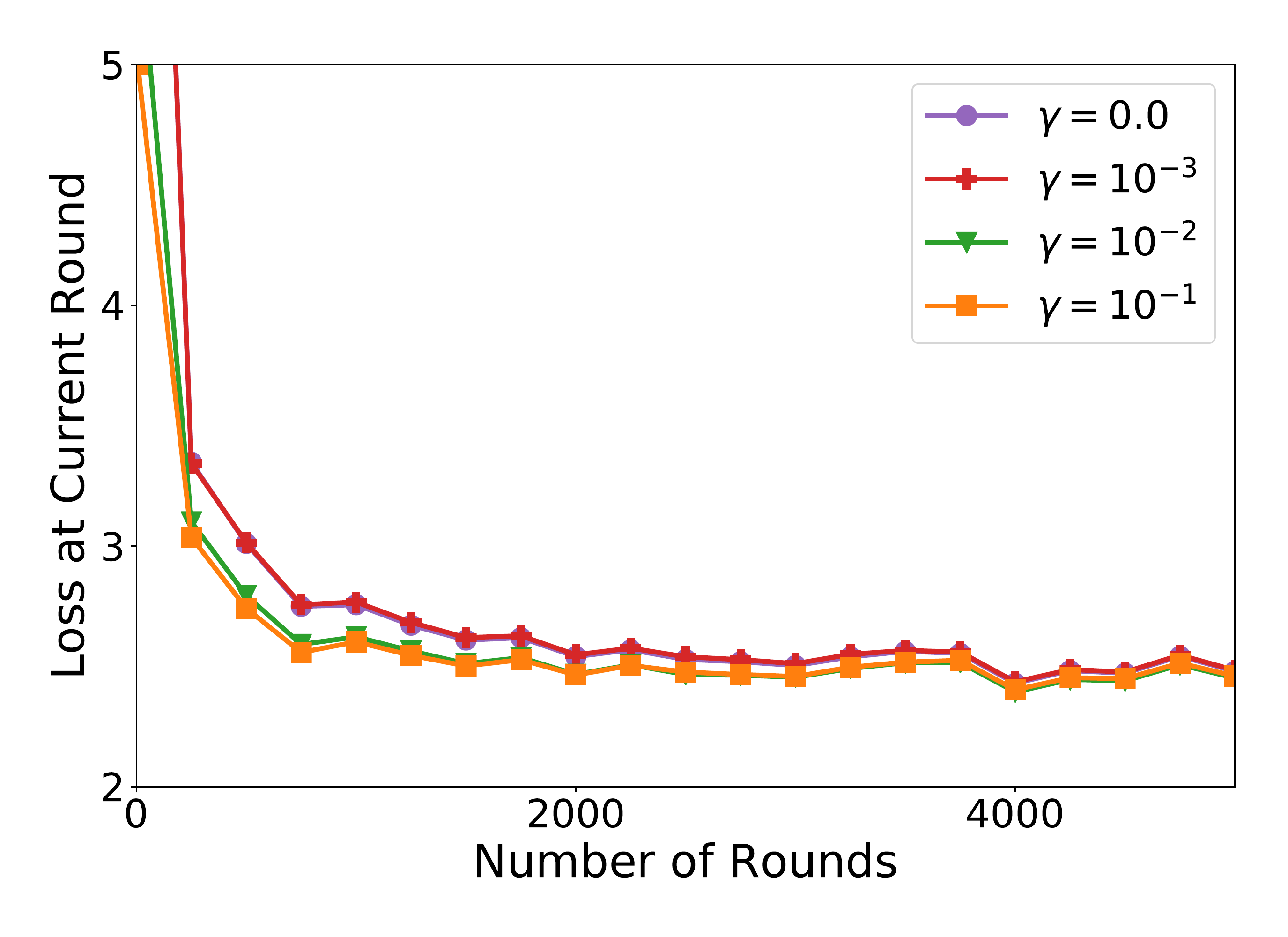}
    \caption{Stack Overflow - Next Word Prediction}
    \end{subfigure}
\caption{Cross-entropy training loss of \localupdate with server \yogi and $\Theta = \Theta_{1:10}$ on Stack Overflow with varying client learning rate $\gamma$ and tuned server learning rate $\eta$.}
\label{fig:stack_overflow_tuned}
\end{figure}

For fixed $\eta$, large $\gamma$ leads to an initially smaller loss that is eventually beaten or matched by smaller $\gamma$. However, we see similar behavior among all but the largest $\gamma$ in both tasks, potentially due to the adaptivity in \yogi. Looking at Figure \ref{fig:stack_overflow_tuned}, we see that tuning the server learning rate has slightly different effects on the two tasks: While Stack Overflow TP with tuned $\eta$ leads to the smaller $\gamma$ performing better throughout the training process, Stack Overflow NWP with tuned $\eta$ actually allows larger $\gamma$ to achieve lower loss throughout. Notably, we see that the gap between large $\gamma$ and small $\gamma$ in Stack Overflow NWP winnows as the number of rounds increases. Our results reinforce the notion that in communication-limited settings, server learning rate tuning is critical to ensure the best performance possible. This holds even when not using SGD on the server, but instead using an adaptive optimizer.

\section{Automatic learning rate decay}\label{sec:lr_decay}

In short, our results in the section above suggest that depending on the desired number of communication rounds, we may wish to use different client learning rates. Unfortunately, this requires a large degree of hyperparameter tuning. Both the client and server learning rate must be tuned. However, if the number of communication rounds in a federated learning system is limited, it may not be feasible to conduct extensive hyperparameter tuning, as the communication rounds required to do so may be better utilized by training your model for more rounds.

To help reduce the amount of learning rate tuning required, we propose a method for automatic learning rate decay that helps mitigate the need for \emph{client} learning rate tuning. Our method will utilize our theoretical and empirical observations above showing that large client learning rates should be used initially, while smaller learning rates eventually reach a lower training loss. By decaying the client learning rate automatically over time, we mitigate the need to tune it. Instead, it can be set to any moderate value that does not result in divergent behavior on clients. We will be particularly concerned with systems-level constraints (such as those encountered in federated learning) when describing our method.

One particularly important restriction in many local update settings is the ability to compute the training loss $f(x)$. Recall that we defined
\[
f(x) := \E_{i \sim \mP} [f_i(x)],~~~f_i(x) := \E_{z \sim \mD_i} [f(x; z)]
\]
where $\mP$ is a distribution over clients, and $\mD_i$ is the client's data distribution. In settings with limited communication, it may not be feasible to sample most or even a moderate fraction of the clients from $\mP$. Moreover, sampling a client $i \sim \mP$ purely for the purposes of estimating the loss may be a waste of resources, as that client could be used for training purposes.

Thus, we propose a version of \localupdate that simultaneously computes local updates and estimates the loss function. Specifically, at each round $t$, we sample a set $I_t$ of clients of size $M$. For each $i \in I_t$, we first compute an estimate $\ell_t^i$ of the local loss function $f_i(x_t)$, as well as the client's model update $q_t^i$ as defined in Algorithm \ref{alg:innerloop}. Note that it is important that $f_i(x_t)$ is estimated \emph{before} computing any local updates, as otherwise we risk overfitting to the client's dataset. The server then computes
\begin{equation}\label{eq:f_t_eq}
\ell_t := \frac{1}{M} \sum_{i \in I_t} \ell_t^i,~~~q_t := \frac{1}{M}\sum_{i \in I_t} q_t^i.
\end{equation}

As in \localupdate, we will use this $q_t$ to update the model. The loss estimate $\ell_t$ will be used to determine whether to decay the client or server learning rates. We will use the strategy of decaying learning rates on plateaus. If the loss $f_t$ is not sufficiently small, we will decay the client and server learning rates. We record the minimum loss seen up to round $t$, and then check if
\[
\ell_t < \min_{j < t} \ell_j - \Delta
\]
for some small $\Delta > 0$. Intuitively, if the loss $\ell_t$ has not decreased sufficiently, we may be at a suboptimal point (due to a large client learning rate $\gamma$) and therefore should decrease $\gamma$, which necessitates a decrease in $\eta$. We propose decaying $\gamma$ and $\eta$ by fixed factors $\alpha, \beta \in (0, 1)$, as this kind of ``staircase'' learning rate schedule has been repeatedly demonstrated to perform well across many tasks~\citep{krizhevsky2014one, goyal2017accurate}. A full version of this algorithm, which we refer to as \localupdatedecay, is given in Algorithms \ref{alg:outerloop_decay} and \ref{alg:innerloop_decay}.

\begin{figure}[ht]

\begin{minipage}[t]{.52\textwidth}
\begin{algorithm}[H]
\caption{\localupdatedecay: \\ Outer Loop}
\label{alg:outerloop_decay}
\begin{algorithmic}
\SUB{OuterLoop$(x, \eta, \gamma, \Theta, \Delta > 0, \alpha, \beta \in (0, 1))$:}
\STATE $x_1 = x$
\FOR{each round $t = 1, 2, \dots$, T}
    \STATE $I_t \leftarrow$ (random set of $M$ clients)
    \FOR{each client $i \in I_t$ \textbf{in parallel}}
        \STATE $\ell_t^i, q_t^i \leftarrow \text{InnerLoop}(i, x_t, \gamma, \Theta)$
    \ENDFOR
    \STATE $\ell_t \leftarrow (\nicefrac{1}{M})\sum_{i \in I_t} \ell_t^i$
    \STATE $q_t \leftarrow (\nicefrac{1}{M})\sum_{i \in I_t} q_{t}^i$
    \STATE $x_{t+1} = x_t-\eta_tq_t$
    \IF{$\ell_t > \min_{j < t} \ell_j - \Delta$}
        \STATE $\gamma \leftarrow \alpha\gamma,~~\eta \leftarrow \beta\eta$
    \ENDIF
\ENDFOR
\STATE return $x_{T+1}$
\end{algorithmic}
\end{algorithm}
\end{minipage}
\begin{minipage}[t]{.46\textwidth}
\begin{algorithm}[H]
\caption{\localupdatedecay: \\ Inner loop}
\label{alg:innerloop_decay}
\begin{algorithmic}
\renewcommand{\arraystretch}{1.6}
\SUB{$\InnerLoop(i, x, \gamma, \Theta$):}
\STATE $x_1 = x$
\STATE $\ell \leftarrow \E_{z\sim\mD_i} [f(x; z)]$
\FOR{$k = 1, 2, \dots, K(\Theta)$}
    \STATE sample a set $S_k$ of size $B$ from $\mD_i$
    \STATE $g_k = (\nicefrac{1}{B}) \sum_{z \in S_k} \nabla f(x_k ; z)$
    \STATE $x_{k+1} \leftarrow x_k - \gamma g_k$ 
\ENDFOR
\STATE return $\ell,~\sum_{k = 1}^{K(\Theta)} \theta_k g_k$
\end{algorithmic}
\end{algorithm}
\end{minipage}

\end{figure}

\subsection{Practical refinements to \localupdatedecay}\label{sec:practical_refinements}

In this section we describe three practical refinements to \localupdatedecay that can improve convergence behavior. First, in settings where the number of clients sampled per round $M$ is sufficiently small compared to $|\mI|$, we can instead estimate the loss by using a moving window average across rounds. That is, given some window size $W > 0$, we can compute an estimate of the loss $\hat{\ell}_t$ with reduced variance via
\[
\hat{\ell}_t = \dfrac{1}{W} \sum_{j = 0}^{W-1} \ell_{t-j}
\]
where $f_t$ is as in \eqref{eq:f_t_eq}. This is the average loss of all clients in the last $W$ rounds. Note that Algorithm \ref{alg:outerloop_decay} corresponds to $W = 1$. We would then decay the client and server learning rates if
\begin{equation}\label{eq:fhat_condition}
\hat{\ell}_t > \min_{j < t} \hat{\ell}_j - \Delta.
\end{equation}

Second, even using moving windows to estimate $f(x)$, the heterogeneity of clients can still cause problematic variance. Thus, one can instead decay the learning rates if there has been no progress (relative to $\Delta$) for $P$ \emph{consecutive} rounds. That is, we keep a counter for how many consecutive rounds the condition \eqref{eq:fhat_condition} holds. If this counter ever reaches $P$, we then decay $\gamma$ and $\eta$. If
\[
\hat{\ell}_t \leq \min_{j < t} \hat{\ell}_j - \Delta
\]
then we reset the counter to 0. Note that Algorithm \ref{alg:outerloop_decay} corresponds to $P = 1$.

Last, it is often useful to have a \emph{cooldown period} $C$, where after decaying the learning rates $\gamma$ and $\eta$, we do not decay the learning rate for the next $C$ rounds. This is beneficial both for recovering a new estimate of the loss function, and for ensuring that the learning rate does not decay too frequently. Additionally, we recommend using this cooldown period for the first $C$ rounds as well, as this allows one to develop a better estimate of $f(x)$ before decaying the learning rate.

While in practice, setting $W, P$, and $C$ may seem difficult, we found that setting them all to be the same value led to good behavior across datasets and tasks. Moreover, the value can be estimated using simple heuristics. For example, suppose that we think that $N$ randomly sampled clients are sufficient to give a good representation of $\mP$. Then, $P = W = C = \lceil N/M \rceil$ should serve as a default value for these parameters. In practice, we found that fixing all three values to some moderate constant (ex. $P = W = C = 100$) was sufficient, even across datasets with widely varying numbers of clients.

\subsection{Empirical evaluation}

The primary motivation for \localupdatedecay is removing the need for client learning rate tuning. Intuitively, we can use any moderately large value of $\gamma$ that results in non-divergent client behavior, and this will be gradually scaled back over time as we reach suboptimal critical points of the corresponding surrogate loss. To validate this, we compare \localupdatedecay to \localupdate on FEMNIST, CIFAR-100, Shakespeare, and Stack Overflow. For Stack Overflow, we again use a modified version where we apply \yogi on the server. In particular, we compare tuned but constant $\gamma$ and $\eta$, to \localupdatedecay.

\paragraph{Implementation and hyperparameters}

We implement \localupdatedecay in TensorFlow Federated as well. When computing the client's loss estimate $\ell$, we compute the average loss over the entire client dataset. Unlike in the previous section, we do not tune the client learning rate $\gamma$. We instead vary $\gamma$ over $\{10^2, 10, 1, 10^{-1}, 10^{-2}\}$ and select the largest $\gamma$ that results in a non-divergent training loss, as we intend to test the hypothesis that \localupdatedecay does not need $\gamma$ to be tuned in the same way that \localupdate does. In all settings, we still tune the server learning rate, as this is can be vital for getting accurate and fair comparisons (as discussed in Section \ref{sec:tuned_server}). We use the same server learning rate grid as with \localupdate, given in \eqref{eq:server_grid}, and select the value that reaches the lowest average training loss over the last 100 communication rounds. A table of learning rates used by \localupdatedecay for each task is given in Table \ref{table:decay_eta}.

\begin{table}[ht]
\setlength{\tabcolsep}{3.5pt}
\caption{Client learning rate $\gamma$ and server learning rate $\eta$ used in \localupdatedecay, in base-10 logarithm format.}
\label{table:decay_eta}
\begin{center}
\begin{sc}
\begin{tabular}[t]{l*{2}{c}}
\toprule
Task & $\gamma$ & $\eta$\\
\midrule
CIFAR-100 & 0 & -1 \\
FEMNIST & -1 & -1 \\
Shakespeare & 0 & \nicefrac{3}{2}\\
StackOverflow NWP & -1 & -\nicefrac{3}{2} \\
StackOverflow TP & 2 & 0 \\
\bottomrule
\end{tabular}
\end{sc}
\end{center}
\end{table}

We tune no other hyperparameters. In \localupdatedecay, specifically Algorithm \ref{alg:outerloop_decay}, we set $\Delta = 10^{-4}, \alpha = 0.1, \beta = 0.9$. We also use the practical refinements discussed in Section \ref{sec:practical_refinements}, setting $P = W = C = 100$. All other parameters are identical to that of standard \localupdate.

\begin{figure}[ht]
\centering
    \begin{subfigure}{.31\linewidth}
    \centering
    \includegraphics[width=\linewidth]{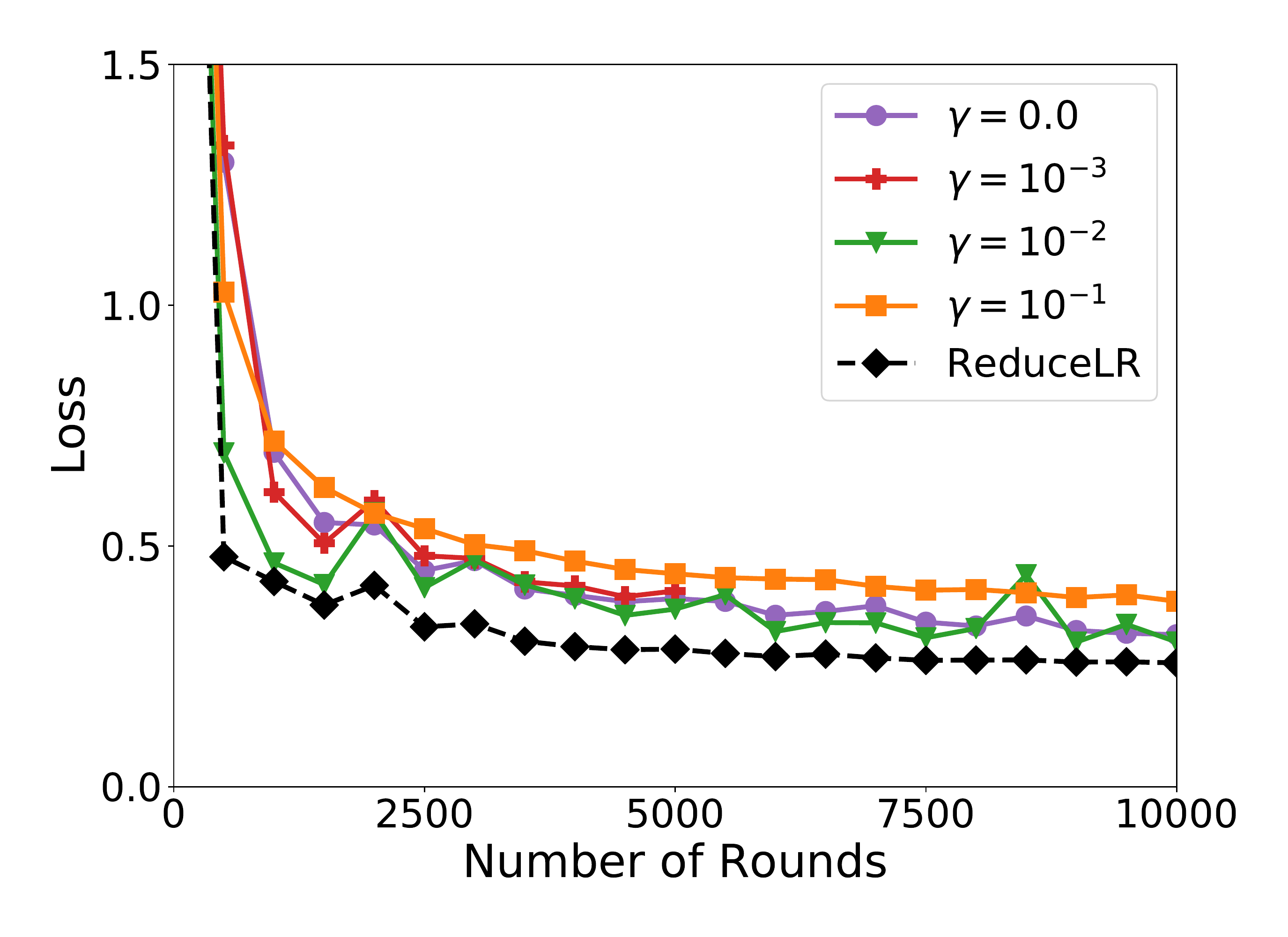}
    \caption{FEMNIST}
    \end{subfigure}
    \begin{subfigure}{.31\linewidth}
    \centering
    \includegraphics[width=\linewidth]{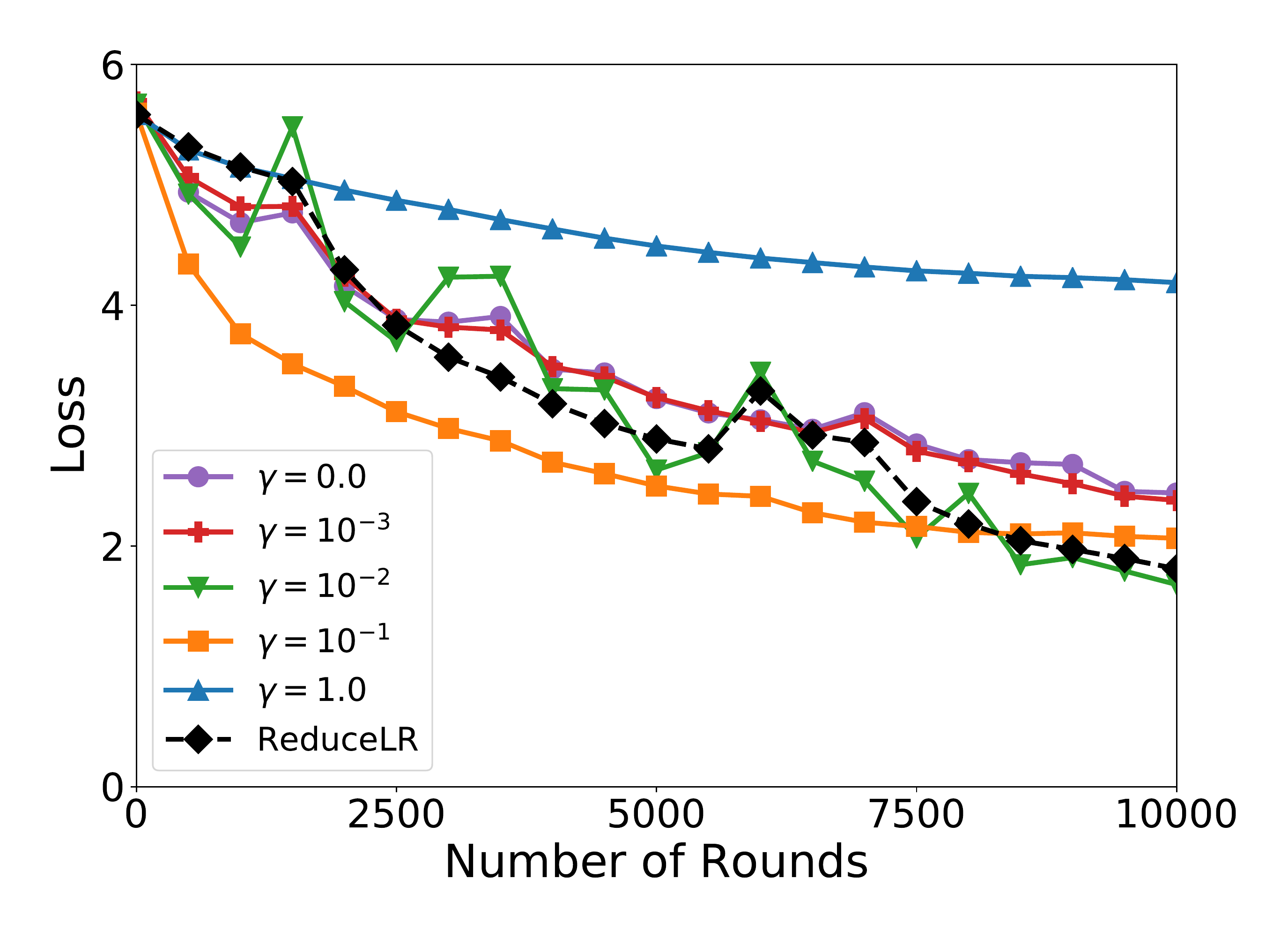}
    \caption{CIFAR-100}
    \end{subfigure}
    \begin{subfigure}{.31\linewidth}
    \centering
    \includegraphics[width=\linewidth]{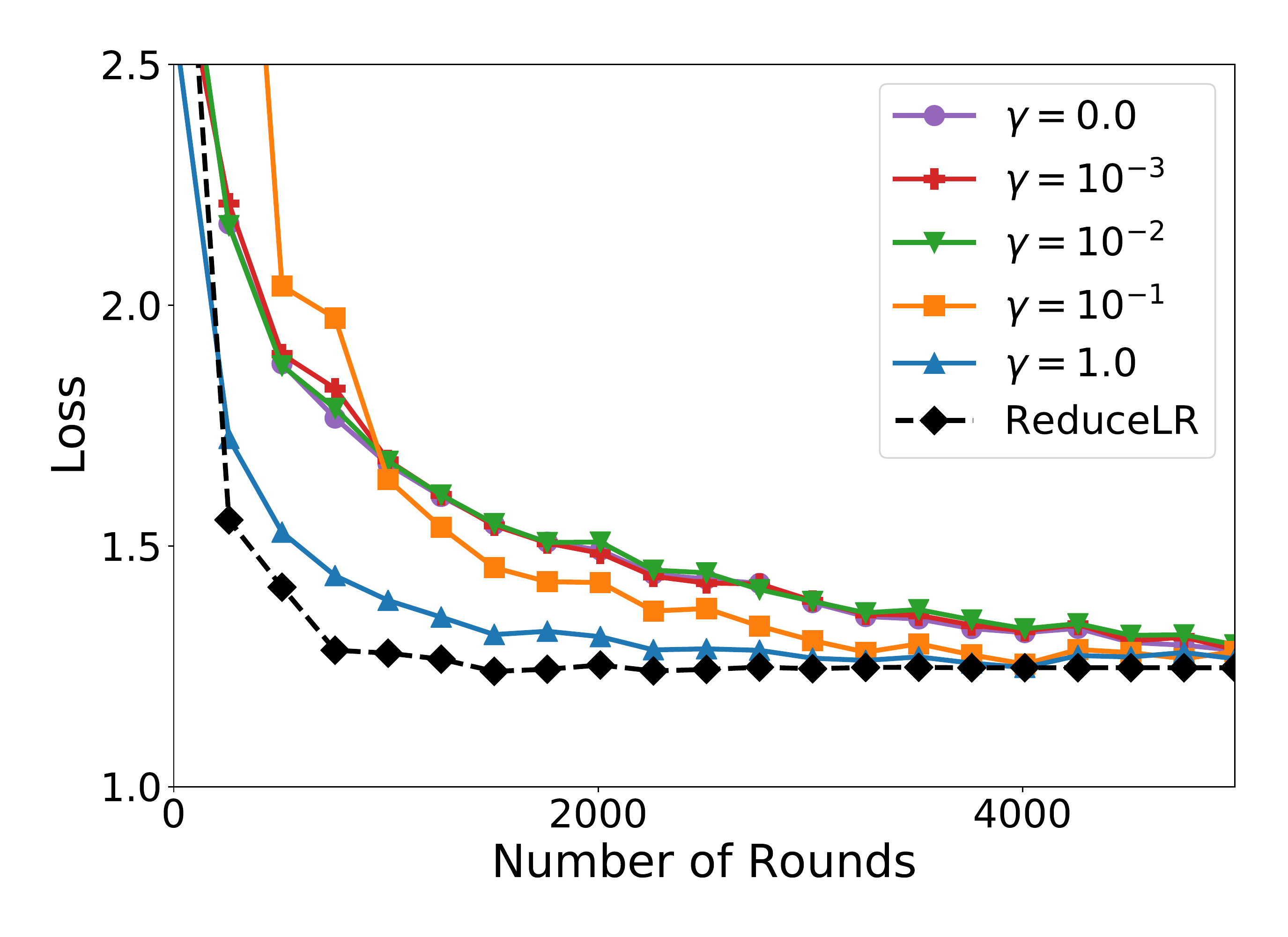}
    \caption{Shakespeare}
    \end{subfigure}
\caption{Cross-entropy training loss of \localupdate with $\Theta = \Theta_{1:10}$, varying client learning rate $\gamma$, and tuned server learning rate $\eta$. We also plot the cross-entropy loss of \localupdatedecay with tuned server learning rate (ReduceLR).}
\label{fig:compare_decay}
\end{figure}

\begin{figure}[ht]
\centering
    \begin{subfigure}{.31\linewidth}
    \centering
    \includegraphics[width=\linewidth]{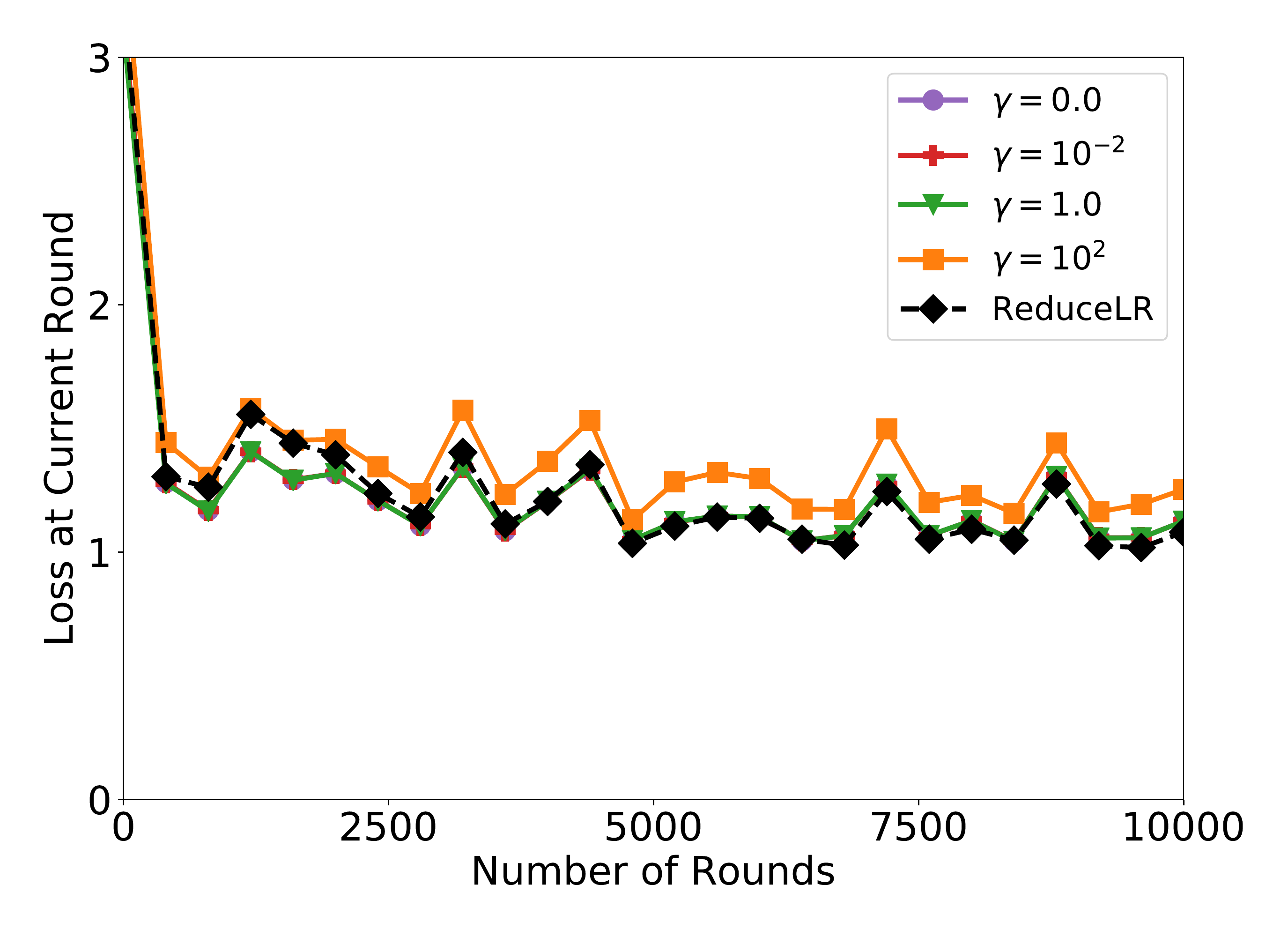}
    \caption{Stack Overflow - Tag Prediction}
    \end{subfigure}
    \begin{subfigure}{.31\linewidth}
    \centering
    \includegraphics[width=\linewidth]{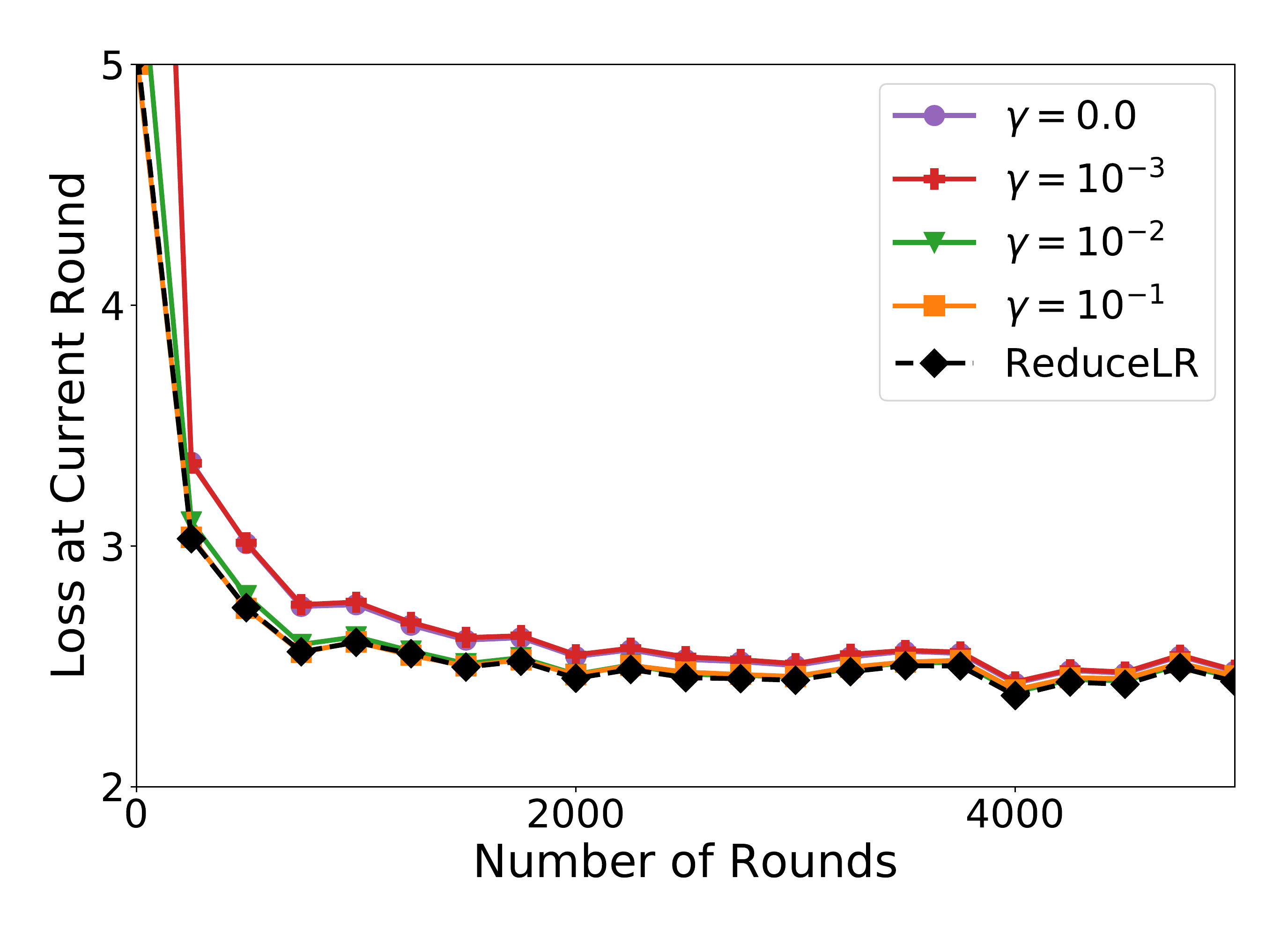}
    \caption{Stack Overflow - Next Word Prediction}
    \end{subfigure}
\caption{Cross-entropy training loss of \localupdate with server \yogi and $\Theta = \Theta_{1:10}$. We vary the client learning rate $\gamma$ and tune the server learning rate $\eta$. We also plot the cross-entropy loss of \localupdatedecay with server \yogi and tuned server learning rate (ReduceLR).}
\label{fig:stackoverflow_decay}
\end{figure}

\paragraph{Results}For FEMNIST, CIFAR-100, and Shakespeare, we plot the results in Figure \ref{fig:compare_decay}, where we also compare to the tuned results in Section \ref{sec:tuned_server}. We find that in all three tasks, \localupdatedecay eventually does as well as \localupdate with tuned $\gamma$, without the need for client learning rate tuning. For FEMNIST and Shakespeare, we find that \localupdatedecay almost immediately does better than \localupdate and continues to do at least as well throughout the course of training, often better. While this is not true for CIFAR-100, the results are still instructive. We see that while the client learning rate is initially set to a suboptimal value ($\gamma = 1.0$), the automatic learning rate decay enables us to move away from this suboptimal basin and towards something comparable to the best tuned client learning rate after enough rounds. We see that despite initializing with a bad $\gamma$, the non-divergence in earlier rounds is sufficient to allow eventually near-optimal performance in the later rounds.

For both Stack Overflow tasks we plot an analogous results in Figure \ref{fig:stackoverflow_decay}. As discussed in Section \ref{sec:adaptive_localupdate}, due to the size of the Stack Overflow dataset, we do not plot the loss $f(x)$ over all clients. Instead, we plot the average loss among all clients in each round before training. For Stack Overflow NWP, we see that \localupdatedecay performs comparably to the best tuned client learning rate. However, for Stack Overflow TP, we see that the decay actually helps significantly. While we initialize with a suboptimal client learning rate $\gamma = 100$, (which clearly achieves higher loss throughout), by decaying the client learning rate we are able to obtain comparable or lower loss than all other fixed $\gamma$.

\section{Open questions}

Our work above opens up a number of possible follow-up directions in the area of federated optimization. First, we expect the same kind of analysis obtained above to apply to methods similar to \localupdate. For example, while the FedProx algorithm \citep{li2019federated} does not fit the format of \localupdate as presented in this work, we believe that it can be analyzed in a similar way. As shown by \citet{pathak2020fedsplit}, even in the non-stochastic setting, FedProx is not optimizing the true loss function. Thus, a natural question is to understand exactly what loss function is being optimized, and how the structure of FedProx encourages convergence over FedAvg. Another natural algorithm for analysis is a more general version of \localupdate in which the client learning rate is decayed during a client's local computation. This may help combat adverse effects incurred by setting $K$ to be too large.

More generally, we would also like to understand the behavior of \localupdate on non-quadratic functions. Even generalizing the analysis above to the strongly convex case would be substantial progress towards understanding federated learning in heterogeneous settings. While there may be no surrogate loss that \localupdate is directly optimizing through SGD, we believe that the intuition behind our work can still be carried forward for more general loss functions. In particular, one might expect that the optimization dynamics of \localupdate can be parameterized in terms of $\gamma$ and $\Theta$, even in non-quadratic settings, and that the selection of these parameters governs a trade-off between the speed of convergence and the accuracy of the resulting critical point.

Another interesting open direction is determining how to best set $\Theta$ for a given problem. As seen above, the choice of $\Theta$ drastically alters optimization dynamics. While it is often chosen in an ad hoc manner (based in part on the cost of communication), one could imagine attempting to minimizing the number of rounds need to obtain a given accuracy level with respect to $\Theta$. Even for FedAvg/Reptile, it is not clear how to set the number of local steps $K$. Insights into this could greatly improve the performance of federated learning algorithms.

Finally, we note that our work is fundamentally concerned with the training dynamics of local update methods. In practice, we are often instead interested in the generalization ability of a model. We suspect that the choice of parameters in \localupdate can have large implications for generalization ability, the study of which we leave to future work.


\acks{We would like to thank Keith Rush and Sai Praneeth Karimireddy for the remarks that accidentally helped spark this work. We would also like to thank H. Brendan McMahan and Zachary Garrett for fruitful discussions about decoupling client and server learning rates in federated learning. Finally, we gratefully acknowledge Shanshan Wu for insights on personalization in federated learning.}

\appendix

\section{Relation between FedAvg, Local SGD, and \localupdate}\label{sec:special_cases}

In this section, we formalize the connection between FedAvg, Local SGD, and \localupdate. First, we note that in common descriptions of FedAvg and Local SGD algorithms (see \citep{mcmahan17fedavg} and \citep{stich2018local} for example), these two are effectively the same algorithm. The difference in nomenclature often reflects the distributed setting: Local SGD is often referred to in settings with homogeneous data, while FedAvg is often used in heterogeneous settings, especially for the purposes of federated learning.

We will use the following (simplified) version of the algorithms: At each iteration $t$ of FedAvg/Local SGD, we have some set of clients $I_t$ of size $M$. Each client $i \in I_t$ receives the server's model $x_t$, and applies $K$ steps of mini-batch SGD updates to its local model to create an updated local model $x_t^i$. The server then updates its model via
\[
x_{t+1} = \dfrac{1}{M}\sum_{i \in I_t} x_t^i.
\]
Fix $t$, and let $g_k^i$ denote the $k$-th mini-batch gradient of client $i$. Suppose we use a learning rate of $\gamma$ on each client when performing mini-batch SGD. Then we have
\begin{align*}
    x_{t+1} &= \dfrac{1}{M}\sum_{i \in I_t} x_t^i\\
    &= x_t - \dfrac{1}{M}\sum_{i \in I_t}\left(x_t - x_t^i\right)\\
    &= x_t - \dfrac{1}{M}\sum_{i \in I_t}x_t - \left(x_t - \gamma \sum_{k = 1}^K g_k^i\right)\\
    &= x_t - \gamma\dfrac{1}{M}\sum_{i \in I_t} \sum_{k=1}^K g_k^i.
\end{align*}
This is exactly \localupdate with $\Theta = \Theta_{1:K}$ and $\eta = \gamma$. However, by allowing $\eta$ to vary independently of $\gamma$ in \localupdate, we can obtain markedly different convergence behavior. We note that a form of this decoupling has previously been explored by \citet{karimireddy2019scaffold} and \citet{reddi2020adaptive}. However, these versions instead perform averaging on the so-called ``model delta'' (see \citep{reddi2020adaptive}), in which the server model is updated via
\begin{align*}
x_{t+1} &= x_t - \dfrac{\eta}{M}\sum_{i \in I_t} (x_t - x_t^i)\\
&= x_t - \dfrac{\eta}{M}\sum_{i \in I_t}\left(x_t - \left(x_t - \gamma \sum_{k=1}^K g_k^i\right)\right)\\
&= x_t - \dfrac{\eta\gamma}{M}\sum_{i \in I_t} \sum_{k=1}^K g_k^i.
\end{align*}
Thus, while this does decouple $\eta$ and $\gamma$ to some degree, it does not fully do so. In particular, if we set $\gamma = 0$, then $x_{t+1} = x_t$, in which case we can make no progress overall. This is particularly important because, as we show above, for many $\Theta$, the only way for the surrogate loss to have the same critical point as the true loss is by setting $\gamma = 0$. More generally, we see that the effective learning rate used in such an update is actually the product $\eta\gamma$, which can result in conflating the effect of changes in $\gamma$ with changes in $\eta$.

\newpage

\section{Proof of results}

\subsection{Results from Section \ref{sec:local_update_as_sgd}}

\subsubsection{Lemma \ref{lem:f_i_loss}}

\begin{proof}
Using the fact that $A_z$ is symmetric for all $z$, we have
\begin{align*}
    f_i(x) &= \E_{z \sim \mD_i} [f(x; z)]\\
    &= \E_{z \sim \mD_i}\left[ \dfrac{1}{2}\norm{A_z^{1/2}(x-c_z)}^2\right]\\
    &= \E_{z\sim \mD_i}\left[ \dfrac{1}{2}(x-c_z)^TA_z(x-c_z)\right]\\
    &= \E_{z \sim \mD_i}\left[ \dfrac{1}{2}x^TA_zx - x^TA_zc_z + \dfrac{1}{2}c_z^TA_zc_z\right]\\
    &= \dfrac{1}{2}x^T\E_{z \sim \mD_i}[A_z]x -x^T\E_{z \sim \mD_i}[A_zc_z] + \dfrac{1}{2}\E_{z \sim \mD_i}[c_z^TA_zc_z]\\
    &= \dfrac{1}{2}x^TA_ix - x^TA_ic_i + \dfrac{1}{2}\E_{z \sim \mD_i}[c_z^TA_zc_z]\\
    &= \dfrac{1}{2}\norm{A_i^{1/2}(x-c_i)}^2 - c_i^TA_ic_i + \dfrac{1}{2}\E_{z \sim \mD_i}[c_z^TA_zc_z].
\end{align*}
Setting $\tau_i = - c_i^TA_ic_i + \dfrac{1}{2}\E_{z \sim \mD_i}[c_z^TA_zc_z]$, we derive the result.
\end{proof}

\subsubsection{Lemma \ref{lem:grad_recurrence}}

\begin{proof}
For simplicity of notation, let $b_z = -A_zc_z$ and $b_i = -A_ic_i$. Let $S_{:k} = (S_1, S_2, \ldots, S_k)$. By \eqref{eq:inner_g_update}, $g_k$ is independent of $w_k$ given $S_{:k-1}$. Therefore, we have that for any $k$, 
\begin{align*}
    \E[g_{k}] &= \E_{S_{:k-1}}\left[\E_{S_k}\left[\frac{1}{B}\sum_{z \in S_k} \nabla f(x_k ; z) \middle|  S_{:k-1} \right] \right]\\
    &= \E_{S_{:k-1}}\left[\E_{S_k}\left[\frac{1}{B}\sum_{z \in S_k} A_zx_k + b_z \middle| S_{:k-1} \right] \right]\\
    &= \E_{S_{:k-1}}\left[A_ix_k + b_i\right]
\end{align*}

By linearity of expectation, we then find that
\begin{equation}\label{eq:expected_g_update}
    \E[g_k] = A_i\E[x_k] + b_i.
\end{equation}

By the assumption that $A_i$ is positive definite, this implies
\begin{equation}\label{eq:inverted_recurrence}
    \E[x_k] = A_i^{-1}(\E[g_{k}] - b_i).
\end{equation}

Plugging \eqref{eq:inverted_recurrence} into \eqref{eq:inner_x_update}, we have
\begin{equation}
    \E[x_{k+1}] = \E[x_k] - \gamma \E[g_k] = A_i^{-1}(\E[g_k] - b_i) - \gamma \E[g_k].
\end{equation}

Combining \eqref{eq:expected_g_update} and \eqref{eq:inverted_recurrence}, we have
\begin{align*}
    \E[g_{k+1}] &= A_i\E[x_{k+1}] + b_i\\
    &= A_i\left(A_i^{-1}(\E[g_{k}] - b_i)-\gamma\E[g_k]\right)+b_i\\
    &= (I-\gamma A_i)\E[g_k].
\end{align*}
\end{proof}

\subsubsection{Theorem \ref{thm:maml}}

\begin{proof}
For convenience of notation, we will fix $i$ and let $X_k$ denote $X_k^i(x)$, and let $X_0 = x$. Since we assume this is computed using gradient descent, we have
\[
X_{k+1} = X_k - \gamma \nabla f_i(X_k).
\]
Note that $\nabla^2 f_i(y) = A_i$ for all $y$. Therefore, for $0 \leq k \leq K-1$,
\begin{equation}\label{eq:maml_proof_1}
\nabla_{X_k}X_{k+1}= I- \gamma A_i.\end{equation}
Also recall that by Theorem \ref{thm:sgd_objective}, we have
\begin{equation}\label{eq:maml_proof_2}
\nabla_{X_K} f_i(X_K) = (I-\gamma A_i)^{K+1}.\end{equation}

For a function $q: \R^a \to \R^b$, let its Jacobian at a point $x \in \R^a$ be denoted by $J_x(g)$. Using the chain rule for Jacobians:
\begin{align*}
    \nabla m_K^i(x) &= J_{X_{K}}( f_i(X_{K}) )J_{x}(X_{K})\\
    &= J_{X_{K}}( f_i(X_K) )\prod_{k=0}^{K-1} J_{X_k}(X_{k+1})\\
    &= (I-\gamma A_i)^{K+1}\prod_{k=1}^{K}(I- \gamma A_i).
\end{align*}
This last step follows from \eqref{eq:maml_proof_1} and \eqref{eq:maml_proof_2}, and the fact that $A_z$ is symmetric for all $z$. The remainder of the result follows directly from Theorem \ref{thm:sgd_objective}.
\end{proof}

\subsection{Results from Section \ref{sec:properties}}

\subsubsection{Lemma \ref{lem:Q_special_case}}

\begin{proof}
By \eqref{eq:Q_matrix}, we have
\[
Q_i(0, \Theta) = \sum_{k= 1}^{K(\Theta)} \theta_k(I-0)^{k-1} = \left(\sum_{k= 1}^{K(\Theta)}\theta_k\right)I.
\]
Similarly, if $\Theta = (\theta_1)$ then
\[
Q_i(\gamma, \Theta) = \theta_1I.
\]
In either event, we see that $Q_i(\gamma, \Theta) = aI$ where $a = \sum_{k= 1}^{K(\Theta)} \theta_k$. Therefore,
\[
\tilde{f}_i(x, \gamma, \Theta) = \frac{1}{2}\norm{(aA_i)^{1/2}(x-c_i)}^2 = \frac{a}{2}\norm{A_i^{1/2}(x-c_i)}^2 = a\tilde{f}_i(x,\gamma,\Theta).
\]
\end{proof}

\subsubsection{Lemma \ref{lem:Q_structure}}

\begin{proof} We prove each property below.

    \emph{Proof of Property 1:} Since $\gamma < L_i^{-1}$ and $A_i$ is symmetric and positive definite we know that the matrix $I-\gamma A_i \preceq 0$ and therefore $(I-\gamma A_i)^k$ is symmetric and positive definite for all $k$. By \eqref{eq:Q_matrix} and Assumption \ref{assm0}, $Q_i(\gamma, \Theta)$ is a nonnegative linear combination of positive definite, symmetric matrices, with some coefficient $\theta_j > 0$. It is therefore positive definite and symmetric.
    
    \emph{Proof of Property 2:} Since $\qigt$ and $A_i$ are symmetric and positive definite, so too is their product. Therefore, the $i$-th surrogate loss defined in \eqref{eq:surrogate_i} is a positive definite quadratic function, and therefore has a unique minima. Solving for this minima explicitly by setting the gradient equal to 0, we get
    \begin{align*}
        \nabla_x \tilde{f}_i(x, \gamma, \Theta) = Q_i(\gamma, \Theta)A_i(x-c_i) = 0 \implies x = c_i.
    \end{align*}
    Here, we again used the fact that $\qigt A_i$ is symmetric and positive definite.
    
    \emph{Proof of Property 3:} Let $v$ be an eigenvector of $A_i$ with eigenvalue $\lambda$. Then note that $v$ is an eigenvector of $(I-\gamma A_i)^{k-1}$ with eigenvalue $(1-\gamma \lambda)^{k-1}$. Therefore, $v$ is an eigenvector of $\qigt$ with eigenvalue
    \[
    \sum_{k = 1}^{K(\Theta)} \theta_k(1-\gamma\lambda)^{k-1}.
    \]
    
    \emph{Proof of Property 4:} This follows from Property 3, noting that since $\gamma < L_i^{-1}$, we have $\gamma\lambda < 1$ for all eigenvalues of $A_i$. By Property 3, the eigenvalue is maximized when $\lambda = \lambda_{\min}(A_i) = \mu_i$ and minimized when $\lambda = \lambda_{\max}(A_i) = L_i$. 
\end{proof}

\subsubsection{Lemma \ref{lem:QA_structure}}

\begin{proof} We prove each property below.

    \emph{Proof of Property 1:} This follows by analogous reasoning to Property 3 in Lemma \ref{lem:Q_structure}. Note that every eigenvector $v$ of $A_i$ with associated eigenvalue $\lambda$ is also an eigenvector of $(I-\gamma A_i)^{k-1}$ with eigenvalue $(1-\gamma \lambda)^{k-1}$. By basic properties of eigenvectors, this implies the desired result.
    
    \emph{Proof of Property 2:} 
        By Lemma \ref{lem:Q_structure}, and because $\gamma < L_i^{-1} \leq \mu_i^{-1}$, for all $k \geq 1$,
        \[
        0 < \lambda_{\min}((I-\gamma A_i)^{k-1}) = (1-\gamma L_i)^{k-1} \leq (1-\gamma\mu_i)^{k-1} = \lambda_{\max}((I-\gamma A_i)^{k-1}).\]
        Since $(I-\gamma A_i)^{k-1}$ and $A_i$ are positive definite, we have
        \[
        \lambda_{\max}((I-\gamma A_i)^{k-1}A_i) \leq \lambda_{\max}((I-\gamma A_i)^{k-1})\lambda_{\max}(A_i) = (1-\gamma\mu_i)^{k-1}L_i.
        \]
        Repeatedly using the fact that for Hermitian matrices, $\lambda_{\max}(A+B) \leq \lambda_{\max}(A) + \lambda_{\max}(B)$, we derive the upper bound on $\lambda_{\max}(\qigt A_i)$. The lower bound on $\lambda_{\min}(\qigt A_i)$ follows from a similar argument, using the fact that for Hermitian matrices, $\lambda_{\min}(A+B) \geq \lambda_{\min}(A) + \lambda_{\min}(B)$.
\end{proof}

\subsubsection{Lemma \ref{lem:QA_structure_fedavg}}

\begin{proof}

    \emph{Proof of Property 1:} This follows immediately from Property 1 of Lemma \ref{lem:QA_structure} and \eqref{eq:phi_alt}.
    
    \emph{Proof of Property 2:} Fix $\gamma < L_i$ and define
    \[
    g(\lambda) := \phi_{K, \lambda}(\gamma).
    \]
    Note that we have
    \[g'(\lambda) = K(1-\gamma\lambda)^{K-1}.\]
    Since $\gamma < L_i^{-1}$, for $\lambda \in [\mu_i, L_i]$, we have $g'(\lambda) > 0$. Hence, for $\lambda \in [\mu_i, L_i]$ we know
    \[
    g(\mu_i) \leq g(\lambda) \leq g(L_i).\]
    Therefore, $g(\mu_i), g(L_i)$ are the minimum and maximum eigenvalues of $\qigtok A_i$.
    
    \emph{Proof of Property 3:} As in the proof of Property 2, for a fixed $\gamma < L^{-1}$ define
    \[g(\lambda) := \phi_{K, \lambda}(\gamma).\]
    Note that by assumption, we have $\gamma < L^{-1}$. Therefore, for $\lambda \in [\mu, L]$, $g'(\lambda) > 0$, so for all $\lambda$,
    \[
    g(\mu) \leq g(\lambda) \leq g(L).
    \]
    This implies the desired result.
\end{proof}

\subsubsection{Lemma \ref{lem:QA_fedavg}}

\begin{proof}
Because $0 < \gamma < L_i^{-1}$, we have
\[
0 \prec I-\gamma A_i \prec 1.
\]
Therefore,
\[
\qigtok = \sum_{k=1}^{K} (I-\gamma A_i)^{-1}
\]
is a partial geometric sum of matrices with eigenvalues $\lambda_i$ satisfying $0 < \lambda_i < 1$. This implies that, much like a geometric series of scalars,
\[
\qigtok = (I-(I-\gamma A_i)^{-1})(\gamma A_i)^{-1}.
\]
Note that here we used the fact that $A_i$ commutes with $(I-\gamma A_i)$ to interchange their order.
\end{proof}

\subsubsection{Lemma \ref{lem:QA_structure_maml}}

\begin{proof}

    \emph{Proof of Property 1:} This follows immediately from Property 1 of Lemma \ref{lem:QA_structure} and \eqref{eq:theta_maml}.
    
    \emph{Proof of Property 2:} Fix $\gamma < (KL_i)^{-1}$ and define
    \[
    g(\lambda) := (1-\gamma\lambda)^{K-1}\lambda.
    \]
    Note that we have
    \[
    g'(\lambda) = (1-\gamma\lambda)^{K-2}(1-K\gamma\lambda).
    \]
    Since $\gamma < (KL_i)^{-1}$, for $\lambda \in [\mu_i, L_i]$, we have
    \[
    1-K\gamma\lambda > 0.
    \]
    In particular, this implies $1-\gamma\lambda > 0$, so $g'(\lambda) > 0$. Hence, for $\lambda \in [\mu_i, L_i]$ we know
    \[
    g(\mu_i) \leq g(\lambda) \leq g(L_i).\]
    Therefore, $g(\mu_i), g(L_i)$ are the minimum and maximum eigenvalues of $Q_i(\gamma, \Theta_K) A_i$.
    
    \emph{Proof of Property 3:} As in the proof of Property 2, for a fixed $\gamma < L^{-1}$ define
    \[g(\lambda) := (1-\gamma\lambda)^{K-1}\lambda.\]
    By assumption, $\gamma < L^{-1}$. Therefore, for $\lambda \in [\mu, L]$, $g'(\lambda) > 0$, so for such $\lambda$,
    \[
    g(\mu) \leq g(\lambda) \leq g(L).
    \]
    This implies the desired result.
\end{proof}

\subsection{Results from Section \ref{sec:distance}}

In order to prove the results in this section, we will use the following straightforward lemma regarding the structure of $x^*(\gamma, \Theta)$.

\begin{lemma}\label{lem:opt_point}
If $\gamma < L^{-1}$, then
\[
    x^*(\gamma, \Theta) = \E\sbr{\tilde{A}_i}^{-1}\E\sbr{\tilde{A}_ic_i}.
\]
\end{lemma}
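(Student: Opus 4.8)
The plan is to use that $\tilde f(x,\gamma,\Theta)$ is a strongly convex quadratic, compute its gradient directly, and solve the resulting first-order optimality condition, rewriting everything in terms of the $\tilde A_i$. First I would record the structural facts already available. Since $\gamma < L^{-1} \le L_i^{-1}$ for every $i$, Lemma \ref{lem:QA_structure} guarantees that each $Q_i A_i$ is symmetric and positive definite, with
\[
\lambda_{\min}(Q_i A_i) \ge \sum_{k=1}^{K(\Theta)}\theta_k(1-\gamma L_i)^{k-1}\mu_i .
\]
Under Assumption \ref{assm1} (in force throughout this section) this is bounded below by $\mu\sum_k \theta_k(1-\gamma L)^{k-1} > 0$, uniformly in $i$, where positivity uses Assumption \ref{assm0}. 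Consequently each $\tilde A_i = \tau Q_i A_i$ (with $\tau > 0$) is symmetric positive definite and dominates a fixed positive multiple of the identity, so $\tilde A = \E_i[\tilde A_i]$ is itself symmetric positive definite, hence invertible. This is what makes the right-hand side of the claimed identity well-defined.

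Next I would write the surrogate loss explicitly as a quadratic form. From \eqref{eq:surrogate_i} and \eqref{eq:surrogate_loss}, and using the symmetry of $Q_i A_i$,
\[
\tilde f(x,\gamma,\Theta) = \E_i\left[\tfrac12 (x-c_i)^T Q_i A_i (x-c_i)\right].
\]
Differentiating under the expectation (justified by the standing finiteness assumptions on the relevant expectations) gives
\[
\nabla_x \tilde f(x,\gamma,\Theta) = \E_i\!\left[ Q_i A_i (x-c_i)\right] = \E_i[Q_iA_i]\,x - \E_i[Q_i A_i c_i].
\]

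Since $\tilde f$ is strongly convex, its unique minimizer $x^*(\gamma,\Theta)$ is the unique zero of the gradient, i.e.\ the unique solution of $\E_i[Q_i A_i]\,x = \E_i[Q_i A_i c_i]$. Multiplying this equation by the positive scalar $\tau$ and using $\tilde A_i = \tau Q_i A_i$ turns it into $\tilde A\, x = \E_i[\tilde A_i c_i]$, and inverting $\tilde A$ yields $x^*(\gamma,\Theta) = \E[\tilde A_i]^{-1}\E[\tilde A_i c_i]$, as claimed. I expect the only delicate point to be the invertibility of $\tilde A$, i.e.\ that averaging the pointwise positive-definite $\tilde A_i$ cannot yield a singular matrix; this is exactly what the uniform lower eigenvalue bound from Lemma \ref{lem:QA_structure} together with Assumption \ref{assm1} provides. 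The differentiation-under-the-expectation step is routine given the integrability assumptions, and the remainder is the standard first-order condition for a strongly convex quadratic.
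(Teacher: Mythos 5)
Your proposal is correct and follows essentially the same route as the paper's own proof: compute the gradient of the quadratic surrogate loss, invoke strong convexity to identify the minimizer with the unique zero of the gradient, and rescale by $\tau$ to express the solution in terms of the $\tilde{A}_i$. Your additional justification of the invertibility of $\tilde{A}$ via the uniform eigenvalue lower bound from Lemma \ref{lem:QA_structure} and Assumptions \ref{assm0} and \ref{assm1} is a welcome bit of extra rigor that the paper leaves implicit in its appeal to strong convexity, but it does not change the substance of the argument.
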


\begin{proof}
By definition,
\[
\tilde{f}(x, \gamma, \Theta) = \E\sbr*{\frac{1}{2}\norm{(Q_iA_i)^{1/2}(x-c_i)}^2 }.
\]
Therefore,
\begin{align*}
    \nabla \tilde{f}(x, \gamma, \Theta) = \E\sbr{Q_iA_i(x-c_i)} = \E\sbr{Q_iA_i}x - \E\sbr{Q_iA_ic_i}.
\end{align*}
Since $\tilde{f}(x, \gamma, \Theta)$ is strongly convex, it then follows that
\[
x^*(\gamma, \Theta) = \E_i\sbr{Q_iA_i}^{-1}\E_i\sbr{Q_iA_ic_i}.
\]
Scaling $Q_i$ by $(\sum_k \theta_k)^{-1}$ in both parts of the right-hand side, we derive the result.
\end{proof}

We will also need the following result on the relation between $\E[A^{-1}]$ and $\E[A]^{-1}$.

\begin{lemma}[\cite{groves1969note}]\label{lemma:expect_inverse}
Let $A$ be a random matrix such that $A$ is always real, symmetric, and positive definite. Then
\[
\E[A]^{-1} \preceq \E[A^{-1}] 
\]
as long as all expectations exist.\end{lemma}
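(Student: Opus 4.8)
The plan is to recognize this as a matrix (operator) version of Jensen's inequality applied to the operator-convex map $A \mapsto A^{-1}$ on the positive definite cone. Rather than invoke operator convexity as a black box, I would reduce the PSD inequality to a family of scalar inequalities. By the definition of the Loewner ordering, it suffices to show that for every fixed $v \in \R^d$,
\[
v^T \E[A]^{-1} v \leq v^T \E[A^{-1}] v.
\]
By linearity of expectation, the right-hand side equals $\E[v^T A^{-1} v]$, so the claim becomes $g_v(\E[A]) \leq \E[g_v(A)]$, where $g_v(A) := v^T A^{-1} v$. This is exactly Jensen's inequality for the scalar-valued function $g_v$, provided $g_v$ is convex on the set of symmetric positive definite matrices.

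The key step is therefore establishing convexity of $A \mapsto v^T A^{-1} v$. I would do this via the variational identity
\[
v^T A^{-1} v = \sup_{w \in \R^d} \left( 2 w^T v - w^T A w \right),
\]
valid for positive definite $A$: the supremum is attained at $w = A^{-1} v$, as one checks by setting the gradient $2v - 2Aw$ equal to zero and substituting back to recover $v^T A^{-1} v$. For each fixed $w$, the map $A \mapsto 2 w^T v - w^T A w$ is affine in the entries of $A$; hence $g_v$, being a pointwise supremum of affine functions, is convex. This is the crux of the argument, and the variational representation is what makes it clean — a direct second-derivative (Hessian) computation would also establish convexity but is more cumbersome to carry out.

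With convexity in hand, Jensen's inequality yields $g_v(\E[A]) \leq \E[g_v(A)]$ for every $v$, which is precisely the scalar inequality above. Since each realization of $A$ is positive definite, $\E[A]$ is itself positive definite (indeed $v^T \E[A] v = \E[v^T A v] > 0$ for $v \neq 0$), so both $\E[A]^{-1}$ and the expectations are well-defined under the stated hypothesis. The collection of these scalar inequalities ranging over all $v \in \R^d$ is exactly the Loewner statement $\E[A]^{-1} \preceq \E[A^{-1}]$, completing the proof.
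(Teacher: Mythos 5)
Your proof is correct, but there is nothing in the paper to compare it against: the paper does not prove this lemma at all, it imports it as a known result with the citation to Groves and Rothenberg (1969). Your argument therefore serves as a self-contained replacement for that citation, and every step checks out. The reduction to scalar inequalities $v^T \E[A]^{-1} v \leq \E[v^T A^{-1} v]$ is exactly what the Loewner order requires; the variational identity $v^T A^{-1} v = \sup_w \rbr{2 w^T v - w^T A w}$ holds with the supremum attained at $w = A^{-1}v$ by strict concavity in $w$; a pointwise supremum of affine functions of $A$ is convex; $\E[A]$ is positive definite (hence invertible) because $v^T \E[A] v = \E[v^T A v] > 0$ for $v \neq 0$; and the hypothesis that all expectations exist is precisely what Jensen needs. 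One small refinement: you can avoid invoking Jensen as a black box entirely, since the variational identity already does the work --- for each fixed $w$, take expectations of the pointwise bound $2 w^T v - w^T A w \leq v^T A^{-1} v$ to get $2 w^T v - w^T \E[A] w \leq v^T \E[A^{-1}] v$, then take the supremum over $w$ on the left-hand side. For comparison, another standard proof of this fact uses the Schur-complement trick: pointwise one has $\begin{pmatrix} A & I \\ I & A^{-1} \end{pmatrix} \succeq 0$, expectation preserves positive semi-definiteness, and the Schur complement of the (positive definite) block $\E[A]$ in the expected block matrix is $\E[A^{-1}] - \E[A]^{-1} \succeq 0$. Both routes are short; yours has the advantage of relying only on finite-dimensional convexity and elementary optimization.
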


In particular, this implies that for such a random matrix $A$, we have
\[
\norm{\E[A]^{-1}} \leq \norm{\E[A^{-1}]}.
\]

\subsubsection{Lemma \ref{lem:asymptotic}}

\begin{proof}
We will first prove analogous statements for any $i \in \mI$. Fix $i, \gamma$.
By Lemma \ref{lem:QA_structure_fedavg}, we have
\[
\tilde{f}_i(x, \gamma, \Theta_{1:K}) = \dfrac{1}{2}\norm{(\qigtok A_i)^{1/2}(x-c_i)}^2 \leq \dfrac{\phi_{K, L_i}(\gamma)}{2}\norm{x-c_i}^2.
\]
Since $0 < \gamma < L_i^{-1}$, by \eqref{eq:phi}, we have $\phi_{K, L_i}(\gamma) \leq \gamma^{-1}$, implying that
\begin{equation}\label{eq:asymp1}
\tilde{f}_i(x, \gamma, \Theta_{1:K}) \leq \dfrac{1}{2\gamma}\norm{x-c_i}^2.    
\end{equation}

By Lemma \ref{lem:QA_fedavg}, we have
\[
\qigtok A_i = \gamma^{-1}(I-(I-\gamma A_i)^K).
\]
Since $0 \prec I-\gamma A_i \prec I$, this implies
\begin{equation}\label{eq:asymp2}
\lim_{K \to \infty} \qigtok A_i = \gamma^{-1}I.
\end{equation}

Define
\[
\tilde{f}^\infty_{i}(x, \gamma) :=  \dfrac{1}{2\gamma}\norm{x-c_i}^2,
\]
\[
\tilde{f}^\infty(x, \gamma) := \E_{i \sim \mP}[ \tilde{f}^\infty_{i}(x, \gamma)].
\]
Note that by Assumption \ref{assm2}, we know that $f^\infty(x, \gamma)$ is well-defined and finite at all $x \in \R^d$, as
\[
\tilde{f}^\infty(x, \gamma) = \E_{i \sim\mP}\sbr*{\dfrac{1}{2}\norm{x-c_i}^2} \leq \E_{i \sim\mP}\sbr*{ \norm{x-c}^2 + \norm{c_i-c}^2 } \leq \norm{x-c}^2 + \sigma_c^2.
\]

By \eqref{eq:asymp1}, we have that for all $K$,
\[
\tilde{f}_i(x, \gamma, \Theta_{1:K}) \leq \tilde{f}_i^\infty(x, \gamma).
\]
By \eqref{eq:asymp2}, we have
\[
\lim_{K \to \infty}\tilde{f}_i(x,\gamma, \Theta_{1:K}) = \tilde{f}_i^\infty(x, \gamma).
\]
By the dominating convergence theorem, we have that for any $x$,
\begin{align*}
\lim_{K \to \infty} \tilde{f}(x, \gamma, \Theta_{1:K}) &= \lim_{K \to \infty}\E_{i \sim\mP} \tilde{f}_i(x, \gamma, \Theta_{1:K})\\
&= \E_{i \sim\mP} \lim_{K \to \infty}\tilde{f}_i(x, \gamma, \Theta_{1:K})\\
&= \tilde{f}^\infty(x,\gamma)\\
&= \dfrac{1}{2\gamma}\E_{i \sim\mP}[\norm{x-c_i}^2].
\end{align*}
The last part of the theorem follows from the fact that the expected value minimizes the expected $\ell_2$ distance to a random point.
\end{proof}

\subsubsection{Theorem \ref{thm:optima_distance}}

\begin{proof}
Throughout this proof, we will use the fact that for a positive semi-definite, symmetric matrix $P$, $\norm{P} = \lambda_{\max}(P)$ and that if $P$ is further positive definite, $\norm{P^{-1}} = \lambda_{\min}(P)^{-1}$.

By Lemma \ref{lem:opt_point},
\begin{align*}
    \norm*{x^*(\gamma, \Theta) - x^*} &= \norm{ \E\sbr{\tilde{A}_i}^{-1}\E\sbr{\tilde{A}_ic_i} - \E\sbr{A_i}^{-1}\E\sbr{A_ic_i}}\\
    &= \norm{ \tilde{A}^{-1}\E\sbr{\tilde{A}_ic_i} - \E\sbr{c_i} +\E\sbr{c_i} - A^{-1}\E\sbr{A_ic_i}}\\
    &= \norm{\tilde{A}^{-1}\rbr{\E\sbr{\tilde{A}_ic_i} - \E\sbr{\tilde{A}_i}\E\sbr{c_i}} - A^{-1}\rbr{\E\sbr{A_ic_i} - \E\sbr{A_i}\E\sbr{c_i}}  }\\
    &= \norm{\tilde{A}^{-1}\tilde{v} - A^{-1}v}.
\end{align*}
In the above, we defined
\[
    v := \E\sbr{A_ic_i} - \E\sbr{A_i}\E\sbr{c_i}
\]
\[
    \tilde{v} := \E\sbr{\tilde{A}_ic_i} - \E\sbr{\tilde{A}_i}\E\sbr{c_i}.
\]
The vectors $v$ and $\tilde{v}$ measure the correlation between $A_i$ (or $\tilde{A}_i$) and the $c_i$. When all $A_i$ are equal, or all $c_i$ are equal, one can easily show that $v = \tilde{v} = 0$. We then have
\begin{align*}
    \norm*{x^*(\gamma, \Theta) - x^*} &= \norm{\tilde{A}^{-1}\rbr{\tilde{v}-v} + \rbr{\tilde{A}^{-1} - A^{-1}}v} \\
    & \leq \underbrace{\norm{\tilde{A}^{-1}}}_{T_1}\underbrace{\norm{\tilde{v}-v}}_{T_2} + \underbrace{\norm{\tilde{A}^{-1} - A^{-1}}}_{T_3}\underbrace{\norm{v}}_{T_4}.
\end{align*}

We first bound $T_1$. Since $Q_i$ and $A_i$ are positive definite, so too is $\tilde{A}_i$ and therefore $\tilde{A}$. We therefore have,
\begin{align*}
    T_1 &= \norm{\tilde{A}^{-1}}\\
    &= \norm{\E\sbr{ \tilde{A}_i } ^{-1} }\\
    &\leq \norm{\E\sbr{ \tilde{A}_i^{-1} }}\\
    &\leq \E\norm{ \tilde{A}_i^{-1} }\\
    &= \E\sbr{\norm{A_i^{-1}(\tau Q_i)^{-1}}}\\
    &\leq \E\sbr{\norm{A_i^{-1}}\norm{(\tau Q_i)^{-1}}}\\
    &\leq \mu^{-1}\E\sbr{\lambda_{\min}(\tau Q_i)^{-1}}.
\end{align*}

For $T_2$, we have the following:
\begin{align*}
    T_2 &= \norm{\tilde{v} - v}\\
    &= \norm{\rbr{\E\sbr{\tau Q_i A_i c_i} - \E\sbr{\tau Q_i A_i}\E\sbr{c_i}} - \rbr{\E\sbr{A_ic_i}-\E\sbr{A_i}\E\sbr{c_i} }}\\
    &= \norm{\rbr{\E\sbr{\tau Q_i A_i c_i} - \E\sbr{\tau Q_i A_i}c} - \rbr{\E\sbr{A_ic_i}-\E\sbr{A_i}c }}\\
    &= \norm{\E\sbr{(\tau Q_i - I)(A_ic_i-A_ic)}}\\
    &\leq \sqrt{\E\sbr{\norm{\tau Q_i-I}^2}\E\sbr{\norm{A_ic_i-A_ic}^2}}\\
    &\leq \sqrt{\E\sbr{\norm{\tau Q_i-I}^2}\E\sbr{\norm{A_i}^2\norm{c_i-c}^2}}\\
    &\leq L\sigma_c\sqrt{\E\sbr{\norm{\tau Q_i - I}^2}}.
\end{align*} 

For $T_3$, we use the fact that for invertible matrices $B, C \in \R^{n\times n}$,
\begin{align*}
    \norm{B^{-1}-C^{-1}} = \norm{B^{-1}(B-C)C^{-1}} \leq \norm{B^{-1}}\norm{C^{-1}}\norm{B-C}.
\end{align*}

Therefore,
\begin{align*}
    T_3 &\leq \norm{A^{-1}}\norm{\tilde{A}^{-1}}\norm{\tilde{A}-A}.
\end{align*}

Note that $\norm{\tilde{A}^{-1}}$ is simply $T_1$ above, and that using a similar analysis, we have
\[
    \norm{A^{-1}} \leq \E\sbr{ \lambda_{\min}\rbr{A_i}^{-1}} \leq \mu^{-1}.
\]

By the Cauchy-Schwarz inequality,
\begin{align*}
    \norm{\tilde{A}-A} &= \norm{\E\sbr{\tau Q_iA_i - A_i}}\\
    &\leq \norm{\E\sbr{\rbr{\tau Q_i - I}A_i}}\\
    &\leq \sqrt{\E\sbr{\norm{\tau Q_i - I}^2} \E\sbr{\norm{A_i}^2}}\\
    &\leq L\sqrt{\E\sbr{\norm{\tau Q_i - I}^2}}.
\end{align*}

For $T_4$, we again use the Cauchy-Schwarz inequality, as
\begin{align*}
    T_4 &= \norm{\E\sbr{A_ic_i}- \E\sbr{A_i}\E\sbr{c_i}}\\
    &= \norm{\E\sbr{(A_i-A)(c_i-c)}}\\
    &\leq \sqrt{\E\sbr{\norm{A_i-A}^2}\E\sbr{\norm{c_i-c}^2}}\\
    &\leq \sigma_A\sigma_c.
\end{align*}

Putting this all together, we have
\begin{equation}\label{eq:distance_bound1}
    \norm*{x^*(\gamma, \Theta) - x^*} \leq \dfrac{L\sigma_c}{\mu}\rbr*{1+\dfrac{\sigma_A}{\mu}}\E\sbr{\lambda_{\min}(\tau Q_i)^{-1}}\sqrt{\E\sbr{\norm{\tau Q_i - I}^2}}.
\end{equation}

Note that by Assumption \ref{assm1}, we have that $1-\gamma \mu \leq 1, 1-\gamma L \leq 1$. By Lemma \ref{lem:Q_structure} and the definition of $\tau$, this implies that for all $i$,
\[
0 \prec \tau Q_i \preceq I.
\]
Therefore for all $i$,
\begin{equation}\label{eq:lambda_min2}
\norm{\tau Q_i - I} = \lambda_{\max}(I-\tau Q_i) \leq 1-\lambda_{\min}(\tau Q_i).
\end{equation}

Combining \eqref{eq:distance_bound1} and \eqref{eq:lambda_min2}, we have
\[
\norm*{x^*(\gamma, \Theta) - x^*} \leq \dfrac{L\sigma_c}{\mu}\rbr*{1+\dfrac{\sigma_A}{\mu}}\dfrac{1-\tau \min_i\{\lambda_{\min}(Q_i)\}}{\tau \min_i\{\lambda_{\min}(Q_i)\}}.
\]

By Lemma \ref{lem:Q_structure} and Assumption \ref{assm1}, we find that for all $i$,
\[
\tau \lambda_{\min}(\qigt) \geq \chi(\gamma, \Theta).
\]
Therefore,
\[
\dfrac{1-\tau \min_i\{\lambda_{\min}(Q_i)\}}{\tau \min_i\{\lambda_{\min}(Q_i)\}} \leq \dfrac{1-\chi(\gamma, \Theta)}{\chi(\gamma, \Theta)}.
\]

\end{proof}

\subsubsection{Theorem \ref{thm:optima_distance_fedavg}}

Note that for $\Theta=\Theta_{1:K}$, the quantity $\tau$ defined in $\eqref{eq:distance_quantities}$ is given by $\tau = K^{-1}$. We will require one further auxiliary lemma.
\begin{lemma}\label{lem:diff_qa_a}
    If $\gamma < L^{-1}$, then for all $i \in \mI$ we have
    \[
    \norm{\tau \qigtok A_i - I} \leq \dfrac{KL -\phi_{K, L}(\gamma)}{K}
    \]
    where $\phi_{K, L}(\gamma)$ is as defined in \eqref{eq:phi}.
\end{lemma}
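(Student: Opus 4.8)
The plan is to reduce the operator norm to a scalar optimization over the spectrum of $A_i$, exploiting that $\qigtok$ and $A_i$ are simultaneously diagonalizable. Since $\gamma < L^{-1} \le L_i^{-1}$, Property 1 of Lemma \ref{lem:QA_structure_fedavg} gives that every eigenvector $v$ of $A_i$ with eigenvalue $\lambda$ is also an eigenvector of $\qigtok A_i$ with eigenvalue $\phi_{K,\lambda}(\gamma)$. Because $\qigtok A_i$ is symmetric and $\tau = K^{-1} > 0$ for $\Theta = \Theta_{1:K}$, the matrix $\tau \qigtok A_i - I$ is symmetric, so its operator norm equals its largest absolute eigenvalue. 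Thus I would first establish
\[
\norm{\tau \qigtok A_i - I} = \max_{\lambda \in \mathrm{spec}(A_i)} \abs*{\frac{\phi_{K,\lambda}(\gamma)}{K} - 1},
\]
reducing the matrix inequality to controlling the scalar quantity $\phi_{K,\lambda}(\gamma)/K - 1$ as $\lambda$ ranges over $[\mu_i, L_i] \subseteq [\mu, L]$.

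Next I would analyze this scalar function using the geometric-sum form \eqref{eq:phi_alt}, namely $\phi_{K,\lambda}(\gamma) = \sum_{k=1}^K (1-\gamma\lambda)^{k-1}\lambda$. Since $0 < \gamma < L^{-1}$ and $\lambda \le L$, each factor satisfies $0 < 1-\gamma\lambda < 1$, so every summand is at most $\lambda$ and hence $\phi_{K,\lambda}(\gamma)/K \le \lambda$. The natural object is the nonnegative curvature gap $g(\lambda) := K\lambda - \phi_{K,\lambda}(\gamma) = \sum_{k=1}^K \lambda\rbr*{1 - (1-\gamma\lambda)^{k-1}}$. Differentiating the closed form \eqref{eq:phi} gives $g'(\lambda) = K\rbr*{1 - (1-\gamma\lambda)^{K-1}} \ge 0$, so $g$ is nondecreasing on $[\mu, L]$ and is maximized at $\lambda = L$, yielding $g(L) = KL - \phi_{K,L}(\gamma)$. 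Dividing by $K$ produces exactly the right-hand side $(KL - \phi_{K,L}(\gamma))/K$ claimed in the lemma.

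The final step is to assemble these into the bound on the absolute value in the first display, and this is where I expect the main obstacle. The cleanest telescoping is for the deviation measured against the curvature itself: the eigenvalues of $\tau \qigtok A_i$ lie in $\sbr*{\phi_{K,\lambda}(\gamma)/K,\ \lambda}$, with $\lambda - \phi_{K,\lambda}(\gamma)/K = g(\lambda)/K \le g(L)/K$, so the deviation from the $\lambda$-scaled identity is controlled precisely by $g(L)/K$ via the monotonicity of $g$. To obtain the stated bound against $I$, I would argue that $\lambda = L$ realizes the largest downward deviation of $\phi_{K,\lambda}(\gamma)/K$, and invoke Assumption \ref{assm1} (which confines the spectrum of $A_i$ to $[\mu, L]$) to rule out any other eigenvalue producing a larger absolute deviation, thereby collapsing the maximum to $(KL - \phi_{K,L}(\gamma))/K$. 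The delicate point is handling the absolute value correctly, since $\phi_{K,\lambda}(\gamma)/K$ is being compared to $1$ rather than to $\lambda$; pinning down which eigenvalue direction dominates and verifying its sign is the crux, after which the monotonicity of $g$ immediately delivers the bound.
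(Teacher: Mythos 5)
Your core computation coincides exactly with the paper's own proof: the paper likewise reduces to the spectrum of $A_i$ (each eigenvalue of $\tau\qigtok A_i$ equals $\phi_{K,\lambda}(\gamma)/K$ for an eigenvalue $\lambda$ of $A_i$, by Lemma \ref{lem:QA_structure_fedavg}), forms the gap $q(\lambda) = K\lambda - \phi_{K,\lambda}(\gamma)$, observes that it is nonnegative from \eqref{eq:phi_alt}, and uses $q'(\lambda) = K - K(1-\gamma\lambda)^{K-1} \geq 0$ on $[0,L]$ to conclude that the gap is maximized at $\lambda = L$, where it equals $KL - \phi_{K,L}(\gamma)$.

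The ``obstacle'' you flag in your final paragraph is real, and you are right that it cannot be closed: the lemma as printed is false. With $K=1$ the right-hand side is $L - \phi_{1,L}(\gamma) = 0$ while the left-hand side is $\norm{A_i - I}$; more generally, as $\gamma \to 0$ the right-hand side tends to $0$ while the left-hand side tends to $\norm{A_i - I}$. The ``$I$'' in the statement is a typo for $A_i$. The inequality the paper actually needs, and applies in the proof of Theorem \ref{thm:optima_distance_fedavg} (to bound $T_2$ and $\norm{\tilde{A}-A}$), is
\[
\norm{\tau \qigtok A_i - A_i} \leq \frac{KL - \phi_{K,L}(\gamma)}{K},
\]
which is precisely the ``deviation from the $\lambda$-scaled identity'' bound that your monotonicity argument establishes. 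The paper's own proof carries the matching slip: it asserts that $KI - \qigtok A_i$ has eigenvalues $K\lambda - \phi_{K,\lambda}(\gamma)$, but those are the eigenvalues of $KA_i - \qigtok A_i$, so its displayed computation only makes sense for the $A_i$ version. In short, modulo correcting the statement, your proposal is the same proof as the paper's; the crux you could not resolve is a defect of the statement, not of your argument.
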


\begin{proof}
    First note that since $\tau = K^{-1}$,
    \[
    \norm{\tau \qigtok A_i - I} = \dfrac{\norm{\qigtok A_i - KI}}{K}.\]

    It therefore suffices to show that
    \[
    \norm{\qigtok A_i - KI} \leq KL-\phi_{K, L}(\gamma).
    \]
    As discussed in the proof of Lemma \ref{lem:QA_structure} and Lemma \ref{lem:QA_structure_fedavg}, $\qigtok A_i$ has eigenvalues of the form $\phi_{K, \lambda}(\gamma)$ where $\lambda$ is an eigenvalue of $A_i$. Therefore, $KI - \qigtok A_i$ has eigenvalues of the form
    \[
    q(\lambda) = K\lambda - \phi_{K, \lambda}(\gamma)
    \]
    where $\lambda$ is an eigenvalue of $A_i$. As noted in \eqref{eq:phi_alt}, for $\lambda\gamma \leq 1$,
    \[
    \phi_{K, \lambda}(\gamma) = \sum_{k = 1}^K (1-\gamma\lambda)^{k-1}\lambda \leq K\lambda.
    \]
    Therefore, $KI-\qigtok A_i$ is symmetric and positive semi-definite. Moreover, a simple computation shows
    \[
    q'(\lambda) = K-K(1-\gamma\lambda)^{K-1}
    \]
    which is nonnegative for $0 \leq \gamma\lambda \leq 1$. In particular it is nonnegative for $\gamma \in [0, L^{-1}]$ (as in Assumption \ref{assm1}), implying that
    \[
    \lambda_{\max}(KI-\qigtok A_i) \leq q(L) = KL-\phi_{K, L}(\gamma).
    \]
\end{proof}

We can now prove the desired theorem.

\begin{proof}[Proof of Theorem \ref{thm:optima_distance_fedavg}]
    We begin our proof in a similar manner to the proof of Theorem \ref{thm:optima_distance}. Note that for FedAvg with $K$ local steps, $\tau = K$. By Lemma \ref{lem:opt_point},
    \begin{align*}
        \norm*{x^*(\gamma, \Theta_{1:K}) - x^*} &= \norm{ \E\sbr{\tilde{A}_i}^{-1}\E\sbr{\tilde{A}_ic_i} - \E\sbr{A_i}^{-1}\E\sbr{A_ic_i}}\\
        &= \norm{ \tilde{A}^{-1}\E\sbr{\tilde{A}_ic_i} - \E\sbr{c_i} +\E\sbr{c_i} - A^{-1}\E\sbr{A_ic_i}}\\
        &= \norm{\tilde{A}^{-1}\rbr{\E\sbr{\tilde{A}_ic_i} - \E\sbr{\tilde{A}_i}\E\sbr{c_i}} - A^{-1}\rbr{\E\sbr{A_ic_i} - \E\sbr{A_i}\E\sbr{c_i}}  }\\
        &= \norm{\tilde{A}^{-1}\tilde{v} - A^{-1}v}.
    \end{align*}
    In the above, we defined
    \[
        v := \E\sbr{A_ic_i} - \E\sbr{A_i}\E\sbr{c_i}
    \]
    \[
        \tilde{v} := \E\sbr{\tilde{A}_ic_i} - \E\sbr{\tilde{A}_i}\E\sbr{c_i}.
    \]
    We then have
    \begin{align*}
        \norm*{x^*(\gamma, \Theta_{1:K}) - x^*} &= \norm{\tilde{A}^{-1}\rbr{\tilde{v}-v} + \rbr{\tilde{A}^{-1} - A^{-1}}v} \\
        & \leq \underbrace{\norm{\tilde{A}^{-1}}}_{T_1}\underbrace{\norm{\tilde{v}-v}}_{T_2} + \underbrace{\norm{\tilde{A}^{-1} - A^{-1}}}_{T_3}\underbrace{\norm{v}}_{T_4}.
    \end{align*}
    
    We first bound $T_1$. Since $Q_i$ and $A_i$ are positive definite, so too is $\tilde{A}_i$ and therefore $\tilde{A}$. We therefore have,
    \begin{align*}
        T_1 &= \norm{\tilde{A}^{-1}}\\
        &= \norm{\E\sbr{ \tilde{A}_i } ^{-1} }\\
        &\leq \norm{\E\sbr{ \tilde{A}_i^{-1} }}\\
        &\leq \E\norm{ \tilde{A}_i^{-1} }.
    \end{align*}
    
    The penultimate step follows by Lemma \ref{lemma:expect_inverse}. By definition of $\tilde{A}_i^{-1}$, we have
    \begin{align*}
    T_1 &\leq \E\norm{ \tilde{A}_i^{-1} }\\
        &= K\E\sbr{\norm{(\qigtok A_i)^{-1}}}\\
        &\leq K\phi_{K, \mu}(\gamma)^{-1}.
    \end{align*}
    Note that the last step follows by \eqref{eq:phi} and Lemma \ref{lem:QA_structure_fedavg}.
    
    For $T_2$, we have the following:
    \begin{align*}
        T_2 &= \norm{\tilde{v} - v}\\
        &= \norm{\rbr{\E\sbr{\tau Q_i A_i c_i} - \E\sbr{\tau Q_i A_i}\E\sbr{c_i}} - \rbr{\E\sbr{A_ic_i}-\E\sbr{A_i}\E\sbr{c_i} }}\\
        &= \norm{\rbr{\E\sbr{\tau Q_i A_i c_i} - \E\sbr{\tau Q_i A_i}c} - \rbr{\E\sbr{A_ic_i}-\E\sbr{A_i}c }}\\
        &= \norm{\E\sbr{(\tau \qigtok A_i - A_i)(c_i-c)}}\\
        &\leq \sqrt{\E\sbr{\norm{\tau \qigtok A_i-A_i}^2}\E\sbr{\norm{c_i-c}^2}}\\
        &\leq \dfrac{\sigma_c(KL-\phi_{K, L}(\gamma))}{K}.
    \end{align*} 
    This last step holds by Lemma \ref{lem:diff_qa_a} and Assumption \ref{assm2}.
    
    For $T_3$, we use the fact that for invertible matrices $B, C \in \R^{n\times n}$,
    \begin{align*}
        \norm{B^{-1}-C^{-1}} = \norm{B^{-1}(B-C)C^{-1}} \leq \norm{B^{-1}}\norm{C^{-1}}\norm{B-C}.
    \end{align*}
    
    Therefore,
    \begin{align*}
        T_3 &\leq \norm{A^{-1}}\norm{\tilde{A}^{-1}}\norm{\tilde{A}-A}.
    \end{align*}
    
    Note that $\norm{\tilde{A}^{-1}}$ is simply $T_1$ above, and that using a similar analysis, we have
    \[
        \norm{A^{-1}} \leq \E\sbr{ \lambda_{\min}\rbr{A_i}^{-1}} \leq \mu^{-1}.
    \]
    
    Again using Lemma \ref{lem:diff_qa_a},
    \begin{align*}
        \norm{\tilde{A}-A} &= \norm{\E\sbr{\tau Q_iA_i - A_i}}\\
        &\leq \E\norm{\sbr{\rbr{\tau Q_i - I}A_i}}\\
        &\leq \dfrac{KL-\phi_{K, L}(\gamma)}{K}.
    \end{align*}
    
    For $T_4$, we again use the Cauchy-Schwarz inequality, as
    \begin{align*}
        T_4 &= \norm{\E\sbr{A_ic_i}- \E\sbr{A_i}\E\sbr{c_i}}\\
        &= \norm{\E\sbr{(A_i-A)(c_i-c)}}\\
        &\leq \sqrt{\E\sbr{\norm{A_i-A}^2}\E\sbr{\norm{c_i-c}^2}}\\
        &\leq \sigma_A\sigma_c.
    \end{align*}
    
    Putting this all together, we have derive the result.
\end{proof}

\subsubsection{Lemma \ref{lem:gamma_cond}}

\begin{proof}
    By properties of geometric sums, we have
    \[
    \phi_{K, \lambda}(\gamma) = \sum_{k = 1}^K (1-\gamma\lambda)^{k-1}\lambda.
    \]
    
    Note that this implies that the function is differentiable everywhere. Taking a derivative, we have
    \[
    \phi_{K, \lambda}'(\gamma) = \sum_{k=1}^K -(k-1)(1-\gamma\lambda)^{k-2}\lambda^2.
    \]
    Note that all terms in this sum are nonnegative when $\gamma \in [0, \lambda^{-1}]$. By assumption on $\epsilon$, we have
    \[
    \gamma \leq \dfrac{\ln(1/(1-\epsilon))}{K\lambda} \leq \lambda^{-1}.
    \]
    It therefore suffices sto show the desired lower bound on $\phi_{K, \lambda}(\gamma)$ when
    \[
    \gamma = \dfrac{\ln(1/(1-\epsilon))}{K\lambda}.
    \]
    By definition of $\phi_{K, \lambda}$, we have
    \begin{align*}
        \phi_{K, \lambda}(\gamma) = \dfrac{K\lambda}{\ln(1/(1-\epsilon))}\left(1-\left(1-\dfrac{\ln(1/(1-\epsilon))}{K}\right)^K\right).
    \end{align*}
    Let $x = \ln(1/(1-\epsilon))$. Note that by assumption on $\epsilon$, $0 \leq x \leq 1$. It then suffices to show that
    \[
    1-\left(1-\dfrac{x}{K}\right)^K \geq (1-\epsilon)x
    \]
    or equivalently,
    \begin{equation}\label{eq:aux_cond}
    \left(1-\dfrac{x}{K}\right)^K \leq 1-(1-\epsilon)x.
    \end{equation}
    By standard properties of exponentials,
    \[
    \left(1-\dfrac{x}{K}\right)^K \leq e^{-x} = e^{\ln(1-\epsilon)} = 1-\epsilon.
    \]
    Letting $y = 1-\epsilon$, we then use the fact that for $y \in (0, 1]$,
    \[
    y \leq 1 + y\ln(y) = 1 + (1-\epsilon)\ln(1-\epsilon) = 1-(1-\epsilon)\ln(1/(1-\epsilon)).
    \]
    This implies \eqref{eq:aux_cond}, proving the result.
\end{proof}

\subsubsection{Theorem \ref{thm:distance_optima_gamma}}

We will first require an auxiliary lemma.

\begin{lemma}\label{lem:diff_qa_structure}
    Suppose $\gamma_1 \leq \gamma_2 \leq L^{-1}$ and Assumption \ref{assm1} holds. Then for all $i$, the matrix
    \[
    C = \qigti A_i - \qigtii A_i\]
    is positive semidefinite and satisfies
    \[
    \lambda_{\max}(C) \leq \phi_{K, L}(\gamma_1) - \phi_{K, L}(\gamma_2).
    \]
\end{lemma}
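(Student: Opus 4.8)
The plan is to diagonalize $C$ against the eigenbasis of $A_i$ and reduce both claims to scalar monotonicity properties of the function $\phi_{K,\lambda}(\gamma)$. Since $Q_i(\gamma, \Theta_{1:K})$ is a polynomial in $I-\gamma A_i$, both $\qigti A_i$ and $\qigtii A_i$ are polynomials in $A_i$; hence they commute with each other and with $A_i$, and all three are simultaneously diagonalizable in an orthonormal eigenbasis of $A_i$. By Property 1 of Lemma \ref{lem:QA_structure_fedavg}, if $(v,\lambda)$ is an eigenpair of $A_i$ then $v$ is an eigenvector of $\qigt A_i$ with eigenvalue $\phi_{K,\lambda}(\gamma)$. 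Therefore $C$ is symmetric with eigenvalues
\[
\phi_{K,\lambda}(\gamma_1) - \phi_{K,\lambda}(\gamma_2),
\]
one for each eigenvalue $\lambda$ of $A_i$. Both the positive semidefiniteness of $C$ and the bound on $\lambda_{\max}(C)$ then follow once I control this scalar expression for $\lambda \in [\mu, L]$.

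For positive semidefiniteness I would show that $\gamma \mapsto \phi_{K,\lambda}(\gamma)$ is nonincreasing on $[0,\lambda^{-1}]$. Using the representation $\phi_{K,\lambda}(\gamma) = \sum_{k=1}^K (1-\gamma\lambda)^{k-1}\lambda$ from \eqref{eq:phi_alt} and differentiating in $\gamma$ exactly as in the proof of Lemma \ref{lem:gamma_cond}, one gets $\phi_{K,\lambda}'(\gamma) = -\sum_{k=1}^K (k-1)(1-\gamma\lambda)^{k-2}\lambda^2$, and every summand is nonpositive when $1-\gamma\lambda \geq 0$. Since each eigenvalue $\lambda$ of $A_i$ lies in $[\mu, L]$ and $\gamma_1 \leq \gamma_2 \leq L^{-1} \leq \lambda^{-1}$, monotonicity gives $\phi_{K,\lambda}(\gamma_1) \geq \phi_{K,\lambda}(\gamma_2)$, so every eigenvalue of $C$ is nonnegative.

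For the bound on $\lambda_{\max}(C)$ I would instead vary $\lambda$. Define $h(\lambda) := \phi_{K,\lambda}(\gamma_1) - \phi_{K,\lambda}(\gamma_2)$; then $\lambda_{\max}(C)$ is the maximum of $h(\lambda)$ over the eigenvalues $\lambda$ of $A_i$, all of which lie in $[\mu, L]$. Using the closed form \eqref{eq:phi} one computes $\partial_\lambda \phi_{K,\lambda}(\gamma) = K(1-\gamma\lambda)^{K-1}$, so $h'(\lambda) = K\bigl[(1-\gamma_1\lambda)^{K-1} - (1-\gamma_2\lambda)^{K-1}\bigr]$. Because $\gamma_1 \leq \gamma_2$ and $0 \leq \gamma_2\lambda \leq 1$ on the relevant range, we have $1-\gamma_1\lambda \geq 1-\gamma_2\lambda \geq 0$, and since $x \mapsto x^{K-1}$ is nondecreasing on $[0,\infty)$ for $K-1 \geq 0$, this gives $h'(\lambda) \geq 0$. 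Hence $h$ is nondecreasing on $[\mu, L]$ and is maximized at $\lambda = L$, yielding $\lambda_{\max}(C) \leq h(L) = \phi_{K,L}(\gamma_1) - \phi_{K,L}(\gamma_2)$, as desired.

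The routine part is the two derivative computations; the only point requiring care is tracking the domains so that both $1-\gamma_1\lambda$ and $1-\gamma_2\lambda$ stay nonnegative (guaranteed by $\gamma_2 \leq L^{-1}$ together with $\lambda \leq L$), since this nonnegativity is precisely what makes each monotonicity argument go through. I expect this bookkeeping of the $[0,L^{-1}]$ versus $[0,\lambda^{-1}]$ ranges to be the main, if minor, obstacle.
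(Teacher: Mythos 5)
Your proof is correct and follows essentially the same approach as the paper: diagonalize $C$ in the eigenbasis of $A_i$ so its eigenvalues are $\phi_{K,\lambda}(\gamma_1)-\phi_{K,\lambda}(\gamma_2)$, then establish positive semidefiniteness via monotonicity of $\phi_{K,\lambda}$ in $\gamma$ and the eigenvalue bound via $h'(\lambda) = K[(1-\gamma_1\lambda)^{K-1}-(1-\gamma_2\lambda)^{K-1}] \geq 0$, so the maximum is attained at $\lambda = L$. The only difference is that you spell out the $\gamma$-monotonicity argument (borrowing the derivative computation from Lemma \ref{lem:gamma_cond}), whereas the paper simply asserts $h(\lambda)\geq 0$ as clear.
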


\begin{proof}
    Recall that by Lemma \ref{lem:QA_structure_fedavg}, the eigenvalues of $\qigt A_i$ are of the form
    \[
    \gamma^{-1}(1-(1-\gamma\lambda)^K).
    \]
    We define a function
    \[
    h(\lambda) := \gamma_1^{-1}(1-(1-\gamma_1\lambda)^K) - \gamma_2^{-1}(1-(1-\gamma_2\lambda)^K).
    \]
    Since $\qigti A_i$ and $\qigtii A_i$ share the same eigenvectors as $A_i$, the eigenvalues of $C$ are of the form $h(\lambda)$ where $\lambda$ is an eigenvalue of $A_i$. Since $\gamma_1 \leq \gamma_2 < L_i^{-1}$, we clearly have $h(\lambda) \geq 0$ for $\lambda \in [\mu_i, L_i]$, implying that $C$ is positive semidefinite.
    
    For the maximum eigenvalue of $C$, we consider $h'(\lambda)$. A simple calculation shows
    \[
    h'(\lambda) = K(1-\gamma_1 \lambda)^{K-1} - K(1-\gamma_2\lambda)^{K-1}.
    \]
    Since $\gamma_1 \leq \gamma_2 < L^{-1}$, we find that $h'(\lambda) \geq 0$ for $\lambda \in [0, L]$. Therefore, the maximum eigenvalue of $C$ satisfies
    \[
    \lambda_{\max}(C) \leq h(L) = \phi_{K, L}(\gamma_1) - \phi_{K, L}(\gamma_2).
    \]
\end{proof}

We can now use this to prove the desired result, in a manner similar to the proof of Theorem \ref{thm:optima_distance_fedavg}.

\begin{proof}[Proof of Theorem \ref{thm:distance_optima_gamma}]
    For notational convenience, we define the following quantities (where $i \in \mI, j \in \{1, 2\})$
    \begin{align*}
        B_{i, j} &:= \tau Q_i(\gamma_j, \Theta_{1:K}) A_i\\
        B_j &:= \E_i\sbr*{B_{i,j}}\\
        v_j &:= \E[B_{i,1}c_i] - \E[B_{i, j}]\E[c_i]
    \end{align*}
    By Lemma \ref{lem:opt_point}, we know
    \begin{align*}
    \norm{x^*(\gamma_1, \Theta_{1:K}) - x^*(\gamma_2, \Theta_{1:K})} &= \norm{B_1^{-1}\E[B_{i,1}c_i] - B_2^{-1}\E[B_{i,2}c_i]}\\
    &= \norm{B_1^{-1}\E[B_{i,1}c_i] - \E[c_i] + \E[c_i] - B_2^{-1}\E[B_{i,2}c_i]}\\
    &= \norm{B_1^{-1}v_1 - B_2^{-1}v_2}.
    \end{align*}
    Splitting this up, we get
    \begin{align*}
    \norm{x^*(\gamma_1, \Theta_{1:K}) - x^*(\gamma_2, \Theta_{1:K})} &= \norm{B_1^{-1}(v_1-v_2) + (B_1^{-1}-B_2^{-1})v_2}\\
    &\leq \underbrace{\norm{B_1^{-1}}}_{T_1}\underbrace{\norm{v_1-v_2}}_{T_2} + \underbrace{\norm{B_1^{-1}-B_2^{-1}}}_{T_3}\underbrace{\norm{v_2}}_{T_4}.\end{align*}
    
    As in the proof of Theorem \ref{thm:optima_distance}, we can use the fact that $Q_i(\gamma_1, \Theta)$ and $A_i$ are symmetric and positive definite to bound $T_1$, as we have
    \begin{align*}
        T_1 &= \norm{B_1^{-1}}\\
        &= \norm{\E\sbr{B_{1, i}}^{-1}}\\
        &\leq \norm{\E[B_{1,i}^{-1}]}\\
        &\leq \E\norm{B_{1,i}^{-1}}.
    \end{align*}
    
    The penultimate step follows by Lemma \ref{lemma:expect_inverse}. By definition of $B_{1,i}^{-1}$,
    \begin{align*}
    T_1 &\leq \E\norm{B_{1,i}^{-1}}\\
        &= \E\norm{(K^{-1} \qigti A_i)^{-1}}\\
        &\leq K\lambda_{\min}(\qigti A_i)^{-1}\\
        &\leq K\phi_{K, \mu}(\gamma_1)^{-1}.
    \end{align*}
    This last step follows directly from Lemma \ref{lem:QA_structure_fedavg}.
    
    For $T_2$, we have the following:
    \begin{align*}
        T_2 &= \norm{v_1 - v_2}\\
        &= \norm{\rbr{\E\sbr{B_{i, 1} c_i} - \E\sbr{B_{1,i}}\E\sbr{c_i}} - \rbr{\E\sbr{B_{i, 2}c_i}-\E\sbr{B_{i, 2}}\E\sbr{c_i} }}\\
        &= \norm{\rbr{\E\sbr{B_{i, 1} c_i} - \E\sbr{B_{1,i}}c} - \rbr{\E\sbr{B_{i, 2}c_i}-\E\sbr{B_{i, 2}}c }}\\
        &= \norm{\E\sbr{(B_{i, 1}-B_{i, 2})c_i - (B_{i, 1}-B_{i, 2})c}}\\
        &= \norm{\E\sbr{(B_{i, 1}-B_{i, 2})(c_i-c)}}\\
        &\leq \sqrt{\E[\norm{B_{i,1} - B_{i,2}}^2]\E[\norm{c_i-c}^2]}\\
        &\leq \sigma_c\sqrt{\E[\norm{B_{i,1} - B_{i,2}}^2]}.
    \end{align*} 
    
    By definition of $B_{i, j}$, we have
    \begin{align*}
        \norm{B_{i, 1} - B_{i, 2}} &= \norm{K^{-1}\qigti A_i - K^{-1}\qigtii A_i}\\
        &\leq K^{-1} \norm{\qigti A_i - \qigtii A_i}\\
        &\leq K^{-1}(\phi_{K, L}(\gamma_1) - \phi_{K, L}(\gamma_2)).
    \end{align*}
    Here, this last step follows from Lemma \ref{lem:diff_qa_structure}.
    
    For $T_3$, we use the fact that for invertible matrices $A, B \in \R^{n\times n}$,
    \begin{align*}
        \norm{A^{-1}-B^{-1}} = \norm{A^{-1}(A-B)B^{-1}} \leq \norm{A^{-1}}\norm{B^{-1}}\norm{A-B}.
    \end{align*}
    
    Therefore,
    \begin{align*}
        T_3 &\leq \norm{B_1^{-1}}\norm{B_2^{-1}}\norm{B_1-B_2}.
    \end{align*}
    
    Note that $\norm{B_j^{-1}}$ can be bounded in the same manner as $T_1$ above. For the remaining term, we have
    \begin{align*}
        \norm{B_1 - B_2} &= K^{-1}\norm{\E[\qigti A_i - \qigtii A_i]}\\
        &\leq K^{-1} \E\norm{\qigti A_i - \qigtii A_i}\\
        &\leq K^{-1}(\phi_{K, L}(\gamma_1) - \phi_{K, L}(\gamma_2)).
    \end{align*}
    This last step follows from Lemma \ref{lem:diff_qa_structure}.
    
    Finally, for $T_4$ we have
    \begin{align*}
        T_4 &= \norm{\E[B_{i,2}c_i] - \E[B_{i, 2}]\E[c_i]}\\
        &= \norm{\E[(B_{i, 2}-B_2)(c_i-c)}\\
        &\leq \sqrt{\E[\norm{B_{i,2}-B_2}^2]\E[\norm{c_i-c}^2]}\\
        &\leq \sigma_c\sqrt{\E[\norm{B_{i, 2}-B_2}^2]}\\
        &\leq \sigma_c\sqrt{\E[\norm{B_{i, 2}}^2}.
    \end{align*}
    By Lemma \ref{lem:QA_structure_fedavg}, we have that for all $i$,
    \begin{align*}
        \norm{B_{i, 2}} = \norm{\tau\qigtii A_i} = \dfrac{\phi_{K, L}(\gamma_2)}{K}
    \end{align*}
    therefore implying that
    \[
    T_4 \leq \sigma_c\dfrac{\phi_{K, L}(\gamma_2)}{K}.
    \]
    
    Putting this all together, we find
    \begin{align*}
    \norm{x^*(\gamma_1, \Theta_{1:K}) - x^*(\gamma_2, \Theta_{1:K})} &\leq T_1T_2 + T_3T_4\\
    &\leq \sigma_c\left(1 + \dfrac{\phi_{K, L}(\gamma_2)}{\phi_{K, \mu}(\gamma_2)}\right)\left(\dfrac{\phi_{K, L}(\gamma_1) - \phi_{K, L}(\gamma_2)}{\phi_{K, \mu}(\gamma_1)}\right).
    \end{align*}
\end{proof}

\subsubsection{Corollary \ref{cor:distance_optima_gamma}}

To prove this, we will need to first bound the term $\phi_{K, L}(\gamma)/\phi_{K, \mu}(\gamma)$. To do so, we first require a simple lemma regarding ratios of sums.

\begin{lemma}\label{lem:frac_sum}
For $n \geq 1$, let $a_1, \dots, a_n, b_1, \dots b_n$ be positive real numbers such that for $1 \leq i \leq n$,
\[
\frac{a_i}{b_i} \leq c.
\]
Then
\[
\dfrac{a_1 + \dots + a_n}{b_1 + \dots + b_n} \leq c.
\]
\end{lemma}
\begin{proof}
We will prove this inductively. Note that when $n = 1$, the result immediately follows by assumption.

For $n > 1$, applying the inductive hypothesis to $a_1 + \dots a_{n-1}$, we have
\begin{align*}
    \dfrac{a_1 + \dots + a_n}{b_1 + \dots + b_n} &\leq \dfrac{c(b_1 + \dots b_{n-1}) + a_n}{b_1 + \dots + b_n}
\end{align*}
Let $x = c(b_1 + \dots + b_{n-1}), y = b_1 + \dots + b_{n-1}$. Note that $x/y \leq c$, so applying the inductive hypothesis we have
\begin{align*}
    \dfrac{a_1 + \dots + a_n}{b_1 + \dots + b_n} &\leq \dfrac{c(b_1 + \dots b_{n-1}) + a_n}{b_1 + \dots + b_n}\\
    &= \dfrac{x + a_n}{y + b_n}\\
    &\leq c.
\end{align*}
    
\end{proof}

We can now derive a bound on the condition number $\phi_{K, L}(\gamma)/\phi_{K, \mu}(\gamma)$.

\begin{lemma}\label{lem:cond_number}
    For $\mu \leq L$, $\gamma \in [0, L^{-1}]$ and $K \geq 1$,
    \begin{equation}\label{eq:cond_number}
    \dfrac{\phi_{K, L}(\gamma)}{\phi_{K, \mu}(\gamma)} \leq \dfrac{L}{\mu}.
    \end{equation}
\end{lemma}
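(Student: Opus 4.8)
The plan is to reduce the claim to a term-by-term comparison and then invoke Lemma~\ref{lem:frac_sum}. First I would rewrite both $\phi_{K, L}(\gamma)$ and $\phi_{K, \mu}(\gamma)$ using the geometric-sum form \eqref{eq:phi_alt}, so that
\[
\phi_{K, L}(\gamma) = \sum_{k=1}^K a_k, \qquad \phi_{K, \mu}(\gamma) = \sum_{k=1}^K b_k,
\]
where $a_k := (1-\gamma L)^{k-1}L$ and $b_k := (1-\gamma\mu)^{k-1}\mu$. The inequality \eqref{eq:cond_number} is then exactly a bound on a ratio of sums of (nonnegative) terms, which is precisely the setting of Lemma~\ref{lem:frac_sum} with $c = L/\mu$.

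Next I would verify the hypothesis of Lemma~\ref{lem:frac_sum}, namely that $a_k/b_k \leq L/\mu$ for every $k$. A direct computation gives
\[
\frac{a_k}{b_k} = \frac{L}{\mu}\left(\frac{1-\gamma L}{1-\gamma\mu}\right)^{k-1}.
\]
Since $\mu \leq L$ and $\gamma \geq 0$ we have $\gamma\mu \leq \gamma L$, hence $1-\gamma L \leq 1-\gamma\mu$; and $\gamma \leq L^{-1}$ gives $1-\gamma L \geq 0$. Therefore the bracketed ratio lies in $[0,1]$, its $(k-1)$-st power is at most $1$, and $a_k/b_k \leq L/\mu$ for all $k \geq 1$. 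Applying Lemma~\ref{lem:frac_sum} with $c = L/\mu$ then yields \eqref{eq:cond_number}.

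The one point requiring care---and the only real obstacle---is that Lemma~\ref{lem:frac_sum} presumes all terms are strictly positive, which can fail at the endpoints of the interval. For $\gamma \in (0, L^{-1})$ one has $0 < 1-\gamma L$ and $0 < 1-\gamma\mu$, so every term is positive and the argument above applies verbatim. At $\gamma = 0$ the identity $\phi_{K, \lambda}(0) = K\lambda$ makes the ratio exactly $L/\mu$, and at $\gamma = L^{-1}$ only the $k=1$ term of $\phi_{K, L}$ survives, giving $\phi_{K, L}(L^{-1}) = L$ while $\phi_{K, \mu}(L^{-1}) \geq \mu$ (from its first term alone), so the bound again holds. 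Alternatively, since $\phi_{K, \mu}(\gamma) \geq \mu > 0$ throughout and both $\phi_{K, L}$ and $\phi_{K, \mu}$ are continuous in $\gamma$, the bound established on the open interval extends to the closed interval $[0, L^{-1}]$ by continuity, which I would use to dispatch the endpoints cleanly.
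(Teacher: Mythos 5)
Your proof is correct and takes essentially the same route as the paper's: rewrite both $\phi_{K,L}(\gamma)$ and $\phi_{K,\mu}(\gamma)$ as geometric sums via \eqref{eq:phi_alt}, bound each term-by-term ratio by $L/\mu$ using $1-\gamma L \leq 1-\gamma\mu$, and invoke Lemma~\ref{lem:frac_sum}. Your added care at the endpoints $\gamma = 0$ and $\gamma = L^{-1}$, where terms can vanish and the strict-positivity hypothesis of Lemma~\ref{lem:frac_sum} technically fails, is a small refinement the paper's proof glosses over, but the core argument is identical.
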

\begin{proof}
    For $1 \leq j \leq K$, define
    \[
    a_j := (1-\gamma L)^{j-1}L,
    \]
    \[
    b_j := (1-\gamma \mu)^{j-1}\mu.
    \]
    Since $\mu \leq L$, $(1-\gamma L) \leq (1-\gamma \mu)$.  Therefore,
    \[
    \dfrac{a_i}{b_i} = \dfrac{L}{\mu}\dfrac{(1-\gamma L)^{i-1}}{(1-\gamma\mu)^{i-1}} \leq \frac{L}{\mu}.
    \]
    Applying Lemma \ref{lem:frac_sum}, we have
    \[
    \dfrac{a_1 + \dots a_K}{b_1 + \dots b_K} \leq \frac{L}{\mu}.
    \]
    The proof then follows by noting that
    \[
    \phi_{K, L}(\gamma) = a_1 + \dots a_K
    \]
    and
    \[
    \phi_{K, \mu}(\gamma) = b_1 + \dots b_K.
    \]
\end{proof}

With this in hand, we can prove Corollary \ref{cor:distance_optima_gamma}.

\begin{proof}[Proof of Corollary \ref{cor:distance_optima_gamma}]
    By definition of $\phi_{K, L}$, we have
    \begin{align*}
        \phi_{K, L}(\gamma_1) - \phi_{K, L}(\gamma_2) & = \sum_{k = 1}^K (1-\gamma_1 L)^{k-1}L - (1-\gamma_2 L)^{k-1}L\\
        & = \sum_{k = 2}^K (1-\gamma_1 L)^{k-1}L - (1-\gamma_2 L)^{k-1}L\\
        &\leq \sum_{k=2}^K (1-\gamma_1 L)^{k-1}L - (1-\gamma_2L)(1-\gamma_1L)^{k-2}L\\
        &= \sum_{k=2}^K (1-\gamma_1L)^{k-2}L\left((1-\gamma_1L)-(1-\gamma_2L)\right)\\
        &= \left(\sum_{k=2}^K(1-\gamma_1L)^{k-2}L\right)L(\gamma_2-\gamma_1).
    \end{align*}
    Here the second line follows from the fact that $(1-\gamma_1L)^0 = (1-\gamma_2L)^0 = 1$, and the third line follows from the fact that $\gamma_1 \leq \gamma_2$. Therefore,
    \begin{align*}
        \phi_{K, L}(\gamma_1) - \phi_{K, L}(\gamma_2) &\leq \left(\sum_{k=2}^K(1-\gamma_1L)^{k-2}\right)L(\gamma_2-\gamma_1)\\
        &\leq \phi_{K, L}(\gamma_1)L(\gamma_2-\gamma_1).
    \end{align*}
    By Theorem \ref{thm:distance_optima_gamma}, this implies
    \[
    \norm{x^*(\gamma_1, \Theta_{1:K}) - x^*(\gamma_2, \Theta_{1:K})} \leq \sigma_c\left(1 + \dfrac{\phi_{K, L}(\gamma_2)}{\phi_{K, \mu}(\gamma_2)}\right)\dfrac{\phi_{K, L}(\gamma_1)}{\phi_{K, \mu}(\gamma_1)}L(\gamma_2-\gamma_1).
    \]
    The result follows by applying Lemma \ref{lem:cond_number} and noting that $L \geq \mu$.
\end{proof}

\subsection{Results in Section \ref{sec:convergence}}

\subsubsection{Lemma \ref{lem:var_bound}}

\begin{proof}
    Recall that in Algorithm \ref{alg:outerloop},
    \[
    q_\gamma(x) = \frac{1}{M}\sum_{i \in \mI} q^i
    \]
    where $q^i = \InnerLoop(i, x, \gamma, \Theta)$ (Algorithm \ref{alg:innerloop}) and $\mI$ is a set of size $M$ sampled independently and uniformly at random from $\mP$. Since the $q^i$ are independent, it suffices to show that for any $i$, the vector $q^i := \InnerLoop(i, x, \gamma, \Theta)$ satisfies
    \[
    \E\norm{q^i - \E[q^i]}^2 \leq \dfrac{KG^2}{B}.
    \]
    By Algorithm \ref{alg:innerloop}, we have
    \[
    q^i = \sum_{k = 1}^K g_k
    \]
    where $g_K$ is a mini-batch stochastic gradient of batch size $B$ taken at $x_k$, and the $x_k$ are updated via
    \[
    x_{k+1} = x_k - \gamma g_k.
    \]
    Let $s_k = g_k - \E[g_k]$, and let $s_{1:k} = s_1 + \dots + s_k$. Note that $\{s_{1:k}\}_{k=1}^K$ form a Martingale sequence. We therefore have
    \begin{align*}
        \E\norm{q^i - \E[q^i]}^2 = \E\norm{s_{1:k}}^2 = \sum_{k=1}^K \E\norm{s_{1:k}}^2.
    \end{align*}
    
    By Assumption \ref{assm3}, we have that for any $k$ and $z \sim \mD_i$,
    \[
    \E\norm{\nabla f(x_k; z) - \E[\nabla f(x_k; z)]} \leq G^2.
    \]
    Since
    \[
    s_{k} = \dfrac{1}{B}\sum_{z \in S_k}\nabla f(x_k; z) - \E[\nabla f(x_k; z)]
    \]    
    where $|S_k| = B$ and each $z \in S_k$ is identically and independently distributed, we have
    \[
    \E\norm{s_k}^2 \leq \dfrac{G^2}{B}
    \]
    implying from our reasoning above that
    $\E\norm{s_{1:k}}^2 \leq KG^2/B$, therefore implying the desired result on the variance of $q_t$. 
\end{proof}

\subsubsection{Lemma \ref{lem:descent_step}}

\begin{proof}
    We will proceed using a similar analysis to Lemma 1 in \cite{rakhlin2011making}. 
    
    Since $\tilde{f}_\gamma$ is $\mu_\gamma$ strongly convex (by Lemma \ref{lem:ft_params}), we have
    \begin{equation}\label{eq:sc1}
    \langle \nabla \tilde{f}_\gamma(x_t), x_t-x_\gamma^*\rangle \geq \tilde{f}_\gamma(x_t)-\tilde{f}_\gamma^* + \frac{\mu_\gamma}{2}\norm{x_t-x_\gamma^*}^2,\end{equation}
    
    \begin{equation}\label{eq:sc2}
    \tilde{f}_\gamma(x_t) - \tilde{f}_\gamma^* \geq \dfrac{\mu_\gamma}{2}\norm{x_t-x_\gamma^*}^2.
    \end{equation}
    
    Since $\tilde{f}_\gamma$ is $L_\gamma$ smooth, we have
    \begin{equation}\label{eq:smooth}
    \norm{\nabla \tilde{f}_\gamma(x)} \leq L_\gamma\norm{x - x_\gamma^*}.
    \end{equation}
    
    Let $q_t := q_\gamma(x_t)$. Recall that we have
    \[
    x_{t+1} = x_t-\eta_tq_t
    \]
    and
    \[
    \E[q_t] = \nabla \tilde{f}_\gamma(x_t).
    \]
    Using the equations \eqref{eq:sc1} and \eqref{eq:sc2}, we have
    \begin{align*}
        &\E[\norm{x_{t+1}-x_\gamma^*}^2]\\ &= \E[\norm{x_t - \eta_tq_t - x_\gamma^*}^2]\\
        & = \E[\norm{x_t-x_\gamma^*}^2] -2\eta_t\E[\langle q_t, x_t-x_\gamma^*\rangle] + \eta_t^2\E[\norm{q_t}^2]\\
        &\leq \E[\norm{x_t-x_\gamma^*}^2] - 2\eta_t\E[\langle q_t, x_t-x_\gamma^*\rangle] + \eta_t^2\E[\norm{q_t}^2]\\
        &= \E[\norm{x_t-x_\gamma^*}^2] - 2\eta_t\E[\langle \nabla \tilde{f}_\gamma(x_t), x_t-x_\gamma^*\rangle] + \eta_t^2\E[\norm{q_t}^2]\\
        &\leq \E[\norm{x_t-x_\gamma^*}^2] - 2\eta_t \E\sbr*{\tilde{f}_\gamma(x_t)-\tilde{f}_\gamma^* + \frac{\mu_\gamma}{2}\norm{x_t-x_\gamma^*}^2 } + \eta_t^2\E[\norm{q_t}^2]\\
        &\leq \E[\norm{x_t-x_\gamma^*}^2] -2\eta_t\E\sbr*{\frac{\mu_\gamma}{2}\norm{x_t-x_\gamma^*}^2 + \frac{\mu_\gamma}{2}\norm{x_t-x_\gamma^*}^2 } + \eta_t^2\E[\norm{q_t}^2]\\
        &= (1-2\eta_t\mu_\gamma)\E[\norm{x_t-x_\gamma^*}^2] + \eta_t^2\E[\norm{q_t}^2].
    \end{align*}
    
    Applying Lemma \ref{lem:var_bound}, \eqref{eq:smooth} and \eqref{eq:eta_t_bound}, we have
    \begin{align*}
     &\E[\norm{x_{t+1}-x_\gamma^*}^2]\\
     &\leq (1-2\eta_t\mu_\gamma)\E\norm{x_t-x_\gamma^*}^2 + \eta_t^2\E[\norm{q_t}^2]\\
     &\leq (1-2\eta_t\mu_t)\E[\norm{x_t-x_\gamma^*}^2] +\eta_t^2\E[\norm{\nabla \tilde{f}_\gamma(x_t)}^2] + \eta_t^2\frac{KG^2}{MB}\\
     &\leq (1-2\eta_t\mu_\gamma + \eta_t^2L_\gamma^2)\E\norm{x_t-x_\gamma^*}^2 + \eta_t^2\frac{KG^2}{MB}\\
     &\leq (1-\eta_t\mu_\gamma)\E\norm{x_t-x_\gamma^*}^2 + \eta_t^2\frac{KG^2}{MB}.
    \end{align*}
\end{proof}

\subsubsection{Theorem \ref{thm:fix_client_decay_server}}

\begin{proof}
We will proceed using similar techniques to those in Theorem 4.7 of \cite{bottou2018optimization}. Note that by construction of $b_\gamma, a_\gamma$, we have that for all $t \geq 1$,
\begin{align*}
    \eta_t \leq \dfrac{a_\gamma}{b_\gamma} = \dfrac{\mu_\gamma}{L_\gamma^2}.
\end{align*}

Therefore, by Lemma \ref{lem:descent_step},
\begin{equation}\label{eq:descent}
    \E[\norm{x_{t+1}-x_\gamma^*}^2] \leq (1-\eta_t\mu_\gamma)\E[\norm{x_t-x_\gamma^*}^2] + \eta_t^2\dfrac{KG^2}{MB}.
\end{equation}
We then proceed by induction. For $t = 1$, we have
\begin{align*}
    \norm{x_1-x_\gamma^*}^2 = \dfrac{(b_\gamma+1)\norm{x_1-x_\gamma^*}^2}{b_\gamma+1} \leq \dfrac{v_\gamma}{b_\gamma + 1}.
\end{align*}

For $t > 1$, let $\hat{t} := b_\gamma + t$. Therefore, $\eta_t = a_\gamma/\hat{t}$. Using \eqref{eq:descent} and the inductive hypothesis,
\begin{align*}
    \E[\norm{x_{t+1}-x_\gamma^*}^2] &\leq \left(1-\dfrac{a_\gamma\mu_\gamma}{\hat{t}}\right)\dfrac{\nu_\gamma}{\hat{t}} + \dfrac{a_\gamma^2KG^2}{\hat{t}^2MB}\\
    &= \left(\dfrac{\hat{t}-2}{\hat{t}^2}\right)\nu_\gamma + \dfrac{4KG^2}{\mu_\gamma^2\hat{t}^2MB}\\
    &= \left(\dfrac{\hat{t}-1}{\hat{t}^2}\right)\nu_\gamma - \dfrac{\nu_\gamma}{\hat{t}^2} + \dfrac{4KG^2}{\mu_\gamma^2\hat{t}^2MB}\\
    &= \left(\dfrac{\hat{t}-1}{\hat{t}^2}\right)\nu_\gamma + \dfrac{1}{\hat{t}^2}\underbrace{\left(-\nu_\gamma + \dfrac{4KG^2}{\mu_\gamma^2MB}\right)}_{\Xi}
\end{align*}

Note that $\Xi \leq 0$ by assumption on $\nu_\gamma$, and that simple analysis shows
\[
\hat{t}^2 \geq (\hat{t}-1)(\hat{t}+1).
\]
Putting this together, we have
\[
\E[\norm{x_{t+1}-x_\gamma^*}^2] \leq \dfrac{\nu_\gamma}{\hat{t}+1} = \dfrac{\nu_\gamma}{b_\gamma + t + 1}.\]
\end{proof}

\subsubsection{Corollary \ref{cor:fix_client_decay_server}}

\begin{proof}
We have
\begin{align*}
    \E[\norm{x_t-x^*}^2] \leq 2\E[\norm{x_t-x_\gamma^*}^2] + 2\norm{x_\gamma^*-x^*}^2.
\end{align*}
Using the bounds in Theorems \ref{thm:fix_client_decay_server} and Corollary \ref{cor:optima_distance_fedavg}, we have

\begin{align*}
    \E[\norm{x_t-x^*}^2] \leq \dfrac{2\nu_\gamma}{b_\gamma + t} + 2\sigma_c^2\left(1 + \dfrac{\sigma_a}{\mu}\right)^2\dfrac{L^2}{\mu^2}\dfrac{\epsilon^2}{(1-\epsilon)^2}
\end{align*}
where
\[
\nu_\gamma := \max\left\{\dfrac{4KG^2}{\mu_\gamma^2MB}, (b_\gamma + 1)\norm{x_1-x_\gamma^*}^2 \right\}.
\]
By Lemma \ref{lem:gamma_cond} and assumption on $\gamma, \epsilon$, we have
\[
\dfrac{1}{\mu_\gamma} \leq \dfrac{1}{(1-\epsilon)K\mu}.
\]
Using this to derive an upper bound on the first term in $\nu_\gamma$, we conclude the proof.
\end{proof}

\subsubsection{Theorem \ref{thm:convergence_overall}}

For convenience of notation (and in a slight abuse of previous notation), we define
\begin{align*}
    \mu_t &:= \phi_{K, \mu}(\gamma_t)\\
    L_t &:= \phi_{K, L}(\gamma_t)\\
    \tilde{f}_t(x) &:= \tilde{f}(x, \gamma_t, \Theta_{1:K})\\
    x_t^* &:= \arg\min_{x} \tilde{f}_t(x)\\
    \kappa &:= L/\mu.
\end{align*}

\begin{proof}
We have
\begin{align*}
    \E[\norm{x_t-x^*}^2] &= \E[\norm{x_t-x_t^* + x_t^*-x^*}^2]\\
    &\leq 2\underbrace{\E[\norm{x_t-x_t^*}^2]}_{\alpha_t} + 2\underbrace{\norm{x_t^*-x^*}^2}_{\beta_t}.
\end{align*}

Next, note that for any $t \geq 1$,
\[
\eta_t = \dfrac{a_t}{b+t} = \dfrac{2/\mu_t}{2\kappa^2 + t} < \dfrac{1}{\mu_t\kappa^2}.
\]
Note that by Lemma \ref{lem:cond_number} we have
\[
\dfrac{L_t}{\mu_t} \leq \dfrac{L}{\mu} = \kappa.
\]
Therefore,
\[
\eta_t \leq \dfrac{1}{\mu_t\kappa^2} \leq \dfrac{\mu_t^2}{\mu_tL_t^2} = \dfrac{\mu_t}{L_t^2}.
\]
Therefore, we can apply Lemma \ref{lem:descent_step} (with $\mu_\gamma = \mu_t, L_\gamma = L_t$, $x_\gamma^* = x_t^*$) to find
\begin{equation}\label{eq:descent_t}
    \E[\norm{x_{t+1}-x_t^*}^2] \leq (1-\eta_t\mu_t)\E[\norm{x_t-x_t^*}^2] + \eta_t^2\dfrac{KG^2}{MB}.
\end{equation}

For any $\omega_t > 0$, we therefore have
\begin{equation}
\E[\norm{x_{t+1}-x_t^*}^2] \leq (1+\omega_t)\left(\E[\norm{x_{t+1}-x_t^*}^2]\right) + (1+\omega_t^{-1})\norm{x_t^*-x_{t+1}^*}^2.
\end{equation}

Using \eqref{eq:descent_t}, we derive the following recursion on the $\alpha_t$.
\begin{equation}\label{eq:alpha_recurrence}
\alpha_{t+1} \leq (1+\omega_t)\left((1-\eta_t\mu_t)\alpha_t + \eta_t^2\dfrac{KG^2}{MB}\right) + (1+\omega_t^{-1})\norm{x_t^*-x_{t+1}^*}^2.
\end{equation}

Let $C = 4\sigma_c^2\kappa^2$ and let $\hat{t} = b+t$. . We will use \eqref{eq:alpha_recurrence} inductively to show that
\begin{equation}\label{eq:alpha_t_bound}
\alpha_t \leq \dfrac{\nu + C}{\hat{t}}.
\end{equation}
For $t = 1$, we have
\begin{equation}\label{eq:alpha_1}
\alpha_1 = \norm{x_1-x_1^*}^2 = \dfrac{(b+1)\norm{x_1-x_1^*}^2}{b+1} \leq \dfrac{\nu_1}{b + 1}.
\end{equation}

Similar analysis can be done in the case that $t = 2$. When $t \geq 3$, using the inductive hypothesis, we have
\begin{align*}
    (1-\eta_t\mu_t)\alpha_t + \eta_t^2\dfrac{KG^2}{MB} &\leq (1-\eta_t\mu_t)\dfrac{\nu+C}{\hat{t}} + \dfrac{a_t^2KG^2}{\hat{t}^2MB}\\
    &\leq \left(\dfrac{\hat{t}-3}{\hat{t}^2}\right)(\nu+C) +\dfrac{9KG^2}{\mu_t^2\hat{t}^2MB}\\
    &= \left(\dfrac{\hat{t}-2}{\hat{t}^2}\right)(\nu+C) - \dfrac{C}{\hat{t}^2} +\dfrac{1}{\hat{t}^2}\underbrace{\left(-\nu + \dfrac{9KG^2}{\mu_t^2MB}\right)}_{\xi_t}.
\end{align*}

By definition of $\nu$, we have
\begin{align*}
    \xi_t \leq \dfrac{9KG^2}{\mu_t^2MB} - \dfrac{18G^2}{\mu^2 KMB}.
\end{align*}
However, since $\gamma_t \leq \ln(2)/K\mu$, by Lemma \ref{lem:gamma_cond}, we know
\[
\mu_t \geq \dfrac{K\mu}{2}.
\]
Therefore,
\[
\xi_t \leq \dfrac{18G^2}{\mu^2KMB} - \dfrac{18G^2}{\mu^2 KMB} = 0.
\]
Hence,
\begin{equation}\label{eq:t_3_bound}
    (1-\eta_t\mu_t)\alpha_t + \eta_t^2\dfrac{KG^2}{MB} \leq \left(\dfrac{\hat{t}-2}{\hat{t}^2}\right)(\nu+C) - \dfrac{C}{\hat{t}^2}.
\end{equation}

Let
\[
\omega_t = \dfrac{\hat{t}+2}{(\hat{t}-2)(\hat{t}+1)}.
\]
Note that this implies that
\[
1+\omega_t = \dfrac{\hat{t}^2}{(\hat{t}-2)(\hat{t}+1)}
\]
and
\[
1+\omega_t^{-1} = \dfrac{\hat{t}^2}{\hat{t}+2}.
\]
Therefore, multiplying \eqref{eq:t_3_bound} by $1+\omega_t$, we have
\begin{equation}\label{eq:alpha_lhs}
    (1+\omega_t)\left( (1-\eta_t\mu_t)\alpha_t + \eta_t^2\dfrac{KG^2}{MB}\right) \leq \dfrac{\nu+C}{\hat{t} +1 } - \dfrac{C}{(\hat{t}-2)(\hat{t}+1)}.
\end{equation}

This bounds the first part of \eqref{eq:alpha_recurrence}. For the second part, we will use Corollary \ref{cor:distance_optima_gamma}. In particular, since
\[
\gamma_t \leq \dfrac{1}{L\hat{t}}
\]
we find that
\begin{align*}
\norm{x_t^*-x_{t+1}^*}^2 &\leq 4\sigma_c^2\kappa^4\left(\dfrac{1}{\hat{t}} - \dfrac{1}{\hat{t}+1}\right)^2\\
&= \dfrac{C\kappa^2}{\hat{t}^2(\hat{t}+1)^2}\\
&\leq \dfrac{C}{\hat{t}^2(\hat{t}+1)}.
\end{align*}
Here we used the fact that $\hat{t}+1 \geq b \geq \kappa^2$. Multiplying by $1+\omega_t^{-1}$,
\begin{equation}\label{eq:alpha_rhs}
    (1+\omega_t^{-1})\norm{x_t^*-x_{t+1}^*}^2 \leq \dfrac{C}{(\hat{t}+1)(\hat{t}+2)}.
\end{equation}

Combining \eqref{eq:alpha_recurrence}, \eqref{eq:alpha_lhs} and \eqref{eq:alpha_rhs}, we have
\begin{align*}
    \alpha_{t+1} \leq \dfrac{\nu+C}{\hat{t}+1} - \dfrac{C}{(\hat{t}-2)(\hat{t}+1)} + \dfrac{C}{(\hat{t}+2)(\hat{t}+1)} \leq \dfrac{\nu+C}{\hat{t}+1}.
\end{align*}
This proves \eqref{eq:alpha_t_bound}. To get a bound on the distance to the minima $x^*$, we then have
\begin{align*}
    \E[\norm{x_t - x^*}^2] &\leq 2\E[\norm{x_t-x_t^*}^2] + 2\norm{x_t^*-x^*}^2\\
    &\leq \dfrac{2(\nu+C)}{\hat{t}} + 2\norm{x_t^*-x^*}^2.
\end{align*}
We can again use Corollary \ref{cor:distance_optima_gamma}, letting $\gamma_1 = 0, \gamma_2 = \gamma_t$ in the statement of that result. We then get
\begin{align*}
    \norm{x_t^* - x^*} \leq \dfrac{ C\kappa^2}{\hat{t}^2}\leq \dfrac{C}{\hat{t}}.
\end{align*}
This again uses the fact that $\kappa^2 \leq b \leq \hat{t}$. Combining, this implies
\begin{align*}
    \E[\norm{x_t - x^*}^2] &\leq \dfrac{2\nu}{\hat{t}} + \dfrac{4C}{\hat{t}}.
\end{align*}
Substituting in $C$, this proves the desired result.
\end{proof}

\section{Datasets and Models}\label{sec:models}

Below, we provide detailed description of the datasets and models used in the paper. We use federated versions of vision datasets FEMNIST~\citep{caldas2018leaf} and CIFAR-100~\citep{krizhevsky2009learning}, and language modeling datasets Shakespeare~\citep{mcmahan17fedavg} and StackOverflow~\citep{stackoverflow}. We give descriptions of the datasets, models, and tasks below.

\paragraph{CIFAR-100} The CIFAR-100 dataset is a popular computer vision dataset consisting of $32 \times 32 \times 3$ images with 100 possible labels. While this dataset is not a federated dataset, a federated version was created by \citet{reddi2020adaptive}, using hierarchical latent Dirichlet allocation to enforce moderate amounts of heterogeneity among clients. The resulting dataset has 500 clients, each with 100 unique examples. We train a ResNet-18 on this dataset, where we replace all batch normalization layers with group normalization layers~\citep{wu2018group}. The use of group norm over batch norm in federated learning was first advocated by \citet{hsieh2019non}.

We perform small amounts of data augmentation and preprocessing, as is standard with CIFAR-100. We first perform a random crop to shape $(24, 24, 3)$, followed by a random horizontal flip. We then normalize the pixel values according to their mean and standard deviation. Thus, given an image $x$, we compute $(x - \mu)/\sigma$ where $\mu$ is the average of the pixel values in $x$, and $\sigma$ is the standard deviation.

\paragraph{FEMNIST} FEMNIST consists of $28\times 28$ gray-scale images of both numbers and upper- and lower-case English characters, with 62 possible labels in total. The digits are partitioned according to their author, resulting in a naturally heterogeneous federated dataset. We do not use any preprocessing on the images. We train a moderately-sized CNN, with identical architecture to the CNN used by \citet{mcmahan17fedavg}. The CNN contains two convolutional layer, each with $5\times 5$ kernels. The convolutional layers have 32 and 64 filters, respectively, and are each followed by a $2\times 2$ max pooling layer. Finally, the model has a dense layer with 512 units and ReLU activation, followed by a softmax activation.

\paragraph{Shakespeare} The Shakespeare dataset is derived from the benchmark designed by \citet{caldas2018leaf}. The dataset corpus is the collected works of William Shakespeare, and the clients correspond to roles in Shakespeare's plays with at least two lines of dialogue. To eliminate confusion, \emph{character} here will refer to alphanumeric and other such symbols, while we will use \emph{client} to denote the various roles in plays. We split each client's lines into sequences of 80 characters, padding if necessary. We use a vocabulary size of 90: 86 characters contained in Shakespeare's work, beginning and end of line tokens, padding tokens, and out-of-vocabulary tokens. We perform next-character prediction on the clients' dialogue using an RNN. The RNN takes as input a sequence of 80 characters, embeds it into a learned 8-dimensional space, and passes the embedding through 2 LSTM layers, each with 256 units. Finally, we use a softmax output layer with 80 units, where we try to predict a sequence of 80 characters formed by shifting the input sequence over by one. Therefore, our output dimension is $80\times 90$. We compute loss using cross-entropy loss.

\paragraph{Stack Overflow} Stack Overflow is a text datasets consisting of questions and answers posted to the Stack Overflow website. Each user is a client, and their datasets consist of questions and answers posted by this user. Each post has associated meta-data, including a list of associated tags (e.g. a post could have the tag \emph{javascript} if it concerns the javascript language). We perform two tasks on this dataset: tag prediction, and next word prediction. In both cases, we restrict to the 10,000 most frequently used words in the total dataset, as well as the 500 most frequently used tags for the tag prediction task.

For Stack Overflow tag prediction, we use a multi-class logistic regression classifier with 500 output units (one for each of the 500 most frequently used tags), and adopt a one-versus-rest classification strategy. Note that the corresponding multi-class logistic loss is convex. The inputs to our model are 10,000-dimensional vectors forming bag-of-words vectors for each post. Each vector is normalized to have sum 1.

For Stack Overflow next word prediction, we restrict each client to the first 128 posts in their history (for computational efficiency reasons, as some clients have tens of thousands of posts). We perform truncation and padding so that each post has 21 words (including word tokens for beginning of sentence, end of sentence, padding, and out-of-vocabulary words). The sequence is split into input and output length-20 sequences, corresponding to the first and the last 20 characters (ie. one is the other sequence, shifted by one). The first of these sequences is embedded into a learned 96-dimensional space, and then fed into an LSTM with 670 units. Finally, the output is fed into a densely connected softmax layer with 10,004 units (corresponding to the 10,000 in-vocabulary words, and the extra tokens mentioned above). We attempt to predict the shifted-by-one sequence, and compute the loss via cross-entropy.

\section{Tuned Server Learning Rates}\label{sec:tuned_lrs}

In this section, we detail the best server learning rate $\eta$ found for each corresponding client learning rate $\gamma$ and task.

\begin{table}[ht]
\setlength{\tabcolsep}{3.5pt}
\caption{Best server learning rate $\eta$ for each client learning rate $\gamma$ in the CIFAR-100 task.}
\label{table:best_eta_cifar100}
\begin{center}
\begin{sc}
\begin{tabular}[t]{cc}
\toprule
$\gamma$ & $\eta$\\
\midrule
$1$ & $10^{-1}$ \\
$10^{-1}$ & $10^{-1}$\\
$10^{-2}$ & $10^{-3/2}$\\
$10^{-3}$ & $10^{-3/2}$\\
$0$ & $10^{-3/2}$\\
\bottomrule
\end{tabular}
\end{sc}
\end{center}
\end{table}

\begin{table}[ht]
\setlength{\tabcolsep}{3.5pt}
\caption{Best server learning rate $\eta$ for each client learning rate $\gamma$ in the FEMNIST task.}
\label{table:best_eta_femnist}
\begin{center}
\begin{sc}
\begin{tabular}[t]{cc}
\toprule
$\gamma$ & $\eta$\\
\midrule
$10^{-1}$ & $10^{-2}$\\
$10^{-2}$ & $10^{-3/2}$\\
$10^{-3}$ & $10^{-3/2}$\\
$0$ & $10^{-3/2}$\\
\bottomrule
\end{tabular}
\end{sc}
\end{center}
\end{table}

\begin{table}[ht]
\setlength{\tabcolsep}{3.5pt}
\caption{Best server learning rate $\eta$ for each client learning rate $\gamma$ in the Shakespeare task.}
\label{table:best_eta_shakespeare}
\begin{center}
\begin{sc}
\begin{tabular}[t]{cc}
\toprule
$\gamma$ & $\eta$\\
\midrule
$1$ & $10^{-1/2}$\\
$10^{-1}$ & $1$\\
$10^{-2}$ & $10^{-1}$\\
$10^{-3}$ & $10^{-1}$\\
$0$ & $10^{-1}$\\
\bottomrule
\end{tabular}
\end{sc}
\end{center}
\end{table}

\begin{table}[ht]
\setlength{\tabcolsep}{3.5pt}
\caption{Best server learning rate $\eta$ for each client learning rate $\gamma$ in the Stack Overflow next word prediction task.}
\label{table:best_eta_so_nwp}
\begin{center}
\begin{sc}
\begin{tabular}[t]{cc}
\toprule
$\gamma$ & $\eta$\\
\midrule
$10^{-1}$ & $10^{-3/2}$\\
$10^{-2}$ & $10^{-3/2}$\\
$10^{-3}$ & $10^{-2}$\\
$0$ & $10^{-2}$\\
\bottomrule
\end{tabular}
\end{sc}
\end{center}
\end{table}

\begin{table}[ht]
\setlength{\tabcolsep}{3.5pt}
\caption{Best server learning rate $\eta$ for each client learning rate $\gamma$ in the Stack Overflow tag prediction task.}
\label{table:best_eta_so_tp}
\begin{center}
\begin{sc}
\begin{tabular}[t]{cc}
\toprule
$\gamma$ & $\eta$\\
\midrule
$100 $ & $10^{-1/2}$\\
$10$ & $10^{-1/2}$\\
$1$ & $10^{-1/2}$\\
$10^{-1}$ & $10^{-1/2}$\\
$10^{-2}$ & $10^{-1/2}$\\
$0$ & $10^{-1/2}$\\
\bottomrule
\end{tabular}
\end{sc}
\end{center}
\end{table}

\clearpage

\vskip 0.2in
\bibliography{main}

\end{document}